\documentclass[12pt,reqno]{article}

\usepackage{fullpage}
\usepackage[dvipsnames]{xcolor}
\usepackage[hidelinks,bookmarks=false,linkcolor=black]{hyperref}
\usepackage{graphicx,here,comment}
\usepackage{multirow}
\usepackage{enumitem}
\usepackage{natbib}
\usepackage{booktabs}
\usepackage{subcaption}

\usepackage{amssymb,amsmath,amsfonts,amsthm}
\usepackage{bm}
\usepackage{mathtools}

\newtheorem{theorem}{Theorem}[section]
\newtheorem{corollary}[theorem]{Corollary}
\newtheorem{lemma}[theorem]{Lemma}
\newtheorem{proposition}[theorem]{Proposition}
\newtheorem{assumption}[theorem]{Assumption}

\theoremstyle{definition}
\newtheorem{definition}[theorem]{Definition}

\newtheorem{remark}[theorem]{Remark}

\title{Fast Learning Rate Transfer in Shallow Linear Networks at Growing Training Horizons}
\author{Mana Sakai$^{1,2}$\quad Masaaki Imaizumi$^{1,2,3}$}
\date{}

\begin{document}

\maketitle

\vspace{-2em}
\begin{center}
    $^{1}$The University of Tokyo\\
    $^{2}$RIKEN Center for Advanced Intelligence Project\\
    $^{3}$Kyoto University
\end{center}

\begin{abstract}
Hyperparameter transfer across model width can substantially reduce the cost of tuning large neural networks, but its behavior when the training horizon grows with width is not fully understood. Building on the framework of fast hyperparameter transfer \citep{ghosh2026understanding}, which formalizes when transfer is effective, we investigate conditions that ensure fast transfer in the growing-horizon regime. Specifically, we study learning-rate transfer in a shallow linear network with a single trainable hidden matrix, trained by full-batch gradient descent. Under additional spectral assumptions, our main results are threefold. (i) We prove fast learning-rate transfer as $n,T\to\infty$ whenever $T=o(\sqrt{n})$. (ii) We characterize the transfer rates through the finite-width perturbation scale, the first-order sensitivities of the loss and its learning-rate derivative to finite-width perturbations, and the local loss curvature. (iii) We derive limiting distributions for the optimal learning rate and optimized loss, governed by fluctuations associated with the extreme eigenvalues of the data Gram matrix. These results clarify how spectral structure and local loss sensitivities govern learning-rate transfer at growing horizons.
\end{abstract}

\section{Introduction}\label{sec:introduction}

\subsection{Background and Motivation}

Tuning hyperparameters, such as learning rates, becomes increasingly costly as neural networks grow. \emph{Hyperparameter transfer} seeks to reduce this cost by reusing hyperparameters tuned on small proxy models when training larger models. Building on $\mu$P \citep{yang2021tensor4}, which specifies width-dependent initialization and learning-rate scalings that preserve nontrivial feature learning in the infinite-width limit, \citet{yang2021tuning} introduced $\mu$Transfer: tune hyperparameters on a small proxy model and transfer them to a larger model without retuning. Their experiments demonstrate that appropriately parameterized learning rates can remain effective across substantially different widths. Subsequent studies have examined the empirical robustness of transfer to architectural and optimizer choices \citep{lingle2024empirical}, as well as the role of layerwise learning-rate scaling across parameterizations \citep{lingle2024empirical,everett2024scaling}. These empirical successes motivate theoretical analysis of the reliable transfer of hyperparameters.

\citet{ghosh2026understanding} introduced the notion of \emph{fast hyperparameter transfer} to formalize when transfer is effective. Their framework shows that what matters is whether the optimal hyperparameter converges sufficiently fast relative to the rate at which model performance converges. Thus convergence of the optimal hyperparameter alone does not guarantee fast transfer. Relatedly, \citet{hayou2026proof} established convergence of the optimal learning rate for deep linear networks trained by full-batch gradient descent under $\mu$P, providing a theoretical basis for learning-rate transfer across width.

One major challenge in hyperparameter transfer is dealing with \textit{growing training horizons} $T$, particularly for learning-rate transfer. Recent language-model scaling practices motivate increasing the number of training tokens alongside model size \citep{hoffmann2022empirical,dey2025dont}, which often increases the number of gradient updates. From a theoretical perspective, the joint scaling limit of width and training horizon is substantially more delicate than the fixed-horizon setting, since the limiting objective itself changes as $T$ grows. Consequently, the theoretical understanding of this regime remains limited. An important exception is \citet{wen2026fast}, who establish fast learning-rate transfer at growing horizons in sketched linear regression. Motivated by these observations, we ask the following questions:
\begin{quote}
    \emph{As model width and training horizon grow jointly, what conditions ensure fast learning-rate transfer, and which quantities govern whether such transfer occurs?}
\end{quote}

\subsection{This Study}

To answer these questions, we consider a linear network with a single trainable hidden matrix, frozen random input and output maps, and $\mu$P initialization. The network is trained by full-batch gradient descent with a constant learning rate. For each horizon $T$, we compare the optimal learning rate and optimized training loss at width $n$ with their infinite-width counterparts.

Our contributions are as follows:
\begin{itemize}[leftmargin=*]
    \item \textbf{A sufficient growth condition on the training horizon for fast transfer}: For fixed sample size and input dimension, we prove fast transfer whenever $T\to\infty$ and $T=o(\sqrt{n})$, under additional spectral assumptions (Theorem~\ref{thm:a-b-c-summary}; Corollary~\ref{cor:a-b-c-summary}).
    \item \textbf{Summary of transfer errors using loss-based quantities}: In our setting, the rates of the three transfer errors can be summarized in terms of the finite-width perturbation scale and three horizon-dependent quantities: the first-order sensitivities of the loss value and its learning-rate derivative to finite-width perturbations, and the local curvature of the loss. This formulation makes explicit how these quantities combine to determine the transfer rates as the training horizon grows (Theorem~\ref{thm:a-b-c-summary}; Section~\ref{subsec:perturbation-formulation}). It also helps interpret the failure of fast transfer in the fixed-horizon counterexample of Proposition~\ref{prop:isotropic-no-fast-transfer}.
    \item \textbf{Limiting distributions of the optimal learning rate and the optimized loss}: We derive limiting distributions of the optimal learning rate and optimized loss as the training horizon grows (Theorem~\ref{thm:a-b-c-summary}). Both limits are linear functionals of the same two components of the perturbation, associated with the largest and smallest positive eigenvalues of the data Gram matrix.
\end{itemize}

Our analysis provides some insights. First, different spectra can lead to different transfer rates and sufficient width--horizon scalings, since the loss-based quantities vary depending on the data spectrum as the horizon grows. This observation contrasts with \citet{wen2026fast}. Second, the leading loss-value and slope responses to the same finite-width perturbation need not be aligned. Consequently, some perturbation directions can change the optimized loss at leading order while producing no leading-order shift in the optimal learning rate. This illustrates how width dependence in the attained loss can differ from width dependence in hyperparameter selection. The observation is qualitatively consistent with the mechanism proposed by \citet{ghosh2026understanding}, although our analysis concerns local finite-width perturbations rather than a decomposition of the optimization trajectory; Section~\ref{sec:transfer-mechanisms} discusses this connection.

Additional related work is deferred to Appendix~\ref{app:additional-related-work}.

\section{Problem Setup and Transfer Framework}\label{sec:setup}

\subsection{Model and exact loss dynamics}\label{subsec:model}

Fix positive integers $d,m$, training inputs $X^{\top}=[x_{1}\ ...\ x_{m}]$, and targets $y=(y_{1},...,y_{m})^{\top}$. Consider the linear network
\[
    f_{n}(x;W_{0},W_{1},V)
    =V^{\top}W_{1}W_{0}x
    ,
\]
where $W_{0}\in\mathbb{R}^{n\times d}$, $W_{1}\in\mathbb{R}^{n\times n}$, and $V\in\mathbb{R}^{n}$. This model is similar to the one analyzed in \citet{hayou2026proof}, except we have only one trainable weight $W_{1}$. The initialization is given entrywise as $W_{0,ij}\sim\mathcal{N}(0,d^{-1})$, $W_{1,ij}^{(0)}\sim\mathcal{N}(0,n^{-1})$, and $V_{i}\sim\mathcal{N}(0,n^{-2})$, following the scalings in $\mu$P. All entries of $W_{0}$, $W_{1}^{(0)}$, and $V$ are mutually independent. Define
\[
    K
    =d^{-1}XX^{\top}
    \in\mathbb{R}^{m\times m}
    ,\qquad
    A_{n}
    =\|V\|^{2}H_{n}H_{n}^{\top}
    \in\mathbb{R}^{m\times m}
    ,\qquad
    H_{n}
    =XW_{0}^{\top}
    \in\mathbb{R}^{m\times n}
    .
\]
The vector of predictions and the residual at step $t$ are
\[
    f_{n}^{(t)}(X)
    =H_{n}(W_{1}^{(t)})^{\top}V
    \in\mathbb{R}^{m}
    ,
    \qquad
    \chi_{n}^{(t)}
    =f_{n}^{(t)}(X)-y
    \in\mathbb{R}^{m}
    .
\]
The loss is $\mathcal{L}_{n}(W_{1})=(2m)^{-1}\|f_{n}(X;W_{1})-y\|^{2}$, and for a fixed learning rate $\eta$, we train the model by full-batch gradient descent (GD) as
\[
    W_{1}^{(t+1)}=W_{1}^{(t)}-\eta\nabla_{W_{1}}\mathcal{L}_{n}(W_{1}^{(t)})
    \qquad(t=0,\ldots,T-1)
    .
\]
Since there is only one trainable weight, training this model with the GD update is equivalent to performing GD for linear regression on fixed random features.

For $A\in\mathbb{R}^{m\times m}$ and $\eta\in\mathbb{R}$, define $B_{A}(\eta)=I_{m}-m^{-1}\eta A$. We abbreviate $B_{n}(\eta):=B_{A_{n}}(\eta)$ and $B_{\infty}(\eta):=B_{K}(\eta)$.

\begin{proposition}[Exact finite-width loss dynamics]\label{prop:dynamics}
    For every integer $T\ge1$ and every $\eta\in\mathbb{R}$, we have $\chi_{n}^{(T)}=B_{n}(\eta)^{T}\chi_{n}^{(0)}$, and consequently,
    \[
        \phi_{n,T}(\eta)
        :=\mathcal{L}_{n}(W_{1}^{(T)})
        =(2m)^{-1}(\chi_{n}^{(0)})^{\top}B_{n}(\eta)^{2T}\chi_{n}^{(0)}
        .
    \]
\end{proposition}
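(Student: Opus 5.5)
We argue by induction on $t$ that the residual obeys the one-step recursion $\chi_{n}^{(t+1)}=B_{n}(\eta)\chi_{n}^{(t)}$. Iterating this recursion $T$ times gives $\chi_{n}^{(T)}=B_{n}(\eta)^{T}\chi_{n}^{(0)}$. The loss formula then follows by substituting into $\mathcal{L}_{n}(W_{1}^{(T)})=(2m)^{-1}\|\chi_{n}^{(T)}\|^{2}$ and using that $B_{n}(\eta)$ is symmetric, because $A_{n}=\|V\|^{2}H_{n}H_{n}^{\top}$ is symmetric. This yields $(\chi_{n}^{(0)})^{\top}(B_{n}(\eta)^{T})^{\top}B_{n}(\eta)^{T}\chi_{n}^{(0)}=(\chi_{n}^{(0)})^{\top}B_{n}(\eta)^{2T}\chi_{n}^{(0)}$.

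The first step is to compute the gradient with respect to $W_{1}$. The prediction is $f_{n}(X;W_{1})=H_{n}W_{1}^{\top}V$. Its $i$-th entry is $V^{\top}W_{1}h_{i}$, where $h_{i}=W_{0}x_{i}$ is the $i$-th row of $H_{n}$ written as a column vector. The derivative of $V^{\top}W_{1}h_{i}$ with respect to $W_{1}$ is $Vh_{i}^{\top}$. By the chain rule,
\[
\nabla_{W_{1}}\mathcal{L}_{n}(W_{1})=m^{-1}\sum_{i=1}^{m}\chi_{i}\,Vh_{i}^{\top}=m^{-1}V\chi^{\top}H_{n},
\]
where $\chi=f_{n}(X;W_{1})-y$.

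The second step plugs this gradient into the GD update and recomputes the prediction. Transposing the update gives $(W_{1}^{(t+1)})^{\top}=(W_{1}^{(t)})^{\top}-\eta m^{-1}H_{n}^{\top}\chi_{n}^{(t)}V^{\top}$. Therefore
\[
f_{n}^{(t+1)}(X)=H_{n}(W_{1}^{(t+1)})^{\top}V=f_{n}^{(t)}(X)-\eta m^{-1}\|V\|^{2}H_{n}H_{n}^{\top}\chi_{n}^{(t)}=f_{n}^{(t)}(X)-\eta m^{-1}A_{n}\chi_{n}^{(t)}.
\]
Subtracting $y$ from both sides gives $\chi_{n}^{(t+1)}=(I_{m}-m^{-1}\eta A_{n})\chi_{n}^{(t)}=B_{n}(\eta)\chi_{n}^{(t)}$.

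The only place that needs care is keeping the transposes and dimensions consistent in the gradient computation: $H_{n}$ is $m\times n$, $V\chi^{\top}H_{n}$ is $n\times n$, and the factor $V^{\top}V=\|V\|^{2}$ appears as a scalar. Beyond that the argument is purely algebraic. The recursion is exact for every real $\eta$ because the model is linear in the single trainable matrix $W_{1}$, so there is no real obstacle.
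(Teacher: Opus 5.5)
Your proposal is correct and follows the paper's proof essentially step for step. Both compute $\nabla_{W_{1}}\mathcal{L}_{n}=m^{-1}V\chi^{\top}H_{n}$, substitute it into the GD update to get $f_{n}^{(t+1)}(X)=f_{n}^{(t)}(X)-\eta m^{-1}A_{n}\chi_{n}^{(t)}$, subtract $y$, iterate the recursion, and use the symmetry of $B_{n}(\eta)$ to obtain the quadratic form with $B_{n}(\eta)^{2T}$.
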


Proposition~\ref{prop:dynamics} turns learning-rate selection into the minimization of a scalar random loss curve. All dependence on the initialization enters through the fixed-dimensional pair $(\chi_{n}^{(0)},A_{n})$, even though the parameter matrix grows with $n$. This reduction separates the two ingredients of the analysis: the spectral geometry of the reference curve and the finite-width fluctuations around it.

\subsection{Infinite-width objectives and spectral geometry}\label{subsec:infinite-width-objective}

For each $T$, define the corresponding infinite-width objective by
\[
    \phi_{\infty,T}(\eta)
    :=(2m)^{-1}\|B_{\infty}(\eta)^{T}y\|^{2}
    =(2m)^{-1}y^{\top}B_{\infty}(\eta)^{2T}y
    .
\]
For every fixed $T$, Corollary~\ref{cor:fixed-T-limit-loss} shows that $\phi_{n,T}$ converges to $\phi_{\infty,T}$ uniformly on deterministic compact learning-rate intervals.

Consider the decomposition $K=\sum_{j=1}^{r}\lambda_{j}P_{j}$, where $\lambda_{1}>\cdots>\lambda_{r}>0$ are the distinct positive eigenvalues and $P_{j}$ are their orthogonal spectral projectors.\footnote{If $K$ has repeated eigenvalues, the corresponding $P_{j}$ are sums of the projections onto the individual eigenspaces.} Let $P_{0}$ denote the projection onto $\ker(K)$, and set $c_{j}=\|P_{j}y\|^{2}$ and $c_{0}=\|P_{0}y\|^{2}$. Define $\mathcal{J}_{y}=\{j\in[r]:c_{j}>0\}$, and we assume $\mathcal{J}_{y}\neq\emptyset$. Then we can express the infinite-width objective as
\[
    \phi_{\infty,T}(\eta)
    =\frac{c_{0}}{2m}+\frac{1}{2m}\sum_{j=1}^{r}c_{j}\left(1-\frac{\eta\lambda_{j}}{m}\right)^{2T}
    .
\]
Each active eigenspace contributes a term with contraction factor $1-\eta\lambda_{j}/m$, while $c_{0}/(2m)$ is independent of the learning rate. Thus $\mathcal{J}_{y}$ identifies the spectral modes relevant to learning-rate selection.
\subsection{Optimal learning rates and transfer quantities}

Fix a deterministic compact interval $\mathcal{I}=[\eta_{\ell},\eta_{u}]$. For every $n\in\mathbb{N}$ and $T\ge1$, define
\[
    \eta_{n,T}
    =\min\arg\min_{\eta\in\mathcal{I}}\phi_{n,T}(\eta)
    ,\qquad
    \eta_{\infty,T}
    =\min\arg\min_{\eta\ge0}\phi_{\infty,T}(\eta)
    .
\]
Since $\phi_{n,T}$ is continuous and $\mathcal{I}$ is compact, $\arg\min_{\eta\in\mathcal{I}}\phi_{n,T}(\eta)$ is a nonempty compact set, so $\eta_{n,T}$ is well defined. Moreover, Corollary~\ref{cor:optimizer-characterization} and Proposition~\ref{prop:loss-convex-as} imply that for sufficiently large $n$, $\eta_{n,T}$ is almost surely the unique minimizer of $\phi_{n,T}$ over $\mathcal{I}$. When $Ky\neq0$, Corollary~\ref{cor:optimizer-characterization} also implies that the minimizer of $\phi_{\infty,T}$ is positive and unique over $[0,\infty)$.

Following \citet{ghosh2026understanding}, we define the transfer quantities.

\begin{definition}
    For each width $n$ and training horizon $T$, define the loss gap $a_{n,T}$, hyperparameter gap $b_{n,T}$, and transfer suboptimality gap $c_{n,T}$ by
    \[
        a_{n,T}
        =|\phi_{n,T}(\eta_{n,T})-\phi_{\infty,T}(\eta_{\infty,T})|
        ,\quad
        b_{n,T}
        =|\eta_{n,T}-\eta_{\infty,T}|
        ,\quad
        c_{n,T}
        =\phi_{\infty,T}(\eta_{n,T})-\phi_{\infty,T}(\eta_{\infty,T})
        .
    \]
    We say the hyperparameter transfer is \emph{fast} when $c_{n,T}=o_{p}(a_{n,T})$ holds.
\end{definition}

The loss gap measures how much optimized performance varies across width, whereas the transfer suboptimality gap measures the additional target loss due specifically to using the proxy-selected learning rate. The hyperparameter gap records the displacement between the two minimizers. Fast transfer means that the cost of not retuning is negligible relative to the width-induced discrepancy in optimized losses. These comparisons always use a common horizon $T$, including along a sequence $T=T_{n}\to\infty$; we do not transfer between models trained for different numbers of updates.

\section{Main results}\label{sec:main-results}

\subsection{Long-horizon structure of the infinite-width objective}\label{subsec:long-horizon-infinite-width}

We now let $T=T_{n}\to\infty$. In this regime, the limiting objective itself changes with $n$, and its optimizer moves toward the large-$T$ limit. Thus, we first identify the moving deterministic reference in Section~\ref{subsec:long-horizon-infinite-width}, and then compare the finite-width problem with this reference.

The spectral representation of $\phi_{\infty,T}$ in Section~\ref{subsec:infinite-width-objective} shows that, for large $T$, the learning-rate-dependent loss is governed by the largest active contraction magnitude, $\max_{j\in\mathcal{J}_{y}}|1-\eta\lambda_{j}/m|$. The long-horizon optimizer must therefore balance the two extreme active modes. The following theorem identifies this balance.

\begin{theorem}[Active-spectrum limit of the optimal learning rate]\label{thm:finite-minimax}
    In the decomposition of $K=\sum_{j=1}^{r}\lambda_{j}P_{j}$, define $\lambda_{+}^{y}=\max_{j\in\mathcal{J}_{y}}\lambda_{j}$ and $\lambda_{-}^{y}=\min_{j\in\mathcal{J}_{y}}\lambda_{j}$. Suppose $\lambda_{+}^{y}>\lambda_{-}^{y}>0$. Then as $T\to\infty$, we have
    \begin{equation}\label{eq:finite-minimax-eta}
        \eta_{\infty,T}
        \to\eta_{*}^{y}
        :=2m/(\lambda_{+}^{y}+\lambda_{-}^{y})
        .
    \end{equation}
    If $\lambda_{+}^{y}=\lambda_{-}^{y}=\lambda$, then we have $\eta_{\infty,T}=m/\lambda$ for all $T\ge1$.
\end{theorem}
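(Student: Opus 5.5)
Since $c_0$ does not depend on $\eta$, minimizing $\phi_{\infty,T}$ over $\eta\ge0$ amounts to minimizing
\[
g_T(\eta)=\sum_{j\in\mathcal{J}_y}c_j u_j(\eta)^{2T},\qquad u_j(\eta)=1-\eta\lambda_j/m .
\]
The degenerate case is immediate. If $\lambda_+^y=\lambda_-^y=\lambda$, there is a single active mode and $g_T=c\,(1-\eta\lambda/m)^{2T}\ge0$. This vanishes only at $\eta=m/\lambda$, so that point is the unique minimizer for every $T\ge1$.

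For the main case $\lambda_+^y>\lambda_-^y$, I would use a minimax / $L^{2T}$-to-$L^\infty$ argument. Set $M(\eta)=\max_{j\in\mathcal{J}_y}|u_j(\eta)|$. Each $u_j$ is affine and decreasing in $\eta$, so for any $\eta\ge0$ the maximum over $j$ is attained at one of the extreme modes:
\[
M(\eta)=\max\{|1-\eta\lambda_-^y/m|,\ |1-\eta\lambda_+^y/m|\}.
\]
This is a convex function with a unique minimizer where $1-\eta\lambda_-^y/m=\eta\lambda_+^y/m-1$, namely $\eta_*^y=2m/(\lambda_+^y+\lambda_-^y)$. The minimal value there is $\rho_*=(\lambda_+^y-\lambda_-^y)/(\lambda_+^y+\lambda_-^y)<1$.

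Next I would sandwich $g_T$ between powers of $M$. With $c_{\min}=\min_{j\in\mathcal{J}_y}c_j>0$ and $C=\sum_j c_j$,
\[
c_{\min}M(\eta)^{2T}\le g_T(\eta)\le C\,M(\eta)^{2T}.
\]
Now fix $\varepsilon>0$ and apply this at the minimizer. By strict convexity and uniqueness of the minimizer of $M$, there is $\delta>0$ with $M(\eta)\ge\rho_*+\delta$ whenever $|\eta-\eta_*^y|\ge\varepsilon$ and $\eta\ge0$; this also holds on the unbounded part of the domain, since $M\to\infty$ as $\eta\to\infty$. Optimality gives $g_T(\eta_{\infty,T})\le g_T(\eta_*^y)\le C\rho_*^{2T}$. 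If $|\eta_{\infty,T}-\eta_*^y|\ge\varepsilon$, we would also have $g_T(\eta_{\infty,T})\ge c_{\min}(\rho_*+\delta)^{2T}$. Combining the two bounds forces $((\rho_*+\delta)/\rho_*)^{2T}\le C/c_{\min}$, which fails for large $T$. The case $\rho_*=0$ cannot occur, because $\lambda_+^y>\lambda_-^y$. Hence $\eta_{\infty,T}\to\eta_*^y$.

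The main thing to verify is the uniform separation $\delta$ on the noncompact domain $[0,\infty)$, and this follows directly from the piecewise-linear form of $M$. Existence of the minimizer, which the definition of $\eta_{\infty,T}$ takes for granted, follows from coercivity of $g_T$. The statement also uses the $\min\arg\min$ convention, but this does not matter here: the argument bounds every minimizer, so it covers whichever one is selected.
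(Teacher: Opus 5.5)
Your proof is correct. It uses the same sandwich as the paper, $c_{\min}M(\eta)^{2T}\le g_T(\eta)\le C\,M(\eta)^{2T}$, where $M$ is the minimax contraction (the paper's $R_\infty$). The difference is in how you get convergence of the minimizers.

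\textbf{The paper's route.} It takes $2T$-th roots and shows $R_T=g_T^{1/(2T)}\to R_\infty$ uniformly on compacts. It then separately confines every minimizer to a fixed interval $[0,M]$ by comparing with $R_T(0)$. Finally it invokes argmin continuity for a limit with a unique minimizer.

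\textbf{Your route.} You compare $g_T$ at the minimizer directly with $g_T(\eta_*^y)$, using a separation margin. The explicit piecewise-linear form of $M$ has slopes $-\lambda_-^y/m$ and $\lambda_+^y/m$ on either side of $\eta_*^y$, so you can take $\delta=\varepsilon\lambda_-^y/m$, and this margin holds on all of $[0,\infty)$. That removes both the compactness step and the appeal to an argmin-continuity principle.

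\textbf{A free rate.} Your inequality gives $|\eta_{\infty,T}-\eta_*^y|\le (m\rho_*/\lambda_-^y)\bigl((C/c_{\min})^{1/(2T)}-1\bigr)=O(T^{-1})$, assuming only $\lambda_+^y>\lambda_-^y$. The paper obtains an $O(T^{-1})$ statement only later, in Proposition~\ref{prop:finite-T-correction}, under Assumption~\ref{asmp:endpoint-support}; in exchange, that result identifies the exact first-order constant.

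\textbf{A phrasing fix.} Reducing $M$ to the two extreme modes relies on monotonicity in $\lambda$, not in $\eta$. For fixed $\eta\ge0$, the map $\lambda\mapsto 1-\eta\lambda/m$ is nonincreasing, so every active $u_j(\eta)$ lies between the two extreme ones.
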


In Theorem~\ref{thm:finite-minimax}, observe that the optimizer limit $\eta_{*}^{y}$ remains strictly inside the active-spectrum stability interval $(0,2m/\lambda{+}^{y})$. If $\lambda_{+}^{y}=\lambda_{1}$, it also lies inside the stability interval of the full iteration. Indeed, if the top active eigenvalue equals $\lambda_{1}=\lambda_{\max}(K)$, then we have $\eta_{*}^{y}=2m/(\lambda_{1}+\lambda_{-}^{y})<2m/\lambda_{1}$. We also note that the limit in Theorem~\ref{thm:finite-minimax} is closely related to the classical optimal step size for Richardson iteration; see, e.g., \citet[Section~4.2.1]{saad2003iterative}.

Theorem~\ref{thm:finite-minimax} concerns the deterministic objective, which involves only modes active in $y$. At finite width, initialization and matrix perturbations can also activate modes absent from this reference objective. To keep the long-horizon comparison governed by the same spectral endpoints, we impose the following stronger condition.

\begin{assumption}[Endpoint activity]\label{asmp:endpoint-support}
    In the decomposition of $K=\sum_{j=1}^{r}\lambda_{j}P_{j}$, suppose $r\ge2$. The largest and smallest positive eigenvalues $\lambda_{1}$ and $\lambda_{r}$ of $K$ are simple and satisfy $\langle y,u_{1}\rangle\neq0$ and $\langle y,u_{r}\rangle\neq0$, where $u_{1}$ and $u_{r}$ are unit eigenvectors satisfying $Ku_{1}=\lambda_{1}u_{1}$ and $Ku_{r}=\lambda_{r}u_{r}$. Thus we have $\lambda_{+}^{y}=\lambda_{1}$ and $\lambda_{-}^{y}=\lambda_{r}$.
\end{assumption}

Under Assumption~\ref{asmp:endpoint-support}, set $\eta_{*}=2m/(\lambda_{1}+\lambda_{r})=\eta_{*}^{y}$ and $q_{*}=(\lambda_{1}-\lambda_{r})/(\lambda_{1}+\lambda_{r})$. The endpoint contraction factors at $\eta_{*}$ are $-q_{*}$ and $q_{*}$. Proposition~\ref{prop:finite-T-correction} and Corollary~\ref{cor:T-loss-rate} give a more quantitative picture: $\eta_{\infty,T}-\eta_{*}=O(T^{-1})$, i.e., the optimizer approaches a fixed location, while
\[
    \phi_{\infty,T}(\eta_{\infty,T})-c_{0}/(2m)
    =\Theta(q_{*}^{2T})
    ,\qquad
    \phi_{\infty,T}''(\eta_{\infty,T})
    =\Theta(T^{2}q_{*}^{2T-2})
\]
so that the optimizable part of the loss and the local curvature shrink exponentially. The growing-horizon transfer analysis must track finite-width perturbations relative to these changing scales.

\subsection{Fast learning-rate transfer at growing horizons}\label{subsec:transfer-growing-T}

We now turn from the deterministic long-horizon geometry of the infinite-width objective to finite-width fluctuations. By Proposition~\ref{prop:dynamics}, all randomness relevant to the finite-width loss curve is entirely captured by $(\chi_{n}^{(0)},A_{n})$. Thus, the first step is to characterize how this pair fluctuates around its infinite-width limit. The following theorem identifies the width-dependent fluctuation scale and its limiting random direction.

\begin{lemma}[Joint initialization central limit theorem]\label{lem:joint-clt}
    Suppose $m$ and $d$ are fixed. Define $\alpha\sim\mathcal{N}(0,2)$. Let $G$ be a symmetric Gaussian $d\times d$ matrix satisfying $\mathbb{E}G_{rs}=0$ and $\mathbb{E}(G_{rs}G_{tu})=d^{-2}(\delta_{rt}\delta_{su}+\delta_{ru}\delta_{st})$. Define $\Xi=\alpha K+XGX^{\top}$, and $Z\sim\mathcal{N}(0,K)$. Suppose that $\alpha,G$, and $Z$ are mutually independent. Then we have
    \[
        \sqrt{n}(\chi_{n}^{(0)}+y,A_{n}-K)
        \stackrel{d}{\longrightarrow}(Z,\Xi)
        .
    \]
\end{lemma}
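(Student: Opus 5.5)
The plan is to exploit the conditional Gaussian structure given $(W_{0},V)$ and to reduce everything to two classical CLTs for sums of i.i.d.\ terms. First I will rewrite both components explicitly. Since $\chi_{n}^{(0)}+y=H_{n}(W_{1}^{(0)})^{\top}V=XW_{0}^{\top}u_{n}$ with $u_{n}:=(W_{1}^{(0)})^{\top}V\in\mathbb{R}^{n}$, and since $W_{1}^{(0)}$ is independent of $(W_{0},V)$, conditionally on $(W_{0},V)$ we have $u_{n}\sim\mathcal{N}(0,n^{-1}\|V\|^{2}I_{n})$. Hence, conditionally on $(W_{0},V)$,
\[
    \sqrt{n}(\chi_{n}^{(0)}+y)\sim\mathcal{N}\bigl(0,\|V\|^{2}XW_{0}^{\top}W_{0}X^{\top}\bigr)=\mathcal{N}(0,A_{n}).
\]
This conditional covariance is exactly $A_{n}$, which is the key structural fact.

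Next, I will handle the second component by factorizing $A_{n}=(n\|V\|^{2})\,X(n^{-1}W_{0}^{\top}W_{0})X^{\top}$. Write $n\|V\|^{2}=n^{-1}\sum_{i}(nV_{i})^{2}$, an average of i.i.d.\ $\chi^{2}_{1}$ variables, so $\sqrt{n}(n\|V\|^{2}-1)\stackrel{d}{\to}\alpha\sim\mathcal{N}(0,2)$. Likewise $n^{-1}W_{0}^{\top}W_{0}=n^{-1}\sum_{i}w_{i}w_{i}^{\top}$, where the rows $w_{i}\sim\mathcal{N}(0,d^{-1}I_{d})$ are i.i.d. The Gaussian fourth-moment (Isserlis) formula gives $\mathrm{Cov}(w_{r}w_{s},w_{t}w_{u})=d^{-2}(\delta_{rt}\delta_{su}+\delta_{ru}\delta_{st})$, so the multivariate CLT yields $\sqrt{n}(n^{-1}W_{0}^{\top}W_{0}-d^{-1}I_{d})\stackrel{d}{\to}G$. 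By independence of $V$ and $W_{0}$, these two limits hold jointly with $\alpha\perp G$.

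The delta method applied to the product, followed by the fixed linear map $M\mapsto XMX^{\top}$, then gives
\[
    \sqrt{n}(A_{n}-K)\stackrel{d}{\to}\alpha\,Xd^{-1}X^{\top}+XGX^{\top}=\Xi.
\]
Both averages also converge almost surely by the strong law of large numbers, so $A_{n}\to K$ almost surely.

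For the joint statement I will use characteristic functions. For $s\in\mathbb{R}^{m}$ and symmetric $S$, condition on $(W_{0},V)$:
\[
    \mathbb{E}\,e^{i\langle s,\sqrt{n}(\chi_{n}^{(0)}+y)\rangle+i\langle S,\sqrt{n}(A_{n}-K)\rangle}
    =\mathbb{E}\bigl[e^{-s^{\top}A_{n}s/2}\,e^{i\langle S,\sqrt{n}(A_{n}-K)\rangle}\bigr].
\]
I then replace $e^{-s^{\top}A_{n}s/2}$ by $e^{-s^{\top}Ks/2}$. The error is at most $\mathbb{E}|e^{-s^{\top}A_{n}s/2}-e^{-s^{\top}Ks/2}|$, which tends to $0$ by bounded convergence, since $A_{n}\to K$ almost surely and $A_{n}\succeq0$ keeps the integrand bounded by $2$. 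The remaining factor converges to $e^{-s^{\top}Ks/2}\,\mathbb{E}e^{i\langle S,\Xi\rangle}$, which is the characteristic function of $(Z,\Xi)$ with $Z\sim\mathcal{N}(0,K)$ independent of $\Xi$. L\'evy's continuity theorem then concludes the proof.

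The main obstacle is this joint-convergence step, specifically the fact that the conditional law of the first component depends on the randomness through $A_{n}$. The conditioning argument resolves it precisely because $A_{n}$ has a deterministic almost-sure limit, which is also what produces the independence of $Z$ and $\Xi$ in the limit. The remaining bookkeeping is routine: checking the covariance constant of $G$ (only the symmetric entries need to be tracked) and confirming that $\alpha$ enters through $d^{-1}XX^{\top}=K$.
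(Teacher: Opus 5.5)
Your proposal is correct and follows essentially the same route as the paper: the conditional law $\sqrt{n}(\chi_{n}^{(0)}+y)\mid W_{0},V\sim\mathcal{N}(0,A_{n})$, separate CLTs for $n\|V\|^{2}$ and $n^{-1}W_{0}^{\top}W_{0}$ made joint by independence, and a characteristic-function argument that replaces $e^{-s^{\top}A_{n}s/2}$ by $e^{-s^{\top}Ks/2}$ via bounded convergence. The only differences are cosmetic: you use the delta method where the paper writes out the three-term decomposition of $A_{n}-K$. You also invoke the strong law where convergence in probability, as the paper uses, already suffices for bounded convergence and sidesteps coupling the triangular arrays across $n$.
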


Lemma~\ref{lem:joint-clt} shows that the primitive finite-width perturbation has scale $\epsilon_{n}=n^{-1/2}$. The effect of this perturbation on learning-rate transfer, however, also depends on the training horizon. As established above, the infinite-width loss and its local curvature vary substantially with $T$, hence the sensitivities of the loss value and its learning-rate derivative to finite-width perturbations have their own horizon-dependent scales. To separate these effects from the width-dependent fluctuation scale $\epsilon_n$, define
\[
    \rho_{L,T}
    =Tq_{*}^{2T-1}
    ,\qquad
    \rho_{G,T}
    =T^{2}q_{*}^{2T-2}
    ,\qquad
    \kappa_{T}
    =T^{2}q_{*}^{2T-2}
    .
\]
The scales $\rho_{L,T}$ and $\rho_{G,T}$ measure the first-order sensitivity of the loss and its learning-rate derivative to finite-width perturbations, respectively. The scale $\kappa_{T}$ measures the local curvature $\phi_{\infty,T}''(\eta_{\infty,T})$. Thus, $\epsilon_{n}$ quantifies the magnitude of the finite-width perturbation, whereas $\rho_{L,T}$, $\rho_{G,T}$, and $\kappa_T$ capture how the loss response, loss-gradient response, and local curvature vary with the training horizon. The following theorem combines these scales to characterize the fluctuations of the optimal learning rate and optimized loss, and the resulting rates of the three transfer gaps.

\begin{theorem}[Growing-horizon fluctuations and transfer rates; Theorems~\ref{thm:b-limit-dist},~\ref{thm:a-limit-dist}, and~\ref{thm:a-b-c-rate}, extracted]\label{thm:a-b-c-summary}
    Suppose $m$ and $d$ are fixed and $T=T_{n}$ satisfies $T\to\infty$ and $T/\sqrt{n}\to0$. Suppose Assumption~\ref{asmp:endpoint-support} holds, and suppose $0<\eta_{\ell}<\eta_{*}<\eta_{u}<2m/\lambda_{1}$. Set $ L_{*}=\log(c_{1}\lambda_{1}/(c_{r}\lambda_{r}))$, $a_{1}^{*}=\exp(-\lambda_{1}L_{*}/(\lambda_{1}+\lambda_{r}))$, and $a_{r}^{*}=\exp(\lambda_{r}L_{*}/(\lambda_{1}+\lambda_{r}))$. Then we have
    \begin{align*}
        &(\epsilon_{n}\rho_{G,T}/\kappa_{T})^{-1}(\eta_{n,T}-\eta_{\infty,T})
        \stackrel{d}{\longrightarrow}\Omega_{\eta,*}
        :=-2m(\lambda_{1}+\lambda_{r})^{-2}(u_{1}^{\top}\Xi u_{1}+u_{r}^{\top}\Xi u_{r})
        ,\\
        &(\epsilon_{n}\rho_{L,T})^{-1}(\phi_{n,T}(\eta_{n,T})-\phi_{\infty,T}(\eta_{\infty,T}))
        \stackrel{d}{\longrightarrow}\Omega_{L,*}
        :=\langle Q_{L,*},\Xi\rangle_{\mathrm{F}}
        ,
    \end{align*}
    where $\Xi$ is defined in Lemma~\ref{lem:joint-clt} and $Q_{L,*}$ is defined by $Q_{L,*}=m^{-2}\eta_{*}(c_{1}a_{1}^{*}u_{1}u_{1}^{\top}-c_{r}a_{r}^{*}u_{r}u_{r}^{\top})$. Consequently, we have
    \[
        a_{n,T}
        =\Theta_{p}(\epsilon_{n}\rho_{L,T})
        =\Theta_{p}\bigg(\frac{Tq_{*}^{2T-1}}{\sqrt{n}}\bigg)
        ,\qquad
        b_{n,T}
        =\Theta_{p}\bigg(\frac{\epsilon_{n}\rho_{G,T}}{\kappa_{T}}\bigg)
        =\Theta_{p}\bigg(\frac{1}{\sqrt{n}}\bigg)
        ,
    \]
    and
    \[
        c_{n,T}
        =\Theta_{p}(\kappa_{T}b_{n,T}^{2})
        =\Theta_{p}\bigg(\frac{\epsilon_{n}^{2}\rho_{G,T}^{2}}{\kappa_{T}}\bigg)
        =\Theta_{p}\bigg(\frac{T^{2}q_{*}^{2T-2}}{n}\bigg)
        .
    \]
\end{theorem}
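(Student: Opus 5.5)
We will reduce everything to a perturbation analysis of the scalar curve $\phi_{n,T}$ around $\phi_{\infty,T}$, using Proposition~\ref{prop:dynamics} and Lemma~\ref{lem:joint-clt}. By Skorokhod representation we may write $\chi_{n}^{(0)}=-y+\epsilon_{n}Z_{n}$ and $A_{n}=K+\epsilon_{n}\Xi_{n}$ with $(Z_{n},\Xi_{n})\to(Z,\Xi)$ almost surely. The plan is then to expand $A_{n}$ spectrally: by simplicity of $\lambda_{1},\lambda_{r}$ (Assumption~\ref{asmp:endpoint-support}), standard eigenvalue perturbation gives $\lambda_{j}(A_{n})=\lambda_{j}+\epsilon_{n}u_{j}^{\top}\Xi_{n}u_{j}+O(\epsilon_{n}^{2})$ for $j\in\{1,r\}$, eigenvector perturbations are $O(\epsilon_{n})$, and the interior eigenvalues remain at distance bounded away from the endpoints. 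Eigenvalues of $A_n$ near $0$ (from $\ker K$) contribute contraction factors $1-O(\epsilon_n)$. Crucially, their weights are $\|P_{0}^{(n)}\chi_{n}^{(0)}\|^{2}$, which differs from $c_0$ by $O(\epsilon_n)$ but is \emph{constant in $\eta$ to order} $T\epsilon_n$ per mode. So $\phi_{n,T}(\eta)=\sum_{k}w_{k,n}(1-\eta\mu_{k,n}/m)^{2T}$ with $\mu_{k,n},w_{k,n}$ the perturbed eigenvalues and weights.

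The first key step is uniform control on a shrinking window $|\eta-\eta_{*}|\le C/T$ (which contains $\eta_{\infty,T}$ by the $O(T^{-1})$ statement after Theorem~\ref{thm:finite-minimax}). On this window the endpoint factors satisfy $|1-\eta\mu/m|^{2T}=q_{*}^{2T}\exp(O(1))$. A perturbation $\delta\mu=O(\epsilon_n)$ multiplies such a term by $\exp(-2T\eta\,\delta\mu/(m(1-\eta\lambda/m)))(1+O(T\epsilon_n^{2}))$. Because $T\epsilon_n\to0$, this linearizes to relative error $O(T\epsilon_n)$ with corrections $O((T\epsilon_n)^2)$. Interior modes are exponentially smaller than $q_{*}^{2T}$, and modes near zero are $\eta$-independent up to $O(T\epsilon_n)$ relative terms. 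Differentiating in $\eta$ gives the derivative perturbation $\phi_{n,T}'-\phi_{\infty,T}'=\epsilon_{n}\rho_{G,T}(g_{T}(\eta;\Xi_n)+o(1))$ and the curvature $\phi_{n,T}''=\kappa_{T}(h(\eta)+o(1))$ with $h$ bounded away from $0$ (Proposition~\ref{prop:finite-T-correction}). A separate global step shows that $\eta_{n,T}$ lies in this window with probability $\to1$: outside it, $\phi_{\infty,T}-\min\phi_{\infty,T}\gtrsim q_{*}^{2T}$ times a constant, which dominates the $O(T\epsilon_n q_*^{2T})$ uniform perturbation on $\mathcal{I}$. The condition $\eta_u<2m/\lambda_1$ is used here to keep the top mode contracting.

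Next, a Newton step: $\eta_{n,T}-\eta_{\infty,T}=-(\phi_{n,T}'-\phi_{\infty,T}')(\eta_{\infty,T})/\phi_{\infty,T}''(\eta_{\infty,T})\,(1+o_p(1))$. Computing the leading slope response at $\eta_*$, where the two endpoint terms balance, should give the combination $u_1^\top\Xi u_1+u_r^\top\Xi u_r$ with coefficient $-2m(\lambda_1+\lambda_r)^{-2}$. Here the balance condition $c_1\lambda_1a_1^{*2T\cdot}\approx c_r\lambda_r(\cdot)$, encoded in $L_*$, makes the $Z$-contribution and the weight perturbations drop out at leading order because they enter only through the $\eta$-independent amplitude. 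For the loss, the envelope argument gives $\phi_{n,T}(\eta_{n,T})-\phi_{\infty,T}(\eta_{\infty,T})=(\phi_{n,T}-\phi_{\infty,T})(\eta_{\infty,T})+O_p(\kappa_Tb^2)$. The first term, at scale $\epsilon_n T q_*^{2T-1}$, evaluates to $\langle Q_{L,*},\Xi\rangle_F$, with $a_1^*,a_r^*$ arising as the limits of $(|1-\eta_{\infty,T}\lambda_j/m|/q_*)^{2T}$. The remainder is $\epsilon_n^2T^2q_*^{2T}=o(\epsilon_nTq_*^{2T})$. The $\Theta_p$ rates follow since $\Omega_{\eta,*},\Omega_{L,*}$ are nondegenerate Gaussians with no atom at $0$, and $c_{n,T}=\tfrac12\phi_{\infty,T}''(\tilde\eta)b_{n,T}^2$.

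\textbf{Main obstacle.} The hardest step is the uniform perturbation bound on the window. This requires showing that the weight and eigenvector perturbations (including the $Z$ term and the kernel-leakage modes) contribute to $\phi_{n,T}'$ only at order $o(\epsilon_n\rho_{G,T})$, rather than $\epsilon_n\rho_{L,T}$-type terms that would bias $\eta_{n,T}$. We would try to handle this by writing each endpoint term as $w(\eta\text{-free})\cdot\exp(2T\log|1-\eta\mu/m|)$. Differentiating then produces a factor $T$ only from the exponent, so that amplitude perturbations are suppressed by $1/T$ relative to eigenvalue perturbations.
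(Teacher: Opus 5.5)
\textbf{Verdict: genuine gap.} Your overall route is sound: a Skorokhod coupling, first-order eigenvalue perturbation at the two simple endpoints, and the observation that amplitude, eigenvector and $Z$ perturbations are suppressed by a factor $1/T$ relative to eigenvalue shifts, followed by a Newton step and an envelope argument. This is a legitimate alternative to the paper's Fr\'echet expansion of matrix powers, and it gives the right leading terms. Indeed, $\delta\mu_j\approx\epsilon_n u_j^\top\Xi u_j$ weighted by $c_j\lambda_j r_{j,T}^{2T-2}\approx h_*q_*^{2T-2}$ reproduces $\langle Q_{F,*},\Xi\rangle_{\mathrm F}$, and likewise for $Q_{L,*}$.

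The gap is your treatment of $\ker(K)$. You allow eigenvalues of $A_n$ of size $O(\epsilon_n)$ emanating from $\ker(K)$, with weights $c_0+O(\epsilon_n)$. You then dismiss them as $\eta$-independent up to relative $O(T\epsilon_n)$ corrections. Perturbations of that size would be fatal here, not negligible.
\begin{itemize}
\item Such a mode contributes about $-c_0T\mu_0/m^2\asymp T\epsilon_n$ to $\phi_{n,T}'$. This is exponentially larger than $\epsilon_n\rho_{G,T}=\epsilon_nT^2q_*^{2T-2}$, and large enough to push $\eta_{n,T}$ out of the $C/T$ window whenever $q_*^{2T}=o(\epsilon_n)$, e.g.\ $T=n^{1/4}$.
\item The $O(\epsilon_n)$ weight shift is constant in $\eta$, but it still enters $\phi_{n,T}(\eta_{n,T})-\phi_{\infty,T}(\eta_{\infty,T})$ directly and swamps $\epsilon_n\rho_{L,T}$. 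Being constant in $\eta$ does not help, because $a_{n,T}$ compares loss values, not minimizers.
\item The joint CLT cannot repair this. Under your coupling it only gives $P_0Z_n\to0$ and $P_0\Xi_nP_0\to0$, with no rate.
\end{itemize}

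The missing ingredient is an exact structural fact. Both $A_n=\|V\|^2XW_0^\top W_0X^\top$ and $f_n^{(0)}(X)=XW_0^\top(W_1^{(0)})^\top V$ factor through $X$. Hence, almost surely, $\ker(A_n)=\ker(K)$ for $n\ge d$ and $f_n^{(0)}(X)\in\operatorname{range}(K)$ (Lemma~\ref{lem:kernel-H_n-A_n-K} and \eqref{eq:range-K-prediction-error}). It follows that the kernel eigenvalues are exactly zero and $P_0\chi_n^{(0)}=-P_0y$ exactly. The term $c_0/(2m)$ then cancels identically, and every power bound can be run on $\operatorname{range}(K)$ with contraction $q_*+O(1/T)$. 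This is how the paper proceeds.

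Two lesser points.
\begin{itemize}
\item Your global localization step asserts a uniform $O(T\epsilon_nq_*^{2T})$ perturbation on all of $\mathcal I$. This is false near the ends of $\mathcal I$, where some active contraction factor exceeds $q_*$ and only a relative bound holds. The paper needs no global comparison: $\phi_{n,T}$ is a.s.\ strictly convex, so the sign change of $F_{n,T}$ at $\eta_{\infty,T}\pm M/\sqrt n$ already identifies the minimizer over $\mathcal I$.
\item Nondegeneracy of $\Omega_{L,*}$ is not automatic. The balance $c_1\lambda_1a_1^*=c_r\lambda_ra_r^*$ forces $\langle Q_{L,*},K\rangle_{\mathrm F}=0$, so the $\alpha K$ part of $\Xi$ drops out. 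The positive variance must come from $XGX^\top$, namely $2\operatorname{tr}(Q_{L,*}KQ_{L,*}K)>0$, and you should verify this.
\end{itemize}
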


\begin{corollary}[Fast transfer at growing horizons]\label{cor:a-b-c-summary}
    Under the assumptions of Theorem~\ref{thm:a-b-c-summary}, we have
    \[
        \frac{c_{n,T}}{a_{n,T}}
        =\Theta_{p}\bigg(\frac{\epsilon_{n}^{2}\rho_{G,T}^{2}/\kappa_{T}}{\epsilon_{n}\rho_{L,T}}\bigg)
        =\Theta_{p}\bigg(\frac{T}{q_{*}\sqrt{n}}\bigg)
        \stackrel{p}{\longrightarrow}0
        .
    \]
\end{corollary}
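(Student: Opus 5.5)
The corollary follows from the rates in Theorem~\ref{thm:a-b-c-summary} together with the fact that $a_{n,T}$ is bounded away from zero at its own scale. The plan is to form the ratio $c_{n,T}/a_{n,T}$ by dividing the two $\Theta_p$ statements and to simplify the resulting deterministic scale.

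\textbf{Step 1: upper bound on the numerator.} From Theorem~\ref{thm:a-b-c-summary}, $c_{n,T}=O_p(\epsilon_n^2\rho_{G,T}^2/\kappa_T)$.

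\textbf{Step 2: lower bound on the denominator.} We need $a_{n,T}^{-1}=O_p\bigl((\epsilon_n\rho_{L,T})^{-1}\bigr)$, i.e.\ the lower half of $a_{n,T}=\Theta_p(\epsilon_n\rho_{L,T})$. This is the only delicate point, and I would verify it directly from the distributional limit. By Theorem~\ref{thm:a-b-c-summary}, $(\epsilon_n\rho_{L,T})^{-1}a_{n,T}\stackrel{d}{\to}|\Omega_{L,*}|$. Here $\Omega_{L,*}=\langle Q_{L,*},\Xi\rangle_{\mathrm F}$ is a centered Gaussian. Its variance is positive because $u_1^\top\Xi u_1=\alpha\lambda_1+(X^\top u_1)^\top G(X^\top u_1)$ has a nondegenerate $G$-component, since $X^\top u_1\neq0$ when $\lambda_1>0$. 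More precisely, $Q_{L,*}\neq0$ because $c_1a_1^*>0$, and the map $Q\mapsto\langle Q,XGX^\top\rangle_{\mathrm F}=\langle X^\top QX,G\rangle_{\mathrm F}$ has positive variance whenever the symmetric matrix $X^\top QX\neq0$. This holds because $u_1,u_r$ lie in the range of $K$, so $X^\top u_1$ and $X^\top u_r$ are linearly independent and $X^\top Q_{L,*}X\ne0$. Hence $\Omega_{L,*}$ has no atom at $0$, so $\mathbb P(|\Omega_{L,*}|<\delta)\to0$ as $\delta\to0$. By the portmanteau theorem, $\limsup_n\mathbb P\bigl((\epsilon_n\rho_{L,T})^{-1}a_{n,T}<\delta\bigr)\le\mathbb P(|\Omega_{L,*}|\le\delta)$, which gives the required tightness of the inverse.

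\textbf{Step 3: combining.} Multiplying the two bounds gives
\[
\frac{c_{n,T}}{a_{n,T}}=O_p\!\left(\frac{\epsilon_n\rho_{G,T}^2}{\kappa_T\rho_{L,T}}\right),
\]
and the matching lower bound follows symmetrically from the lower half of $c_{n,T}=\Theta_p(\cdot)$ and the upper half of $a_{n,T}=\Theta_p(\cdot)$. Substituting $\rho_{G,T}=\kappa_T=T^2q_*^{2T-2}$ and $\rho_{L,T}=Tq_*^{2T-1}$, the scale becomes
\[
\frac{\epsilon_n\rho_{G,T}}{\rho_{L,T}}=\frac{T^2q_*^{2T-2}}{\sqrt n\,Tq_*^{2T-1}}=\frac{T}{q_*\sqrt n}.
\]
Since $q_*\in(0,1)$ is fixed (because $\lambda_1>\lambda_r>0$) and $T/\sqrt n\to0$ by hypothesis, this deterministic scale tends to $0$. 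A random sequence that is $O_p$ of a vanishing deterministic sequence converges to $0$ in probability, which proves the claim.
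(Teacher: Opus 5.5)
Your proposal is correct and follows the paper's proof: the paper likewise obtains $1/a_{n,T}=O_{p}(\sqrt{n}/(Tq_{*}^{2T-1}))$ from the atomless limit $|\Omega_{L,*}|$ via the continuous mapping theorem, combines it with $c_{n,T}=\Theta_{p}(T^{2}q_{*}^{2T-2}/n)$, and simplifies the scale to $T/(q_{*}\sqrt{n})$. Your re-check that $\operatorname{Var}(\Omega_{L,*})>0$ is valid but redundant, since it is already part of Theorem~\ref{thm:a-b-c-summary}. It uses orthogonality of $X^{\top}u_{1},X^{\top}u_{r}$ and independence of $\alpha$ and $G$. The paper instead uses $\langle Q_{L,*},K\rangle_{\mathrm{F}}=0$ to remove the $\alpha$-component and then computes the variance explicitly.
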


By Corollary~\ref{cor:a-b-c-summary}, under the assumptions of Theorem~\ref{thm:a-b-c-summary}, fast learning-rate transfer persists at growing training horizons when $T=o(\sqrt{n})$.

\subsection{A fixed-horizon benchmark}\label{sec:transfer-fixed-T}

Having established our main growing-horizon result in Theorem~\ref{thm:a-b-c-summary}, we now turn to the fixed-horizon regime. In particular, we verify that, under the nondegeneracy condition below, our model recovers the general rates conjectured by \citet{wen2026fast} for fixed $T$: $a_{n,T}=\Theta_{p}(n^{-1/2})$, $b_{n,T}=\Theta_{p}(n^{-1/2})$, and $c_{n,T}=\Theta_{p}(n^{-1})$. These rates also provide a useful baseline for Theorem~\ref{thm:a-b-c-summary}. When $T$ is fixed, the horizon-dependent factors are absorbed into constants, whereas allowing $T$ to grow introduces nontrivial interactions between width and training horizon.

\begin{theorem}[Fixed-horizon transfer rates]\label{thm:fixed-horizon-transfer}
    Fix $m,d,T\in\mathbb{N}$, and suppose $|\mathcal{J}_{y}|=\#\{j\in[r]:c_{j}>0\}\ge2$.\footnote{This condition is equivalent to $K^{2}y\notin\operatorname{span}\{Ky\}$.} Assume in addition that $\eta_{\infty,T}\in(\eta_{\ell},\eta_{u})$. Then for $\epsilon_{n}=n^{-1/2}$, we have
    \[
        a_{n,T}
        =\Theta_{p}(\epsilon_{n})
        ,\qquad
        b_{n,T}
        =\Theta_{p}(\epsilon_{n})
        ,\qquad
        c_{n,T}
        =\Theta_{p}(\epsilon_{n}^{2})
        .
    \]
\end{theorem}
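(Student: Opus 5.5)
The plan is a delta-method argument built on Lemma~\ref{lem:joint-clt}, with $T$ fixed throughout. By Proposition~\ref{prop:dynamics}, $\phi_{n,T}(\eta)=\Phi(\chi_{n}^{(0)},A_{n};\eta)$ with $\Phi(\chi,A;\eta)=(2m)^{-1}\chi^{\top}B_{A}(\eta)^{2T}\chi$. This is a polynomial in $(\chi,A,\eta)$, and $\Phi(-y,K;\cdot)=\phi_{\infty,T}$. Lemma~\ref{lem:joint-clt} and the Skorokhod representation give a.s. coupled versions with $\chi_{n}^{(0)}=-y+\epsilon_{n}Z_{n}$ and $A_{n}=K+\epsilon_{n}\Xi_{n}$, where $(Z_{n},\Xi_{n})\to(Z,\Xi)$. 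Taylor expansion in $\epsilon_{n}$ then gives, uniformly on $\mathcal{I}$ together with two $\eta$-derivatives,
\[
\phi_{n,T}(\eta)=\phi_{\infty,T}(\eta)+\epsilon_{n}\Delta(\eta)+O_{p}(\epsilon_{n}^{2}),
\]
where $\Delta(\eta)=D\Phi_{(-y,K;\eta)}[(Z,\Xi)]+o_{p}(1)$. The limit $\Delta$ is a Gaussian process, linear in $(Z,\Xi)$ and smooth in $\eta$.

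\textbf{Step 1: $b_{n,T}$.} Set $\eta_{\infty}=\eta_{\infty,T}$. We need $\phi_{\infty,T}''(\eta_{\infty})>0$. Writing $x_{j}=1-\eta\lambda_{j}/m$, we have $\phi''_{\infty,T}=(2m)^{-1}\sum_{j}c_{j}\,2T(2T-1)(\lambda_{j}/m)^{2}x_{j}^{2T-2}$. This is nonnegative, and it vanishes only if every active $x_{j}=0$, which is impossible when $|\mathcal{J}_{y}|\ge2$ since the active $\lambda_{j}$ are distinct. So $\phi_{\infty,T}$ is strictly convex, and $\eta_{\infty}$ is its unique interior minimizer.

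Uniform convergence (Corollary~\ref{cor:fixed-T-limit-loss}) and uniqueness give $\eta_{n,T}\to\eta_{\infty}$ in probability. It is therefore eventually an interior critical point, so $\phi_{n,T}'(\eta_{n,T})=0$. Expanding around $\eta_{\infty}$ gives
\[
\epsilon_{n}^{-1}(\eta_{n,T}-\eta_{\infty})\to-\Delta'(\eta_{\infty})/\phi_{\infty,T}''(\eta_{\infty}).
\]
For $b_{n,T}=\Theta_{p}(\epsilon_{n})$, the upper bound is immediate. The lower bound (no mass at zero) needs $\Delta'(\eta_{\infty})$ to be a nondegenerate Gaussian: a Gaussian with positive variance is nonzero a.s., so its absolute value is $\Theta_p(1)$.

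\textbf{Step 2: $a_{n,T}$.} Because $\eta_{\infty}$ is a critical point of $\phi_{\infty,T}$, the envelope argument gives
\[
\phi_{n,T}(\eta_{n,T})-\phi_{\infty,T}(\eta_{\infty})=\epsilon_{n}\Delta(\eta_{\infty})+O_{p}(\epsilon_{n}^{2}).
\]
So $a_{n,T}=\Theta_{p}(\epsilon_{n})$ follows once $\Delta(\eta_{\infty})$ is a nondegenerate Gaussian.

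\textbf{Step 3: $c_{n,T}$.} A second-order Taylor expansion of $\phi_{\infty,T}$ at $\eta_{\infty}$ gives $c_{n,T}=\tfrac12\phi''_{\infty,T}(\eta_{\infty})b_{n,T}^{2}(1+o_{p}(1))$. By Step 1 this is $\Theta_{p}(\epsilon_{n}^{2})$.

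\textbf{Main obstacle: nondegeneracy of the two Gaussians.} I would compute the variances explicitly. The $Z$-part of $\Delta(\eta)$ is $-m^{-1}y^{\top}B_{\infty}(\eta)^{2T}Z$, whose variance is $m^{-2}\,y^{\top}B_{\infty}^{2T}KB_{\infty}^{2T}y=m^{-2}\sum_{j}c_{j}\lambda_{j}x_{j}^{4T}$. This is positive unless all active $x_{j}$ vanish at $\eta_{\infty}$, which is excluded as in Step 1. Since $Z$ is independent of $\Xi$, this alone makes $\Delta(\eta_{\infty})$ nondegenerate.

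For $\Delta'(\eta_{\infty})$, the $Z$-part is $-m^{-1}y^{\top}(B^{2T})'Z$, with variance $\propto\sum_{j}c_{j}\lambda_{j}^{3}x_{j}^{4T-2}$. The first-order condition $\sum_{j}c_{j}\lambda_{j}x_{j}^{2T-1}=0$ forces at least two active $x_{j}$ to be nonzero, with opposite signs. Hence this variance is strictly positive, and independence of $Z$ and $\Xi$ again gives nondegeneracy.

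The Skorokhod/uniform-derivative bookkeeping is routine. The step to check carefully is that $\eta_{n,T}$ is eventually in the interior of $\mathcal{I}$, which uses the assumption $\eta_{\infty}\in(\eta_{\ell},\eta_{u})$.
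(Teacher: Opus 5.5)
Your proposal is correct and follows essentially the same route as the paper: a delta method built on Lemma~\ref{lem:joint-clt}, positive curvature at $\eta_{\infty,T}$ from $|\mathcal{J}_y|\ge2$, the envelope argument for $a_{n,T}$, and the quadratic expansion for $c_{n,T}$. Nondegeneracy comes from an independent Gaussian component; for $a_{n,T}$ this is exactly the paper's $Z$-variance $m^{-2}\sum_j c_j\lambda_j x_j^{4T}$.

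The differences are minor. You localize $\eta_{n,T}$ by argmin consistency rather than the paper's score sign-change argument. You also certify nondegeneracy of the $b_{n,T}$ limit through its $Z$-part, whose variance is $\propto\sum_j c_j\lambda_j^3x_j^{4T-2}$ and is positive because at most one active $x_j$ can vanish (the first-order condition alone does not exclude all of them vanishing). The paper instead uses the $\alpha K$-part via $\langle Q_{F,T},K\rangle_{\mathrm{F}}=\eta_{\infty,T}F_{\infty,T}'(\eta_{\infty,T})$; both arguments are valid.
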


\begin{corollary}[Fast transfer at a fixed horizon]\label{cor:fixed-horizon-fast-transfer}
    Under the assumptions of Theorem~\ref{thm:fixed-horizon-transfer}, we have $c_{n,T}/a_{n,T}=\Theta_{p}(\epsilon_{n})\stackrel{p}{\longrightarrow}0$.
\end{corollary}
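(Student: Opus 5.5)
The corollary is the ratio of the two rates in Theorem~\ref{thm:fixed-horizon-transfer}. The only care needed is that ratios of $\Theta_{p}$ quantities behave correctly; in particular, the denominator $a_{n,T}$ must be bounded away from zero on the $\epsilon_{n}$ scale. I will therefore unpack the definitions of $\Theta_{p}$ rather than manipulate the symbols formally.

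\textbf{Step 1: what Theorem~\ref{thm:fixed-horizon-transfer} gives.} Reading $X_{n}=\Theta_{p}(r_{n})$ as ``$X_{n}=O_{p}(r_{n})$ and $r_{n}=O_{p}(X_{n})$'' (equivalently $X_{n}/r_{n}$ and $r_{n}/X_{n}$ are both tight), the theorem supplies the following. First, $a_{n,T}/\epsilon_{n}$ and $\epsilon_{n}/a_{n,T}$ are tight. Second, $c_{n,T}/\epsilon_{n}^{2}$ and $\epsilon_{n}^{2}/c_{n,T}$ are tight. The lower bound on $a_{n,T}$ also implies $a_{n,T}>0$ with probability tending to one, so the ratio $c_{n,T}/a_{n,T}$ is well defined on an event of probability tending to one. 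Since $c_{n,T}\ge0$ by the definition of $\eta_{\infty,T}$ as a global minimizer, no sign issue arises.

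\textbf{Step 2: the ratio is $\Theta_{p}(\epsilon_{n})$.} Write
\[
\frac{c_{n,T}}{a_{n,T}\epsilon_{n}}=\frac{c_{n,T}}{\epsilon_{n}^{2}}\cdot\frac{\epsilon_{n}}{a_{n,T}}.
\]
This is a product of two tight sequences, and a product of tight sequences is tight: for any $\delta>0$, choose $M_{1},M_{2}$ so that each factor exceeds its bound with probability at most $\delta/2$, then apply a union bound. Hence $c_{n,T}/a_{n,T}=O_{p}(\epsilon_{n})$. The reverse bound comes from the symmetric factorization
\[
\frac{a_{n,T}\epsilon_{n}}{c_{n,T}}=\frac{a_{n,T}}{\epsilon_{n}}\cdot\frac{\epsilon_{n}^{2}}{c_{n,T}},
\]
which is again a product of tight sequences. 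Together these give $c_{n,T}/a_{n,T}=\Theta_{p}(\epsilon_{n})$.

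\textbf{Step 3: convergence to zero.} Since $\epsilon_{n}=n^{-1/2}\to0$ and $c_{n,T}/a_{n,T}=O_{p}(\epsilon_{n})$, we have $c_{n,T}/a_{n,T}=O_{p}(1)\cdot o(1)=o_{p}(1)$, which is the claimed convergence in probability. Concretely, for any $\varepsilon,\delta>0$, fix $M$ with $\Pr(c_{n,T}/a_{n,T}>M\epsilon_{n})<\delta$ for all large $n$, and then take $n$ large enough that $M\epsilon_{n}<\varepsilon$. This is precisely fast transfer in the sense of the definition, $c_{n,T}=o_{p}(a_{n,T})$.

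\textbf{Main obstacle.} There is no real difficulty; all the substance lies in Theorem~\ref{thm:fixed-horizon-transfer}. The one point to keep explicit is the lower bound on $a_{n,T}$, i.e.\ that $\epsilon_{n}/a_{n,T}$ is tight. This ensures the denominator is not degenerate, and it is where the nondegeneracy hypothesis $|\mathcal{J}_{y}|\ge2$ enters, through the theorem.
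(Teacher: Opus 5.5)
Your proposal is correct and matches the paper's argument. You combine $c_{n,T}=\Theta_{p}(n^{-1})$ with two-sided control of $a_{n,T}$ at scale $n^{-1/2}$, and the crux is the lower bound $1/a_{n,T}=O_{p}(\sqrt{n})$; the paper re-derives this directly from the nondegenerate limit $\sqrt{n}a_{n,T}\stackrel{d}{\longrightarrow}|\Omega_{L,T}|$ of Theorem~\ref{thm:a-limit-dist-fixed-T} via the continuous mapping theorem (using $\mathrm{P}(|\Omega_{L,T}|=0)=0$), whereas you read it off the two-sided meaning of $\Theta_{p}$ in Theorem~\ref{thm:fixed-horizon-transfer}, which that same limit supplies.
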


Corollary~\ref{cor:fixed-horizon-fast-transfer} implies that, under the assumptions of Theorem~\ref{thm:fixed-horizon-transfer}, learning-rate transfer is fast at every fixed horizon $T$.

\subsection{Counterexample to fast transfer}

The preceding results establish fast transfer both at growing horizons under the conditions of Theorem~\ref{thm:a-b-c-summary} and, more broadly at fixed horizons, under the nondegeneracy condition of Theorem~\ref{thm:fixed-horizon-transfer}. Fast transfer, however, is not automatic even when the training horizon is fixed. We now exhibit a counterexample in which fast transfer fails.

\begin{proposition}[Failure of fast transfer with a single positive eigenvalue]\label{prop:isotropic-no-fast-transfer}
    Fix integers $2\le m\le d$, deterministic data $X\in\mathbb{R}^{m\times d}$ satisfying $XX^{\top}=I_{m}$, and a nonzero $y\in\mathbb{R}^{m}$. Note that $|\mathcal{J}_{y}|=1$ holds in this case. Set $\eta_{0}:=md\in(\eta_{\ell},\eta_{u})$, and fix an integer $T\ge1$. Then we have
    \[
        a_{n,T}
        =\Theta_{p}(n^{-T})
        ,\qquad
        b_{n,T}
        =\Theta_{p}(n^{-1/2})
        ,\qquad
        c_{n,T}
        =\Theta_{p}(n^{-T})
        ,\qquad
        c_{n,T}/a_{n,T}
        =\Theta_{p}(1)
        ,
    \]
    thus fast transfer fails.
\end{proposition}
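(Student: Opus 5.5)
Since $XX^{\top}=I_{m}$, we have $K=d^{-1}I_{m}$, so there is a single positive eigenvalue $\lambda=1/d$ and $\mathcal{J}_{y}=\{1\}$. By Theorem~\ref{thm:finite-minimax}, $\eta_{\infty,T}=m/\lambda=md=\eta_{0}$. Moreover, $\phi_{\infty,T}(\eta)=(2m)^{-1}\|y\|^{2}(1-\eta/(md))^{2T}$ with minimum value $0$. Hence
\[
a_{n,T}=\phi_{n,T}(\eta_{n,T})
\quad\text{and}\quad
c_{n,T}=(2m)^{-1}\|y\|^{2}\bigl(b_{n,T}/(md)\bigr)^{2T}.
\]
So it suffices to show two things: $b_{n,T}=\Theta_{p}(n^{-1/2})$, and $\phi_{n,T}(\eta_{n,T})=\Theta_{p}(n^{-T})$, with an explicit joint limit for the ratio. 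I will rescale the learning rate as $\eta=md(1+sn^{-1/2})$ and prove a joint distributional limit for $(\sqrt n(\eta_{n,T}-\eta_{0}),\,n^{T}\phi_{n,T}(\eta_{n,T}))$.

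\textbf{Step 1: exact reduction.} Write $A_{n}=d^{-1}I_{m}+n^{-1/2}\Xi_{n}$ with $\Xi_{n}:=\sqrt n(A_{n}-K)$. Because $K$ is scalar, the eigenvectors $v_{i,n}$ of $A_{n}$ are exactly those of $\Xi_{n}$, and its eigenvalues are $d^{-1}+n^{-1/2}\xi_{i,n}$, where $\xi_{i,n}$ are the eigenvalues of $\Xi_{n}$. Substituting into Proposition~\ref{prop:dynamics} gives $1-\eta\mu_{i}/m=-n^{-1/2}(s+d\xi_{i,n})(1+O(n^{-1/2}))$, uniformly for $s$ in compacts. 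Therefore
\[
n^{T}\phi_{n,T}\bigl(md(1+sn^{-1/2})\bigr)=(2m)^{-1}\sum_{i=1}^{m}w_{i,n}(s+d\xi_{i,n})^{2T}+o_{p}(1),
\qquad
w_{i,n}=\langle v_{i,n},\chi_{n}^{(0)}\rangle^{2},
\]
uniformly on compacts in $s$.

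\textbf{Step 2: limit of the rescaled curve.} By Lemma~\ref{lem:joint-clt}, $\Xi_{n}\to\Xi=\alpha K+XGX^{\top}$ in distribution and $\chi_{n}^{(0)}\to-y$ in probability. Since $XX^{\top}=I$, the matrix $XGX^{\top}$ is a scaled GOE-type $m\times m$ matrix. With $m\ge2$, its eigenvalues are almost surely simple and not all equal, and its eigenvectors are Haar-distributed, so $\langle v_{i},y\rangle\ne0$ for all $i$ almost surely. On the event of simple eigenvalues, the eigenvalues and eigenvectors are continuous functions of the matrix. The continuous mapping theorem (via Skorokhod coupling) then gives uniform convergence on compacts to
\[
g(s)=(2m)^{-1}\sum_{i}\langle v_{i},y\rangle^{2}(s+d\xi_{i})^{2T}.
\]
This $g$ is strictly convex and coercive. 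Its minimizer $s_{*}$ is unique, and $g(s_{*})>0$ almost surely because the $\xi_{i}$ are not all equal and all weights are positive. Also $s_{*}\ne0$ almost surely, since $s_{*}$ has a continuous distribution.

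\textbf{Step 3: argmin theorem and tightness (main obstacle).} The main work is showing that the rescaled minimizer $s_{n}=\sqrt n(\eta_{n,T}/(md)-1)$ is tight, i.e.\ that the global minimizer over the fixed interval $\mathcal{I}$ does not escape the $n^{-1/2}$ window. I would handle this directly. If $|s|\ge M$ with $M$ large relative to $\max_{i}|d\xi_{i,n}|=O_{p}(1)$, then every factor satisfies $|1-\eta\mu_{i}/m|\ge cMn^{-1/2}$, where on the part of $\mathcal{I}$ outside the window this lower bound becomes a constant. Hence $n^{T}\phi_{n,T}\ge c'M^{2T}\min_{i}w_{i,n}$, which exceeds $n^{T}\phi_{n,T}(\eta_{0})=O_{p}(1)$ with high probability. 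That $\eta_{0}$ is interior to $\mathcal{I}$ is used here.

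Given tightness, the argmin continuous mapping theorem yields $(s_{n},n^{T}\phi_{n,T}(\eta_{n,T}))\to(s_{*},g(s_{*}))$ in distribution. Consequently $b_{n,T}=md\,n^{-1/2}|s_{n}|=\Theta_{p}(n^{-1/2})$ because $s_{*}\ne0$ almost surely, and $a_{n,T}=\Theta_{p}(n^{-T})$. Finally, $c_{n,T}=(2m)^{-1}\|y\|^{2}n^{-T}s_{n}^{2T}=\Theta_{p}(n^{-T})$ and
\[
\frac{c_{n,T}}{a_{n,T}}\to\frac{\|y\|^{2}s_{*}^{2T}}{2m\,g(s_{*})},
\]
which is almost surely finite and positive. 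Therefore $c_{n,T}/a_{n,T}=\Theta_{p}(1)$, and fast transfer fails.
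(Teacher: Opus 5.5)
Your proposal is correct and is essentially the paper's argument: rescale $\eta=\eta_{0}(1+sn^{-1/2})$, identify the limit curve $(2m)^{-1}y^{\top}(sI_{m}+d\,\Xi)^{2T}y$, prove tightness of the rescaled minimizer, and apply the argmin continuous mapping theorem; the differences are cosmetic (you localize with a direct lower bound on $n^{T}\phi_{n,T}$ outside the window, where the paper brackets the score root between $m/\lambda_{\max}(A_{n})$ and $m/\lambda_{\min}(A_{n})$, and you get $g(s_{*})>0$ from Haar eigenvectors rather than from $y$ almost surely not being an eigenvector of $dXGX^{\top}$). The one claim you should actually prove is that $s_{*}$ has a continuous distribution, since it gives the lower bounds on $b_{n,T}$ and $c_{n,T}$; the paper's route is that $d\,\Xi=\alpha I_{m}+dXGX^{\top}$, so $s_{*}=\tilde{s}-\alpha$, where $\tilde{s}$ minimizes $s\mapsto y^{\top}(sI_{m}+dXGX^{\top})^{2T}y$ and depends on $G$ alone, and because $\alpha\sim\mathcal{N}(0,2)$ is independent of $G$, $\mathrm{P}(s_{*}=0)=0$.
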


Proposition~\ref{prop:isotropic-no-fast-transfer} shows that convergence of the optimal learning rate alone is not sufficient for fast transfer. Indeed, although $b_{n,T}=\Theta_{p}(n^{-1/2})$ holds exactly as in Theorem~\ref{thm:fixed-horizon-transfer}, the loss gap and the transfer suboptimality gap both scale as $\Theta_{p}(n^{-T})$. Hence the transfer penalty is not asymptotically negligible relative to the finite-width loss discrepancy. The mechanism behind this failure is a degeneracy of the leading loss sensitivity; Section~\ref{sec:transfer-mechanisms} revisits this example from the local perturbation perspective.

\section{Perturbation structure of hyperparameter transfer}\label{sec:proof-overview}

\subsection{Model-specific perturbation limits for Theorem~\ref{thm:a-b-c-summary}}\label{subsec:proof-structure}

In this section, assume the conditions of Theorem~\ref{thm:a-b-c-summary}. The purpose of this section is model-specific: we identify the first-order finite-width perturbations and their limiting distributions in our shallow linear model. Section~\ref{subsec:perturbation-formulation} subsequently abstracts the rate-level consequences of these calculations into a general local perturbation formulation.

For symmetric $A\in\mathbb{R}_{\mathrm{sym}}^{m\times m}$ and $\chi\in\mathbb{R}^{m}$, define the deterministic loss map and score
\[
    \Phi_{T}(\eta;\chi,A)
    =(2m)^{-1}\chi^{\top}B_{A}(\eta)^{2T}\chi
    ,\qquad
    F_{T}(\eta;\chi,A)
    =\chi^{\top}AB_{A}(\eta)^{2T-1}\chi
    .
\]
Thus we have $\phi_{n,T}=\Phi_{T}(\cdot;\chi_{n}^{(0)},A_{n})$ and $\phi_{\infty,T}=\Phi_{T}(\cdot;-y,K)$. Define $F_{n,T}$ and
$F_{\infty,T}$ by $F_{n,T}=F_{T}(\cdot;\chi_{n}^{(0)},A_{n})$ and $F_{\infty,T}=F_{T}(\cdot;-y,K)$. Proposition~\ref{prop:loss-derivatives} gives $\partial_{\eta}\Phi_{T}=-(T/m^{2})F_{T}$, so interior minimizers of $\Phi_{T}$ are roots of $F_{T}$.

\paragraph{Linearization and horizon dependence.}
Lemma~\ref{lem:joint-clt} gives $\sqrt{n}(\chi_{n}^{(0)}+y,A_{n}-K)\stackrel{d}{\longrightarrow}(Z,\Xi)$. Writing $D_{(\chi,A)}$ for the Fr\'echet derivative with respect to $(\chi,A)$ at $(\eta_{\infty,T};-y,K)$, and noting that Corollary~\ref{cor:optimizer-characterization} implies $F_{\infty,T}(\eta_{\infty,T})=0$, Lemmas~\ref{lem:frechet},~\ref{lem:score-linearization}, and~\ref{lem:loss-linearization} yield
\begin{align}
    \label{eq:F-Frechet-expansion}
    &F_{n,T}(\eta_{\infty,T})
    =D_{(\chi,A)}F_{T}[\chi_{n}^{(0)}+y,A_{n}-K]+O_{p}(T^{2}q_{*}^{2T-3}/n)
    ,\\
    \label{eq:Phi-Frechet-expansion}
    &\phi_{n,T}(\eta_{\infty,T})-\phi_{\infty,T}(\eta_{\infty,T})
    =D_{(\chi,A)}\Phi_{T}[\chi_{n}^{(0)}+y,A_{n}-K]+O_{p}(T^{2}q_{*}^{2T-2}/n)
    .
\end{align}
In particular, Lemma~\ref{lem:frechet} gives
\begin{align*}
    &D_{(\chi,A)}F_{T}(\eta_{\infty,T};-y,K)[h,E]
    =g_{F,T}^{\top}h+\langle Q_{F,T},E\rangle_{\mathrm{F}}
    ,\\
    &D_{(\chi,A)}\Phi_{T}(\eta_{\infty,T};-y,K)[h,E]
    =g_{L,T}^{\top}h+\langle Q_{L,T},E\rangle_{\mathrm{F}}
    ,
\end{align*}
where $g_{F,T},Q_{F,T},g_{L,T},Q_{L,T}$ are the fixed Fr\'echet coefficients defined in \eqref{eq:loss-derivative-sensitivity-chi}, \eqref{eq:loss-derivative-sensitivity-A}, \eqref{eq:loss-sensitivity-chi}, and \eqref{eq:loss-sensitivity-A}. Since
$(\chi_{n}^{(0)}+y,A_{n}-K)$ is $O_{p}(n^{-1/2})$, the size of the finite-width fluctuations is determined by the $T$-dependence of the Fr\'echet coefficients. Lemma~\ref{lem:frechet-coefficient-limits} shows that on $\operatorname{range}(K)$, as $T\to\infty$, $Q_{F,T}/(Tq_{*}^{2T-2})$ converges to $Q_{F,*}$ defined in Lemma~\ref{lem:frechet-coefficient-limits}, and $Q_{L,T}/(Tq_{*}^{2T-1})$ converges to $Q_{L,*}$. Lemma~\ref{lem:frechet-coefficient-limits} also shows that on $\operatorname{range}(K)$, $g_{F,T}$ and $g_{L,T}$ are $o(Tq_{*}^{2T-2})$ and $o(Tq_{*}^{2T-1})$, respectively. Thus, at the infinite-width optimizer, the leading finite-width perturbations come from $A_{n}-K$, rather than from the initial prediction $f_{n}^{(0)}(X)=\chi_{n}^{(0)}+y$.

Combining these coefficient asymptotics with Lemma~\ref{lem:joint-clt} and \eqref{eq:F-Frechet-expansion}, we have $F_{n,T}(\eta_{\infty,T})=\langle Q_{F,T},A_{n}-K\rangle_{\mathrm{F}}+o_{p}(Tq_{*}^{2T-2}/\sqrt{n})$, and hence
\begin{equation}\label{eq:normalized-score-limit-2}
    \sqrt{n}(Tq_{*}^{2T-2})^{-1}F_{n,T}(\eta_{\infty,T})
    \stackrel{d}{\longrightarrow}\langle Q_{F,*},\Xi\rangle_{\mathrm{F}}
    =-2m^{-1}\eta_{*}h_{*}(u_{1}^{\top}\Xi u_{1}+u_{r}^{\top}\Xi u_{r})
    ,
\end{equation}
where $h_{*}:=c_{1}\lambda_{1}a_{1}^{*}$; the equality $c_{1}\lambda_{1}a_{1}^{*}=c_{r}\lambda_{r}a_{r}^{*}$ is established in \eqref{eq:endpoint-balance-constant}. Similarly, Lemma~\ref{lem:joint-clt} and \eqref{eq:Phi-Frechet-expansion} give $\phi_{n,T}(\eta_{\infty,T})-\phi_{\infty,T}(\eta_{\infty,T})=\langle Q_{L,T},A_{n}-K\rangle_{\mathrm{F}}+o_{p}(Tq_{*}^{2T-1}/\sqrt{n})$, so that
\begin{equation}\label{eq:loss-at-limit-eta-2}
    \sqrt{n}(Tq_{*}^{2T-1})^{-1}(\phi_{n,T}(\eta_{\infty,T})-\phi_{\infty,T}(\eta_{\infty,T}))
    \stackrel{d}{\longrightarrow}\langle Q_{L,*},\Xi\rangle_{\mathrm{F}}
    =\Omega_{L,*}
    .
\end{equation}

\paragraph{From the perturbation limits to the distributional results.}
Lemma~\ref{lem:preliminary-b-rate} localizes the finite-width optimizer at $\eta_{n,T}-\eta_{\infty,T}=O_{p}(n^{-1/2})=o_{p}(T^{-1})$. Together with the local score-derivative asymptotics, the score limit \eqref{eq:normalized-score-limit-2} yields $\sqrt{n}(\eta_{n,T}-\eta_{\infty,T})\stackrel{d}{\longrightarrow}\Omega_{\eta,*}$, as stated in Theorem~\ref{thm:b-limit-dist}. Likewise, finite-width reoptimization changes the loss by $O_{p}(T^{2}q_{*}^{2T-2}/n)=o_{p}(Tq_{*}^{2T-1}/\sqrt{n})$, so the limit in \eqref{eq:loss-at-limit-eta-2} is unchanged when $\eta_{\infty,T}$ is replaced by $\eta_{n,T}$, yielding Theorem~\ref{thm:a-limit-dist}. The remaining rate-level consequences of these perturbation estimates are organized abstractly in Section~\ref{subsec:perturbation-formulation}, while the complete model-specific arguments are given in Appendix~\ref{app:proof-width-horizon-transfer}.

\subsection{A local perturbation formulation for horizon-dependent transfer}\label{subsec:perturbation-formulation}

Our fixed- and growing-horizon proofs can be organized through a local perturbation argument whose ingredients are closely related to earlier analyses of fast hyperparameter transfer. \citet{ghosh2026understanding} use local strong convexity to relate hyperparameter displacement to transfer suboptimality for a fixed limiting objective. In the growing-horizon setting, \citet{wen2026fast} track horizon-dependent score perturbations, local curvature, optimizer localization, and finite-width loss fluctuations.

The purpose of this section is to recast these proof ingredients in a common horizon-dependent formulation that makes the relevant scales explicit. Section~\ref{subsec:proof-structure} provides the model-specific first-order perturbation estimates and limiting distributions for our shallow linear model; here we retain only the corresponding loss-response, loss-gradient-response, curvature, and localization scales in order to explain the resulting rates of $a_{n,T}$, $b_{n,T}$, and $c_{n,T}$ and the condition for fast transfer.

We consider a scalar hyperparameter $\eta$, and allow $T=T_{n}$ to be fixed or diverging. Write
\[
    \phi_{n,T}(\eta)=\Phi_{T}(\eta;\gamma_{n})
    ,\qquad
    \phi_{\infty,T}(\eta)=\Phi_{T}(\eta;\gamma_{\infty})
    ,\qquad
    \|\gamma_{n}-\gamma_{\infty}\|=O_{p}(\epsilon_{n})
    ,
\]
where $\gamma_{\infty}$ is deterministic and $\epsilon_{n}\to0$. All stochastic orders below are along the chosen sequence $(n,T_{n})$, and the scales $\rho_{L,T},\rho_{G,T},\kappa_{T},r_{T}$ are deterministic and positive. Assume that the objectives are twice continuously differentiable in $\eta$ near the search interval $\mathcal{I}$, and that the derivative in $\gamma$ appearing below exists. Let $\eta_{\infty,T}$ be a global minimizer of $\phi_{\infty,T}$ lying in the interior of $\mathcal{I}$, and let $\eta_{n,T}$ minimize $\phi_{n,T}$ over $\mathcal{I}$.

Define the loss gradient $G_{T}(\eta;\gamma):=\partial_{\eta}\Phi_{T}(\eta;\gamma)$. At the moving infinite-width optimizer, suppose $G_{T}(\eta_{\infty,T};\gamma_{\infty})=0$ and $G_{T}(\eta_{\infty,T};\gamma_{n})=O_{p}(\epsilon_{n}\rho_{G,T})$, and assume the first-order loss expansion
\begin{equation}\label{eq:phi-Frechet-expansion-general}
    \begin{aligned}
        &d_{n,T}
        :=\phi_{n,T}(\eta_{\infty,T})-\phi_{\infty,T}(\eta_{\infty,T})
        =D_{\gamma}\Phi_{T}(\eta_{\infty,T};\gamma_{\infty})[\gamma_{n}-\gamma_{\infty}]+o_{p}(\epsilon_{n}\rho_{L,T})
        ,\\
        &|d_{n,T}|=\Theta_{p}(\epsilon_{n}\rho_{L,T})
        .
    \end{aligned}
\end{equation}
Here $\rho_{L,T}$ and $\rho_{G,T}$ describe the loss and loss-gradient responses to the finite-width perturbation, respectively. The lower bound in \eqref{eq:phi-Frechet-expansion-general} holds, for example, if the normalized first-order loss term converges to a Gaussian with positive variance.

Set $\kappa_{T}:=\phi_{\infty,T}''(\eta_{\infty,T})>0$ and $J_{T}:=[\eta_{\infty,T}-r_{T},\eta_{\infty,T}+r_{T}]\subset\operatorname{int}(\mathcal{I})$. Suppose first that optimizer localization $|\eta_{n,T}-\eta_{\infty,T}|=o_{p}(r_{T})$ has been established independently. The argument below is conditional on this localization; the scale comparison derived below is a consistency requirement for the local expansion. Writing $\underline{\kappa}_{n,T}:=\inf_{\eta\in J_{T}}\partial_{\eta}G_{T}(\eta;\gamma_{n})$, assume $\mathrm{P}(\underline{\kappa}_{n,T}>0)\to1$, $\kappa_{T}/\underline{\kappa}_{n,T}=O_{p}(1)$, and $\sup_{\eta\in J_{T}}\{|\phi_{n,T}''(\eta)|+|\phi_{\infty,T}''(\eta)|\}=O_{p}(\kappa_{T})$. These conditions ensure positive finite-width curvature and uniform local control. With probability tending to one, both optimizers are interior, and the mean value theorem gives
\[
    b_{n,T}
    =|\eta_{n,T}-\eta_{\infty,T}|
    =|G_{T}(\eta_{\infty,T};\gamma_{n})/\partial_{\eta}G_{T}(\tilde{\eta}_{n,T};\gamma_{n})|
    =O_{p}(\epsilon_{n}\rho_{G,T}/\kappa_{T})
    ,
\]
where $\tilde{\eta}_{n,T}$ lies between the two optimizers. The bound is compatible with the assumed localization whenever $\epsilon_{n}\rho_{G,T}/\kappa_{T}=o(r_{T})$.

Taylor expansion of the infinite-width objective then yields
\[
    c_{n,T}
    =(1/2)\phi_{\infty,T}''(\bar{\eta}_{n,T})(\eta_{n,T}-\eta_{\infty,T})^{2}
    =O_{p}(\kappa_{T}b_{n,T}^{2})
    =O_{p}(\epsilon_{n}^{2}\rho_{G,T}^{2}/\kappa_{T})
    ,
\]
where $\bar{\eta}_{n,T}$ lies between the two optimizers. For fixed $T$ with nondegenerate local curvature, this is the same local quadratic relation underlying $c_{n}=\Theta_{p}(b_{n}^{2})$ in \citet[Proposition 1]{ghosh2026understanding}. Here we keep the factor $\kappa_{T}$ explicit because the curvature may vanish as $T$ grows.

Similarly, the improvement from optimizing the finite-width objective satisfies $R_{n,T}:=\phi_{n,T}(\eta_{\infty,T})-\phi_{n,T}(\eta_{n,T})=O_{p}(\epsilon_{n}^{2}\rho_{G,T}^{2}/\kappa_{T})$. Since $a_{n,T}=|d_{n,T}-R_{n,T}|$, a sufficient condition for this optimization correction to be negligible relative to the first-order loss fluctuation is $\epsilon_{n}\rho_{G,T}^{2}/(\kappa_{T}\rho_{L,T})\to0$. Under this condition, we have
\[
    a_{n,T}=\Theta_{p}(\epsilon_{n}\rho_{L,T})
    ,\qquad
    c_{n,T}/a_{n,T}
    =O_{p}(\epsilon_{n}\rho_{G,T}^{2}/(\kappa_{T}\rho_{L,T}))
    =o_{p}(1)
    .
\]
The lower bound in \eqref{eq:phi-Frechet-expansion-general} is needed for this comparison; an $O_{p}$ bound on the loss fluctuation alone does not suffice. Figure~\ref{fig:HP-general-rates} illustrates these scales.

\begin{figure}[t]
    \centering
    \includegraphics[width=\linewidth]{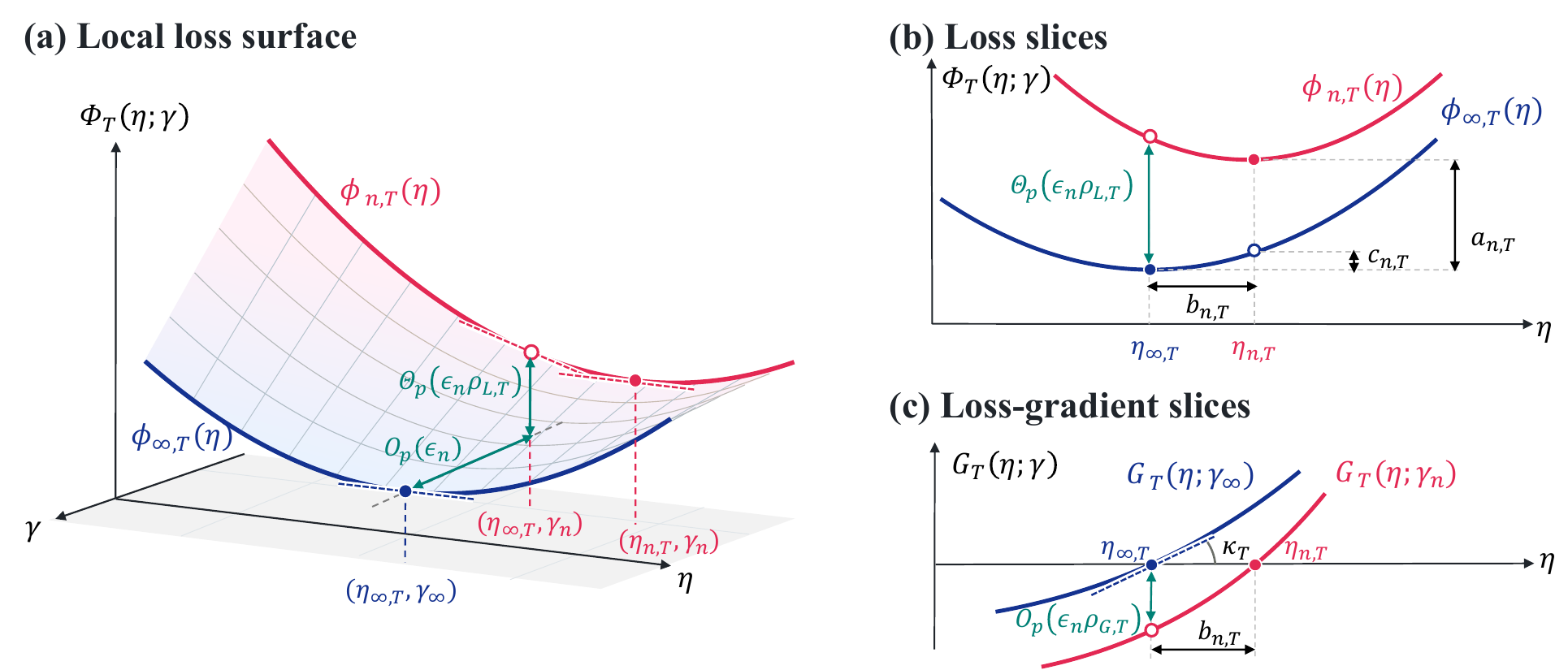}
    \caption{Illustration of the growing-horizon rate decomposition. \textbf{(a)} The local loss surface $\Phi_{T}(\eta;\gamma)$. A perturbation along the $\gamma$-axis from $\gamma_{\infty}$ to $\gamma_{n}$, of size $O_{p}(\epsilon_{n})$, affects both the loss value and its gradient with respect to $\eta$. \textbf{(b)} Loss slices at $\gamma_{\infty}$ and $\gamma_{n}$. At $\eta_{\infty,T}$, the induced loss perturbation has scale $\Theta_{p}(\epsilon_{n}\rho_{L,T})$, which determines the leading scale of $a_{n,T}$. \textbf{(c)} Loss-gradient slices. The finite-width perturbation changes the gradient at $\eta_{\infty,T}$ by $O_{p}(\epsilon_{n}\rho_{G,T})$. Local curvature of order $\kappa_{T}$ converts this gradient perturbation into a hyperparameter gap $b_{n,T}$.}
    \label{fig:HP-general-rates}
\end{figure}

In our model, the correspondence is $(\gamma_{n},\gamma_{\infty})=((\chi_{n}^{(0)},A_{n}),(-y,K))$, $G_{T}=-(T/m^{2})F_{T}$, $\epsilon_{n}=n^{-1/2}$, $\rho_{L,T}\asymp Tq_{*}^{2T-1}$, $\rho_{G,T}\asymp T^{2}q_{*}^{2T-2}$, $\kappa_{T}\asymp T^{2}q_{*}^{2T-2}$, and $r_{T}\asymp T^{-1}$. Consequently, we have $\epsilon_{n}\rho_{G,T}/(\kappa_{T}r_{T})\asymp T/\sqrt{n}$ and $\epsilon_{n}\rho_{G,T}^{2}/(\kappa_{T}\rho_{L,T})\asymp T/(q_{*}\sqrt{n})$. Since $q_{*}>0$ is fixed, $T=o(\sqrt{n})$ makes both ratios vanish. Thus the general formulation recovers the rate conclusions of Theorem~\ref{thm:a-b-c-summary} and the fast-transfer conclusion of Corollary~\ref{cor:a-b-c-summary}. The fixed-horizon proof of Theorem~\ref{thm:fixed-horizon-transfer} uses the same local structure, with horizon-dependent scales fixed as $n\to\infty$.
The model-specific estimates underlying these scales differ substantially from those in \citet{wen2026fast}. Here the perturbation is finite-dimensional and the relevant matrix powers admit uniform control near the optimum, whereas their source-capacity setting involves a spectrum accumulating at zero and an optimizer approaching the stability edge, requiring spectral-tail-dependent localization and fluctuation estimates.

\section{Implications for mechanisms of learning-rate transfer}\label{sec:transfer-mechanisms}

\paragraph{Relation to the top-$k$ loss decomposition.}
\citet{ghosh2026understanding} decompose the linearized loss change along an optimization trajectory into a dominant top-$k$ component and a residual component, and conjecture that fast transfer arises when the dominant component becomes rapidly width-stable while the width-sensitive residual has little effect on the hyperparameter optimum. In our model, the full-batch GD gradient with respect to $W_{1}$ has rank at most one. Since the update is proportional to this gradient, the associated gradient--update alignment matrix also has rank at most one, so its stepwise linearized loss change is captured by $k=1$. This exact top-$1$ representation, however, is only a structural statement: it holds both in the fast-transfer regime of Theorem~\ref{thm:a-b-c-summary} and in Proposition~\ref{prop:isotropic-no-fast-transfer}, where $c_{n,T}/a_{n,T}=\Theta_{p}(1)$ for every fixed $T\ge1$. Thus, low-rank concentration of the stepwise linearized loss change alone does not determine the transfer rate; the local behavior of the finite-width perturbation around the infinite-width optimum also matters.

Section~\ref{subsec:perturbation-formulation} characterizes this local behavior in terms of three scales: the loss-value response $\rho_{L,T}$, the hyperparameter-gradient response $\rho_{G,T}$, and the local curvature $\kappa_{T}$. Under its assumptions, the leading loss perturbation has scale $\Theta_{p}(\epsilon_{n}\rho_{L,T})$, whereas both the transfer penalty and the improvement from finite-width reoptimization are bounded by $O_{p}(\epsilon_{n}^{2}\rho_{G,T}^{2}/\kappa_{T})$. Hence fast transfer follows when $\epsilon_{n}\rho_{G,T}^{2}/(\kappa_{T}\rho_{L,T})\to0$. The $T=1$ case of Proposition~\ref{prop:isotropic-no-fast-transfer} illustrates why the loss-response term is essential: in this counterexample, the first-order loss response vanishes, while the loss-gradient response remains nondegenerate and $\kappa_{1}>0$, and $a_{n,1}$ and $c_{n,1}$ are both $\Theta_{p}(n^{-1})$. For fixed $T\ge2$, the first-order responses and the curvature all degenerate, placing the example in a higher-order regime outside the framework of Section~\ref{subsec:perturbation-formulation}.

Our growing-horizon limits also clarify the directional dependence of the loss-gradient and loss-value responses in Section~\ref{subsec:perturbation-formulation}. In Theorem~\ref{thm:a-b-c-summary}, the normalized optimizer displacement and optimized-loss fluctuation converge in distribution to nonzero deterministic multiples of $\ell_{\eta}(\Xi)$ and $\ell_{L}(\Xi)$, respectively, where $\ell_{\eta}(E):=u_{1}^{\top}Eu_{1}+u_{r}^{\top}Eu_{r}$ and $\ell_{L}(E):=c_{1}a_{1}^{*}u_{1}^{\top}Eu_{1}-c_{r}a_{r}^{*}u_{r}^{\top}Eu_{r}$. As shown in Section~\ref{subsec:proof-structure}, these limits arise from $-(T/m^{2})\langle Q_{F,T},A_{n}-K\rangle_{\mathrm{F}}$ and $\langle Q_{L,T},A_{n}-K\rangle_{\mathrm{F}}$, which are the leading terms of the respective first-order responses at $\eta_{\infty,T}$, whose scales are $\epsilon_{n}\rho_{G,T}$ and $\epsilon_{n}\rho_{L,T}$. Although $\ell_{\eta}$ and $\ell_{L}$ depend on the same two endpoint coordinates, they are not proportional. Hence there are perturbation directions with a nonzero leading loss response but a vanishing leading loss-gradient response, and therefore no leading-order optimizer displacement along those directions. This is qualitatively consistent with the idea that width-dependent loss changes can have little effect on hyperparameter selection, as in the mechanism proposed by \citet{ghosh2026understanding}; our argument concerns local responses to $A_{n}-K$, rather than a decomposition of the optimization trajectory.

\paragraph{Relation to spectral explanations of transfer.}
Our results show that, even in this solvable setting, the width consistency of a single leading eigenvalue does not determine the long-horizon optimal learning rate. \citet{noci2024super} study the width consistency of leading parameter-space Hessian eigenvalues, particularly sharpness. Our result highlights a distinction between the stability boundary and the optimal learning rate. Under Assumption~\ref{asmp:endpoint-support}, Theorem~\ref{thm:finite-minimax} implies $\eta_{\infty,T}\to2m/(\lambda_{1}+\lambda_{r})<2m/\lambda_{1}$ as $T\to\infty$. Thus the largest eigenvalue determines the stability boundary, whereas both spectral endpoints determine the long-horizon optimum.

\section{Conclusion}

We studied fast learning-rate transfer across width in a shallow linear network under $\mu$P at growing training horizons. Specifically, we characterized the growing-horizon infinite-width optimizer through the active spectral endpoints and showed that, under endpoint activity and $T=o(\sqrt{n})$, the three transfer quantities obey horizon-dependent rates that yield $c_{n,T}=o_{p}(a_{n,T})$. The local perturbation formulation organizes transfer in terms of the loss-value response, loss-gradient response, and local curvature, providing a sensitivity-based explanation for fast transfer and for the dependence of its rates on the spectral structure of the problem.

Our analysis is restricted to a solvable setting of linear networks with a single trainable linear layer. Extending the analysis to nonlinear or deep networks is therefore an important next step, as is studying transfer simultaneously across multiple scaling axes such as width and depth. 
Another limitation of the current framework is its reliance on nondegenerate first-order sensitivities: the fixed-horizon counterexample shows that these first-order responses can vanish, requiring a higher-order analysis. Developing such a theory would help provide a clearer picture of the mechanisms governing hyperparameter transfer.

\section*{Acknowledgments}

We thank Denny Wu for valuable discussions. Mana Sakai was supported by RIKEN Junior Research Associate Program and JST FOREST (Grant No. JPMJFR216I). Masaaki Imaizumi was supported by JSPS KAKENHI (Grant No. 24K02904), JST CREST (Grant No. JPMJCR21D2), JST FOREST (Grant No. JPMJFR216I), and JST BOOST (Grant No. JPMJBY24A9).

\clearpage

\appendix

\section{Additional related work}\label{app:additional-related-work}

\paragraph{Parameterization and the empirical scope of transfer.}
Successful transfer depends on more than the name of a parameterization. \citet{lingle2024empirical} examines $\mu$-transfer under practical training choices, while \citet{everett2024scaling} study alignment assumptions and optimizer-dependent scaling exponents. In particular, the latter show that alternative parameterizations can also support transfer when equipped with suitable layerwise learning rates. These findings concern the design and empirical effectiveness of scaling prescriptions. Our analysis instead fixes the initialization and update rule, and compares the resulting optimizer mismatch, optimized loss fluctuation, and transfer penalty.

\paragraph{Conditional separation-of-scales explanations.}
\citet{hong2025provable} study a separation between macroscopic and microscopic quantities under a width-dominance regime, with implications for early-stage hyperparameter selection. This provides a different perspective on the stability of tuning across scale. Our results directly analyze the exact terminal training loss in a specified finite-dimensional model and track the dependence of its local perturbation scales on a growing training horizon.

\paragraph{Transfer across depth, modules, and architectural changes.}
Depthwise transfer has been investigated through infinite-depth limits and residual-network dynamics \citep{yang2024tensor6,bodelon2024depthwise}, and through complete feature learning in deep Transformers \citep{dey2025dont}. \citet{mlodozeniec2026completed} further address transfer across modules, width, depth, batch size, and duration. Other work identifies architecture-specific scaling issues arising from large vocabularies or normalized Transformer parameterizations \citep{hayou2025optimal,shigida2026learning}. These studies address scaling axes and training mechanisms outside the scope of our model.

\paragraph{Training schedules and scaling laws.}
\citet{bordelon2026theory} analyze terminal-loss optimization over learning-rate schedules in a random feature model. More broadly, empirical and dynamical scaling-law studies relate performance to model size, data, and training time \citep{hoffmann2022empirical,bordelon2024dynamical}. Our question is narrower: we transfer a constant learning rate across width while holding the source and target horizons equal, and quantify the resulting relative penalty.

\paragraph{Spectral explanations and sharpness.}
\citet{noci2024super} connect transfer to the width consistency of leading Hessian eigenvalues, while \citet{lauditi2026spectral} study spectral dynamics and feature learning in deep networks. These perspectives are related to the edge-of-stability behavior documented by \citet{cohen2021gradient}. In our model, the largest eigenvalue determines the linear stability boundary, whereas the long-horizon optimal constant learning rate balances two active spectral endpoints.

\section{Numerical experiments}\label{app:numerical-experiments}

\paragraph{Setup.}
We use the model, initialization, and full-batch training rule of
Section~\ref{subsec:model}, with $m=d=4$ and fixed data
\[
    K
    =\operatorname{diag}(7,4,2,1)
    ,\qquad
    X
    =2K^{1/2}
    ,\qquad
    y
    =(1,1,1,1)^{\top}
    .
\]
The learning-rate interval is $I=[0.05,1.12]$. Thus Assumption~\ref{asmp:endpoint-support} holds with $\eta_{*}=1$ and $q_{*}=3/4$. At each width $n\in\{2^{8},2^{9},\ldots,2^{20}\}$, we draw 256 independent initializations per experiment. Figure~\ref{fig:numerical-transfer} compares the fixed-horizon regime $T=10$ with the growing-horizon regime $T_{n}=\lceil2n^{0.15}\rceil$, which ranges from 5 to 16.

\paragraph{Computation and estimation.}
For growing horizons, define
\[
    \tilde{a}_{n,T}=\frac{a_{n,T}}{Tq_{*}^{2T-1}},
    \qquad
    \tilde{c}_{n,T}=\frac{c_{n,T}}{T^{2}q_{*}^{2T-2}}.
\]
We estimate unweighted least-squares slopes by regressing the log empirical means on $\log n$ over $n=2^{13},\ldots,2^{20}$: the largest eight widths for fixed $T$, and the subset satisfying $T_{n}/\sqrt{n}\le0.1$ for the growing schedule.

\begin{figure}[t]
    \centering
    \includegraphics[width=\linewidth]{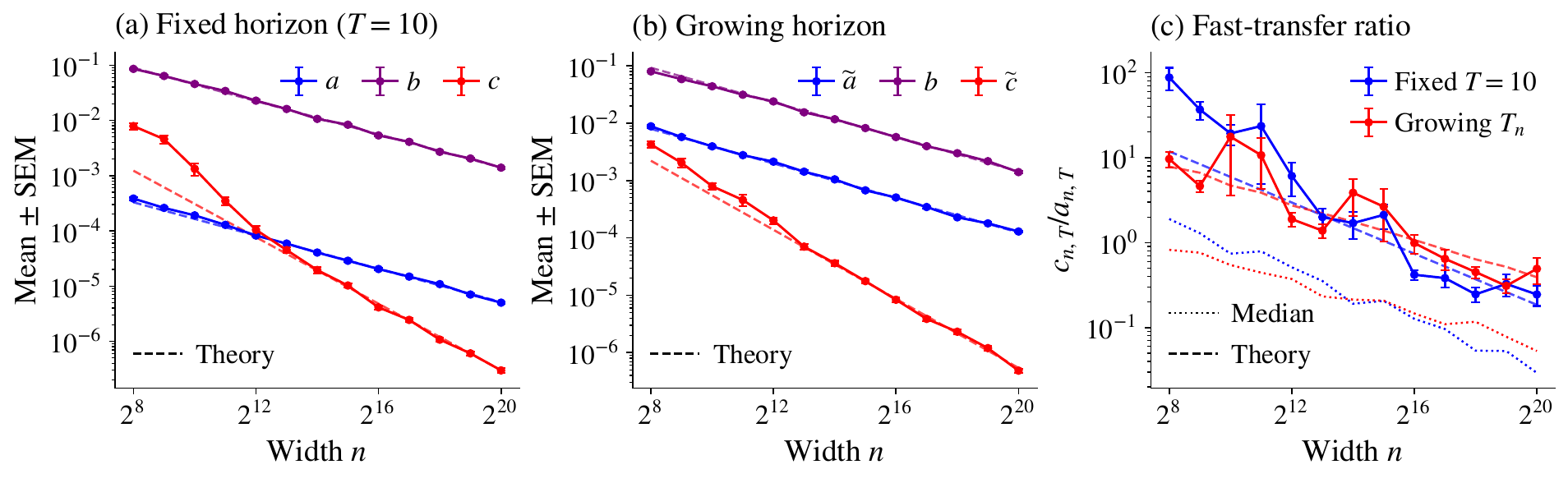}
    \caption{Numerical learning-rate transfer. \textbf{(a)} $a_{n,T},b_{n,T},c_{n,T}$ at fixed $T=10$. \textbf{(b)} $\tilde{a}_{n,T},b_{n,T},\tilde{c}_{n,T}$ at $T_{n}=\lceil2n^{0.15}\rceil$. \textbf{(c)} Per-initialization ratios $c_{n,T}/a_{n,T}$, summarized by means (solid) and medians (dotted); the reference rates are $n^{-1/2}$ for fixed $T$ and $T/(q_{*}\sqrt{n})$ for growing $T$.}
    \label{fig:numerical-transfer}
\end{figure}

\paragraph{Results and implications.}
In panels~(a) and~(b) of Figure~\ref{fig:numerical-transfer}, all six slopes are close to the reference exponents $(-1/2,-1/2,-1)$, consistent with Theorems~\ref{thm:fixed-horizon-transfer} and~\ref{thm:a-b-c-summary}. The overall decline of $c_{n,T}/a_{n,T}$ in panel~(c) supports fast transfer along the tested schedule and illustrates the quadratic transfer penalty discussed in Section~\ref{subsec:perturbation-formulation}. Mean ratios are noisy and nonmonotone because small $a_{n,T}$ values can dominate individual ratios; medians show a smoother decline. The theoretical rates are in probability and do not, by themselves, imply corresponding rates for expectations.

\section{Basic properties and finite-width fluctuations}

\subsection{Proof of Proposition~\ref{prop:dynamics}}

First, we compute
\[
    \nabla_{W_{1}}\mathcal{L}_{n}(W_{1}^{(t)})
    =\frac{1}{m}\sum_{i=1}^{m}\chi_{n,i}^{(t)}VH_{n,i\cdot}
    =\frac{1}{m}V(\chi_{n}^{(t)})^{\top}H_{n}
    .
\]
Thus we have $W_{1}^{(t+1)}=W_{1}^{(t)}-\frac{\eta}{m}V(\chi_{n}^{(t)})^{\top}H_{n}$, which implies
\[
    f_{n}^{(t+1)}(X)
    =H_{n}(W_{1}^{(t)})^{\top}V-\frac{\eta}{m}H_{n}H_{n}^{\top}\chi_{n}^{(t)}V^{\top}V
    =f_{n}^{(t)}(X)-\frac{\eta}{m}A_{n}\chi_{n}^{(t)}
    .
\]
Subtracting $y$ from both sides yields
\[
    \chi_{n}^{(t+1)}
    =B_{n}(\eta)\chi_{n}^{(t)}
    ,
\]
which implies
\[
    \chi_{n}^{(T)}
    =B_{n}(\eta)^{T}\chi_{n}^{(0)}
    .
\]
Therefore we have
\[
    \phi_{n,T}(\eta)
    =\mathcal{L}_{n}(W_{1}^{(T)})
    =\frac{1}{2m}\|\chi_{n}^{(T)}\|^{2}
    =\frac{1}{2m}\left\|B_{n}(\eta)^{T}\chi_{n}^{(0)}\right\|^{2}
    =\frac{1}{2m}(\chi_{n}^{(0)})^{\top}B_{n}(\eta)^{2T}\chi_{n}^{(0)}
    .
\]
\hfill\qedsymbol

\subsection{Proof of Lemma~\ref{lem:joint-clt}}

First, we show the weak convergence of the marginal distribution $\sqrt{n}(A_{n}-K)\stackrel{d}{\longrightarrow}\Xi$. Noting that $A_{n}=(n\|V\|^{2})X(n^{-1}W_{0}^{\top}W_{0})X^{\top}$ holds, we decompose
\begin{align*}
    A_{n}-K
    &=(n\|V\|^{2}-1)K+X(n^{-1}W_{0}^{\top}W_{0}-d^{-1}I_{d})X^{\top}\\
    &\quad+(n\|V\|^{2}-1)X(n^{-1}W_{0}^{\top}W_{0}-d^{-1}I_{d})X^{\top}
    .
\end{align*}
Define $\alpha_{n}=\sqrt{n}(n\|V\|^{2}-1)$ and $G_{n}=\sqrt{n}(n^{-1}W_{0}^{\top}W_{0}-d^{-1}I_{d})$.
Since the variables $nV_{j}$ are i.i.d. $\mathcal{N}(0,1)$, the central limit theorem gives
\[
    \alpha_{n}
    =\frac{1}{\sqrt{n}}\sum_{j=1}^{n}((nV_{j})^{2}-1)
    \stackrel{d}{\longrightarrow}\alpha
    ,\qquad
    \alpha\sim N(0,2)
    .
\]
Next, noting that $(W_{0,j\cdot})^{\top}\sim N(0,d^{-1}I_{d})$ independently, the multivariate central limit theorem gives
\[
    G_{n}
    =\frac{1}{\sqrt{n}}\sum_{j=1}^{n}((W_{0,j\cdot})^{\top}W_{0,j\cdot}-d^{-1}I_{d})
    \stackrel{d}{\longrightarrow}G
    ,
\]
where $G$ is the Gaussian matrix specified in the statement. Since $V$ and $W_{0}$ are independent, $\alpha_{n}$ and $G_{n}$ are independent for every $n$, and hence the joint convergence in distribution holds:
\[
    (\alpha_{n},G_{n})
    \stackrel{d}{\longrightarrow}
    (\alpha,G)
    .
\]
Combining the results, we have
\begin{equation}\label{eq:AToK}
    \sqrt{n}(A_{n}-K)
    =\alpha_{n}K+XG_{n}X^{\top}+O_{p}(n^{-1/2})
    \stackrel{d}{\longrightarrow}\alpha K+XGX^{\top}
    =\Xi
    .
\end{equation}

Fix $u\in\mathbb{R}^{m}$ and $T\in\mathbb{R}^{m\times m}$. Note that $\chi_{n}^{(0)}+y=f_{n}^{(0)}(X)=H_{n}(W_{1}^{(0)})^{\top}V$ holds. Since $W_{1}^{(0)}$ is independent of $(W_{0},V)$, we have
\begin{equation}\label{eq:f_n-conditional-dist}
    \sqrt{n}(\chi_{n}^{(0)}+y)\mid W_{0},V
    \sim N(0,A_{n})
    ,
\end{equation}
which implies
\[
    \mathbb{E}[\exp(iu^{\top}\sqrt{n}(\chi_{n}^{(0)}+y))\mid W_{0},V]
    =\exp\left(-\frac{1}{2}u^{\top}A_{n}u\right)
    .
\]
Thus we have
\begin{align*}
    &\mathbb{E}[\exp(iu^{\top}\sqrt{n}(\chi_{n}^{(0)}+y)+i\langle{T},{\sqrt{n}(A_{n}-K)}\rangle_{\mathrm{F}})]\\
    &=\mathbb{E}(\mathbb{E}[\exp(iu^{\top}\sqrt{n}(\chi_{n}^{(0)}+y)+i\langle{T},{\sqrt{n}(A_{n}-K)}\rangle_{\mathrm{F}})\mid W_{0},V])\\
    &=\mathbb{E}\left[\exp\left(-\frac{1}{2}u^{\top}A_{n}u\right)\exp(i\langle{T},{\sqrt{n}(A_{n}-K)}\rangle_{\mathrm{F}})\right]\\
    &=\mathbb{E}\left[\left(\exp\left(-\frac{1}{2}u^{\top}A_{n}u\right)-\exp\left(-\frac{1}{2}u^{\top}Ku\right)\right)\exp\left(i\langle T,\sqrt{n}(A_{n}-K)\rangle_{\mathrm{F}}\right)\right]\\
    &\quad+\exp\left(-\frac{1}{2}u^{\top}Ku\right)\mathbb{E}[\exp(i\langle{T},{\sqrt{n}(A_{n}-K)}\rangle_{\mathrm{F}})]
    .
\end{align*}
Note that \eqref{eq:AToK} implies $A_{n}\stackrel{p}{\longrightarrow}K$. Thus we compute the limit of the first term as
\begin{align*}
    &\left|\mathbb{E}\left[\left(\exp\left(-\frac{1}{2}u^{\top}A_{n}u\right)-\exp\left(-\frac{1}{2}u^{\top}Ku\right)\right)\exp\left(i\langle T,\sqrt{n}(A_{n}-K)\rangle_{\mathrm{F}}\right)\right]\right|\\
    &\le\mathbb{E}\left|\exp\left(-\frac{1}{2}u^{\top}A_{n}u\right)-\exp\left(-\frac{1}{2}u^{\top}Ku\right)\right|\\
    &\to0
    .
\end{align*}
For the second term, \eqref{eq:AToK} implies
\[
    \mathbb{E}[\exp(i\langle T,\sqrt{n}(A_{n}-K)\rangle_{\mathrm{F}})]
    \to\mathbb{E}[\exp(i\langle T,\Xi\rangle_{\mathrm{F}})]
    .
\]
Hence the joint characteristic function satisfies
\[
    \mathbb{E}[\exp(iu^{\top}\sqrt{n}(\chi_{n}^{(0)}+y)+i\langle{T},{\sqrt{n}(A_{n}-K)}\rangle_{\mathrm{F}})]
    \to\exp\left(-\frac{1}{2}u^{\top}Ku\right)\mathbb{E}[\exp(i\langle T,\Xi\rangle_{\mathrm{F}})]
    ,
\]
which is the characteristic function of $(Z,\Xi)$ with $Z\sim N(0,K)$ independent of $\Xi$.
\hfill\qedsymbol

\subsection{Infinite-width loss for fixed horizon}

\begin{corollary}[Infinite-width loss for fixed $T$]\label{cor:fixed-T-limit-loss}
    Fix $T\in\mathbb{N}$ and $\eta\in\mathbb{R}$.  Then
    \[
        \phi_{n,T}(\eta)
        \stackrel{p}{\longrightarrow}\phi_{\infty,T}(\eta)
        :=\frac{1}{2m}\left\|B_{\infty}(\eta)^{T}y\right\|^{2}
        =\frac{1}{2m}y^{\top}B_{\infty}(\eta)^{2T}y
        .
    \]
    The convergence is uniform on every deterministic compact interval of learning rates.
\end{corollary}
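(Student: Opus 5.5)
The plan is to use Proposition~\ref{prop:dynamics}, which writes $\phi_{n,T}(\eta)=\Phi_{T}(\eta;\chi_{n}^{(0)},A_{n})$, and Lemma~\ref{lem:joint-clt}, which gives convergence in probability of the random pair to its limit. Since $\sqrt{n}(\chi_{n}^{(0)}+y,A_{n}-K)$ converges in distribution, it is $O_{p}(1)$. Hence $\chi_{n}^{(0)}\stackrel{p}{\longrightarrow}-y$ and $A_{n}\stackrel{p}{\longrightarrow}K$, each with error $O_{p}(n^{-1/2})$. We also have $\phi_{\infty,T}(\eta)=\Phi_{T}(\eta;-y,K)$, because the sign of $\chi$ cancels in the quadratic form. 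Pointwise convergence then follows from the continuous mapping theorem. For fixed $T$ and $\eta$, the map $(\chi,A)\mapsto(2m)^{-1}\chi^{\top}(I_{m}-m^{-1}\eta A)^{2T}\chi$ is a polynomial in the entries of $(\chi,A)$, so it is continuous.

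For uniformity on a compact interval $\mathcal{K}=[-R,R]$ (any deterministic compact set sits inside such an interval), I would prove a deterministic Lipschitz-type bound. Write $B=B_{A}(\eta)$ and $B'=B_{A'}(\eta)$. Using the telescoping identity $B^{k}-B'^{k}=\sum_{j=0}^{k-1}B^{j}(B-B')B'^{k-1-j}$ together with $\|B-B'\|=m^{-1}|\eta|\|A-A'\|$, we get
\[
    \sup_{|\eta|\le R}\|B_{A}(\eta)^{2T}-B_{A'}(\eta)^{2T}\|
    \le 2T\,m^{-1}R\,\|A-A'\|\,M^{2T-1},
\]
where $M=1+m^{-1}R\max(\|A\|,\|A'\|)$. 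Then I would split the difference as
\[
    \chi^{\top}B^{2T}\chi-\chi'^{\top}B'^{2T}\chi'
    =(\chi-\chi')^{\top}B^{2T}\chi+\chi'^{\top}B^{2T}(\chi-\chi')+\chi'^{\top}(B^{2T}-B'^{2T})\chi'.
\]
This yields
\[
    \sup_{\eta\in\mathcal{K}}|\phi_{n,T}(\eta)-\phi_{\infty,T}(\eta)|
    \le C_{T,R}\bigl(\|\chi_{n}^{(0)}\|,\|A_{n}\|,\|y\|,\|K\|\bigr)\bigl(\|\chi_{n}^{(0)}+y\|+\|A_{n}-K\|\bigr),
\]
where $C_{T,R}$ is continuous and hence $O_{p}(1)$, since $\|\chi_{n}^{(0)}\|$ and $\|A_{n}\|$ are tight. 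Multiplying by the $O_{p}(n^{-1/2})$ factor shows that the supremum is $O_{p}(n^{-1/2})$, and in particular $o_{p}(1)$.

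There is no serious obstacle. The only point needing care is that the operator-norm bounds must be uniform in $\eta$, which is exactly what the constant $M$ handles. An alternative route is to view $\phi_{n,T}$ as a random element of $C(\mathcal{K})$ and apply the continuous mapping theorem to the continuous map $(\chi,A)\mapsto\Phi_{T}(\cdot;\chi,A)$ into $(C(\mathcal{K}),\|\cdot\|_{\infty})$. The explicit bound above is preferable because it also records the $n^{-1/2}$ rate used later.
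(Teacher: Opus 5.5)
Your proof is correct, but the uniformity step takes a different route from the paper's. The pointwise part matches the paper: $(\chi_n^{(0)},A_n)\stackrel{p}{\longrightarrow}(-y,K)$ by Lemma~\ref{lem:joint-clt}, followed by the continuous mapping theorem for the polynomial map.

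\textbf{The paper's uniformity argument.} For fixed $T$, both $\phi_{n,T}$ and $\phi_{\infty,T}$ are polynomials in $\eta$ of degree at most $2T$. Their coefficients are continuous (indeed polynomial) functions of $(\chi,A)$, so they converge in probability. The supremum over $[\eta_l,\eta_h]\subset[-C,C]$ is then bounded by $\max_k|c_{n,k}-c_{\infty,k}|\sum_k C^k$. Your alternative route, continuous mapping into $C(\mathcal{K})$, is essentially this argument. It is shorter and needs no matrix-norm bookkeeping. As written it gives only $o_p(1)$, although noting that the coefficients are locally Lipschitz in $(\chi,A)$ would also yield a rate.

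\textbf{Your uniformity argument.} The telescoping bound $\|B^{2T}-B'^{2T}\|\le 2Tm^{-1}R\|A-A'\|M^{2T-1}$, together with the three-term split, gives an explicit $O_p(n^{-1/2})$ rate and shows the $T$-dependence. This is in the same spirit as the local Lipschitz estimate the paper later uses for the score in Lemma~\ref{lem:score-uniform-fixed-T}.

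\textbf{A limitation worth noticing.} Your $M=1+m^{-1}R\max(\|A\|,\|A'\|)$ exceeds one, so the factor $M^{2T-1}$ grows exponentially in $T$. The bound is therefore informative only at fixed $T$, which is all this corollary needs. In the growing-horizon regime the paper must replace it with the projected contraction bounds of Lemmas~\ref{lem:local-word-bound} and~\ref{lem:local-random-power-bound}. There the base is $q_*+O(1/T)$, with $q_*<1$, on $\operatorname{range}(K)$, and $\eta$ is restricted to an $O(1/T)$ neighborhood of $\eta_{\infty,T}$.
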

\begin{proof}
    By Lemma~\ref{lem:joint-clt}, $(\chi_{n}^{(0)},A_{n})\stackrel{p}{\longrightarrow}(-y,K)$.  For fixed $T$, the map
    \[
        (\chi,A,\eta)
        \longmapsto
        \frac{1}{2m}\chi^{\top}\left(I_{m}-\frac{\eta}{m}A\right)^{2T}\chi
    \]
    is a polynomial in the entries of $(\chi,A,\eta)$. Thus, the continuous mapping theorem proves $\phi_{n,T}(\eta)\stackrel{p}{\longrightarrow}\phi_{\infty,T}(\eta)$ for every fixed $\eta$. Furthermore, for fixed $T$, both $\phi_{n,T}$ and $\phi_{\infty,T}$ are polynomials in $\eta$ of degree at most $2T$. Their coefficients $c_{n,0},c_{n,1},\dots,c_{n,2T}$ are continuous functions of $(\chi_{n}^{(0)},A_{n})$ and converge in probability to the corresponding coefficients $c_{\infty,0},c_{\infty,1},\dots,c_{\infty,2T}$ determined by $(-y,K)$. Thus, for any compact interval $[\eta_{l},\eta_{h}]$ with $|\eta_{l}|,|\eta_{h}|\le C$, we have
    \begin{align*}
        &\sup_{\eta\in[\eta_{l},\eta_{h}]}|\phi_{n,T}(\eta)-\phi_{\infty,T}(\eta)|
        \le\sup_{\eta\in[\eta_{l},\eta_{h}]}\sum_{k=0}^{2T}|c_{n,k}-c_{\infty,k}|C^{k}\\
        &\le\max_{k\in\{0,1,\dots,2T\}}|c_{n,k}-c_{\infty,k}|\sum_{k=0}^{2T}C^{k}
        \stackrel{p}{\longrightarrow}0
        .
    \end{align*}
\end{proof}

\subsection{Range and null-space geometry}

\begin{lemma}\label{lem:kernel-H_n-A_n-K}
    When $n\ge d$, we have
    \[
        \ker(H_{n}^{\top})
        \stackrel{\mathrm{a.s.}}{=}\ker(A_{n})
        \stackrel{\mathrm{a.s.}}{=}\ker(K)
        .
    \]
\end{lemma}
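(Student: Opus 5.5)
We argue through three equalities of subspaces of $\mathbb{R}^{m}$, each following from elementary linear algebra plus one almost-sure genericity fact about the Gaussian matrix $W_{0}$. Recall that $H_{n}=XW_{0}^{\top}\in\mathbb{R}^{m\times n}$, so $H_{n}^{\top}=W_{0}X^{\top}$, and that $A_{n}=\|V\|^{2}H_{n}H_{n}^{\top}$ and $K=d^{-1}XX^{\top}$.

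\emph{First equality, $\ker(A_{n})=\ker(H_{n}^{\top})$.} Since $V$ has i.i.d.\ Gaussian entries, $V\neq0$ almost surely, so $\|V\|^{2}>0$ a.s. On this event, $A_{n}w=0$ implies
\[
    0=w^{\top}A_{n}w=\|V\|^{2}\|H_{n}^{\top}w\|^{2},
\]
hence $H_{n}^{\top}w=0$. The converse inclusion is immediate from the definition of $A_{n}$.

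\emph{Second equality, $\ker(K)=\ker(X^{\top})$.} By the same argument, $w^{\top}Kw=d^{-1}\|X^{\top}w\|^{2}$. This step is deterministic.

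\emph{Third equality, $\ker(H_{n}^{\top})=\ker(X^{\top})$.} Since $H_{n}^{\top}w=W_{0}(X^{\top}w)$, the inclusion $\ker(X^{\top})\subseteq\ker(H_{n}^{\top})$ always holds. The reverse inclusion follows once $W_{0}\in\mathbb{R}^{n\times d}$ has trivial kernel, i.e.\ full column rank $d$. This is the only nontrivial step.

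For $n\ge d$, full column rank holds almost surely. We would show this by noting that $W_{0}$ is rank-deficient only if its top $d\times d$ block $W_{0}^{[1:d]}$ is singular, i.e.\ $\det W_{0}^{[1:d]}=0$. This determinant is a nonzero polynomial in the entries; it equals $1$ at the identity. The zero set of a nonzero polynomial has Lebesgue measure zero, and the entries of $W_{0}$ have a joint Gaussian density, which is absolutely continuous with respect to Lebesgue measure. Hence the zero set has probability zero.

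Alternatively, we could argue column by column: each column of $W_{0}$ lies in the span of the preceding $k<d\le n$ columns with probability zero, because that span is a proper subspace and the column is independent of it and has a density.

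Intersecting the two probability-one events ($V\neq0$ and $W_{0}$ of full column rank) gives all three equalities almost surely, which proves the lemma. We do not expect any real obstacle here; the only point needing care is the measure-zero argument for the rank of $W_{0}$.
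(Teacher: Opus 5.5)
Your proof is correct and follows the paper's argument: the quadratic-form identity $z^{\top}A_{n}z=\|V\|^{2}\|H_{n}^{\top}z\|^{2}$ together with $\|V\|^{2}>0$ a.s. gives $\ker(A_{n})=\ker(H_{n}^{\top})$, and almost-sure full rank of $W_{0}$ for $n\ge d$ gives the rest. The only differences are cosmetic. The paper phrases the second step as $\operatorname{range}(XW_{0}^{\top})=\operatorname{range}(X)=\operatorname{range}(K)$ and then takes orthogonal complements, whereas you work directly with $\ker(W_{0}X^{\top})=\ker(X^{\top})=\ker(K)$. You also supply the measure-zero justification for the full rank of $W_{0}$, which the paper only asserts.
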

\begin{proof}
    We first show that $\ker(H_{n}^{\top})\stackrel{\mathrm{a.s.}}{=}\ker(A_{n})$ holds. If $H_{n}^{\top}z=0$, then $A_{n}z=0$ follows immediately. Thus, we have $\ker(H_{n}^{\top})\subset\ker(A_{n})$. On the other hand, if $A_{n}z=0$, then $z^{\top}A_{n}z=\|V\|^{2}\|H_{n}^{\top}z\|^{2}=0$. Since $\|V\|^{2}>0$ with probability one, $H_{n}^{\top}z=0$ almost surely. Thus, with probability one, we have $\ker(A_{n})\subset\ker(H_{n}^{\top})$.

    When $n\ge d$, the Gaussian matrix $W_{0}$ has rank $d$ almost surely. Thus,
    \[
        \operatorname{range}(H_{n})
        =\operatorname{range}(XW_{0}^{\top})
        \stackrel{\mathrm{a.s.}}{=}\operatorname{range}(X)
        =\operatorname{range}(K)
        .
    \]
    This implies $\ker(H_{n}^{\top})=\operatorname{range}(H_{n})^{\perp}\stackrel{\mathrm{a.s.}}{=}\operatorname{range}(K)^{\perp}=\ker(K)$.
\end{proof}

\subsection{Optimization landscape}

\begin{proposition}[Derivatives of the finite-horizon objective]\label{prop:loss-derivatives}
    For every $T\ge1$, we have
    \begin{align}
        &\frac{\partial}{\partial\eta}\Phi_{T}(\eta;\chi,A)
        =-\frac{T}{m^{2}}F_{T}(\eta;\chi,A)
        ,\label{eq:first-derivative}\\
        &\frac{\partial^{2}}{\partial\eta^{2}}\Phi_{T}(\eta;\chi,A)
        =\frac{T(2T-1)}{m^{3}}\chi^{\top}A^{2}B_{A}(\eta)^{2T-2}\chi
        \ge0
        ,\label{eq:second-derivative}\\
        &\frac{\partial}{\partial\eta}F_{T}(\eta;\chi,A)
        =-\frac{2T-1}{m}\chi^{\top}A^{2}B_{A}(\eta)^{2T-2}\chi
        \le0
        .\label{eq:score-derivative}
    \end{align}
\end{proposition}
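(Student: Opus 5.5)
The plan is a direct matrix-calculus computation. Because $A$ is symmetric, $B_{A}(\eta)=I_{m}-m^{-1}\eta A$ is a polynomial in $A$. Hence $A$ and $B_{A}(\eta)$ commute, and
\[
    \partial_{\eta}B_{A}(\eta)=-m^{-1}A .
\]
Since every matrix involved commutes with every other, I will differentiate the matrix powers exactly as scalar powers:
\[
    \partial_{\eta}B_{A}(\eta)^{k}=-(k/m)\,A\,B_{A}(\eta)^{k-1}.
\]
One way to justify this is the product rule applied to $B_{A}(\eta)^{k}$, collecting the $k$ identical terms by commutativity. An alternative is to diagonalize $A=\sum_{i}\mu_{i}v_{i}v_{i}^{\top}$, which reduces every statement to scalar identities in the eigenvalues $\mu_i$. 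I will use the diagonalization, since it also makes the sign claims transparent.

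\textbf{First derivative.} Writing $w_{i}=\langle\chi,v_{i}\rangle^{2}\ge0$, we have
\[
    \Phi_{T}=(2m)^{-1}\sum_{i}w_{i}(1-\eta\mu_{i}/m)^{2T}.
\]
Differentiating once gives
\[
    \partial_{\eta}\Phi_{T}=(2m)^{-1}\sum_{i}w_{i}\cdot 2T(-\mu_{i}/m)(1-\eta\mu_{i}/m)^{2T-1}
    =-(T/m^{2})\,\chi^{\top}AB_{A}(\eta)^{2T-1}\chi .
\]
The right-hand side is $-(T/m^{2})F_{T}$, which is \eqref{eq:first-derivative}.

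\textbf{Score derivative.} Differentiating $F_{T}=\sum_{i}w_{i}\mu_{i}(1-\eta\mu_{i}/m)^{2T-1}$ gives
\[
    \partial_{\eta}F_{T}=-((2T-1)/m)\sum_{i}w_{i}\mu_{i}^{2}(1-\eta\mu_{i}/m)^{2T-2}
    =-((2T-1)/m)\,\chi^{\top}A^{2}B_{A}(\eta)^{2T-2}\chi ,
\]
which is \eqref{eq:score-derivative}.

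\textbf{Second derivative.} Combining the two previous steps gives \eqref{eq:second-derivative}:
\[
    \partial_{\eta}^{2}\Phi_{T}=-(T/m^{2})\,\partial_{\eta}F_{T}=T(2T-1)m^{-3}\,\chi^{\top}A^{2}B_{A}(\eta)^{2T-2}\chi .
\]

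\textbf{Signs.} The exponent $2T-2$ is even, so each summand $w_{i}\mu_{i}^{2}(1-\eta\mu_{i}/m)^{2T-2}$ is nonnegative. Equivalently,
\[
    \chi^{\top}A^{2}B^{2T-2}\chi=\|AB^{T-1}\chi\|^{2}\ge0 .
\]
This gives $\partial_{\eta}^{2}\Phi_{T}\ge0$ and $\partial_{\eta}F_{T}\le0$, and both hold for every real $\eta$. The case $T=1$ is covered with $B^{0}=I_{m}$.

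I do not expect a genuine obstacle. The only points needing care are the use of symmetry of $A$, which gives commutativity and orthogonal diagonalization, and the use of the even exponent for the sign claims.
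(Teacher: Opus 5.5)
Your proposal is correct and takes essentially the paper's route. The paper likewise uses commutativity of $A$ and $B_{A}(\eta)$ to obtain $\frac{d}{d\eta}B_{A}(\eta)^{k}=-\frac{k}{m}AB_{A}(\eta)^{k-1}$ and differentiates directly; your eigendecomposition is just a scalar repackaging of that identity. Your explicit derivation of the second derivative and of the sign claims, via $\chi^{\top}A^{2}B_{A}(\eta)^{2T-2}\chi=\|AB_{A}(\eta)^{T-1}\chi\|^{2}\ge0$, fills in steps that the paper's proof leaves implicit.
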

\begin{proof}
    Since the matrices $A$ and $B_{A}(\eta)$ commute, we have
    \[
        \frac{d}{d\eta}B_{A}(\eta)^{k}
        =-\frac{k}{m}AB_{A}(\eta)^{k-1}
        .
    \]
    Thus
    \begin{align*}
        &\frac{\partial}{\partial\eta}\Phi_{T}(\eta;\chi,A)
        =-\frac{1}{2m}\chi^{\top}\frac{2T}{m}AB_{A}(\eta)^{2T-1}\chi
        =-\frac{T}{m^{2}}F_{T}(\eta;\chi,A)
        ,\\
        &\frac{\partial}{\partial\eta}F_{T}(\eta;\chi,A)
        =-\chi^{\top}A\frac{2T-1}{m}AB_{A}(\eta)^{2T-2}\chi
        =-\frac{2T-1}{m}\chi^{\top}A^{2}B_{A}(\eta)^{2T-2}\chi
        .
    \end{align*}
\end{proof}

\begin{corollary}[Existence and uniqueness of the optimal learning rate]\label{cor:optimizer-characterization}
    Suppose $A\in\mathbb{R}_{\mathrm{sym}}^{m\times m}$, $\chi\in\mathbb{R}^{m}$, and $A\succeq0$. If $A\chi\neq0$, then $\eta\mapsto\Phi_{T}(\eta;\chi,A)$ is strictly convex and coercive on $\mathbb{R}$, that is, $\Phi_{T}(\eta;\chi,A)\to\infty$ as $|\eta|\to\infty$. Hence it has a unique global minimizer $\eta_{T}(\chi,A)>0$, characterized by $F_{T}(\eta_{T}(\chi,A);\chi,A)=0$. On a closed interval $[a,b]$, the optimizer is $\eta_{T}(\chi,A)$ if $\eta_{T}(\chi,A)\in[a,b]$, and is the appropriate boundary point otherwise. If $A\chi=0$, then $\Phi_{T}(\eta;\chi,A)$ is constant in $\eta$, and every $\eta\in\mathbb{R}$ is a global minimizer.
\end{corollary}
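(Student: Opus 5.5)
The proof rests on the derivative formulas of Proposition~\ref{prop:loss-derivatives}, applied directly to $\Phi_{T}(\cdot;\chi,A)$, together with the spectral decomposition $A=\sum_{k}\mu_{k}v_{k}v_{k}^{\top}$ with $\mu_{k}\ge0$. In this basis
\[
    \Phi_{T}(\eta;\chi,A)=\frac{1}{2m}\sum_{k}\langle\chi,v_{k}\rangle^{2}\Bigl(1-\frac{\eta\mu_{k}}{m}\Bigr)^{2T}.
\]
The case $A\chi=0$ is immediate: then $B_{A}(\eta)\chi=\chi$ for every $\eta$, so $\Phi_{T}\equiv(2m)^{-1}\|\chi\|^{2}$ and every $\eta$ is a global minimizer.

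Now assume $A\chi\neq0$. Then some index $k$ has $\mu_{k}>0$ and $\langle\chi,v_{k}\rangle\neq0$, and the plan has four steps.

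\emph{Strict convexity.} By \eqref{eq:second-derivative},
\[
    \Phi_{T}''(\eta)=\frac{T(2T-1)}{m^{3}}\sum_{k}\mu_{k}^{2}\langle\chi,v_{k}\rangle^{2}\Bigl(1-\frac{\eta\mu_{k}}{m}\Bigr)^{2T-2}\ge0.
\]
For $T=1$ the active term is a positive constant, so $\Phi_{1}''>0$. For $T\ge2$ the second derivative vanishes at most at the finitely many points $\eta=m/\mu_{k}$, so $\Phi_{T}'$ is strictly increasing. In both cases $\Phi_{T}$ is strictly convex.

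\emph{Coercivity.} The active term is a polynomial of degree $2T$ in $\eta$ with positive leading coefficient, and all other terms are nonnegative. Hence $\Phi_{T}\to\infty$ as $|\eta|\to\infty$.

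\emph{Existence, uniqueness, and characterization.} Continuity and coercivity give a global minimizer, and strict convexity makes it unique. Differentiability then yields $\Phi_{T}'(\eta_{T})=0$, which by \eqref{eq:first-derivative} is equivalent to $F_{T}(\eta_{T};\chi,A)=0$. Conversely, any root of $F_{T}$ is a critical point of a strictly convex function and hence is the minimizer.

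\emph{Positivity.} At $\eta=0$ we have $F_{T}(0)=\chi^{\top}A\chi>0$, because $A\succeq0$ and $A\chi\neq0$ force $\chi^{\top}A\chi=\|A^{1/2}\chi\|^{2}>0$. Therefore $\Phi_{T}'(0)<0$, and strict convexity places the minimizer at some $\eta>0$.

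\emph{Restriction to $[a,b]$.} For a strictly convex function, $\Phi_{T}$ is decreasing on $(-\infty,\eta_{T}]$ and increasing on $[\eta_{T},\infty)$. So the minimizer over $[a,b]$ is $\eta_{T}$ if $\eta_{T}\in[a,b]$, it is $a$ if $\eta_{T}<a$, and it is $b$ if $\eta_{T}>b$.

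The only step needing care is strict convexity when $T\ge2$, since $\Phi_{T}''$ can vanish at isolated points. The argument handles this by using the fact that a nonnegative second derivative with only finitely many zeros still gives a strictly increasing first derivative.
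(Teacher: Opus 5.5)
Your proposal is correct and follows essentially the same route as the paper: spectral decomposition, strict convexity and coercivity driven by an active mode, the first-order condition via \eqref{eq:first-derivative}, positivity from $\Phi_{T}'(0)=-(T/m^{2})\chi^{\top}A\chi<0$, and clipping to $[a,b]$. The differences are cosmetic. The paper gets strict convexity by noting that each active summand has a strictly increasing derivative (an odd power of a strictly decreasing affine map), rather than from the isolated zeros of $\Phi_{T}''$, and it obtains the interval characterization from the strict monotonicity of $F_{T}$ rather than from the shape of a strictly convex function around its minimizer.
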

\begin{proof}
    If $A\chi=0$, then $B_{A}(\eta)\chi=\chi$ holds for every $\eta\in\mathbb{R}$. Hence $\Phi_{T}(\eta;\chi,A)$ is independent of $\eta$, which proves the degenerate case.

    In the remainder of the proof, suppose $A\chi\neq0$. Let $A=U\operatorname{diag}(\mu_{1},\ldots,\mu_{m})U^{\top}$ be the eigendecomposition of $A$, and let $q_{j}=(U_{\cdot j})^{\top}\chi$. Define the active index set
    \[
        \mathcal{J}
        =\{j\in[m]:\mu_{j}>0\ \text{and}\ q_{j}\neq0\}
        .
    \]
    Since $A\chi=\sum_{j=1}^{m}\mu_{j}q_{j}U_{\cdot j}$, the assumption $A\chi\neq0$ is equivalent to $\mathcal{J}\neq\emptyset$.

    Proposition~\ref{prop:loss-derivatives} gives
    \[
        \Phi_{T}(\eta;\chi,A)
        =\frac{1}{2m}\sum_{j=1}^{m}q_{j}^{2}\left(1-\frac{\eta\mu_{j}}{m}\right)^{2T}
        =\sum_{j=1}^{m}g_{j}(\eta)
        ,
    \]
    where we defined
    \[
        g_{j}(\eta)
        =\frac{q_{j}^{2}}{2m}\left(1-\frac{\eta\mu_{j}}{m}\right)^{2T}
        .
    \]
    Here, each $g_{j}$ corresponding to $j\notin\mathcal{J}$ is either identically zero or constant in $\eta$. For each $j\in\mathcal{J}$, we have
    \[
        g_{j}'(\eta)
        =-\frac{Tq_{j}^{2}\mu_{j}}{m^{2}}\left(1-\frac{\eta\mu_{j}}{m}\right)^{2T-1}
        .
    \]
    The map $\eta\mapsto g_{j}'(\eta)$ is strictly increasing; thus, $g_{j}$ is strictly convex on $\mathbb{R}$. Since $\mathcal{J}$ is nonempty, $\Phi_{T}(\eta;\chi,A)$ is strictly convex. Moreover, for any fixed $j\in\mathcal{J}$,
    \[
        g_{j}(\eta)
        \to\infty
        \qquad(|\eta|\to\infty)
        .
    \]
    Since $\Phi_{T}(\eta;\chi,A)\ge g_{j}(\eta)$, it follows that
    \[
        \Phi_{T}(\eta;\chi,A)
        \to\infty
        \qquad(|\eta|\to\infty)
        .
    \]
    Thus $\Phi_{T}(\cdot;\chi,A)$ is coercive. In particular, there exists $R>0$ such that
    \[
        \Phi_{T}(\eta;\chi,A)
        >\Phi_{T}(0;\chi,A)
        \qquad(|\eta|\ge R)
        .
    \]
    Since $\Phi_{T}(\cdot;\chi,A)$ is continuous, it attains its minimum on the compact interval $[-R,R]$. This minimum is also global, because every point outside $[-R,R]$ has objective value larger than $\Phi_{T}(0;\chi,A)$. Finally, strict convexity implies that the global minimizer is unique. We denote it by $\eta_{T}(\chi,A)$. The first-order optimality condition and \eqref{eq:first-derivative} give
    \[
        F_{T}(\eta_{T}(\chi,A);\chi,A)
        =0
        .
    \]
    We now verify the positivity of $\eta_{T}(\chi,A)$. At $\eta=0$, \eqref{eq:first-derivative} yields
    \[
        \Phi_{T}'(0;\chi,A)
        =-\frac{T}{m^{2}}\chi^{\top}A\chi
        =-\frac{T}{m^{2}}\|A^{1/2}\chi\|^{2}
        .
    \]
    Note that $A\chi\neq0$ implies $A^{1/2}\chi\neq0$. Consequently, we have
    \[
        \Phi_{T}'(0;\chi,A)
        <0
        =\Phi_{T}'(\eta_{T}(\chi,A);\chi,A)
        .
    \]
    Since $\Phi_{T}(\cdot;\chi,A)$ is strictly convex, its derivative is strictly increasing. Thus we have $\eta_{T}(\chi,A)>0$.
    Finally, consider minimization over a closed interval $[a,b]$ with $a<b$. By Proposition~\ref{prop:loss-derivatives}, we have
    \[
        F_{T}(\eta;\chi,A)
        =\sum_{j\in\mathcal{J}}q_{j}^{2}\mu_{j}
        \left(1-\frac{\eta\mu_{j}}{m}\right)^{2T-1}.
    \]
    For each $j\in\mathcal{J}$, the corresponding summand is strictly decreasing in $\eta$. Hence $F_{T}$ is strictly decreasing on $\mathbb{R}$. Since it is also continuous, the stated boundary characterization follows.
\end{proof}

\begin{proposition}[Almost-sure nondegeneracy of the finite-width optimization problem]\label{prop:loss-convex-as}
    Suppose $K\neq0$ and $n\ge d$. Then we have $A_{n}\chi_{n}^{(0)}\neq0$ almost surely. By Corollary~\ref{cor:optimizer-characterization}, for every $T\ge1$, $\phi_{n,T}$ is almost surely strictly convex and has a unique global minimizer.
\end{proposition}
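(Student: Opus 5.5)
The plan is to reduce the claim to showing that $A_{n}\chi_{n}^{(0)}\neq0$ almost surely. Once this holds, Corollary~\ref{cor:optimizer-characterization}, applied with $A=A_{n}\succeq0$ and $\chi=\chi_{n}^{(0)}$, gives strict convexity of $\phi_{n,T}=\Phi_{T}(\cdot;\chi_{n}^{(0)},A_{n})$ and a unique global minimizer for every $T\ge1$ simultaneously. Since $A_{n}=\|V\|^{2}H_{n}H_{n}^{\top}$ and $\|V\|^{2}>0$ almost surely, Lemma~\ref{lem:kernel-H_n-A_n-K} (valid because $n\ge d$) identifies $\ker(A_{n})$ with $\ker(K)$ almost surely. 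The task is therefore to show that $\chi_{n}^{(0)}\notin\ker(K)$ almost surely.

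The key step is to condition on $(W_{0},V)$ and use \eqref{eq:f_n-conditional-dist}. Conditionally, $\chi_{n}^{(0)}=-y+n^{-1/2}\zeta$ with $\zeta\sim\mathcal{N}(0,A_{n})$. Let $P_{+}$ denote the orthogonal projector onto $\operatorname{range}(K)$; the event $\{\chi_{n}^{(0)}\in\ker(K)\}$ is exactly $\{P_{+}\chi_{n}^{(0)}=0\}$.

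On the almost-sure event where $\ker(A_{n})=\ker(K)$, we have $\operatorname{range}(A_{n})=\operatorname{range}(K)$, so $P_{+}\zeta$ is a nondegenerate Gaussian on $\operatorname{range}(K)$. This subspace is nontrivial because $K\neq0$. Hence $P_{+}\chi_{n}^{(0)}=-P_{+}y+n^{-1/2}P_{+}\zeta$ is a nondegenerate Gaussian on a space of positive dimension, and it equals $0$ with conditional probability zero, whatever the value of $P_{+}y$. Integrating over $(W_{0},V)$ gives $\mathrm{P}(A_{n}\chi_{n}^{(0)}=0)=0$.

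The only delicate point is making sure the relevant Gaussian is genuinely nondegenerate on $\operatorname{range}(K)$, rather than merely that $\chi_{n}^{(0)}$ is random. This is exactly what the range identification in Lemma~\ref{lem:kernel-H_n-A_n-K} supplies. No assumption on $y$ is needed, since a point mass of a continuous distribution is null regardless of the shift.
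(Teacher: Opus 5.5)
Your argument is correct and is essentially the paper's proof: condition on $(W_{0},V)$, use Lemma~\ref{lem:kernel-H_n-A_n-K} to identify $\ker(A_{n})$ with $\ker(K)$, so that the conditional Gaussian $\sqrt{n}f_{n}^{(0)}(X)\sim\mathcal{N}(0,A_{n})$ is nondegenerate on $\operatorname{range}(K)\neq\{0\}$, and conclude that the projected residual equals the single required point with conditional probability zero. The only cosmetic difference is that the paper tests one fixed direction $u\in\operatorname{range}(K)\setminus\{0\}$, showing $u^{\top}f_{n}^{(0)}(X)\neq u^{\top}Py$ almost surely, whereas you use the full projector $P_{+}$ onto $\operatorname{range}(K)$.
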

\begin{proof}
    Let $P$ denote the orthogonal projector onto $\operatorname{range}(K)$. Define $\mathcal{E}_{n}=\{\ker(H_{n}^{\top})=\ker(A_{n})=\ker(K)\}$. Then by Lemma~\ref{lem:kernel-H_n-A_n-K}, the event $\mathcal{E}_{n}$ has probability one. On the event $\mathcal{E}_{n}$, we have $f_{n}^{(0)}(X)\in\operatorname{range}(K)$, and hence $Pf_{n}^{(0)}(X)=f_{n}^{(0)}(X)$. On the same event, $A_{n}\chi_{n}^{(0)}=0$ is equivalent to $\chi_{n}^{(0)}\in\ker(A_{n})=\operatorname{range}(K)^{\perp}$, and therefore to $P\chi_{n}^{(0)}=0$. Recalling the definition $\chi_{n}^{(0)}=f_{n}^{(0)}(X)-y$, this is further equivalent to $f_{n}^{(0)}(X)=Pf_{n}^{(0)}(X)=Py$. Noting that $\mathrm{P}(\mathcal{E}_{n})=1$, we have
    \begin{align*}
        \mathrm{P}(A_{n}\chi_{n}^{(0)}=0)
        =\mathrm{P}(\{A_{n}\chi_{n}^{(0)}=0\}\cap\mathcal{E}_{n})
        =\mathrm{P}(\{f_{n}^{(0)}(X)=Py\}\cap\mathcal{E}_{n})
        =\mathrm{P}(f_{n}^{(0)}(X)=Py)
        .
    \end{align*}
    It therefore suffices to show that $\mathrm{P}(f_{n}^{(0)}(X)=Py)=0$.

    By \eqref{eq:f_n-conditional-dist} in the proof of Lemma~\ref{lem:joint-clt}, we have
    \[
        f_{n}^{(0)}(X)\mid W_{0},V
        \sim\mathcal{N}\left(0,\frac{1}{n}A_{n}\right)
        .
    \]
    Since $K\neq0$, we can choose a fixed vector $u\in\operatorname{range}(K)\setminus\{0\}$. By Lemma~\ref{lem:kernel-H_n-A_n-K}, we have $u\notin\ker(A_{n})$ almost surely. Since $A_{n}\succeq0$, it follows that $u^{\top}A_{n}u>0$ almost surely. Hence,
    \[
        u^{\top}f_{n}^{(0)}(X)\mid W_{0},V
        \sim\mathcal{N}\left(0,\frac{1}{n}u^{\top}A_{n}u\right)
    \]
    is a nondegenerate Gaussian random variable almost surely. Therefore,
    \[
        \mathrm{P}(u^{\top}f_{n}^{(0)}(X)=u^{\top}Py\mid W_{0},V)
        \stackrel{\mathrm{a.s.}}{=}0
        .
    \]
    Since $\{f_{n}^{(0)}(X)=Py\}\subset\{u^{\top}f_{n}^{(0)}(X)=u^{\top}Py\}$, we obtain
    \[
        \mathrm{P}(f_{n}^{(0)}(X)=Py\mid W_{0},V)
        \stackrel{\mathrm{a.s.}}{=}0
        .
    \]
    Taking expectations yields $\mathrm{P}(f_{n}^{(0)}(X)=Py)=0$ as desired.
\end{proof}

\subsection{First-order sensitivities of the score and the loss}

Set $B_{T}=B_{\infty}(\eta_{\infty,T})$, and define
\begin{align}
    \label{eq:loss-derivative-sensitivity-chi}
    g_{F,T}
    &:=-2KB_{T}^{2T-1}y
    ,\\
    \label{eq:loss-derivative-sensitivity-A}
    Q_{F,T}
    &:=\frac{1}{2}\left(B_{T}^{2T-1}yy^{\top}+yy^{\top}B_{T}^{2T-1}\right)-\frac{\eta_{\infty,T}}{2m}\sum_{s=0}^{2T-2}(B_{T}^{2T-2-s}yy^{\top}KB_{T}^{s}+B_{T}^{s}Kyy^{\top}B_{T}^{2T-2-s})
    ,\\
    \label{eq:loss-sensitivity-chi}
    g_{L,T}
    &:=-\frac{1}{m}B_{T}^{2T}y
    ,\\
    \label{eq:loss-sensitivity-A}
    Q_{L,T}
    &:=-\frac{\eta_{\infty,T}}{2m^{2}}\sum_{s=0}^{2T-1}B_{T}^{2T-1-s}yy^{\top}B_{T}^{s}
    .
\end{align}

\begin{lemma}[First-order sensitivities at the infinite-width optimum]\label{lem:frechet}
    For fixed $\eta$, regard $F_{T}(\eta;\chi,A)$ and $\Phi_{T}(\eta;\chi,A)$ as real-valued maps on $\mathbb{R}^{m}\times\mathbb{R}_{\mathrm{sym}}^{m\times m}$, equipped with the product norm $\|(h,E)\|_{\times}=\left(\|h\|^{2}+\|E\|_{\mathrm{F}}^{2}\right)^{1/2}$. Then for $h\in\mathbb{R}^m$ and $E\in\mathbb{R}_{\mathrm{sym}}^{m\times m}$, we have
    \begin{align*}
        &D_{(\chi,A)}F_{T}(\eta_{\infty,T};-y,K)[h,E]
        =g_{F,T}^{\top}h+\langle Q_{F,T},E\rangle_{\mathrm{F}}
        ,\\
        &D_{(\chi,A)}\Phi_{T}(\eta_{\infty,T};-y,K)[h,E]
        =g_{L,T}^{\top}h+\langle Q_{L,T},E\rangle_{\mathrm{F}}
        .
    \end{align*}
\end{lemma}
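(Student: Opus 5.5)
Because $B_{A}(\eta)=I_{m}-m^{-1}\eta A$ is affine in $A$, both $F_{T}(\eta;\chi,A)=\chi^{\top}AB_{A}(\eta)^{2T-1}\chi$ and $\Phi_{T}(\eta;\chi,A)=(2m)^{-1}\chi^{\top}B_{A}(\eta)^{2T}\chi$ are, for fixed $\eta$ and $T$, polynomials in the entries of $(\chi,A)$. They are therefore Fr\'echet differentiable on $\mathbb{R}^{m}\times\mathbb{R}_{\mathrm{sym}}^{m\times m}$, and the derivative equals the Gateaux derivative $\frac{d}{d\varepsilon}\big|_{\varepsilon=0}$ along $(h,E)$. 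The expansion remainder is a polynomial with no constant or linear terms, hence $O(\|(h,E)\|_{\times}^{2})$ locally. So I will compute directional derivatives at $(-y,K)$ with $\eta=\eta_{\infty,T}$ and write each result as $g^{\top}h+\langle Q,E\rangle_{\mathrm{F}}$.

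The basic tool is the product rule for matrix powers:
\[
D_{A}\bigl(B_{A}^{k}\bigr)[E]=-\frac{\eta}{m}\sum_{s=0}^{k-1}B_{A}^{s}EB_{A}^{k-1-s}.
\]
For general $E$ the powers do not commute past $E$, but at $A=K$ every $B_{T}^{s}$ is symmetric. This lets me rewrite traces as follows:
\[
a^{\top}MEN b=\operatorname{tr}(E\,Nba^{\top}M)=\langle M^{\top}ab^{\top}N^{\top},E\rangle_{\mathrm{F}}.
\]
Since $E$ is symmetric, I may replace any coefficient matrix by its symmetric part. That is how the $\tfrac12(\cdot+\cdot^{\top})$ forms in \eqref{eq:loss-derivative-sensitivity-A} arise. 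In \eqref{eq:loss-sensitivity-A} the sum is already symmetric under $s\mapsto 2T-1-s$, so no symmetrization is needed there.

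For $\Phi_{T}$, the $\chi$-derivative is $m^{-1}B_{T}^{2T}\chi$, which at $\chi=-y$ gives $g_{L,T}$. For the $A$-derivative, the sign of $\chi$ disappears because $\chi$ enters quadratically. The formula above gives
\[
-\frac{\eta_{\infty,T}}{2m^{2}}\sum_{s=0}^{2T-1}y^{\top}B_{T}^{s}EB_{T}^{2T-1-s}y,
\]
and the trace identity converts each term into $\langle B_{T}^{s}yy^{\top}B_{T}^{2T-1-s},E\rangle_{\mathrm{F}}$. Reindexing $s\mapsto 2T-1-s$ yields $Q_{L,T}$.

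For $F_{T}$, the $\chi$-derivative is $\chi^{\top}(AB^{2T-1}+B^{2T-1}A)h$. Because $K$ and $B_{T}$ commute and are symmetric, this equals $2(KB_{T}^{2T-1}\chi)^{\top}h$, and at $\chi=-y$ it gives $g_{F,T}$. The $A$-derivative has two pieces:
\[
y^{\top}EB_{T}^{2T-1}y \quad\text{and}\quad -\frac{\eta_{\infty,T}}{m}\sum_{s=0}^{2T-2}y^{\top}KB_{T}^{s}EB_{T}^{2T-2-s}y.
\]
The first piece becomes $\langle yy^{\top}B_{T}^{2T-1},E\rangle_{\mathrm{F}}$, and its symmetric part is $\tfrac12(B_{T}^{2T-1}yy^{\top}+yy^{\top}B_{T}^{2T-1})$. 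The second piece becomes $\langle B_{T}^{s}Kyy^{\top}B_{T}^{2T-2-s},E\rangle_{\mathrm{F}}$, and averaging each term with its transpose $B_{T}^{2T-2-s}yy^{\top}KB_{T}^{s}$ reproduces the second sum in \eqref{eq:loss-derivative-sensitivity-A} with the factor $\eta_{\infty,T}/(2m)$.

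There is no real obstacle here. The only delicate point is bookkeeping: placing transposes correctly in the trace identity, and justifying the symmetrization by the restriction of the domain to symmetric $E$.
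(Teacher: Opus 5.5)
Your proposal is correct and follows essentially the same route as the paper. Both arguments compute the $\chi$- and $A$-partial derivatives at $(-y,K)$ via the product rule $D_{A}B_{A}(\eta)^{k}[E]=-\frac{\eta}{m}\sum_{s}B_{A}^{s}EB_{A}^{k-1-s}$, identify each piece as a Frobenius pairing, and justify Fr\'echet differentiability through a quadratic remainder for fixed $T$; your explicit trace identity and the symmetrization by symmetry of $E$ simply spell out bookkeeping that the paper leaves implicit.
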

\begin{proof}
    The derivative of $B_{A}(\eta)$ with respect to $A$ in the direction $E$ is
    \[
        D_{A}B_{A}(\eta)[E]
        =-\frac{\eta}{m}E
        .
    \]
    We also note that by Lemma~\ref{lem:matrix-power-derivative}, we have
    \begin{equation}\label{eq:power-A-derivative}
        D_{A}B_{A}(\eta)^{k}[E]
        =-\frac{\eta}{m}\sum_{s=0}^{k-1}B_{A}(\eta)^{s}EB_{A}(\eta)^{k-1-s}
        .
    \end{equation}
    We first consider the loss $\Phi_{T}(\eta;\chi,A)$. Since $B_{A}(\eta)$ is symmetric for symmetric $A$, differentiation with respect to $\chi$ gives
    \[
        D_{\chi}\Phi_{T}(\eta_{\infty,T};-y,K)[h]
        =-\frac{1}{m}y^{\top}B_{T}^{2T}h
        =g_{L,T}^{\top}h
        .
    \]
    Using \eqref{eq:power-A-derivative} with $k=2T$, we have
    \[
        D_{A}\Phi_{T}(\eta_{\infty,T};-y,K)[E]
        =-\frac{\eta_{\infty,T}}{2m^{2}}\sum_{s=0}^{2T-1}y^{\top}B_{T}^{s} E B_{T}^{2T-1-s}y
        =\langle Q_{L,T},E\rangle_{\mathrm{F}}
        .
    \]
    Combining the two partial derivatives gives the Fr\'echet derivative with respect to the pair $(\chi,A)$. \footnote{To see this explicitly, Lemma~\ref{lem:matrix-power-derivative} implies
    \[
        B_{K+E}(\eta_{\infty,T})^{2T}
        =B_{T}^{2T}+D_{A}B_{K}(\eta_{\infty,T})^{2T}[E]+O(\|E\|_{\mathrm{F}}^{2})
    \]
    for fixed $T$. Substituting this expansion into $\Phi_{T}(\eta_{\infty,T};-y+h,K+E)$ shows
    \begin{align*}
        &|\Phi_{T}(\eta_{\infty,T};-y+h,K+E)-\Phi_{T}(\eta_{\infty,T};-y,K)-D_{\chi}\Phi_{T}(\eta_{\infty,T};-y,K)[h]-D_{A}\Phi_{T}(\eta_{\infty,T};-y,K)[E]|\\
        &=O(\|h\|^{2}+\|h\|\|E\|_{\mathrm{F}}+\|E\|_{\mathrm{F}}^{2})
        =O(\|(h,E)\|_{\times}^{2})
        =o(\|(h,E)\|_{\times})
        .
    \end{align*}
    }
    Next, since $A$ commutes with $B_{A}(\eta)$, the matrix $AB_{A}(\eta)^{2T-1}$ is symmetric. Hence we have
    \[
        D_{\chi} F_{T}(\eta_{\infty,T};-y,K)[h]
        =-2y^{\top}KB_{T}^{2T-1}h
        =g_{F,T}^{\top}h
        .
    \]
    For the derivative with respect to $A$, \eqref{eq:power-A-derivative} gives
    \[
        D_{A}F_{T}(\eta_{\infty,T};-y,K)[E]
        =y^{\top}E B_{T}^{2T-1}y-\frac{\eta_{\infty,T}}{m}\sum_{s=0}^{2T-2}y^{\top}KB_{T}^{s} EB_{T}^{2T-2-s}y
        =\langle Q_{F,T},E\rangle_{\mathrm{F}}
        .
    \]
    Combining the results completes the proof.
\end{proof}

\section{Proof of the fixed-horizon transfer theorem}\label{app:proof-fixed-horizon-transfer}

Throughout this section, $T$ is fixed. Define
\begin{equation}\label{eq:r_j}
    r_{j,T}
    =1-\frac{\eta_{\infty,T}\lambda_{j}}{m}
    \qquad(j\in[r])
    .
\end{equation}

\begin{lemma}\label{lem:fixed-positive-curvature}
    Under the assumptions of Theorem~\ref{thm:fixed-horizon-transfer}, we have for every fixed $T$ that
    \[
        F_{\infty,T}'(\eta_{\infty,T})
        <0
        ,\qquad
        \phi_{\infty,T}''(\eta_{\infty,T})
        >0
        .
    \]
\end{lemma}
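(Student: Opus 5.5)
Both inequalities will follow from the explicit derivative formulas in Proposition~\ref{prop:loss-derivatives}, evaluated at $(\chi,A)=(-y,K)$, together with the spectral decomposition $K=\sum_{j=1}^{r}\lambda_{j}P_{j}$. By \eqref{eq:score-derivative} and \eqref{eq:second-derivative},
\[
    F_{\infty,T}'(\eta_{\infty,T})=-\frac{2T-1}{m}\,y^{\top}K^{2}B_{T}^{2T-2}y,
    \qquad
    \phi_{\infty,T}''(\eta_{\infty,T})=\frac{T(2T-1)}{m^{3}}\,y^{\top}K^{2}B_{T}^{2T-2}y .
\]
Since $T\ge1$, both prefactors are strictly positive. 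Both claims therefore reduce to a single statement: $S_{T}:=y^{\top}K^{2}B_{T}^{2T-2}y>0$. (Note also that $F_{\infty,T}$ is evaluated at $\chi=-y$, and the quadratic form is even in $\chi$, so the sign convention does not matter.)

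Next I expand $S_{T}$ spectrally. Using $K^{2}B_{T}^{2T-2}=\sum_{j}\lambda_{j}^{2}r_{j,T}^{2T-2}P_{j}$, with $r_{j,T}$ as in \eqref{eq:r_j}, we get
\[
    S_{T}=\sum_{j=1}^{r}c_{j}\lambda_{j}^{2}r_{j,T}^{2T-2}.
\]
The kernel part drops out because $\lambda_{0}=0$. Every summand is nonnegative, since $2T-2$ is even. So $S_{T}>0$ holds unless $r_{j,T}=0$ for every $j\in\mathcal{J}_{y}$ (when $T=1$ the exponent is $0$ and positivity is immediate because $\mathcal{J}_{y}\neq\emptyset$).

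The only real step is to exclude this degenerate case, and this is where the hypothesis $|\mathcal{J}_{y}|\ge2$ enters. The condition $r_{j,T}=0$ means $\eta_{\infty,T}=m/\lambda_{j}$. Take two distinct indices $j\neq k$ in $\mathcal{J}_{y}$; then $\lambda_{j}\neq\lambda_{k}$, because the $\lambda_{j}$ are distinct eigenvalues. A single number $\eta_{\infty,T}$ cannot equal both $m/\lambda_{j}$ and $m/\lambda_{k}$, so at least one of $r_{j,T},r_{k,T}$ is nonzero. Hence the corresponding term $c_{j}\lambda_{j}^{2}r_{j,T}^{2T-2}$ is strictly positive, and therefore $S_{T}>0$.

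No property of $\eta_{\infty,T}$ beyond its being a real number is needed, so the interior assumption $\eta_{\infty,T}\in(\eta_{\ell},\eta_{u})$ is not used here.
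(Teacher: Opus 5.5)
Your proposal is correct and matches the paper's proof: both reduce the two inequalities to positivity of $\sum_{j}c_{j}\lambda_{j}^{2}r_{j,T}^{2T-2}$, both handle $T=1$ via $Ky\neq0$, and both use $|\mathcal{J}_{y}|\ge2$ for $T\ge2$. Your observation that two distinct active eigenvalues cannot both satisfy $\eta_{\infty,T}=m/\lambda_{j}$ is a useful sharpening of the paper's terser remark that ``at least one of $r_{j,T}$ is nonzero.'' That remark implicitly requires the nonzero factor to come from an index $j\in\mathcal{J}_{y}$, which your argument makes explicit.
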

\begin{proof}
    Recall that $c_{j}=\|P_{j}y\|^{2}$. Proposition~\ref{prop:loss-derivatives} implies
    \[
        -F_{\infty,T}'(\eta_{\infty,T})
        =\frac{2T-1}{m}y^{\top}K^{2}B_{\infty}(\eta_{\infty,T})^{2T-2}y
        =\frac{2T-1}{m}\sum_{j=1}^{r}c_{j}\lambda_{j}^{2}r_{j,T}^{2T-2}
        .
    \]
    If $T=1$, since $Ky\neq0$, we have $y^{\top}K^{2}B_{\infty}(\eta_{\infty,T})^{2T-2}y>0$. If $T\ge2$, since $|\mathcal{J}_{y}|\ge2$, at least one of $r_{j,T}$ is nonzero. Thus the preceding sum is strictly positive. Hence in all cases, we have $F_{\infty,T}'(\eta_{\infty,T})<0$, and consequently, $\phi_{\infty,T}''(\eta_{\infty,T})=-\frac{T}{m^{2}}F_{\infty,T}'(\eta_{\infty,T})>0$.
\end{proof}

\begin{lemma}\label{lem:score-uniform-fixed-T}
    Since $\eta_{\infty,T}\in(\eta_{\ell},\eta_{u})$, there exists a deterministic $\delta>0$ such that
    \[
        [\eta_{\infty,T}-\delta,\eta_{\infty,T}+\delta]
        \subset(\eta_{\ell},\eta_{u})
        .
    \]
    For such $\delta$, we have
    \begin{align*}
        &\sup_{|\eta-\eta_{\infty,T}|\le\delta}|F_{n,T}(\eta)-F_{\infty,T}(\eta)|
        =O_{p}(n^{-1/2})
        ,\\
        &\sup_{|\eta-\eta_{\infty,T}|\le\delta}|F_{n,T}'(\eta)-F_{\infty,T}'(\eta)|
        =O_{p}(n^{-1/2})
        .
    \end{align*}
\end{lemma}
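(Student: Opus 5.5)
Since $\eta_{\infty,T}$ is an interior point of the open interval $(\eta_{\ell},\eta_{u})$ and is deterministic, the existence of a deterministic $\delta>0$ is immediate. For the two uniform bounds I would exploit that $T$ is fixed, so both $F_{n,T}$ and $F_{n,T}'$ are polynomials in $\eta$ of degree at most $2T-1$ and $2T-2$, whose coefficients are polynomial functions of $(\chi_{n}^{(0)},A_{n})$. This mirrors the argument in Corollary~\ref{cor:fixed-T-limit-loss}, but at the $\sqrt{n}$ scale.

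First I would expand $B_{A}(\eta)^{2T-1}=\sum_{k=0}^{2T-1}\binom{2T-1}{k}(-\eta/m)^{k}A^{k}$. This gives
\[
    F_{T}(\eta;\chi,A)=\sum_{k=0}^{2T-1}\binom{2T-1}{k}(-\eta/m)^{k}\chi^{\top}A^{k+1}\chi .
\]
Proposition~\ref{prop:loss-derivatives} gives the analogous expansion of $F_{T}'(\eta;\chi,A)=-\frac{2T-1}{m}\chi^{\top}A^{2}B_{A}(\eta)^{2T-2}\chi$, with coefficients $\chi^{\top}A^{k+2}\chi$. So every coefficient has the form $p_{k}(\chi,A)=\chi^{\top}A^{j}\chi$ for some fixed integer $j\le 2T$. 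Each such map is a polynomial and hence locally Lipschitz on $\mathbb{R}^{m}\times\mathbb{R}_{\mathrm{sym}}^{m\times m}$.

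Second, I would invoke Lemma~\ref{lem:joint-clt}, which gives $\sqrt{n}(\chi_{n}^{(0)}+y,A_{n}-K)=O_{p}(1)$, i.e.\ $\|(\chi_{n}^{(0)}+y,A_{n}-K)\|_{\times}=O_{p}(n^{-1/2})$. On the event that this pair lies in a fixed ball around $(-y,K)$, which has probability tending to one, local Lipschitz continuity yields $|p_{k}(\chi_{n}^{(0)},A_{n})-p_{k}(-y,K)|\le C\|(\chi_{n}^{(0)}+y,A_{n}-K)\|_{\times}$. Hence each coefficient difference is $O_{p}(n^{-1/2})$. Equivalently, one can use the delta method, or the explicit telescoping identity $A_{n}^{j}-K^{j}=\sum_{s}A_{n}^{s}(A_{n}-K)K^{j-1-s}$ together with $\chi^{\top}M\chi-y^{\top}My=(\chi+y)^{\top}M\chi-y^{\top}M(\chi+y)$.

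Third, for $|\eta-\eta_{\infty,T}|\le\delta$ we have $|\eta|\le C':=|\eta_{\infty,T}|+\delta$, which gives
\[
    \sup_{|\eta-\eta_{\infty,T}|\le\delta}|F_{n,T}(\eta)-F_{\infty,T}(\eta)|\le\max_{k}|p_{k}(\chi_{n}^{(0)},A_{n})-p_{k}(-y,K)|\sum_{k=0}^{2T-1}\binom{2T-1}{k}(C'/m)^{k}.
\]
The sum is a deterministic constant because $T$ is fixed, and a finite maximum of $O_{p}(n^{-1/2})$ terms is $O_{p}(n^{-1/2})$. The same bound applies verbatim to $F_{n,T}'-F_{\infty,T}'$.

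There is no real obstacle. The only point needing care is justifying the $O_{p}(n^{-1/2})$ rate of the coefficient differences, rather than mere convergence, via the local Lipschitz or tightness argument above. The deterministic constants may grow with $T$, but $T$ is fixed here, so this is harmless.
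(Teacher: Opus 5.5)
Your proposal is correct and is essentially the paper's argument. For fixed $T$, $F_T$ and $\partial_\eta F_T$ depend polynomially on $(\chi,A)$ and are locally Lipschitz near $(-y,K)$ uniformly over the compact $\eta$-window; combining this with $\|(\chi_{n}^{(0)}+y,A_{n}-K)\|_{\times}=O_{p}(n^{-1/2})$ from Lemma~\ref{lem:joint-clt} gives the result. The only cosmetic difference is that you expand in powers of $\eta$ and control finitely many coefficients, as in Corollary~\ref{cor:fixed-T-limit-loss}, whereas the paper applies the integral mean value theorem to $(\chi,A)\mapsto F_T(\eta;\chi,A)$ with a uniform bound on $D_{(\chi,A)}F_T$.
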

\begin{proof}
    For fixed $T$, both $F_{T}(\eta;\chi,A)$ and $\partial_{\eta}F_{T}(\eta;\chi,A)$ are polynomials in the entries of $(\eta,\chi,A)$. Hence, on the compact interval $[\eta_{\infty,T}-\delta,\eta_{\infty,T}+\delta]$, their derivatives with respect to $(\chi,A)$ are uniformly bounded on a fixed neighborhood of $(-y,K)$. Define the product norm $\|(\chi,A)\|_{\times}=\left(\|\chi\|^{2}+\|A\|_{\mathrm{F}}^{2}\right)^{1/2}$. Then Lemma~\ref{lem:joint-clt} gives $\|(f_{n}^{(0)}(X),E_{n})\|_{\times}=O_{p}(n^{-1/2})$, where we defined $E_{n}=A_{n}-K$. Thus, noting that $(\chi_{n}^{(0)},A_{n})\stackrel{p}{\longrightarrow}(-y,K)$ holds, we have
    \begin{align*}
        &\sup_{|\eta-\eta_{\infty,T}|\le\delta}|F_{n,T}(\eta)-F_{\infty,T}(\eta)|\\
        &=\sup_{|\eta-\eta_{\infty,T}|\le\delta}\left|\int_{0}^{1}D_{\chi,A}F_{T}(\eta;-y+sf_{n}^{(0)}(X),K+sE_{n})[f_{n}^{(0)}(X),E_{n}]ds\right|\\
        &\le\sup_{|\eta-\eta_{\infty,T}|\le\delta}\int_{0}^{1}\|D_{\chi,A}F_{T}(\eta;-y+sf_{n}^{(0)}(X),K+sE_{n})\|_{\mathrm{op}}\|(f_{n}^{(0)}(X),E_{n})\|_{\times}ds\\
        &=O_{p}(n^{-1/2})
        ,
    \end{align*}
    and, similarly,
    \[
        \sup_{|\eta-\eta_{\infty,T}|\le\delta}|F_{n,T}'(\eta)-F_{\infty,T}'(\eta)|
        =O_{p}(n^{-1/2})
        .
    \]
\end{proof}

\begin{lemma}\label{lem:fixed-optimizer-localization}
    Under the assumptions of Theorem~\ref{thm:fixed-horizon-transfer}, $\eta_{n,T}$ lies in $(\eta_{\ell},\eta_{u})$ with probability tending to one, and we have
    \[
        \eta_{n,T}-\eta_{\infty,T}
        =O_{p}(n^{-1/2})
        .
    \]
\end{lemma}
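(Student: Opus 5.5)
The plan is to exploit the strict monotonicity of the score together with the uniform score estimates of Lemma~\ref{lem:score-uniform-fixed-T}. Fix the deterministic $\delta>0$ from that lemma, so that $J:=[\eta_{\infty,T}-\delta,\eta_{\infty,T}+\delta]\subset(\eta_{\ell},\eta_{u})$. By Lemma~\ref{lem:fixed-positive-curvature}, $F_{\infty,T}'(\eta_{\infty,T})<0$. Since $F_{\infty,T}'$ is a continuous polynomial, after shrinking $\delta$ if necessary we may assume $\sup_{\eta\in J}F_{\infty,T}'(\eta)\le-2\kappa$ for some deterministic $\kappa>0$. The second statement of Lemma~\ref{lem:score-uniform-fixed-T} then gives $\sup_{\eta\in J}F_{n,T}'(\eta)\le-\kappa$ on an event $\mathcal{A}_{n}$ with $\mathrm{P}(\mathcal{A}_{n})\to1$.

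\emph{Interiority.} Because $F_{\infty,T}$ is strictly decreasing and vanishes at $\eta_{\infty,T}$, we have $F_{\infty,T}(\eta_{\infty,T}-\delta)\ge2\kappa\delta>0$ and $F_{\infty,T}(\eta_{\infty,T}+\delta)\le-2\kappa\delta<0$. The first statement of Lemma~\ref{lem:score-uniform-fixed-T} gives uniform $O_{p}(n^{-1/2})$ closeness of the two scores on $J$. Hence, with probability tending to one, $F_{n,T}$ is positive at the left endpoint of $J$ and negative at the right endpoint. By Proposition~\ref{prop:loss-convex-as} and Corollary~\ref{cor:optimizer-characterization}, $\phi_{n,T}$ is almost surely strictly convex, its derivative equals $-(T/m^{2})F_{n,T}$, and $F_{n,T}$ is strictly decreasing on $\mathbb{R}$. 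Therefore the unique global minimizer of $\phi_{n,T}$ is the unique root of $F_{n,T}$, and this root lies in the interior of $J\subset(\eta_{\ell},\eta_{u})$. By the boundary characterization in Corollary~\ref{cor:optimizer-characterization}, this root equals $\eta_{n,T}$, which proves the first claim.

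\emph{Rate.} On $\mathcal{A}_{n}$ and the interiority event, we have $F_{n,T}(\eta_{n,T})=0=F_{\infty,T}(\eta_{\infty,T})$. The mean value theorem applied to $F_{n,T}$ on $J$ gives
\[
    |F_{n,T}(\eta_{\infty,T})|=|F_{n,T}(\eta_{\infty,T})-F_{n,T}(\eta_{n,T})|\ge\kappa|\eta_{n,T}-\eta_{\infty,T}|.
\]
Since $|F_{n,T}(\eta_{\infty,T})|=|F_{n,T}(\eta_{\infty,T})-F_{\infty,T}(\eta_{\infty,T})|=O_{p}(n^{-1/2})$ by Lemma~\ref{lem:score-uniform-fixed-T}, we conclude $\eta_{n,T}-\eta_{\infty,T}=O_{p}(n^{-1/2})$.

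The only delicate step is passing from the derivative bound at the single point $\eta_{\infty,T}$ to a uniform negative bound on a deterministic neighbourhood. This needs continuity of $F_{\infty,T}'$ and a possible shrinking of $\delta$, which is harmless because Lemma~\ref{lem:score-uniform-fixed-T} holds for any such smaller $\delta$. Everything else follows from global monotonicity, so no separate consistency argument is required.
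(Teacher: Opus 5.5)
Your proof is correct and follows essentially the same route as the paper: a uniform negative bound on $F_{n,T}'$ over a deterministic neighbourhood of $\eta_{\infty,T}$, the global strict monotonicity of the score from Corollary~\ref{cor:optimizer-characterization}, and $F_{n,T}(\eta_{\infty,T})=O_{p}(n^{-1/2})$. The only cosmetic difference is ordering. The paper brackets the root directly in $(\eta_{\infty,T}-M/\sqrt{n},\eta_{\infty,T}+M/\sqrt{n})$, obtaining interiority and the rate in one step. You first get interiority from the signs of $F_{n,T}$ at $\eta_{\infty,T}\pm\delta$ and then invert the mean value theorem.
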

\begin{proof}
    By Lemmas~\ref{lem:fixed-positive-curvature},~\ref{lem:score-uniform-fixed-T}, and continuity of $F_{\infty,T}'$, there exists a deterministic $\gamma>0$ such that
    \[
        \sup_{|\eta-\eta_{\infty,T}|\le\delta}F_{\infty,T}'(\eta)
        \le-2\gamma
        .
    \]
    Hence we have
    \[
        \mathrm{P}\left(\sup_{|\eta-\eta_{\infty,T}|\le\delta}F_{n,T}'(\eta)
        \le-\gamma\right)
        \to1
        .
    \]

    Since Corollary~\ref{cor:optimizer-characterization} implies $F_{\infty,T}(\eta_{\infty,T})=0$, Lemma~\ref{lem:score-uniform-fixed-T} yields
    \[
        F_{n,T}(\eta_{\infty,T})
        =O_{p}(n^{-1/2})
        .
    \]
    Thus, for fixed $\epsilon>0$, there exists $M_{0}<\infty$ such that
    \[
        \liminf_{n\to\infty}
        \mathrm{P}\left(
        |F_{n,T}(\eta_{\infty,T})|
        \le\frac{M_{0}}{\sqrt{n}}
        \right)
        \ge1-\epsilon
        .
    \]
    Choose $M>M_{0}/\gamma$. For all sufficiently large $n$, we have $M/\sqrt{n}<\delta$. On the intersection of the two preceding high-probability events, the mean value theorem gives
    \begin{align*}
        &F_{n,T}\left(\eta_{\infty,T}-\frac{M}{\sqrt{n}}\right)
        \ge\frac{-M_{0}+\gamma M}{\sqrt{n}}
        >0
        ,\\
        &F_{n,T}\left(\eta_{\infty,T}+\frac{M}{\sqrt{n}}\right)
        \le\frac{M_{0}-\gamma M}{\sqrt{n}}
        <0
        .
    \end{align*}
    Proposition~\ref{prop:loss-convex-as} and Corollary~\ref{cor:optimizer-characterization} imply that, for all sufficiently large $n$, $\phi_{n,T}$ has almost surely a unique global minimizer characterized by the root of $F_{n,T}$. Therefore, on the same event, this root belongs to
    \[
        \left(\eta_{\infty,T}-\frac{M}{\sqrt{n}},\eta_{\infty,T}+\frac{M}{\sqrt{n}}\right)
        \subset(\eta_{\ell},\eta_{u})
        .
    \]
    Thus, with probability at least $1-\epsilon+o(1)$, we have
    \[
        |\eta_{n,T}-\eta_{\infty,T}|
        <\frac{M}{\sqrt{n}}
        .
    \]
    Since $\epsilon>0$ was arbitrary, we obtain
    \[
        \eta_{n,T}-\eta_{\infty,T}
        =O_{p}(n^{-1/2})
        ,
    \]
    and the finite-width optimizer is interior with probability tending to one.
\end{proof}

\begin{theorem}[Limit distribution of the optimal learning rate at fixed $T$]\label{thm:b-limit-dist-fixed-T}
    Under the assumptions of Theorem~\ref{thm:fixed-horizon-transfer}, we have
    \[
        \sqrt{n}(\eta_{n,T}-\eta_{\infty,T})
        \stackrel{d}{\longrightarrow}\Omega_{\eta,T}
        :=-\Omega_{F,T}/F_{\infty,T}'(\eta_{\infty,T})
        ,
    \]
    where $\Omega_{F,T}=g_{F,T}^{\top}Z+\langle Q_{F,T},\Xi\rangle_{F}$.
\end{theorem}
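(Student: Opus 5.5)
The plan is a standard one-step Z-estimator argument applied to the score $F_{n,T}$. By Lemma~\ref{lem:fixed-optimizer-localization}, with probability tending to one $\eta_{n,T}$ is interior, and it satisfies $F_{n,T}(\eta_{n,T})=0$. By Corollary~\ref{cor:optimizer-characterization} and Proposition~\ref{prop:loss-convex-as} this root is unique, and moreover $\eta_{n,T}-\eta_{\infty,T}=O_{p}(n^{-1/2})$. On this event the mean value theorem gives
\[
    0=F_{n,T}(\eta_{n,T})=F_{n,T}(\eta_{\infty,T})+F_{n,T}'(\tilde\eta_{n,T})(\eta_{n,T}-\eta_{\infty,T})
\]
for some $\tilde\eta_{n,T}$ between the two optimizers. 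Rearranging yields
\[
    \sqrt{n}(\eta_{n,T}-\eta_{\infty,T})=-\sqrt{n}F_{n,T}(\eta_{\infty,T})/F_{n,T}'(\tilde\eta_{n,T}).
\]
It therefore suffices to identify the limit of the numerator and to show that the denominator converges in probability to $F_{\infty,T}'(\eta_{\infty,T})<0$. That the denominator limit is nonzero is Lemma~\ref{lem:fixed-positive-curvature}.

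\textbf{Denominator.} Write $F_{n,T}'(\tilde\eta)-F_{\infty,T}'(\eta_{\infty,T})$ as the sum of two pieces, $[F_{n,T}'(\tilde\eta)-F_{\infty,T}'(\tilde\eta)]$ and $[F_{\infty,T}'(\tilde\eta)-F_{\infty,T}'(\eta_{\infty,T})]$.
\begin{itemize}
    \item The first piece is $O_{p}(n^{-1/2})$ uniformly on the $\delta$-window, by Lemma~\ref{lem:score-uniform-fixed-T}; since $\tilde\eta$ lies in this window with probability tending to one, the bound applies at $\tilde\eta$.
    \item The second piece tends to zero in probability because $F_{\infty,T}'$ is a continuous polynomial and $\tilde\eta\stackrel{p}{\longrightarrow}\eta_{\infty,T}$.
\end{itemize}

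\textbf{Numerator.} Since $F_{\infty,T}(\eta_{\infty,T})=0$ and $F_{T}(\eta_{\infty,T};\cdot,\cdot)$ is a polynomial, hence $C^{2}$, in $(\chi,A)$, a Taylor expansion around $(-y,K)$ at fixed $\eta_{\infty,T}$ combined with Lemma~\ref{lem:frechet} gives
\[
    F_{n,T}(\eta_{\infty,T})=g_{F,T}^{\top}(\chi_{n}^{(0)}+y)+\langle Q_{F,T},A_{n}-K\rangle_{\mathrm{F}}+O_{p}(n^{-1}).
\]
The remainder is quadratic in the increment $(\chi_{n}^{(0)}+y,A_{n}-K)$, which is $O_{p}(n^{-1/2})$ by Lemma~\ref{lem:joint-clt}. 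Multiplying by $\sqrt{n}$ and applying the continuous mapping theorem to the linear functional $(h,E)\mapsto g_{F,T}^{\top}h+\langle Q_{F,T},E\rangle_{\mathrm{F}}$, Lemma~\ref{lem:joint-clt} yields
\[
    \sqrt{n}F_{n,T}(\eta_{\infty,T})\stackrel{d}{\longrightarrow}g_{F,T}^{\top}Z+\langle Q_{F,T},\Xi\rangle_{\mathrm{F}}=\Omega_{F,T}.
\]

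\textbf{Combining.} Slutsky's lemma, applied to the ratio of the numerator and denominator, gives convergence to $-\Omega_{F,T}/F_{\infty,T}'(\eta_{\infty,T})$. The events where $\eta_{n,T}$ is not interior, or where the score representation fails, have vanishing probability and do not affect the distributional limit.

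The main point requiring care is not the linearization but making the mean-value step rigorous on a high-probability event. This means restricting to the event of Lemma~\ref{lem:fixed-optimizer-localization}, on which $\eta_{n,T}$ is the interior unique root and $\tilde\eta_{n,T}$ lies in the $\delta$-window where Lemma~\ref{lem:score-uniform-fixed-T} applies, and then arguing that modifying the random variables off this event leaves the weak limit unchanged. Everything else is routine once the earlier lemmas are invoked.
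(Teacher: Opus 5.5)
Your proposal is correct and follows essentially the same route as the paper. Both arguments linearize the score at $\eta_{\infty,T}$ via Lemma~\ref{lem:frechet} and Lemma~\ref{lem:joint-clt}, apply the mean value theorem on the interior event from Lemma~\ref{lem:fixed-optimizer-localization}, control the denominator with Lemmas~\ref{lem:score-uniform-fixed-T} and~\ref{lem:fixed-positive-curvature}, and conclude by Slutsky. Your explicit two-piece treatment of the denominator error and your $O_{p}(n^{-1})$ Taylor remainder are slightly more detailed than the paper's version but change nothing substantive.
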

\begin{proof}
    By Lemma~\ref{lem:frechet}, applied at $(\eta_{\infty,T},-y,K)$, we have
    \[
        F_{n,T}(\eta_{\infty,T})
        =g_{F,T}^{\top}f_{n}^{(0)}(X)+\langle Q_{F,T},A_{n}-K\rangle_{\mathrm{F}}+o_{p}(n^{-1/2})
        .
    \]
    Here we used $F_{\infty,T}(\eta_{\infty,T})=0$ by Corollary~\ref{cor:optimizer-characterization}. Combining this with Lemma~\ref{lem:joint-clt} gives
    \begin{equation}\label{eq:fixed-score-limit}
        \sqrt{n}F_{n,T}(\eta_{\infty,T})
        \stackrel{d}{\longrightarrow}\Omega_{F,T}
        .
    \end{equation}
    By the mean value theorem, there exists a random $\tilde{\eta}_{n,T}$ between $\eta_{n,T}$ and $\eta_{\infty,T}$ such that
    \[
        F_{n,T}(\eta_{n,T})-F_{n,T}(\eta_{\infty,T})
        =F_{n,T}'(\tilde{\eta}_{n,T})(\eta_{n,T}-\eta_{\infty,T})
        .
    \]
    On the event $\{\eta_{n,T}\in(\eta_{\ell},\eta_{u})\}$, by Corollary~\ref{cor:optimizer-characterization}, we have $F_{n,T}(\eta_{n,T})=0$. Moreover, on the same event, Lemma~\ref{lem:score-uniform-fixed-T} implies $ F_{n,T}'(\tilde{\eta}_{n,T})=F_{\infty,T}'(\eta_{\infty,T})+O_{p}(n^{-1/2})$. Thus by Lemma~\ref{lem:fixed-optimizer-localization}, we have
    \[
        F_{n,T}'(\tilde{\eta}_{n,T})
        =F_{\infty,T}'(\eta_{\infty,T})+o_{p}(1)
    \]
    unconditionally. By Lemma~\ref{lem:fixed-positive-curvature}, this further implies that $F_{n,T}'(\tilde{\eta}_{n,T})<0$ holds with probability tending to one. Thus, on the intersection of this event and $\{\eta_{n,T}\in(\eta_{\ell},\eta_{u})\}$, both of which have probability tending to one, we have
    \[
        \sqrt{n}(\eta_{n,T}-\eta_{\infty,T})
        =-\frac{\sqrt{n}F_{n,T}(\eta_{\infty,T})}{F_{n,T}'(\tilde{\eta}_{n,T})}
        .
    \]
    Finally, by \eqref{eq:fixed-score-limit}, we obtain
    \[
        \sqrt{n}(\eta_{n,T}-\eta_{\infty,T})
        \stackrel{d}{\longrightarrow}
        -\frac{\Omega_{F,T}}{F_{\infty,T}'(\eta_{\infty,T})}
        =\Omega_{\eta,T}
        .
    \]
\end{proof}

\begin{theorem}[Limit distribution of the optimal loss at fixed $T$]\label{thm:a-limit-dist-fixed-T}
    Under the assumptions of Theorem~\ref{thm:fixed-horizon-transfer}, we have
    \[
        \sqrt{n}\left(\phi_{n,T}(\eta_{n,T})-\phi_{\infty,T}(\eta_{\infty,T})\right)
        \stackrel{d}{\longrightarrow}\Omega_{L,T}
        :=g_{L,T}^{\top}Z+\langle Q_{L,T},\Xi\rangle_{\mathrm{F}}
        .
    \]
\end{theorem}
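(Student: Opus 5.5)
The plan is to split the optimized-loss gap into a first-order fluctuation at the deterministic optimizer plus a reoptimization correction:
\[
    \phi_{n,T}(\eta_{n,T})-\phi_{\infty,T}(\eta_{\infty,T})
    =\bigl(\phi_{n,T}(\eta_{\infty,T})-\phi_{\infty,T}(\eta_{\infty,T})\bigr)-\bigl(\phi_{n,T}(\eta_{\infty,T})-\phi_{n,T}(\eta_{n,T})\bigr).
\]
I will show that the first bracket, scaled by $\sqrt{n}$, converges to $\Omega_{L,T}$. I will then show that the second bracket is $O_{p}(n^{-1})$. Slutsky's lemma then gives the claim.

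\textbf{First term.} For fixed $T$, the map $(\chi,A)\mapsto\Phi_{T}(\eta_{\infty,T};\chi,A)$ is a polynomial, so it is smooth. Lemma~\ref{lem:frechet} and its footnote give a quadratic remainder:
\[
    \phi_{n,T}(\eta_{\infty,T})-\phi_{\infty,T}(\eta_{\infty,T})
    =g_{L,T}^{\top}(\chi_{n}^{(0)}+y)+\langle Q_{L,T},A_{n}-K\rangle_{\mathrm{F}}+O_{p}\bigl(\|(\chi_{n}^{(0)}+y,A_{n}-K)\|_{\times}^{2}\bigr).
\]
Lemma~\ref{lem:joint-clt} shows that the perturbation pair is $O_{p}(n^{-1/2})$, so the remainder is $O_{p}(n^{-1})$. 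The same lemma gives joint convergence of $\sqrt{n}(\chi_{n}^{(0)}+y,A_{n}-K)$ to $(Z,\Xi)$. The linear functional $(h,E)\mapsto g_{L,T}^{\top}h+\langle Q_{L,T},E\rangle_{\mathrm{F}}$ is continuous, so the continuous mapping theorem yields $\sqrt{n}$ times the first bracket $\stackrel{d}{\longrightarrow}g_{L,T}^{\top}Z+\langle Q_{L,T},\Xi\rangle_{\mathrm{F}}=\Omega_{L,T}$.

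\textbf{Reoptimization correction.} Write $R_{n,T}=\phi_{n,T}(\eta_{\infty,T})-\phi_{n,T}(\eta_{n,T})\ge0$. By Lemma~\ref{lem:fixed-optimizer-localization}, with probability tending to one $\eta_{n,T}$ is interior and $\phi_{n,T}'(\eta_{n,T})=0$. On that event, a second-order Taylor expansion of $\phi_{n,T}$ around $\eta_{n,T}$ gives
\[
    R_{n,T}=\tfrac12\phi_{n,T}''(\bar\eta_{n,T})(\eta_{\infty,T}-\eta_{n,T})^{2}
\]
for some $\bar\eta_{n,T}$ between the two optimizers. By Proposition~\ref{prop:loss-derivatives}, $\phi_{n,T}''=-(T/m^{2})F_{n,T}'$. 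Lemma~\ref{lem:score-uniform-fixed-T} shows that $F_{n,T}'$ is uniformly close to $F_{\infty,T}'$ on the $\delta$-neighborhood, and $F_{\infty,T}'$ is bounded there, so $\sup_{|\eta-\eta_{\infty,T}|\le\delta}|\phi_{n,T}''(\eta)|=O_{p}(1)$. Combining this with $\eta_{n,T}-\eta_{\infty,T}=O_{p}(n^{-1/2})$ from Lemma~\ref{lem:fixed-optimizer-localization} gives $R_{n,T}=O_{p}(n^{-1})$, hence $\sqrt{n}R_{n,T}\stackrel{p}{\longrightarrow}0$.

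Slutsky's lemma now concludes the proof. The only delicate point is that these identities hold only on high-probability events (interiority, and $\bar\eta_{n,T}$ lying in the $\delta$-window). I will handle this as in the proof of Theorem~\ref{thm:b-limit-dist-fixed-T}: the complement of these events has vanishing probability, so it does not affect convergence in distribution. I do not expect any real obstacle, since $T$ is fixed and all maps are polynomial.
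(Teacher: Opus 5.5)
Your proposal is correct and follows essentially the same route as the paper. The paper also expands the loss to first order at $\eta_{\infty,T}$ and applies the joint CLT. It then Taylor-expands $\phi_{n,T}$ around the interior optimizer $\eta_{n,T}$, where $\phi_{n,T}'(\eta_{n,T})=0$, and uses $\phi_{n,T}''=O_{p}(1)$ together with $\eta_{n,T}-\eta_{\infty,T}=O_{p}(n^{-1/2})$ to show the reoptimization correction is $O_{p}(n^{-1})$. The only difference is cosmetic: you bound $\phi_{n,T}''$ through the uniform score-derivative approximation, whereas the paper bounds it directly from the explicit second-derivative formula.
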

\begin{proof}
    By Lemma~\ref{lem:frechet}, we have
    \[
        \phi_{n,T}(\eta_{\infty,T})-\phi_{\infty,T}(\eta_{\infty,T})
        =g_{L,T}^{\top}f_{n}^{(0)}(X)+\langle Q_{L,T},A_{n}-K\rangle_{\mathrm{F}}+o_{p}(n^{-1/2})
        .
    \]
    Hence Lemma~\ref{lem:joint-clt} yields
    \begin{equation}\label{eq:fixed-loss-at-limit-eta}
        \sqrt{n}(\phi_{n,T}(\eta_{\infty,T})-\phi_{\infty,T}(\eta_{\infty,T}))
        \stackrel{d}{\longrightarrow}\Omega_{L,T}
        .
    \end{equation}
    It remains to replace $\phi_{n,T}(\eta_{\infty,T})$ by $\phi_{n,T}(\eta_{n,T})$. By Taylor's theorem, there exists a random $\bar{\eta}_{n,T}$ between $\eta_{n,T}$ and $\eta_{\infty,T}$ such that
    \[
        \phi_{n,T}(\eta_{\infty,T})-\phi_{n,T}(\eta_{n,T})
        =\phi_{n,T}'(\eta_{n,T})(\eta_{\infty,T}-\eta_{n,T})+\frac{1}{2}\phi_{n,T}''(\bar{\eta}_{n,T})(\eta_{\infty,T}-\eta_{n,T})^{2}
    \]
    On the event $\{\eta_{n,T}\in(\eta_{\ell},\eta_{u})\}$, Proposition~\ref{prop:loss-derivatives} gives $\phi_{n,T}'(\eta_{n,T})=0$. Thus, on the same event, we have
    \[
        \phi_{n,T}(\eta_{\infty,T})-\phi_{n,T}(\eta_{n,T})
        =\frac{1}{2}\phi_{n,T}''(\bar{\eta}_{n,T})(\eta_{\infty,T}-\eta_{n,T})^{2}
        .
    \]
    Since $T$ is fixed, \eqref{eq:second-derivative}, Lemma~\ref{lem:joint-clt}, and Lemma~\ref{lem:fixed-optimizer-localization} imply $\phi_{n,T}''(\bar{\eta}_{n,T})=O_{p}(1)$. Therefore, combining this with Lemma~\ref{lem:fixed-optimizer-localization}, we have 
    \[
        \phi_{n,T}(\eta_{\infty,T})-\phi_{n,T}(\eta_{n,T})
        =O_{p}(n^{-1})
        =o_{p}(n^{-1/2})
    \]
    with probability tending to one. Together with \eqref{eq:fixed-loss-at-limit-eta}, this yields
    \[
        \sqrt{n}(\phi_{n,T}(\eta_{n,T})-\phi_{\infty,T}(\eta_{\infty,T}))
        \stackrel{d}{\longrightarrow}\Omega_{L,T}
        .
    \]
\end{proof}

\begin{lemma}\label{lem:transfer-quadratic-fixed-T}
    Under the assumptions of Theorem~\ref{thm:fixed-horizon-transfer}, we have
    \[
        c_{n,T}
        =\frac{\phi_{\infty,T}''(\eta_{\infty,T})}{2}(\eta_{n,T}-\eta_{\infty,T})^{2}(1+o_{p}(1))
        .
    \]
\end{lemma}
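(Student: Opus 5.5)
The plan is a second-order Taylor expansion of the deterministic objective $\phi_{\infty,T}$ around its minimizer $\eta_{\infty,T}$. By assumption $\eta_{\infty,T}\in(\eta_{\ell},\eta_{u})$, and Corollary~\ref{cor:optimizer-characterization} (with $Ky\neq0$, implied by $|\mathcal{J}_{y}|\ge2$) shows it is the unique global minimizer over $\mathbb{R}$. Hence $\phi_{\infty,T}'(\eta_{\infty,T})=0$.

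For fixed $T$, $\phi_{\infty,T}$ is a polynomial in $\eta$, in particular $C^{2}$. Taylor's theorem with Lagrange remainder gives a random $\bar{\eta}_{n,T}$ between $\eta_{n,T}$ and $\eta_{\infty,T}$ such that
\[
    c_{n,T}
    =\phi_{\infty,T}(\eta_{n,T})-\phi_{\infty,T}(\eta_{\infty,T})
    =\frac{1}{2}\phi_{\infty,T}''(\bar{\eta}_{n,T})(\eta_{n,T}-\eta_{\infty,T})^{2}
    .
\]
This identity holds deterministically, on every sample, with no need to restrict to the event that $\eta_{n,T}$ is interior. The only task is to show $\phi_{\infty,T}''(\bar{\eta}_{n,T})=\phi_{\infty,T}''(\eta_{\infty,T})(1+o_{p}(1))$.

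By Lemma~\ref{lem:fixed-optimizer-localization}, $\eta_{n,T}-\eta_{\infty,T}=O_{p}(n^{-1/2})$, so $|\bar{\eta}_{n,T}-\eta_{\infty,T}|\le|\eta_{n,T}-\eta_{\infty,T}|\stackrel{p}{\longrightarrow}0$. Since $\phi_{\infty,T}''$ is a deterministic continuous function (explicitly \eqref{eq:second-derivative}), the continuous mapping theorem gives $\phi_{\infty,T}''(\bar{\eta}_{n,T})\stackrel{p}{\longrightarrow}\phi_{\infty,T}''(\eta_{\infty,T})$. Lemma~\ref{lem:fixed-positive-curvature} gives $\phi_{\infty,T}''(\eta_{\infty,T})>0$, so we may divide:
\[
    \phi_{\infty,T}''(\bar{\eta}_{n,T})/\phi_{\infty,T}''(\eta_{\infty,T})=1+o_{p}(1).
\]
Substituting into the Taylor identity gives the claim.

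There is no real obstacle here. The only points needing care are that $\bar{\eta}_{n,T}$ depends on the sample and need not be measurable, and the division by the curvature. For the first, one can use the integral form of the remainder, $c_{n,T}=(\eta_{n,T}-\eta_{\infty,T})^{2}\int_{0}^{1}(1-s)\phi_{\infty,T}''(\eta_{\infty,T}+s(\eta_{n,T}-\eta_{\infty,T}))\,ds$, and bound the deviation of the integral by the modulus of continuity of $\phi_{\infty,T}''$ on a fixed compact neighbourhood. For the second, strict positivity of the curvature is exactly where the nondegeneracy hypothesis $|\mathcal{J}_{y}|\ge2$ enters, through Lemma~\ref{lem:fixed-positive-curvature}.
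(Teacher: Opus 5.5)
Your proof is correct and takes essentially the same route as the paper: Taylor-expand $\phi_{\infty,T}$ about its interior minimizer, use $\phi_{\infty,T}'(\eta_{\infty,T})=0$, localize with Lemma~\ref{lem:fixed-optimizer-localization}, and divide by the positive curvature from Lemma~\ref{lem:fixed-positive-curvature}. The only difference is minor. The paper expands to third order and bounds $\phi_{\infty,T}'''$ uniformly on $[\eta_{\ell},\eta_{u}]$, which yields an explicit $O_{p}(n^{-1/2})$ relative error. You stop at second order and invoke continuity of $\phi_{\infty,T}''$, which needs only consistency of $\eta_{n,T}$.
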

\begin{proof}
    We next expand the transfer gap. Since $\eta_{\infty,T}$ is the unique global minimizer, Proposition~\ref{prop:loss-derivatives} gives $\phi_{\infty,T}'(\eta_{\infty,T})=0$. By Taylor's theorem, there exists a random $\eta_{n,T}^{\dagger}$ between $\eta_{n,T}$ and $\eta_{\infty,T}$ such that
    \[
        c_{n,T}
        =\frac{\phi_{\infty,T}''(\eta_{\infty,T})}{2}
        (\eta_{n,T}-\eta_{\infty,T})^{2}+
        \frac{\phi_{\infty,T}'''(\eta_{n,T}^{\dagger})}{6}
        (\eta_{n,T}-\eta_{\infty,T})^{3}
        .
    \]
    For fixed $T$, by Lemma~\ref{lem:fixed-optimizer-localization}, with probability tending to one, we have
    \[
        |\phi_{\infty,T}'''(\eta_{n,T}^{\dagger})|
        \le\sup_{\eta_{\ell}\le\eta\le\eta_{u}}|\phi_{\infty,T}'''(\eta)|
        .
    \]
    Therefore, by Lemmas~\ref{lem:fixed-positive-curvature} and~\ref{lem:fixed-optimizer-localization}, we have
    \[
        c_{n,T}
        =\frac{\phi_{\infty,T}''(\eta_{\infty,T})}{2}(\eta_{n,T}-\eta_{\infty,T})^{2}(1+o_{p}(1))
        .
    \]
\end{proof}

\begin{proof}[Proof of Theorem~\ref{thm:fixed-horizon-transfer}]
    We verify that the limiting Gaussian variables $\Omega_{L,T}$ and $\Omega_{\eta,T}$ of $a_{n,T}$ and $b_{n,T}$ are nondegenerate. We first consider $\Omega_{\eta,T}$. For $s>0$, direct substitution into the definition of $F_{T}$ gives
    \[
        F_{T}(\eta_{\infty,T};-y,sK)
        =sF_{\infty,T}(s\eta_{\infty,T})
        .
    \]
    Differentiating both sides with respect to $s$ at $s=1$, applying Lemma~\ref{lem:frechet}, and using $F_{\infty,T}(\eta_{\infty,T})=0$, we have
    \[
        \langle Q_{F,T},K\rangle_{\mathrm{F}}
        =F_{\infty,T}(\eta_{\infty,T})+\eta_{\infty,T}F_{\infty,T}'(\eta_{\infty,T})
        =\eta_{\infty,T}F_{\infty,T}'(\eta_{\infty,T})
        .
    \]
    Recall from Lemma~\ref{lem:joint-clt} that $\Xi=\alpha K+XGX^{\top}$ with $\alpha\sim\mathcal{N}(0,2)$, where $\alpha$ is independent of $G$ and $Z$. Noting that Corollary~\ref{cor:optimizer-characterization} gives $\eta_{\infty,T}>0$, the contribution of the $\alpha K$ component of $\Xi$ to $\Omega_{\eta,T}$ is
    \[
        -\frac{\alpha\langle Q_{F,T},K\rangle_{\mathrm{F}}}{F_{\infty,T}'(\eta_{\infty,T})}
        =-\eta_{\infty,T}\alpha
        \sim\mathcal{N}(0,2\eta_{\infty,T}^{2})
        .
    \]
    Since this component is independent of the remaining Gaussian terms, $\Omega_{\eta,T}$ is nondegenerate. By Theorem~\ref{thm:b-limit-dist-fixed-T} and the continuous mapping theorem, this proves
    \[
        b_{n,T}
        =\Theta_{p}(n^{-1/2})
        .
    \]

    Next, we consider $\Omega_{L,T}$. By the definition of $g_{L,T}$, we have
    \[
        g_{L,T}^{\top}Kg_{L,T}
        =\frac{1}{m^{2}}
        \sum_{j=1}^{r}c_{j}\lambda_{j}r_{j,T}^{4T}
        ,
    \]
    where $r_{j,T}$ are defined in \eqref{eq:r_j}. Under $|\mathcal{J}_{y}|\ge2$, at least one of the $r_{j,T}$ is nonzero. Hence we have $g_{L,T}^{\top}Kg_{L,T}>0$. Since $Z\sim\mathcal{N}(0,K)$ is independent of $\Xi$ by Lemma~\ref{lem:joint-clt}, we have
    \[
        \operatorname{Var}(\Omega_{L,T})
        =\operatorname{Var}(g_{L,T}^{\top}Z)+\operatorname{Var}(\langle Q_{L,T},\Xi\rangle_{\mathrm{F}})
        =g_{L,T}^{\top}Kg_{L,T}+\operatorname{Var}(\langle Q_{L,T},\Xi\rangle_{\mathrm{F}})
        >0
        .
    \]
    Therefore, $\Omega_{L,T}$ is also nondegenerate. By Theorem~\ref{thm:a-limit-dist-fixed-T} and the continuous mapping theorem, this proves
    \[
        a_{n,T}
        =\Theta_{p}(n^{-1/2})
        .
    \]
    Finally, by Lemmas~\ref{lem:transfer-quadratic-fixed-T} and~\ref{lem:fixed-positive-curvature}, we obtain
    \[
        c_{n,T}
        =\Theta_{p}(b_{n,T}^{2})
        =\Theta_{p}(n^{-1})
        .
    \]
\end{proof}

\begin{proof}[Proof of Corollary~\ref{cor:fixed-horizon-fast-transfer}]
    By Theorem~\ref{thm:a-limit-dist-fixed-T} and the continuous mapping theorem, we have
    \[
        \sqrt{n}a_{n,T}
        \stackrel{d}{\longrightarrow}|\Omega_{L,T}|
        ,
    \]
    where $\Omega_{L,T}$ is a nondegenerate Gaussian random variable. Since $\mathrm{P}(|\Omega_{L,T}|=0)=0$, another application of the continuous mapping theorem gives
    \[
        \frac{1}{\sqrt{n}a_{n,T}}
        \stackrel{d}{\longrightarrow}\frac{1}{|\Omega_{L,T}|}
        .
    \]
    Therefore, we have
    \[
        \frac{1}{a_{n,T}}
        =O_{p}(\sqrt{n})
        .
    \]
    Combining this with $c_{n,T}=\Theta_{p}(n^{-1})$ in Theorem~\ref{thm:fixed-horizon-transfer} yields the claim.
\end{proof}

\section{Infinite-width large-horizon analysis}

\subsection{Proof of Theorem~\ref{thm:finite-minimax}}

Note that we have
\begin{align*}
    \phi_{\infty,T}(\eta)
    &=\frac{1}{2m}\left\|\left(I_{m}-\frac{\eta}{m}K\right)^{T}y\right\|^{2}
    =\frac{1}{2m}\left\|\left[P_{0}+\sum_{j=1}^{r}\left(1-\frac{\eta\lambda_{j}}{m}\right)P_{j}\right]^{T}y\right\|^{2}\\
    &=\frac{1}{2m}\left\|P_{0}y+\sum_{j=1}^{r}\left(1-\frac{\eta\lambda_{j}}{m}\right)^{T}P_{j}y\right\|^{2}
    =\frac{c_{0}}{2m}+\frac{1}{2m}\sum_{j=1}^{r}c_{j}\left(1-\frac{\eta\lambda_{j}}{m}\right)^{2T}
    .
\end{align*}
Here, $c_{0}/(2m)$ is an irreducible training loss. No choice of learning rate or number of steps changes it. Define
\[
    R_{T}(\eta)
    :=\left(2m\phi_{\infty,T}(\eta)-c_{0}\right)^{1/(2T)}
    =\left(\sum_{j=1}^{r}c_{j}\left|1-\frac{\eta\lambda_{j}}{m}\right|^{2T}\right)^{1/(2T)}
    .
\]
Then we have
\[
    \arg\min_{\eta\ge0}\phi_{\infty,T}(\eta)
    =\arg\min_{\eta\ge0}R_{T}(\eta)
    .
\]
Define
\[
    R_{\infty}(\eta)
    :=\max_{j\in\mathcal{J}_{y}}\left|1-\frac{\eta\lambda_{j}}{m}\right|
    =\left|1-\frac{\eta\lambda_{-}^{y}}{m}\right|\vee\left|1-\frac{\eta\lambda_{+}^{y}}{m}\right|
    .
\]
Choose an index $j_{*}\in\mathcal{J}_{y}$ such that $\left|1-\eta\lambda_{j_{*}}/m\right|=R_{\infty}(\eta)$. Then we have
\[
    c_{j_{*}}R_{\infty}(\eta)^{2T}
    \le
    \sum_{j\in\mathcal{J}_{y}}
    c_{j}\left|1-\frac{\eta\lambda_{j}}{m}\right|^{2T}
    \le
    \left(\sum_{j=1}^{r}c_{j}\right)
    R_{\infty}(\eta)^{2T}
    ,
\]
which implies
\[
    \min_{j\in\mathcal{J}_{y}}c_{j}^{1/(2T)}R_{\infty}(\eta)
    \le
    R_{T}(\eta)
    \le
    \left(\sum_{j=1}^{r}c_{j}\right)^{1/(2T)}
    R_{\infty}(\eta).
\]
Since $\mathcal{J}_{y}$ is finite and all active coefficients are strictly positive, we have
\[
    \min_{j\in\mathcal{J}_{y}}c_{j}^{1/(2T)}
    \to1
    ,\qquad
    \left(\sum_{j=1}^{r}c_{j}\right)^{1/(2T)}
    \to1
    .
\]
Thus, on every compact interval $\mathcal{I}$, we have
\[
    \sup_{\eta\in\mathcal{I}}|R_{T}(\eta)-R_{\infty}(\eta)|
    \le\left(\left|\min_{j\in\mathcal{J}_{y}}c_{j}^{1/(2T)}-1\right|\vee\left|\left(\sum_{j=1}^{r}c_{j}\right)^{1/(2T)}-1\right|\right)\sup_{\eta\in\mathcal{I}}|R_{\infty}(\eta)|
    \to0
    .
\]

It remains to verify that the global minimizers of $R_{T}$ are contained in a fixed compact interval. Fix any $j_{0}\in\mathcal{J}_{y}$. Since $\eta_{\infty,T}$ is a global minimizer over $[0,\infty)$, we have
\[
    R_{T}(\eta_{\infty,T})
    \le R_{T}(0)
    =\left(\sum_{j\in\mathcal{J}_{y}}c_{j}\right)^{1/(2T)}
    .
\]
On the other hand, for any $\eta\ge0$, we have
\[
    R_{T}(\eta)
    \ge c_{j_{0}}^{1/(2T)}
    \left|1-\frac{\eta\lambda_{j_{0}}}{m}\right|
    .
\]
Therefore we have
\[
    \left|1-\frac{\eta_{\infty,T}\lambda_{j_{0}}}{m}\right|
    \le c_{j_{0}}^{-1/(2T)}R_{T}(\eta_{\infty,T})
    \le\left(\frac{\sum_{j\in\mathcal{J}_{y}}c_{j}}{c_{j_{0}}}\right)^{1/(2T)}
    \le\left(\frac{\sum_{j\in\mathcal{J}_{y}}c_{j}}{c_{j_{0}}}\right)^{1/2}
    .
\]
Hence there exists a deterministic constant $M<\infty$, independent of $T$, such that $\eta_{\infty,T}\in[0,M]$ for every $T\ge1$. Enlarging $M$ if necessary, we may also assume $\eta_{*}^{y}\in[0,M]$. Since $\sup_{\eta\in[0,M]}|R_{T}(\eta)-R_{\infty}(\eta)|\to0$ holds and $R_{\infty}$ has the unique minimizer $\eta_{*}^{y}$, we have
\[
    \eta_{\infty,T}
    \to\eta_{*}^{y}
    .
\]

If $\lambda_{+}^{y}=\lambda_{-}^{y}=\lambda$, the loss is a positive multiple of $(1-\eta\lambda/m)^{2T}$ plus a constant, whose unique minimizer is $m/\lambda$.
\hfill\qedsymbol

\subsection{Expansion of the infinite-width optimizer and asymptotic loss}

\begin{proposition}[First-order large-$T$ expansion of the optimal learning rate]\label{prop:finite-T-correction}
    Suppose Assumption~\ref{asmp:endpoint-support} holds and define
    \[
        \bar{\vartheta}
        :=\frac{1}{q_{*}}\max_{2\le j\le r-1}\left|1-\frac{\eta_{*}\lambda_{j}}{m}\right|
        ,
    \]
    where $\max\emptyset:=0$. Then $0\le\bar{\vartheta}<1$. Fix any constant $\vartheta$ satisfying $\bar{\vartheta}<\vartheta<1$. Then, as $T\to\infty$, we have
    \[
        \eta_{\infty,T}
        =\eta_{*}-\frac{mq_{*}}{(\lambda_{1}+\lambda_{r})(2T-1)}L_{*}+O(T^{-2})+O\left(\frac{\vartheta^{2T}}{T}\right)
        .
    \]
\end{proposition}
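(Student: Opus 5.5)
The plan is to work directly with the first-order condition. By Corollary~\ref{cor:optimizer-characterization}, $\eta_{\infty,T}$ is the unique root of
\[
    F_{\infty,T}(\eta)=\sum_{j=1}^{r}c_{j}\lambda_{j}\Bigl(1-\frac{\eta\lambda_{j}}{m}\Bigr)^{2T-1}.
\]
Write $\eta_{\infty,T}=\eta_{*}+\delta_{T}$, where Theorem~\ref{thm:finite-minimax} already gives $\delta_{T}\to0$. At $\eta_{*}+\delta$ the two endpoint factors are
\[
    1-\frac{\eta\lambda_{1}}{m}=-\Bigl(q_{*}+\frac{\delta\lambda_{1}}{m}\Bigr),\qquad
    1-\frac{\eta\lambda_{r}}{m}=q_{*}-\frac{\delta\lambda_{r}}{m}.
\]
Since $2T-1$ is odd, the root equation rearranges to
\[
    c_{r}\lambda_{r}\Bigl(q_{*}-\frac{\delta\lambda_{r}}{m}\Bigr)^{2T-1}+\sum_{j=2}^{r-1}c_{j}\lambda_{j}\Bigl(1-\frac{\eta\lambda_{j}}{m}\Bigr)^{2T-1}=c_{1}\lambda_{1}\Bigl(q_{*}+\frac{\delta\lambda_{1}}{m}\Bigr)^{2T-1}.
\]

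First I would verify $\bar\vartheta<1$. For $2\le j\le r-1$ we have $\lambda_{r}<\lambda_{j}<\lambda_{1}$. The map $\lambda\mapsto 1-\eta_{*}\lambda/m$ is affine and takes the values $q_{*}$ at $\lambda_{r}$ and $-q_{*}$ at $\lambda_{1}$, so $|1-\eta_{*}\lambda_{j}/m|<q_{*}$. Since there are finitely many such $j$, the strict inequality survives the maximum.

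Next I would control the interior terms. Fix $\vartheta\in(\bar\vartheta,1)$. By continuity in $\delta$ there is $\delta_{0}>0$ such that for $|\delta|\le\delta_{0}$,
\[
    \frac{|1-(\eta_{*}+\delta)\lambda_{j}/m|}{q_{*}-\delta\lambda_{r}/m}\le\vartheta\qquad(2\le j\le r-1).
\]
Because $\delta_{T}\to0$, this holds for all large $T$. Dividing the root equation by $c_{r}\lambda_{r}(q_{*}-\delta_{T}\lambda_{r}/m)^{2T-1}$ and taking logarithms then gives
\[
    (2T-1)\,\psi(\delta_{T})=-L_{*}+O(\vartheta^{2T}),
\]
where
\[
    \psi(\delta):=\log\Bigl(1+\frac{\delta\lambda_{1}}{mq_{*}}\Bigr)-\log\Bigl(1-\frac{\delta\lambda_{r}}{mq_{*}}\Bigr),
\]
and I use $\log(c_{r}\lambda_{r}/(c_{1}\lambda_{1}))=-L_{*}$ together with $\log(1+O(\vartheta^{2T-1}))=O(\vartheta^{2T})$ (constants absorbed).

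Then I would invert $\psi$. It is smooth near $0$, with $\psi(0)=0$ and $\psi'(0)=(\lambda_{1}+\lambda_{r})/(mq_{*})>0$, so it is strictly increasing with derivative bounded below on $[-\delta_{0},\delta_{0}]$ (shrinking $\delta_{0}$ if needed). The right-hand side of the log equation is $O(1)$, so $\psi(\delta_{T})=O(1/T)$, and hence $\delta_{T}=O(1/T)$. Taylor expanding,
\[
    \psi(\delta)=\frac{\lambda_{1}+\lambda_{r}}{mq_{*}}\,\delta+O(\delta^{2}).
\]
Solving gives
\[
    \delta_{T}=-\frac{mq_{*}L_{*}}{(\lambda_{1}+\lambda_{r})(2T-1)}+O(\delta_{T}^{2})+O\Bigl(\frac{\vartheta^{2T}}{T}\Bigr),
\]
and $O(\delta_{T}^{2})=O(T^{-2})$, which is the claimed expansion.

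I expect the main obstacle to be bookkeeping rather than any single hard step. Uniform control of the interior modes has to come before the a priori bound $\delta_{T}=O(1/T)$, and it needs only $\delta_{T}\to0$ from Theorem~\ref{thm:finite-minimax}, which is why I use that theorem first. The case $r=2$, with no interior terms, should be checked separately; there the $O(\vartheta^{2T}/T)$ term is simply absent.
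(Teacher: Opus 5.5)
Your proposal is correct and follows essentially the same route as the paper: use $\eta_{\infty,T}\to\eta_{*}$ to bound the interior modes by $O(\vartheta^{2T-1})$ relative to an endpoint term, take logarithms of the balanced endpoint equation, and invert the linearized log-ratio to get first $\delta_{T}=O(T^{-1})$ and then the stated expansion. The differences are cosmetic: you normalize by the $\lambda_{r}$ endpoint rather than the $\lambda_{1}$ endpoint, and you spell out two steps the paper leaves implicit, namely the check that $\bar{\vartheta}<1$ and the derivative-bounded-below argument for inverting $\psi$.
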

\begin{proof}
    Near $\eta_{*}$, define the positive endpoint magnitudes
    \[
        r_{1}(\eta)
        =\frac{\eta\lambda_{1}}{m}-1
        ,\qquad
        r_{r}(\eta)
        =1-\frac{\eta\lambda_{r}}{m}
        .
    \]
    Then we have $r_{1}(\eta_{*})=r_{r}(\eta_{*})=q_{*}>0$. By the definition of $\bar{\vartheta}$, we have
    \[
        \max_{2\le j\le r-1}\frac{|1-\eta_{*}\lambda_{j}/m|}{r_{1}(\eta_{*})}
        =\bar{\vartheta}
        <\vartheta
        .
    \]
    Since Theorem~\ref{thm:finite-minimax} gives $\eta_{\infty,T}\to\eta_{*}$, for all sufficiently large $T$, we have
    \begin{equation}\label{eq:active-mode-contribution}
        \max_{2\le j\le r-1}\frac{|1-\eta_{\infty,T}\lambda_{j}/m|}{r_{1}(\eta_{\infty,T})}
        \le\vartheta
        .
    \end{equation}
    By Corollary~\ref{cor:optimizer-characterization}, the score equation is given by
    \begin{align*}
        0
        &=F_{\infty,T}(\eta_{\infty,T})
        =y^{\top}K B_{\infty}(\eta_{\infty,T})^{2T-1}y
        =y^{\top}K\left(I_{m}-\frac{\eta_{\infty,T}}{m}K\right)^{2T-1}y\\
        &=\sum_{j=1}^{r}c_{j}\lambda_{j}\left(1-\frac{\lambda_{j}\eta_{\infty,T}}{m}\right)^{2T-1}\\
        &=-c_{1}\lambda_{1}(r_{1}(\eta_{\infty,T}))^{2T-1}+c_{r}\lambda_{r}r_{r}(\eta_{\infty,T})^{2T-1}+\sum_{j=2}^{r-1}c_{j}\lambda_{j}\left(1-\frac{\lambda_{j}\eta_{\infty,T}}{m}\right)^{2T-1}
        .
    \end{align*}
    Hence we have
    \begin{align*}
        &|c_{r}\lambda_{r}r_{r}(\eta_{\infty,T})^{2T-1}-c_{1}\lambda_{1}r_{1}(\eta_{\infty,T})^{2T-1}|
        \le\sum_{j=2}^{r-1}c_{j}\lambda_{j}\left|1-\frac{\lambda_{j}\eta_{\infty,T}}{m}\right|^{2T-1}
        .
    \end{align*}
    Using \eqref{eq:active-mode-contribution}, we obtain
    \[
        \left|c_{r}\lambda_{r}\left(\frac{r_{r}(\eta_{\infty,T})}{r_{1}(\eta_{\infty,T})}\right)^{2T-1}-c_{1}\lambda_{1}\right|
        \le\left(\sum_{j=2}^{r-1}c_{j}\lambda_{j}\right)\vartheta^{2T-1}
        =O(\vartheta^{2T-1}).
    \]
    Therefore, we have
    \[
        \left(\frac{r_{r}(\eta_{\infty,T})}{r_{1}(\eta_{\infty,T})}\right)^{2T-1}
        =\frac{c_{1}\lambda_{1}}{c_{r}\lambda_{r}}\left(1+O(\vartheta^{2T-1})\right)
        .
    \]    
    Taking logarithms gives
    \[
        (2T-1)\left(\log r_{r}(\eta_{\infty,T})-\log r_{1}(\eta_{\infty,T})\right)
        =L_{*}+O(\vartheta^{2T})
        .
    \]
    Writing $\eta_{\infty,T}=\eta_{*}+\delta$, we have
    \begin{equation}\label{eq:r_eta_infty}
        r_{1}(\eta_{\infty,T})
        =q_{*}+\frac{\lambda_{1}\delta}{m}
        ,\qquad
        r_{r}(\eta_{\infty,T})
        =q_{*}-\frac{\lambda_{r}\delta}{m}
        .
    \end{equation}
    Thus, using $\log(1+x)=x+O(x^{2})$, we obtain
    \[
        \log r_{r}(\eta_{\infty,T})-\log r_{1}(\eta_{\infty,T})
        =\log\left(\frac{1-\frac{\lambda_{r}\delta}{mq_{*}}}{1+\frac{\lambda_{1}\delta}{mq_{*}}}\right)
        =-\frac{\lambda_{r}\delta}{mq_{*}}-\frac{\lambda_{1}\delta}{mq_{*}}+O(\delta^{2})
        =-\frac{\lambda_{1}+\lambda_{r}}{mq_{*}}\delta+O(\delta^{2})
        .
    \]
    Combining the results, we have
    \[
        L_{*}+O(\vartheta^{2T})
        =(2T-1)\left(-\frac{\lambda_{1}+\lambda_{r}}{mq_{*}}\delta+O(\delta^{2})\right)
        .
    \]
    Since the left-hand side is $O(1)$, we have $\delta=O(T^{-1})$. Thus
    \[
        L_{*}+O(\vartheta^{2T})
        =-(2T-1)\frac{\lambda_{1}+\lambda_{r}}{mq_{*}}\delta+O(T^{-1})
        .
    \]
    Solving for $\delta$, we have
    \[
        \eta_{\infty,T}-\eta_{*}
        =\delta
        =-\frac{L_{*}mq_{*}}{(\lambda_{1}+\lambda_{r})(2T-1)}+O(T^{-2})+O\left(\frac{\vartheta^{2T}}{T}\right)
        .
    \]
\end{proof}

\begin{corollary}[Loss and curvature at the large-horizon optimum]\label{cor:T-loss-rate}
    Under the assumptions of Proposition~\ref{prop:finite-T-correction}, we have
    \begin{align}
        \notag
        &\phi_{\infty,T}(\eta_{\infty,T})-\frac{c_{0}}{2m}
        =\Theta\left(q_{*}^{2T}\right)
        ,\\
        \label{eq:finite-score-curvature}
        &\left|F_{\infty,T}'(\eta_{\infty,T})\right|
        =\Theta\left(Tq_{*}^{2T-2}\right)
        ,\\
        \label{eq:finite-loss-curvature}
        &\phi_{\infty,T}''(\eta_{\infty,T})
        =\Theta\left(T^{2}q_{*}^{2T-2}\right)
        .
    \end{align}
\end{corollary}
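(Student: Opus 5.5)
The plan is to work directly from the spectral representations already available, evaluated at $\eta_{\infty,T}$, and then plug in the expansion of Proposition~\ref{prop:finite-T-correction}. Write $\eta_{\infty,T}=\eta_*+\delta_T$ with $\delta_T=-\frac{mq_*L_*}{(\lambda_1+\lambda_r)(2T-1)}+O(T^{-2})+O(\vartheta^{2T}/T)$. By \eqref{eq:r_eta_infty}, the endpoint magnitudes satisfy $r_1(\eta_{\infty,T})=q_*(1+\lambda_1\delta_T/(mq_*))$ and $r_r(\eta_{\infty,T})=q_*(1-\lambda_r\delta_T/(mq_*))$. Since $\delta_T=O(1/T)$, raising these to powers $2T+O(1)$ gives
\[
r_1(\eta_{\infty,T})^{2T+k}=q_*^{2T+k}\exp\!\big(2T\lambda_1\delta_T/(mq_*)+O(1/T)\big)=q_*^{2T+k}\,(a_1^*+o(1)),
\]
and similarly $r_r^{2T+k}=q_*^{2T+k}(a_r^*+o(1))$, for each fixed integer $k$. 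Here the limits $a_1^*,a_r^*$ are the constants from Theorem~\ref{thm:a-b-c-summary}, and they are strictly positive. Only this two-sided boundedness of $(r/q_*)^{2T+k}$ is actually needed.

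For the loss, I will use
\[
\phi_{\infty,T}(\eta_{\infty,T})-\frac{c_0}{2m}=\frac{1}{2m}\sum_j c_j\Big(1-\frac{\eta_{\infty,T}\lambda_j}{m}\Big)^{2T}.
\]
The endpoint terms $c_1r_1^{2T}$ and $c_rr_r^{2T}$ are each $\Theta(q_*^{2T})$ by the display above, using $c_1,c_r>0$ from Assumption~\ref{asmp:endpoint-support}. All terms are nonnegative, so the lower bound follows immediately. For the upper bound, \eqref{eq:active-mode-contribution} bounds every interior term by $c_j\vartheta^{2T}r_1^{2T}=O(\vartheta^{2T}q_*^{2T})$.

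For the curvature, Proposition~\ref{prop:loss-derivatives} gives
\[
-F_{\infty,T}'(\eta_{\infty,T})=\frac{2T-1}{m}\sum_j c_j\lambda_j^2\Big(1-\frac{\eta_{\infty,T}\lambda_j}{m}\Big)^{2T-2}.
\]
This sum has nonnegative terms because the exponent is even. The same endpoint/interior split shows the sum is $\Theta(q_*^{2T-2})$, which yields \eqref{eq:finite-score-curvature}. Finally, $\phi''_{\infty,T}=-(T/m^2)F'_{\infty,T}$ from \eqref{eq:second-derivative} converts this into \eqref{eq:finite-loss-curvature}.

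The only delicate step is the uniform control of $(r_j/q_*)^{2T}$. This requires $\delta_T=O(1/T)$, which Proposition~\ref{prop:finite-T-correction} supplies; without it the exponential factors could drift to $0$ or $\infty$. Interior modes are handled entirely by \eqref{eq:active-mode-contribution}, which is valid for large $T$. For small $T$ the bounds are trivial after adjusting constants, so no further obstacle arises.
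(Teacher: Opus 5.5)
Your proposal is correct and follows essentially the same route as the paper. You expand the endpoint factors via Proposition~\ref{prop:finite-T-correction} to get $r^{2T-\ell}=q_{*}^{2T-\ell}(a^{*}+o(1))$, bound the interior modes by $(\vartheta q_{*})^{2T}$, and pass from the score derivative to the curvature via $\phi_{\infty,T}''=-(T/m^{2})F_{\infty,T}'$. The only cosmetic difference is that you separate a nonnegativity lower bound from an interior-mode upper bound, whereas the paper writes the sum directly as $q_{*}^{2T}\,[c_{1}a_{1}^{*}+c_{r}a_{r}^{*}+o(1)]/(2m)$.
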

\begin{proof}
    Define $r_{1}(\eta)$ and $r_{r}(\eta)$ as in the proof of Proposition~\ref{prop:finite-T-correction}. By \eqref{eq:r_eta_infty} and Proposition~\ref{prop:finite-T-correction}, we have
    \begin{align*}
        r_{r}(\eta_{\infty,T})
        &=q_{*}\left[1+\frac{\lambda_{r}}{\lambda_{1}+\lambda_{r}}\frac{L_{*}}{2T-1}+O(T^{-2})+O\left(\frac{\vartheta^{2T}}{T}\right)\right]
        ,\\
        r_{1}(\eta_{\infty,T})
        &=q_{*}\left[1-\frac{\lambda_{1}}{\lambda_{1}+\lambda_{r}}\frac{L_{*}}{2T-1}+O(T^{-2})+O\left(\frac{\vartheta^{2T}}{T}\right)\right]
        .
    \end{align*}
    In particular, both endpoint magnitudes are $q_{*}(1+O(T^{-1}))$.  More precisely, for every fixed integer $\ell$, the expansion
    \[
        \left(1+\frac{a}{2T-1}+O(T^{-2})\right)^{2T-\ell}
        =\exp(a)\left(1+O(T^{-1})\right)
    \]
    gives
    \begin{align}
        r_{r}(\eta_{\infty,T})^{2T-\ell}
        &=q_{*}^{2T-\ell}a_{r}^{*}\left(1+O(T^{-1})+O(\vartheta^{2T})\right)
        ,\label{eq:r-minus-power-asymptotic}\\
        r_{1}(\eta_{\infty,T})^{2T-\ell}
        &=q_{*}^{2T-\ell}a_{1}^{*}\left(1+O(T^{-1})+O(\vartheta^{2T})\right)
        .\label{eq:r-plus-power-asymptotic}
    \end{align}
    We now control the remaining active modes. Since $\eta_{\infty,T}\to\eta_{*}$ by Theorem~\ref{thm:finite-minimax}, we have
    \[
        \frac{1}{q_{*}}\max_{2\le j\le r-1}\left|1-\frac{\eta_{\infty,T}\lambda_{j}}{m}\right|
        \to\bar{\vartheta}
        <1
        ,
    \]
    which implies, for sufficiently large $T$,
    \[
        \max_{2\le j\le r-1}\left|1-\frac{\eta_{\infty,T}\lambda_{j}}{m}\right|
        \le\vartheta q_{*}
        .
    \]
    Hence we have
    \begin{equation}\label{eq:r-middle-power-asymptotic}
        \sum_{j=2}^{r-1}c_{j}\lambda_{j}^{2}\left|1-\frac{\eta_{\infty,T}\lambda_{j}}{m}\right|^{2T-2}
        =O\left((\vartheta q_{*})^{2T-2}\right)
        =o\left(q_{*}^{2T-2}\right)
        .
    \end{equation}
    Using the spectral representation of the loss, we therefore have
    \[
        \phi_{\infty,T}(\eta_{\infty,T})-\frac{c_{0}}{2m}
        =\frac{1}{2m}\sum_{j=1}^{r}c_{j}\left(1-\frac{\eta_{\infty,T}\lambda_{j}}{m}\right)^{2T}
        =\frac{q_{*}^{2T}}{2m}\left[c_{r}a_{r}^{*}+c_{1}a_{1}^{*}+o(1)\right]
        =\Theta(q_{*}^{2T})
        ,
    \]
    where the last equality follows since $c_{r}a_{r}^{*}+c_{1}a_{1}^{*}+o(1)$ converges to a strictly positive constant.

    Next, recall that by Proposition~\ref{prop:loss-derivatives},
    \[
        F_{\infty,T}'(\eta)
        =-\frac{2T-1}{m}\sum_{j=1}^{r}c_{j}\lambda_{j}^{2}\left(1-\frac{\eta\lambda_{j}}{m}\right)^{2T-2}
        .
    \]
    Note that all summands in the preceding display are nonnegative before applying the leading minus sign. Applying \eqref{eq:r-minus-power-asymptotic}, \eqref{eq:r-plus-power-asymptotic}, and \eqref{eq:r-middle-power-asymptotic} gives
    \[
        F_{\infty,T}'(\eta_{\infty,T})
        =-\frac{2T-1}{m}q_{*}^{2T-2}\left[c_{r}\lambda_{r}^{2}a_{r}^{*}+c_{1}\lambda_{1}^{2}a_{1}^{*}+o(1)\right]
        =-\Theta\left(Tq_{*}^{2T-2}\right)
        .
    \]
    The last equality follows since the limiting coefficient in square brackets is again strictly positive.
    
    Finally, noting that
    \[
        \phi_{\infty,T}'(\eta)
        =-\frac{T}{m^{2}}F_{\infty,T}(\eta)
    \]
    holds by Proposition~\ref{prop:loss-derivatives}, the curvature at the optimizer satisfies
    \[
        \phi_{\infty,T}''(\eta_{\infty,T})
        =-\frac{T}{m^{2}}F_{\infty,T}'(\eta_{\infty,T})
        .
    \]
    Substituting the previous result yields
    \[
        \phi_{\infty,T}''(\eta_{\infty,T})
        =\frac{T}{m^{2}}\Theta\left(Tq_{*}^{2T-2}\right)
        =\Theta(T^{2}q_{*}^{2T-2})
        .
    \]
\end{proof}

\subsection{Endpoint contractions and spectral separation}

\begin{lemma}[Endpoint contraction asymptotics]\label{lem:endpoint-contractions}
    For $r_{j,T},\ (j\in[r])$ defined in \eqref{eq:r_j}, we have
    \[
        \frac{r_{1,T}^{2T-1}}{q_{*}^{2T-1}}
        \to-a_{1}^{*}
        ,\qquad
        \frac{r_{r,T}^{2T-1}}{q_{*}^{2T-1}}
        \to a_{r}^{*}
        ,\qquad
        \frac{r_{1,T}^{2T-2}}{q_{*}^{2T-2}}
        \to a_{1}^{*}
        ,\qquad
        \frac{r_{r,T}^{2T-2}}{q_{*}^{2T-2}}
        \to a_{r}^{*}
        .
    \]
    Moreover, if $r>2$, there exists a constant $\vartheta_{0}\in(0,1)$ such that
    \[
        |r_{j,T}|
        \le\vartheta_{0}q_{*}
        \qquad
        (2\le j\le r-1)
    \]
    for all sufficiently large $T$.
\end{lemma}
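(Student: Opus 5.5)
The plan is to reduce everything to the endpoint expansions already obtained in the proof of Corollary~\ref{cor:T-loss-rate}, since that corollary works under the same Assumption~\ref{asmp:endpoint-support}.

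\textbf{Step 1: relate the signed quantities to the endpoint magnitudes.} By definition \eqref{eq:r_j}, $r_{1,T}=1-\eta_{\infty,T}\lambda_{1}/m=-r_{1}(\eta_{\infty,T})$ and $r_{r,T}=r_{r}(\eta_{\infty,T})$. Here $r_{1}(\cdot)$ and $r_{r}(\cdot)$ are the positive endpoint magnitudes from the proof of Proposition~\ref{prop:finite-T-correction}.

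\textbf{Step 2: read off the four limits.} Apply \eqref{eq:r-minus-power-asymptotic} and \eqref{eq:r-plus-power-asymptotic} with $\ell=1$ and $\ell=2$. These give
\[
r_{1}(\eta_{\infty,T})^{2T-\ell}/q_{*}^{2T-\ell}\to a_{1}^{*},
\qquad
r_{r}(\eta_{\infty,T})^{2T-\ell}/q_{*}^{2T-\ell}\to a_{r}^{*},
\]
because the error factors $1+O(T^{-1})+O(\vartheta^{2T})$ tend to one. The sign then comes from parity. With the odd exponent $2T-1$ we get $r_{1,T}^{2T-1}=-r_{1}(\eta_{\infty,T})^{2T-1}$, which yields the limit $-a_{1}^{*}$. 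The even exponent $2T-2$ removes the sign. The $r$-endpoint carries no sign at all.

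The only point needing care is the expansion $(1+a/(2T-1)+O(T^{-2}))^{2T-\ell}=e^{a}(1+O(T^{-1}))$ with the correct exponents $a$. The constants $a_{1}^{*}$ and $a_{r}^{*}$ come from the drift $\delta$ in Proposition~\ref{prop:finite-T-correction}. For $r_{1}$ the exponent is $a=-\lambda_{1}L_{*}/(\lambda_{1}+\lambda_{r})$, and for $r_{r}$ it is $a=+\lambda_{r}L_{*}/(\lambda_{1}+\lambda_{r})$. Both match the definitions of $a_{1}^{*}$ and $a_{r}^{*}$. Since this was already established in Corollary~\ref{cor:T-loss-rate}, I will only re-verify the algebra: take logarithms and use $(2T-\ell)\log(1+x_{T})=(2T-\ell)x_{T}+O(T^{-1})$ with $x_{T}=O(T^{-1})$.

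\textbf{Step 3: the middle modes.} For the second claim, take $\vartheta_{0}:=\vartheta$ for any $\vartheta\in(\bar\vartheta,1)$; if $\bar\vartheta=0$, pick $\vartheta_{0}=1/2$. By Theorem~\ref{thm:finite-minimax}, $\eta_{\infty,T}\to\eta_{*}$. By continuity, $\max_{2\le j\le r-1}|1-\eta_{\infty,T}\lambda_{j}/m|/q_{*}\to\bar\vartheta<\vartheta_{0}$, so the bound holds for all large $T$.

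Here $\bar\vartheta<1$ holds because $\lambda_{r}<\lambda_{j}<\lambda_{1}$ forces $1-\eta_{*}\lambda_{j}/m$ strictly between $-q_{*}$ and $q_{*}$. I expect no real obstacle; the main thing to get right is the sign bookkeeping in Step 2.
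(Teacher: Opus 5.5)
Your proposal is correct and follows essentially the same argument as the paper. The paper derives $r_{1,T}=-q_{*}\bigl(1-\lambda_{1}L_{*}/((\lambda_{1}+\lambda_{r})(2T-1))+o(T^{-1})\bigr)$ and $r_{r,T}=q_{*}\bigl(1+\lambda_{r}L_{*}/((\lambda_{1}+\lambda_{r})(2T-1))+o(T^{-1})\bigr)$ directly from Proposition~\ref{prop:finite-T-correction}, whereas you quote the power asymptotics from the proof of Corollary~\ref{cor:T-loss-rate}. Those asymptotics rest on the same proposition and do not use this lemma, so there is no circularity; from there the paper uses the parity of $2T-1$ for the sign and $\bar{\vartheta}<1$ together with $\eta_{\infty,T}\to\eta_{*}$ for the interior modes, exactly as you do.
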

\begin{proof}
    Proposition~\ref{prop:finite-T-correction} gives
    \[
        \eta_{\infty,T}-\eta_{*}
        =-\frac{mq_{*}L_{*}}{(\lambda_{1}+\lambda_{r})(2T-1)}+o(T^{-1})
        .
    \]
    Since $1-\eta_{*}\lambda_{1}/m=-q_{*}$ and $1-\eta_{*}\lambda_{r}/m=q_{*}$, we have
    \begin{align*}
        &r_{1,T}
        =-q_{*}\left(1-\frac{\lambda_{1}L_{*}}{(\lambda_{1}+\lambda_{r})(2T-1)}+o((2T-1)^{-1})\right)
        ,\\
        &r_{r,T}
        =q_{*}\left(1+\frac{\lambda_{r}L_{*}}{(\lambda_{1}+\lambda_{r})(2T-1)}+o((2T-1)^{-1})\right)
        .
    \end{align*}
    Thus, noting that $2T-1$ is odd, we obtain
    \[
        \frac{r_{1,T}^{2T-1}}{q_{*}^{2T-1}}
        \to-a_{1}^{*}
        ,\qquad
        \frac{r_{r,T}^{2T-1}}{q_{*}^{2T-1}}
        \to a_{r}^{*}
        ,\qquad
        \frac{r_{1,T}^{2T-2}}{q_{*}^{2T-2}}
        \to a_{1}^{*}
        ,\qquad
        \frac{r_{r,T}^{2T-2}}{q_{*}^{2T-2}}
        \to a_{r}^{*}
        .
    \]
    For an interior positive eigenvalue $\lambda_{j}\in(\lambda_{r},\lambda_{1})$, the definition of $\eta_{*}$ gives
    \[
        -q_{*}
        <1-\frac{\eta_{*}\lambda_{j}}{m}
        <q_{*}
        .
    \]
    By Proposition~\ref{prop:finite-T-correction}, we have
    \[
        \max_{2\le j\le r-1}\frac{|1-\eta_{*}\lambda_{j}/m|}{q_{*}}
        =\bar{\vartheta}
        <1
        .
    \]
    Since Theorem~\ref{thm:finite-minimax} gives $\eta_{\infty,T}\to\eta_{*}$, there exists $\vartheta_{0}\in(\bar{\vartheta},1)$ such that, for all sufficiently large $T$,
    \[
        \max_{2\le j\le r-1}\frac{|r_{j,T}|}{q_{*}}
        =\max_{2\le j\le r-1}\frac{|1-\eta_{\infty,T}\lambda_{j}/m|}{q_{*}}
        \le\vartheta_{0}
        .
    \]
\end{proof}

\section{Matrix-power perturbation bounds}

\subsection{Fr\'echet expansion of matrix powers}

\begin{lemma}[First-order expansion of matrix powers]\label{lem:matrix-power-derivative}
    Let $k\ge1$. Equip $\mathbb{R}^{p\times p}$ with any submultiplicative matrix norm $\|\cdot\|$, and define $\Psi_{k}(B)=B^{k}$ as a map from this normed space into itself. Then $\Psi_{k}$ is Fr\'echet differentiable at every $B$, and for every $C\in\mathbb{R}^{p\times p}$, we have
    \[
        D\Psi_{k}(B)[C]
        =\sum_{s=0}^{k-1}B^{s}CB^{k-1-s}
        .
    \]
    Consequently, for every fixed $C$, we have
    \[
        (B+\epsilon C)^{k}
        =B^{k}+\epsilon\sum_{s=0}^{k-1}B^{s}CB^{k-1-s}+O(\epsilon^{2})
    \]
    as $\epsilon\to0$. No commutativity between $B$ and $C$ is required.
\end{lemma}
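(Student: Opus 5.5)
The plan is to prove the telescoping identity first, then deduce differentiability from it. For any $B,C\in\mathbb{R}^{p\times p}$ one has
\[
    (B+C)^{k}-B^{k}
    =\sum_{s=0}^{k-1}(B+C)^{s}\,C\,B^{k-1-s}
    ,
\]
which I will verify by induction on $k$ or by telescoping $\sum_{s}\bigl[(B+C)^{s+1}B^{k-1-s}-(B+C)^{s}B^{k-s}\bigr]$. Each bracket equals $(B+C)^{s}CB^{k-1-s}$, and the sum collapses to $(B+C)^{k}-B^{k}$. The only property used is associativity of matrix multiplication, so no commutativity between $B$ and $C$ enters.

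Next I will subtract the candidate derivative:
\[
    (B+C)^{k}-B^{k}-\sum_{s=0}^{k-1}B^{s}CB^{k-1-s}
    =\sum_{s=0}^{k-1}\bigl[(B+C)^{s}-B^{s}\bigr]CB^{k-1-s}
    .
\]
Applying the same telescoping identity to $(B+C)^{s}-B^{s}$ and using submultiplicativity together with $\|B+C\|\le\|B\|+\|C\|$ bounds each term by $s(\|B\|+\|C\|)^{s-1}\|C\|^{2}\|B\|^{k-1-s}$. For $\|C\|\le1$, the whole remainder is therefore at most $M_{k}(B)\|C\|^{2}$, where $M_{k}(B)$ depends only on $k$ and $\|B\|$. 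This gives $o(\|C\|)$, so $\Psi_{k}$ is Fr\'echet differentiable at $B$ and the derivative is the stated expression, which is clearly linear in $C$.

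For the consequence I will substitute $\epsilon C$ for $C$ with $C$ fixed. Linearity pulls out the factor $\epsilon$, and the remainder is at most $M_{k}(B)\epsilon^{2}\|C\|^{2}=O(\epsilon^{2})$. Since all norms on a finite-dimensional space are equivalent, this conclusion holds in any norm. I do not expect a genuine obstacle here; the only care needed is keeping the factor order fixed throughout so that no commutativity is implicitly assumed.
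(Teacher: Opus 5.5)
Your proposal is correct and follows essentially the same route as the paper. The paper likewise applies the telescoping identity twice to write the remainder as a double sum with two factors of $E$, bounds it by submultiplicativity as $\frac{k(k-1)}{2}(\|B\|+\|E\|)^{k-2}\|E\|^{2}$, and then substitutes $E=\epsilon C$.
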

\begin{proof}
    We first recall that for arbitrary square matrices $X$ and $Y$ of the same size and every $k\ge1$, we have
    \[
        X^{k}-Y^{k}
        =\sum_{s=0}^{k-1}X^{s}(X-Y)Y^{k-1-s}
        .
    \]
    Applying this identity with $X=B+E$ and $Y=B$, we obtain
    \[
        \Psi_{k}(B+E)-\Psi_{k}(B)
        =(B+E)^{k}-B^{k}
        =\sum_{s=0}^{k-1}(B+E)^{s}EB^{k-1-s}
        .
    \]
    Thus if we define
    \[
        R_{k}(B,E)
        :=\sum_{s=0}^{k-1}((B+E)^{s}-B^{s})EB^{k-1-s}
        ,
    \]
    we can write
    \[
        \Psi_{k}(B+E)-\Psi_{k}(B)
        =\sum_{s=0}^{k-1}B^{s}EB^{k-1-s}+R_{k}(B,E)
        .
    \]
    In $R_{k}(B,E)$, the summand corresponding to $s=0$ vanishes. For $s\ge1$, applying the same identity once more gives
    \[
        (B+E)^{s}-B^{s}
        =\sum_{j=0}^{s-1}(B+E)^{j}EB^{s-1-j}
        .
    \]
    Therefore, we have
    \[
        R_{k}(B,E)
        =\sum_{s=1}^{k-1}\sum_{j=0}^{s-1}(B+E)^{j}EB^{s-1-j}EB^{k-1-s}
        .
    \]
    Let $\|\cdot\|$ be any submultiplicative matrix norm. Then for each $s\in\{1,\ldots,k-1\}$ and $j\in\{0,\ldots,s-1\}$, we have
    \begin{align*}
        \|(B+E)^{j}EB^{s-1-j}EB^{k-1-s}\|
        &\le\|B+E\|^{j}\|E\|\|B\|^{s-1-j}\|E\|\|B\|^{k-1-s}\\
        &\le(\|B\|+\|E\|)^{k-2}\|E\|^{2}
        .
    \end{align*}
    Since there are $k(k-1)/2$ terms in the double sum, we have
    \[
        \|R_{k}(B,E)\|
        \le\frac{k(k-1)}{2}(\|B\|+\|E\|)^{k-2}\|E\|^{2}
        .
    \]
    Thus, we have
    \begin{align*}
        &\frac{\|\Psi_{k}(B+E)-\Psi_{k}(B)-\sum_{s=0}^{k-1}B^{s}EB^{k-1-s}\|}{\|E\|}
        =\frac{\|R_{k}(B,E)\|}{\|E\|}\\
        &\le\frac{k(k-1)}{2}(\|B\|+\|E\|)^{k-2}\|E\|
        \to0
    \end{align*}
    as $\|E\|\to0$. Since the map $E\mapsto\sum_{s=0}^{k-1}B^{s}EB^{k-1-s}$ is linear and bounded, it is the Fr\'echet derivative of $\Psi_{k}$ at $B$. Therefore, we have
    \[
        D\Psi_{k}(B)[C]
        =\sum_{s=0}^{k-1}B^{s}CB^{k-1-s}
        .
    \]
    Finally, taking $E=\epsilon C$ for a fixed matrix $C$ gives
    \begin{align*}
        (B+\epsilon C)^{k}
        &=\Psi_{k}(B+\epsilon C)
        =\Psi_{k}(B)+D\Psi_{k}(B)[\epsilon C]+R_{k}(B,\epsilon C)\\
        &=B^{k}+\epsilon\sum_{s=0}^{k-1}B^{s}CB^{k-1-s}+O(\epsilon^{2})
        ,
    \end{align*}
    which proves the final assertion.
\end{proof}

\subsection{Uniform noncommutative power bounds}

\begin{lemma}[Uniform local power perturbation bounds]\label{lem:local-word-bound}
    Let $K\succeq0$ be a finite-dimensional symmetric positive semidefinite matrix and let $P$ project onto $\operatorname{range}(K)$. Suppose there exists a deterministic sequence $q_{T}>0$, bounded away from zero, such that
    \[
        \sup_{\eta\in \mathcal{J}_{T}}\|PB_{K}(\eta)P\|_{\mathrm{op}}
        \le q_{T}
        ,\qquad
        q_{0}
        :=\inf_{T}q_{T}
        >0
        .
    \]
    Let $A=K+E\succeq0$ be symmetric and have the same kernel as $K$. Then we have
    \[
        PA=AP
        ,\qquad
        PK=KP
        ,\qquad
        PE=EP
        .
    \]
    Suppose $T\|E\|_{\mathrm{op}}\le c$ for some fixed $c$. Suppose $\sup_{T}\sup_{\eta\in\mathcal{J}_{T}}|\eta|<\infty$. Then, uniformly over $\eta\in\mathcal{J}_{T}$ and $k\le2T$, we have
    \begin{align}
        \label{eq:power}
        &\|PB_{A}(\eta)^{k}P\|_{\mathrm{op}}
        \lesssim q_{T}^{k}
        ,\\
        \label{eq:local-first-power}
        &\|P(B_{A}(\eta)^{k}-B_{K}(\eta)^{k})P\|_{\mathrm{op}}
        \lesssim kq_{T}^{k-1}\|E\|_{\mathrm{op}}
        .
    \end{align}
\end{lemma}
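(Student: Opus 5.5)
The plan has three parts: the commutation claims, the power bound \eqref{eq:power}, and the difference bound \eqref{eq:local-first-power}.

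\textbf{Commutation.} Since $A$, $K$ and $E$ are symmetric, $\operatorname{range}(A)=\ker(A)^{\perp}=\ker(K)^{\perp}=\operatorname{range}(K)$. The orthogonal projector $P$ onto this common range therefore commutes with both $A$ and $K$, because a symmetric matrix commutes with the projector onto any of its invariant subspaces. It then follows that $PE=P(A-K)=(A-K)P=EP$.

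A consequence is that $P$ commutes with $B_{A}(\eta)$ and $B_{K}(\eta)$. Using $P^{2}=P$, this gives $PB_{A}(\eta)^{k}P=(PB_{A}(\eta)P)^{k}$, and likewise for $K$. So everything reduces to the compressed operators $\tilde B_{A}:=PB_{A}(\eta)P$ and $\tilde B_{K}:=PB_{K}(\eta)P$ acting on $\operatorname{range}(K)$, where
\[
\tilde B_{A}=\tilde B_{K}-\frac{\eta}{m}PEP .
\]

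\textbf{Power bound \eqref{eq:power}.} Set $M:=\sup_{T}\sup_{\eta\in\mathcal{J}_{T}}|\eta|$, which is finite by assumption. Then
\[
\|\tilde B_{A}\|_{\mathrm{op}}\le q_{T}+\frac{M}{m}\|E\|_{\mathrm{op}}=q_{T}\Bigl(1+\frac{M\|E\|_{\mathrm{op}}}{mq_{T}}\Bigr)\le q_{T}\Bigl(1+\frac{Mc}{mq_{0}T}\Bigr).
\]
Raising this to the power $k\le 2T$ and using $(1+x/T)^{2T}\le e^{2x}$, we get $\|\tilde B_{A}^{k}\|_{\mathrm{op}}\le e^{2Mc/(mq_{0})}q_{T}^{k}$. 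The constant depends only on $M$, $c$, $m$ and $q_{0}$, so the bound is uniform in $\eta$, $k$ and $T$.

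\textbf{Difference bound \eqref{eq:local-first-power}.} Apply the telescoping identity from the proof of Lemma~\ref{lem:matrix-power-derivative} to the compressed operators:
\[
\tilde B_{A}^{k}-\tilde B_{K}^{k}=\sum_{s=0}^{k-1}\tilde B_{A}^{s}(\tilde B_{A}-\tilde B_{K})\tilde B_{K}^{k-1-s}.
\]
Each factor is controlled as follows:
\begin{itemize}
\item $\|\tilde B_{A}^{s}\|_{\mathrm{op}}\lesssim q_{T}^{s}$ by the power bound above, since $s\le 2T$;
\item $\|\tilde B_{K}^{k-1-s}\|_{\mathrm{op}}\le q_{T}^{k-1-s}$ directly from the hypothesis;
\item $\|\tilde B_{A}-\tilde B_{K}\|_{\mathrm{op}}\le (M/m)\|E\|_{\mathrm{op}}$.
\end{itemize}
Each of the $k$ summands is therefore $\lesssim q_{T}^{k-1}\|E\|_{\mathrm{op}}$, and summing gives $\lesssim kq_{T}^{k-1}\|E\|_{\mathrm{op}}$.

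The step needing the most care is keeping the constant in the power bound uniform as $T\to\infty$. This is exactly where the assumptions $T\|E\|_{\mathrm{op}}\le c$ and $q_{T}\ge q_{0}>0$ enter: they turn the relative perturbation of the contraction factor into a $(1+O(1/T))^{2T}$ factor, which stays bounded. Without the commutation with $P$, the kernel directions (where $B=I$) would contaminate the compressed powers, so establishing the commutation first is essential.
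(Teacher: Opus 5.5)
Your proposal is correct and follows the paper's proof essentially step for step. It uses the same range/kernel argument for commutation with $P$, the same reduction $PB_{A}(\eta)^{k}P=(PB_{A}(\eta)P)^{k}$, the same bound $(q_{T}+O(\|E\|_{\mathrm{op}}))^{k}\le q_{T}^{k}e^{O(c/q_{0})}$ via $T\|E\|_{\mathrm{op}}\le c$ and $k\le 2T$, and the same telescoping sum for the difference; the only cosmetic difference is that you work with compressed $m\times m$ matrices $PBP$, whereas the paper restricts to $\operatorname{range}(K)$ through a block decomposition.
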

\begin{proof}
    Define $\mathcal{H}=\operatorname{range}(K)=\ker(K)^{\perp}$. Since $A$ and $K$ are symmetric and have the same kernel, we have
    \[
        \operatorname{range}(A)
        =\ker(A)^{\perp}
        =\ker(K)^{\perp}
        =\operatorname{range}(K)
        =\mathcal{H}
        .
    \]
    Consequently, $\mathcal{H}$ and $\mathcal{H}^{\perp}$ are invariant under $A$, $K$, and $E=A-K$. Equivalently, we have\footnote{This equivalence can be proved as follows. For any $x\in\mathbb{R}^{m}$, decompose $x=Px+(I_{m}-P)x$. Here, $Px\in\mathcal{H}$ and $(I_{m}-P)x\in\mathcal{H}^{\perp}$. If $\mathcal{H}$ and $\mathcal{H}^{\perp}$ are invariant under $A$, we have $APx\in\mathcal{H}$ and $A(I_{m}-P)x\in\mathcal{H}^{\perp}$. Thus,
    \[
        PAx
        =P[APx+A(I_{m}-P)x]
        =P(APx)+P[A(I_{m}-P)x]
        =APx+0
        .
    \]
    Since $x$ is arbitrary, we conclude $PA=AP$. On the other hand, if $PA=AP$, then $P(Ax)=A(Px)=Ax$ for $x\in\mathcal{H}$ and $P(Ax)=A(Px)=0$ for $x\in\mathcal{H}^{\perp}$; thus, $A\mathcal{H}\subset\mathcal{H}$ and $A\mathcal{H}^{\perp}\subset\mathcal{H}^{\perp}$ hold. Similar arguments apply to $K$ and $E$.}
    \[
        PA=AP
        ,\qquad
        PK=KP
        ,\qquad
        PE=EP
        ,
    \]
    and hence $P$ also commutes with $B_{A}(\eta)$ and $B_{K}(\eta)$. In particular, we have
    \[
        PB_{A}(\eta)^{k}P
        =\left(PB_{A}(\eta)P\right)^{k}
        ,\qquad
        PB_{K}(\eta)^{k}P
        =\left(PB_{K}(\eta)P\right)^{k}
        .
    \]
    These identities hold as operators on $\mathcal{H}$.
    
    In the remainder of the proof, we regard these matrices as linear operators from $\mathcal{H}$ to itself.  Fix $\eta\in\mathcal{J}_{T}$ and define
    \[
        B_{0}
        =PB_{K}(\eta)P\big|_{\mathcal{H}}
        =B_{K}(\eta)\big|_{\mathcal{H}}
        ,\qquad
        \Delta
        =-\frac{\eta}{m}PEP\big|_{\mathcal{H}}
        =-\frac{\eta}{m}E\big|_{\mathcal{H}}
        .
    \]
    Then
    \[
        PB_{A}(\eta)P\big|_{\mathcal{H}}
        =B_{0}+\Delta
        .
    \]
    With respect to the orthogonal decomposition $\mathbb{R}^{m}=\mathcal{H}\oplus\mathcal{H}^{\perp}$, the operators $A$ and $K$ have the block representations
    \begin{equation}\label{eq:AKDecomp}
        A
        =
        \begin{bmatrix}
            A|_{\mathcal{H}} & 0\\
            0 & 0
        \end{bmatrix}
        ,\qquad
        K
        =
        \begin{bmatrix}
            K|_{\mathcal{H}} & 0\\
            0 & 0
        \end{bmatrix}
        .
    \end{equation}
    Consequently, we have
    \begin{equation}\label{eq:BDecomp}
        B_{A}(\eta)
        =
        \begin{bmatrix}
            B_{0}+\Delta & 0\\
            0 & I_{\mathcal{H}^{\perp}}
        \end{bmatrix}
        ,\qquad
        B_{K}(\eta)
        =
        \begin{bmatrix}
            B_{0} & 0\\
            0 & I_{\mathcal{H}^{\perp}}
        \end{bmatrix}
        .
    \end{equation}

    By assumption, we have $\|B_{0}\|_{\mathrm{op}}\le q_{T}$. Set $M_{\eta}=\sup_{T\ge1}\sup_{\eta\in\mathcal{J}_{T}}|\eta|/m<\infty$. Then it follows that $\|\Delta\|_{\mathrm{op}}\le M_{\eta}\|E\|_{\mathrm{op}}$. Since $T\|E\|_{\mathrm{op}}\le c$, for $k\le2T$, we have
    \[
        \frac{k\|\Delta\|_{\mathrm{op}}}{q_{T}}
        \le\frac{2M_{\eta}T\|E\|_{\mathrm{op}}}{q_{T}}
        \le\frac{2M_{\eta}c}{q_{0}}
        .
    \]
    Thus $k\|\Delta\|_{\mathrm{op}}/q_{T}$ is bounded uniformly in $\eta$, $k$, and $T$.

    We first prove \eqref{eq:power}. For $s\le k\le 2T$, we have
    \begin{align*}
        &\|(B_{0}+\Delta)^{s}\|_{\mathrm{op}}
        \le(\|B_{0}\|_{\mathrm{op}}+\|\Delta\|_{\mathrm{op}})^{s}
        \le(q_{T}+\|\Delta\|_{\mathrm{op}})^{s}
        =q_{T}^{s}\left(1+\frac{\|\Delta\|_{\mathrm{op}}}{q_{T}}\right)^{s}\\
        &\le q_{T}^{s}\exp\left(\frac{s\|\Delta\|_{\mathrm{op}}}{q_{T}}\right)
        \le q_{T}^{s}\exp\left(\frac{2M_{\eta}c}{q_{0}}\right)
        \lesssim q_{T}^{s}
        ,
    \end{align*}
    which proves \eqref{eq:power}. By \eqref{eq:BDecomp}, for every $k\ge1$,
    \[
        B_{A}(\eta)^{k}-B_{K}(\eta)^{k}
        =
        \begin{bmatrix}
            (B_{0}+\Delta)^{k}-B_{0}^{k} & 0\\
            0 & 0
        \end{bmatrix}
        .
    \]
    Therefore,
    \[
        \left\|B_{A}(\eta)^{k}-B_{K}(\eta)^{k}\right\|_{\mathrm{op},\mathbb{R}^{m}}
        =\left\|(B_{0}+\Delta)^{k}-B_{0}^{k}\right\|_{\mathrm{op},\mathcal{H}}
        .
    \]
    Thus it is sufficient to establish the desired power bounds on $\mathcal{H}$. The telescoping identity $\Gamma^{k}-\Lambda^{k}=\sum_{s=0}^{k-1}\Gamma^{s}(\Gamma-\Lambda)\Lambda^{k-1-s}$ implies
    \[
        (B_{0}+\Delta)^{k}-B_{0}^{k}
        =\sum_{s=0}^{k-1}(B_{0}+\Delta)^{s}\Delta B_{0}^{k-1-s}
        .
    \]
    Consequently, we have
    \begin{align*}
        \|(B_{0}+\Delta)^{k}-B_{0}^{k}\|_{\mathrm{op}}
        &\le\sum_{s=0}^{k-1}\|(B_{0}+\Delta)^{s}\|_{\mathrm{op}}\|\Delta\|_{\mathrm{op}}\|B_{0}^{k-1-s}\|_{\mathrm{op}}\\
        &\lesssim \|\Delta\|_{\mathrm{op}}\sum_{s=0}^{k-1}q_{T}^{s}q_{T}^{k-1-s}
        =kq_{T}^{k-1}\|\Delta\|_{\mathrm{op}}
        \lesssim kq_{T}^{k-1}\|E\|_{\mathrm{op}}
        ,
    \end{align*}
    which implies \eqref{eq:local-first-power}.
\end{proof}

\subsection{Application to the finite-width kernel}

\begin{lemma}[Uniform finite-width contraction near the optimal learning rate]\label{lem:local-random-power-bound}
    Suppose $m$ and $d$ are fixed and $T=T_{n}$ satisfies $T\to\infty$ and $T/\sqrt{n}\to0$. Suppose Assumption~\ref{asmp:endpoint-support} holds. Let $P=\sum_{j=1}^{r}P_{j}$ denote the orthogonal projector onto $\operatorname{range}(K)$. Then uniformly over $|\eta-\eta_{\infty,T}|\le C/T$ and $0\le k\le2T$, we have    
    \[
        \|PB_{\infty}(\eta)^{k}P\|_{\mathrm{op}}
        \lesssim q_{*}^{k}
        .
    \]
    For any $c>0$, define the event
    \[
        \mathcal{E}_{n}
        =\{\ker(A_{n})=\ker(K),\ T\|A_{n}-K\|_{\mathrm{op}}\le c\}
        .
    \]
    Then we have $\mathrm{P}(\mathcal{E}_{n})\to1$, and uniformly over $|\eta-\eta_{\infty,T}|\le C/T$ and $0\le k\le2T$, we have
    \[
        \|PB_{n}(\eta)^{k}P\|_{\mathrm{op}}
        \lesssim q_{*}^{k}
        ,\qquad
        \|P\left(B_{n}(\eta)^{k}-B_{\infty}(\eta)^{k}\right)P\|_{\mathrm{op}}
        \lesssim kq_{*}^{k-1}\|A_{n}-K\|_{\mathrm{op}}
    \]
    on $\mathcal{E}_{n}$. On this event, we also have
    \[
        PA_{n}=A_{n}P
        ,\qquad
        PK=KP
        ,\qquad
        P(A_{n}-K)=(A_{n}-K)P
        .
    \]
\end{lemma}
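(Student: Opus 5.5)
The plan is to reduce everything to Lemma~\ref{lem:local-word-bound}. The work is to supply its two hypotheses: a deterministic contraction bound for $PB_{\infty}(\eta)P$ on the window $\mathcal{J}_{T}:=\{|\eta-\eta_{\infty,T}|\le C/T\}$, and a high-probability event on which $A_{n}$ has the same kernel as $K$ and $T\|A_{n}-K\|_{\mathrm{op}}$ is bounded.

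\textbf{The deterministic bound.} Since $P$ commutes with $K$, we have
\[
\|PB_{\infty}(\eta)^{k}P\|_{\mathrm{op}}=\max_{1\le j\le r}|1-\eta\lambda_{j}/m|^{k}.
\]
By Proposition~\ref{prop:finite-T-correction}, $\eta_{\infty,T}=\eta_{*}+O(T^{-1})$. Hence every $\eta\in\mathcal{J}_{T}$ satisfies $\eta=\eta_{*}+O(T^{-1})$, with a constant depending only on $C$ and the spectrum.

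At $\eta_{*}$ the two endpoint factors have modulus exactly $q_{*}$. The interior factors are at most $\bar\vartheta q_{*}<q_{*}$. Because the factors are affine in $\eta$ with bounded slopes, uniformly on $\mathcal{J}_{T}$ we get
\[
\max_{j}|1-\eta\lambda_{j}/m|\le q_{*}+C'/T =: q_{T}
\]
for all large $T$. Here $q_{T}\ge q_{*}>0$ is bounded away from zero, and for $k\le 2T$,
\[
q_{T}^{k}\le q_{*}^{k}(1+C'/(q_{*}T))^{2T}\le e^{2C'/q_{*}}q_{*}^{k}.
\]
This gives the first claim. The same comparison $q_{T}^{k}\lesssim q_{*}^{k}$ and $kq_{T}^{k-1}\lesssim kq_{*}^{k-1}$ will later convert the conclusions of Lemma~\ref{lem:local-word-bound}, which are stated in $q_{T}$, into bounds in $q_{*}$. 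Also $\sup_{T}\sup_{\eta\in\mathcal{J}_{T}}|\eta|<\infty$, since $\eta_{\infty,T}$ converges by Theorem~\ref{thm:finite-minimax}.

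\textbf{The random event.} For $n\ge d$, Lemma~\ref{lem:kernel-H_n-A_n-K} gives $\ker(A_{n})=\ker(K)$ almost surely. Lemma~\ref{lem:joint-clt} gives $\|A_{n}-K\|_{\mathrm{op}}=O_{p}(n^{-1/2})$, so
\[
T\|A_{n}-K\|_{\mathrm{op}}=O_{p}(T/\sqrt{n})=o_{p}(1).
\]
Therefore $\mathrm{P}(T\|A_{n}-K\|_{\mathrm{op}}\le c)\to1$ for every fixed $c>0$, and $\mathrm{P}(\mathcal{E}_{n})\to1$.

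\textbf{Applying the lemma.} On $\mathcal{E}_{n}$, apply Lemma~\ref{lem:local-word-bound} with $A=A_{n}$, $E=A_{n}-K$, window $\mathcal{J}_{T}$ and sequence $q_{T}$. Its conclusions give the commutation relations and both power bounds, with implied constants depending only on $c$, $C$ and the spectrum. The conversion $q_{T}\to q_{*}$ above then yields the stated forms. The case $k=0$ is trivial.

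The main obstacle is the uniformity of the deterministic step: making sure the window $C/T$ only inflates the contraction factor by $O(1/T)$. The $O(T^{-1})$ rate from Proposition~\ref{prop:finite-T-correction} is used essentially here; mere convergence $\eta_{\infty,T}\to\eta_{*}$ would not keep $(q_{T}/q_{*})^{2T}$ bounded.
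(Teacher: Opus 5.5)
Your proposal is correct and follows essentially the same route as the paper: bound $\|PB_{\infty}(\eta)P\|_{\mathrm{op}}\le q_{*}+C_{1}/T$ on the $C/T$-window using $\eta_{\infty,T}=\eta_{*}+O(T^{-1})$, obtain $\mathrm{P}(\mathcal{E}_{n})\to1$ from $T\|A_{n}-K\|_{\mathrm{op}}=O_{p}(T/\sqrt{n})=o_{p}(1)$, and then apply Lemma~\ref{lem:local-word-bound} with $q_{T}=q_{*}+C_{1}/T$, converting $q_{T}^{k}$ to $q_{*}^{k}$ via $(1+C_{1}/(q_{*}T))^{2T}\le e^{2C_{1}/q_{*}}$. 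You are slightly more explicit than the paper in citing Lemma~\ref{lem:kernel-H_n-A_n-K} for the kernel condition in $\mathcal{E}_{n}$, and in noting that the $O(T^{-1})$ rate from Proposition~\ref{prop:finite-T-correction}, not mere convergence of $\eta_{\infty,T}$, is what keeps $(q_{T}/q_{*})^{2T}$ bounded.
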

\begin{proof}
    We confirm that the assumptions of Lemma~\ref{lem:local-word-bound} are satisfied with probability tending to one. First, we have
    \[
        \|PB_{\infty}(\eta)P\|_{\mathrm{op}}
        =\left\|\sum_{j=1}^{r}\left(1-\frac{\eta\lambda_{j}}{m}\right)P_{j}\right\|_{\mathrm{op}}
        =\max_{j\in[r]}\left|1-\frac{\eta\lambda_{j}}{m}\right|
        =\left|1-\frac{\eta\lambda_{1}}{m}\right|\vee\left|1-\frac{\eta\lambda_{r}}{m}\right|
        .
    \]
    Since $\|PB_{\infty}(\eta_{*})P\|_{\mathrm{op}}=\left|1-\frac{\eta_{*}\lambda_{1}}{m}\right|=\left|1-\frac{\eta_{*}\lambda_{r}}{m}\right|=q_{*}$ holds, we have
    \[
        \left|\|PB_{\infty}(\eta)P\|_{\mathrm{op}}-q_{*}\right|
        \le\left|\left|1-\frac{\eta\lambda_{1}}{m}\right|-q_{*}\right|\vee\left|\left|1-\frac{\eta\lambda_{r}}{m}\right|-q_{*}\right|
        \lesssim|\eta-\eta_{*}|
        .
    \]
    Thus, there exists a constant $C_{1}$ such that, uniformly over $|\eta-\eta_{\infty,T}|\le C/T$,
    \begin{equation}\label{eq:B-inf-range-K-norm}
        \|PB_{\infty}(\eta)P\|_{\mathrm{op}}
        \le q_{*}+\frac{C_{1}}{T}
        .
    \end{equation}
    This implies
    \[
        \|PB_{\infty}(\eta)^{k}P\|_{\mathrm{op}}
        \le\|PB_{\infty}(\eta)P\|_{\mathrm{op}}^{k}
        \le\left(q_{*}+\frac{C_{1}}{T}\right)^{k}
        \le q_{*}^{k}\exp\left(\frac{kC_{1}}{q_{*}T}\right)
        \lesssim q_{*}^{k}
        \qquad
        (0\le k\le2T)
        .
    \]

    By Lemma~\ref{lem:joint-clt}, we have
    \[
        \|A_{n}-K\|_{\mathrm{op}}
        =O_{p}(n^{-1/2})
        ,
        \qquad
        \|\chi_{n}^{(0)}+y\|
        =O_{p}(n^{-1/2})
        .
    \]
    The assumption $T/\sqrt{n}\to0$ therefore implies $T\|A_{n}-K\|_{\mathrm{op}}=o_{p}(1)$. Consequently, we have $\mathrm{P}(\mathcal{E}_{n})\to1$.
    
    Note that $\{q_{*}+\frac{C_{1}}{T}\}_{T}$ is bounded away from zero. Thus, by Lemma~\ref{lem:local-word-bound}, on the event $\mathcal{E}_{n}$, uniformly over $|\eta-\eta_{\infty,T}|\le C/T$ and $0\le k\le2T$, we have
    \[
        \|PB_{n}(\eta)^{k}P\|_{\mathrm{op}}
        \lesssim\left(q_{*}+\frac{C_{1}}{T}\right)^{k}
        =q_{*}^{k}\left(1+\frac{C_{1}}{q_{*}T}\right)^{k}
        \le q_{*}^{k}\exp\left(\frac{C_{1}k}{q_{*}T}\right)
        \le q_{*}^{k}\exp\left(\frac{2C_{1}}{q_{*}}\right)
        \lesssim q_{*}^{k}
        ,
    \]
    and similarly,
    \[
        \|P\left(B_{n}(\eta)^{k}-B_{\infty}(\eta)^{k}\right)P\|_{\mathrm{op}}
        \lesssim k\left(q_{*}+\frac{C_{1}}{T}\right)^{k-1}\|A_{n}-K\|_{\mathrm{op}}
        \lesssim kq_{*}^{k-1}\|A_{n}-K\|_{\mathrm{op}}
        .
    \]
    On this event, we also have
    \[
        PA_{n}=A_{n}P
        ,\qquad
        PK=KP
        ,\qquad
        P(A_{n}-K)=(A_{n}-K)P
    \]
    by Lemma~\ref{lem:local-word-bound}.
\end{proof}

\section{Proof of the joint width-horizon transfer theorem}\label{app:proof-width-horizon-transfer}

Throughout this section, assume the conditions of Theorem~\ref{thm:a-b-c-summary}.

By assumption, we have $\eta_{\ell}<\eta_{*}<\eta_{u}<\frac{2m}{\lambda_{1}}$. Since Proposition~\ref{prop:finite-T-correction} implies $\eta_{\infty,T}=\eta_{*}+O(T^{-1})$, there exists a constant $C>0$ such that, for all sufficiently large $T$,
\[
    \{\eta:|\eta-\eta_{\infty,T}|\le C/T\}
    \subset(\eta_{\ell},\eta_{u})
    .
\]

Let $P=\sum_{j=1}^{r}P_{j}$ be the orthogonal projector onto $\operatorname{range}(K)$, and set $E_{n}=A_{n}-K$. By Lemma~\ref{lem:kernel-H_n-A_n-K}, for all sufficiently large $n$, we have $\ker(A_{n})\stackrel{\mathrm{a.s.}}{=}\ker(K)$ and $f_{n}^{(0)}(X)\in\operatorname{range}(H_{n})\stackrel{\mathrm{a.s.}}{=}\operatorname{range}(K)$. Consequently, we have
\begin{equation}\label{eq:range-K-prediction-error}
    Pf_{n}^{(0)}(X)
    \stackrel{\mathrm{a.s.}}{=}f_{n}^{(0)}(X)
    ,\qquad
    PE_{n}
    \stackrel{\mathrm{a.s.}}{=}E_{n}P
    \stackrel{\mathrm{a.s.}}{=}E_{n}
    .
\end{equation}

\subsection{Uniform score approximation}

\begin{lemma}[Uniform approximation of the score and its derivative]\label{lem:score-uniform-growing-T}
    Note that Proposition~\ref{prop:finite-T-correction} places $\eta_{\infty,T}$ in a $C/T$-neighborhood of $\eta_{*}$ for some $C$. For this neighborhood, we have
    \begin{align*}
        \sup_{|\eta-\eta_{\infty,T}|\le C/T}|F_{n,T}(\eta)-F_{\infty,T}(\eta)|
        &=O_{p}\left(\frac{Tq_{*}^{2T-2}}{\sqrt{n}}\right)
        ,\\
        \sup_{|\eta-\eta_{\infty,T}|\le C/T}|F_{n,T}'(\eta)-F_{\infty,T}'(\eta)|
        &=O_{p}\left(\frac{T^{2}q_{*}^{2T-3}}{\sqrt{n}}\right)
        .
    \end{align*}
\end{lemma}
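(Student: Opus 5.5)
We work on the event $\mathcal{E}_{n}$ of Lemma~\ref{lem:local-random-power-bound}, which has probability tending to one, intersected with the almost-sure event in \eqref{eq:range-K-prediction-error}. On this event $P$ commutes with $A_{n}$, $K$, $E_{n}$ and with every power of $B_{n}(\eta)$ and $B_{\infty}(\eta)$. The first step is to project everything onto $\operatorname{range}(K)$. Write $\chi_{n}^{(0)}=-y+h_{n}$ with $h_{n}=f_{n}^{(0)}(X)=Ph_{n}$. Since $A_{n}=PA_{n}P$ and $K=PKP$, we have
\[
F_{n,T}(\eta)=(P\chi_{n}^{(0)})^{\top}A_{n}\,PB_{n}(\eta)^{2T-1}P\,(P\chi_{n}^{(0)})
\]
and the analogous identity for $F_{\infty,T}$ with $-Py$ and $K$. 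Consequently only the operators $PB^{k}P$ enter, and Lemma~\ref{lem:local-random-power-bound} bounds these uniformly over $|\eta-\eta_{\infty,T}|\le C/T$.

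For the score we decompose
\begin{align*}
F_{n,T}(\eta)-F_{\infty,T}(\eta)
&=(\chi_{n}^{(0)})^{\top}E_{n}B_{n}^{2T-1}\chi_{n}^{(0)}
+(\chi_{n}^{(0)})^{\top}K\bigl(B_{n}^{2T-1}-B_{\infty}^{2T-1}\bigr)\chi_{n}^{(0)}\\
&\quad+\bigl[(\chi_{n}^{(0)})^{\top}KB_{\infty}^{2T-1}\chi_{n}^{(0)}-y^{\top}KB_{\infty}^{2T-1}y\bigr].
\end{align*}
All projections are inserted, and $\|P\chi_{n}^{(0)}\|=O_{p}(1)$. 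The terms are bounded as follows.
\begin{itemize}
\item The first term is $O(\|E_{n}\|_{\mathrm{op}}q_{*}^{2T-1})$.
\item The second term is $O(Tq_{*}^{2T-2}\|E_{n}\|_{\mathrm{op}})$, by the perturbation bound of Lemma~\ref{lem:local-random-power-bound} with $k=2T-1$.
\item The third term, being bilinear in the perturbation $h_{n}$, is $O(\|h_{n}\|q_{*}^{2T-1}+\|h_{n}\|^{2}q_{*}^{2T-1})$.
\end{itemize}
Lemma~\ref{lem:joint-clt} gives $\|E_{n}\|_{\mathrm{op}}+\|h_{n}\|=O_{p}(n^{-1/2})$. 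The dominant contribution is therefore $Tq_{*}^{2T-2}/\sqrt{n}$, and all constants are uniform in $\eta$ because the lemma's bounds are.

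For the derivative we use \eqref{eq:score-derivative}:
\[
F_{T}'(\eta)=-\tfrac{2T-1}{m}\,\chi^{\top}A^{2}B_{A}^{2T-2}\chi .
\]
The same three-term split applies, now carrying a prefactor $(2T-1)$ and a power $2T-2$. Two of the pieces are handled exactly as before.
\begin{itemize}
\item The matrix-power difference term is $O\bigl(T\cdot(2T-2)q_{*}^{2T-3}\|E_{n}\|_{\mathrm{op}}\bigr)$.
\item The $h_{n}$ term is $O\bigl(Tq_{*}^{2T-2}\|h_{n}\|\bigr)$.
\end{itemize}
The third piece needs a small adjustment. We write $A_{n}^{2}-K^{2}=E_{n}A_{n}+KE_{n}$, and this contributes $O\bigl(Tq_{*}^{2T-2}\|E_{n}\|_{\mathrm{op}}\bigr)$. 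Since $q_{*}\le1$, every term is $O_{p}(T^{2}q_{*}^{2T-3}/\sqrt{n})$, which is the claimed rate.

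The only delicate point is ensuring that the uniform power bounds apply, which requires $T\|E_{n}\|_{\mathrm{op}}\le c$ and the kernel condition. Both are supplied by $\mathcal{E}_{n}$ under $T/\sqrt{n}\to0$. We also have to check that the factor $K$ or $A_{n}$ appearing in front of the power does not spoil the estimate, and it does not, since these matrices are bounded in operator norm. Because $\mathrm{P}(\mathcal{E}_{n})\to1$, the bounds obtained on the event transfer to $O_{p}$ statements unconditionally.
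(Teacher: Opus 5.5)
Your proposal is correct and follows essentially the same route as the paper: it works on $\mathcal{E}_{n}$, inserts the projector $P$, and combines the projected power and power-difference bounds of Lemma~\ref{lem:local-random-power-bound} with the $n^{-1/2}$ rates from Lemma~\ref{lem:joint-clt}, including the split $A_{n}^{2}-K^{2}=E_{n}A_{n}+KE_{n}$ for the derivative. The only difference is the order of the telescoping. You first change the matrix against $\chi_{n}^{(0)}$ and then the vector against $KB_{\infty}^{2T-1}$, whereas the paper changes the matrix against $y$ and then the vector against $A_{n}B_{n}^{2T-1}$; both orderings give the same rates.
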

\begin{proof}
    An $O_{p}(\cdot)$ bound established on an event whose probability tends to one also holds unconditionally. Thus, it suffices to show upper bounds for $\sup_{|\eta-\eta_{\infty,T}|\le C/T}|F_{n,T}(\eta)-F_{\infty,T}(\eta)|$ and $\sup_{|\eta-\eta_{\infty,T}|\le C/T}|F_{n,T}'(\eta)-F_{\infty,T}'(\eta)|$ on the event $\mathcal{E}_{n}$ defined in Lemma~\ref{lem:local-random-power-bound}. We first consider the score. Set
    \[
        M_{n,2T-1}(\eta)
        =A_{n}B_{n}(\eta)^{2T-1}
        ,
        \qquad
        M_{\infty,2T-1}(\eta)
        =KB_{\infty}(\eta)^{2T-1}
        .
    \]
    On the event $\mathcal{E}_{n}$, we have
    \begin{align*}
        &\sup_{|\eta-\eta_{\infty,T}|\le C/T}\left\|P\left(M_{n,2T-1}(\eta)-M_{\infty,2T-1}(\eta)\right)P\right\|_{\mathrm{op}}\\
        &=\sup_{|\eta-\eta_{\infty,T}|\le C/T}\|P(A_{n}-K)B_{n}(\eta)^{2T-1}P+PK(B_{n}(\eta)^{2T-1}-B_{\infty}(\eta)^{2T-1})P\|_{\mathrm{op}}\\
        &\le\|A_{n}-K\|_{\mathrm{op}}\sup_{|\eta-\eta_{\infty,T}|\le C/T}\|PB_{n}(\eta)^{2T-1}P\|_{\mathrm{op}}\\
        &\qquad+\|K\|_{\mathrm{op}}\sup_{|\eta-\eta_{\infty,T}|\le C/T}\left\|P\left(B_{n}(\eta)^{2T-1}-B_{\infty}(\eta)^{2T-1}\right)P\right\|_{\mathrm{op}}\\
        &=O_{p}\left(\frac{q_{*}^{2T-1}}{\sqrt{n}}+\frac{Tq_{*}^{2T-2}}{\sqrt{n}}\right)\\
        &=O_{p}\left(\frac{Tq_{*}^{2T-2}}{\sqrt{n}}\right)
        .
    \end{align*}
    By Lemma~\ref{lem:local-random-power-bound}, on the same event, we have
    \begin{equation}\label{eq:AnBnP}
        A_{n}
        =PA_{n}P
        ,\qquad
        B_{n}(\eta)P
        =PB_{n}(\eta)
        .
    \end{equation}
    Therefore, for every integer $k\ge0$, we have
    \[
        A_{n}B_{n}(\eta)^{k}
        =PA_{n}PB_{n}(\eta)^{k}
        =PA_{n}P B_{n}(\eta)^{k}P
        =A_{n}PB_{n}(\eta)^{k}P
        .
    \]
    Consequently, we have
    \begin{align*}
        \sup_{|\eta-\eta_{\infty,T}|\le C/T}\|M_{n,2T-1}(\eta)\|_{\mathrm{op}}
        &\le\sup_{|\eta-\eta_{\infty,T}|\le C/T}\|A_{n}\|_{\mathrm{op}}\|PB_{n}(\eta)^{2T-1}P\|_{\mathrm{op}}\\
        &\lesssim q_{*}^{2T-1}(\|K\|_{\mathrm{op}}+\|A_{n}-K\|_{\mathrm{op}})\\
        &=O_{p}(q_{*}^{2T-1})
        .
    \end{align*}
    Since $A_{n}$ commutes with $B_{n}(\eta)$, the matrix $M_{n,2T-1}(\eta)$ is symmetric. Therefore,
    \begin{align*}
        F_{n,T}(\eta)-F_{\infty,T}(\eta)
        &=y^{\top}(M_{n,2T-1}(\eta)-M_{\infty,2T-1}(\eta))y-2(\chi_{n}^{(0)}+y)^{\top}M_{n,2T-1}(\eta)y\\
        &\quad+(\chi_{n}^{(0)}+y)^{\top}M_{n,2T-1}(\eta)(\chi_{n}^{(0)}+y)
        .
    \end{align*}
    On the event $\mathcal{E}_{n}$, since both $M_{n,2T-1}(\eta)$ and $M_{\infty,2T-1}(\eta)$ vanish on $\ker(K)$, we may insert the projector $P$ in the first term:
    \[
        y^{\top}(
        M_{n,2T-1}(\eta)-M_{\infty,2T-1}(\eta))y
        =y^{\top}P(M_{n,2T-1}(\eta)-M_{\infty,2T-1}(\eta))Py
        .
    \]
    Thus, on the event $\mathcal{E}_{n}$, we obtain
    \begin{align*}
        \sup_{|\eta-\eta_{\infty,T}|\le C/T}|F_{n,T}(\eta)-F_{\infty,T}(\eta)|
        &\le\|y\|^{2}\sup_{|\eta-\eta_{\infty,T}|\le C/T}\|P(M_{n,2T-1}(\eta)-M_{\infty,2T-1}(\eta))P\|_{\mathrm{op}}\\
        &\quad+2\|\chi_{n}^{(0)}+y\|\|y\|\sup_{|\eta-\eta_{\infty,T}|\le C/T}\|M_{n,2T-1}(\eta)\|_{\mathrm{op}}\\
        &\quad+\|\chi_{n}^{(0)}+y\|^{2}\sup_{|\eta-\eta_{\infty,T}|\le C/T}\|M_{n,2T-1}(\eta)\|_{\mathrm{op}}\\
        &=O_{p}\left(\frac{Tq_{*}^{2T-2}}{\sqrt{n}}\right)+O_{p}\left(\frac{q_{*}^{2T-1}}{\sqrt{n}}\right)+O_{p}\left(\frac{q_{*}^{2T-1}}{n}\right)\\
        &=O_{p}\left(\frac{Tq_{*}^{2T-2}}{\sqrt{n}}\right)
        .
    \end{align*}

    We next consider the derivative of the score. Define
    \[
        R_{n,2T-2}(\eta)
        :=A_{n}^{2}B_{n}(\eta)^{2T-2}
        ,
        \qquad
        R_{\infty,2T-2}(\eta)
        :=K^{2}B_{\infty}(\eta)^{2T-2}
        .
    \]
    Then on the event $\mathcal{E}_{n}$, we have by Lemma~\ref{lem:local-random-power-bound} that
    \begin{align*}
        &\sup_{|\eta-\eta_{\infty,T}|\le C/T}\left\|P\left(R_{n,2T-2}(\eta)-R_{\infty,2T-2}(\eta)\right)P\right\|_{\mathrm{op}}\\
        &=\sup_{|\eta-\eta_{\infty,T}|\le C/T}\|P(A_{n}^{2}-K^{2})B_{n}(\eta)^{2T-2}P+PK^{2}(B_{n}(\eta)^{2T-2}-B_{\infty}(\eta)^{2T-2})P\|_{\mathrm{op}}\\
        &\le\|A_{n}^{2}-K^{2}\|_{\mathrm{op}}\sup_{|\eta-\eta_{\infty,T}|\le C/T}\|PB_{n}(\eta)^{2T-2}P\|_{\mathrm{op}}\\
        &\qquad+\|K\|_{\mathrm{op}}^{2}\sup_{|\eta-\eta_{\infty,T}|\le C/T}\left\|P\left(B_{n}(\eta)^{2T-2}-B_{\infty}(\eta)^{2T-2}\right)P\right\|_{\mathrm{op}}\\
        &\lesssim\|(A_{n}-K)A_{n}+K(A_{n}-K)\|_{\mathrm{op}}q_{*}^{2T-2}+Tq_{*}^{2T-3}\|A_{n}-K\|_{\mathrm{op}}\\
        &\le\|A_{n}-K\|_{\mathrm{op}}(\|A_{n}\|_{\mathrm{op}}+\|K\|_{\mathrm{op}})q_{*}^{2T-2}+Tq_{*}^{2T-3}\|A_{n}-K\|_{\mathrm{op}}\\
        &=O_{p}\left(\frac{q_{*}^{2T-2}}{\sqrt{n}}+\frac{Tq_{*}^{2T-3}}{\sqrt{n}}\right)\\
        &=O_{p}\left(\frac{Tq_{*}^{2T-3}}{\sqrt{n}}\right)
        .
    \end{align*}
    Moreover, \eqref{eq:AnBnP} implies that, for every integer $k\ge0$,
    \[
        A_{n}^{2}B_{n}(\eta)^{k}
        =PA_{n}PA_{n}PB_{n}(\eta)^{k}
        =PA_{n}^{2}PB_{n}(\eta)^{k}
        =PA_{n}^{2}PB_{n}(\eta)^{k}P
        =A_{n}^{2}PB_{n}(\eta)^{k}P
        .
    \]
    Thus, we have
    \[
        \sup_{|\eta-\eta_{\infty,T}|\le C/T}\|R_{n,2T-2}(\eta)\|_{\mathrm{op}}
        \le\sup_{|\eta-\eta_{\infty,T}|\le C/T}\|A_{n}^{2}\|_{\mathrm{op}}\|PB_{n}(\eta)^{2T-2}P\|_{\mathrm{op}}
        =O_{p}(q_{*}^{2T-2})
        .
    \]
    Note that by Proposition~\ref{prop:loss-derivatives}, we have
    \begin{align*}
        &F_{n,T}'(\eta)-F_{\infty,T}'(\eta)\\
        &=-\frac{2T-1}{m}[(\chi_{n}^{(0)})^{\top}R_{n,2T-2}(\eta)\chi_{n}^{(0)}-y^{\top}R_{\infty,2T-2}(\eta)y]\\
        &=-\frac{2T-1}{m}[y^{\top}(R_{n,2T-2}(\eta)-R_{\infty,2T-2}(\eta))y-2(\chi_{n}^{(0)}+y)^{\top}R_{n,2T-2}(\eta)y]\\
        &\quad-\frac{2T-1}{m}(\chi_{n}^{(0)}+y)^{\top}R_{n,2T-2}(\eta)(\chi_{n}^{(0)}+y)
        .
    \end{align*}
    On the event $\mathcal{E}_{n}$, since both $R_{n,2T-2}(\eta)$ and $R_{\infty,2T-2}(\eta)$ vanish on $\ker(K)$, we may insert the projector $P$ in the first term:
    \[
        y^{\top}(R_{n,2T-2}(\eta)-R_{\infty,2T-2}(\eta))y
        =y^{\top}P(R_{n,2T-2}(\eta)-R_{\infty,2T-2}(\eta))Py
        .
    \]
    Thus, on the event $\mathcal{E}_{n}$, we have
    \begin{align*}
        &\sup_{|\eta-\eta_{\infty,T}|\le C/T}|F_{n,T}'(\eta)-F_{\infty,T}'(\eta)|\\
        &\le\frac{2T-1}{m}\|y\|^{2}\sup_{|\eta-\eta_{\infty,T}|\le C/T}\|P(R_{n,2T-2}(\eta)-R_{\infty,2T-2}(\eta))P\|_{\mathrm{op}}\\
        &\quad+\frac{2T-1}{m}2\|\chi_{n}^{(0)}+y\|\|y\|\sup_{|\eta-\eta_{\infty,T}|\le C/T}\|R_{n,2T-2}(\eta)\|_{\mathrm{op}}\\
        &\quad+\frac{2T-1}{m}\|\chi_{n}^{(0)}+y\|^{2}\sup_{|\eta-\eta_{\infty,T}|\le C/T}\|R_{n,2T-2}(\eta)\|_{\mathrm{op}}\\
        &=O_{p}\left(\frac{T^{2}q_{*}^{2T-3}}{\sqrt{n}}\right)+O_{p}\left(\frac{Tq_{*}^{2T-2}}{\sqrt{n}}\right)+O_{p}\left(\frac{Tq_{*}^{2T-2}}{n}\right)\\
        &=O_{p}\left(\frac{T^{2}q_{*}^{2T-3}}{\sqrt{n}}\right)
        .
    \end{align*}
\end{proof}

\subsection{Localization of the finite-width optimizer}

\begin{lemma}[$n^{-1/2}$-consistency of the finite-width optimizer]\label{lem:preliminary-b-rate}
    Under the assumptions of Theorem~\ref{thm:a-b-c-summary}, with probability tending to one, we have $\eta_{n,T}\in(\eta_{\ell},\eta_{u})$ and $F_{n,T}(\eta_{n,T})=0$. Moreover, we have
    \[
        \eta_{n,T}-\eta_{\infty,T}
        =O_{p}(n^{-1/2})
        ,
    \]
    which implies
    \[
        |\eta_{n,T}-\eta_{\infty,T}|
        =o_{p}(T^{-1})
        .
    \]
\end{lemma}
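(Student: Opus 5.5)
We mirror the fixed-horizon localization argument of Lemma~\ref{lem:fixed-optimizer-localization}, now on the shrinking window $|\eta-\eta_{\infty,T}|\le C/T$ and with horizon-dependent scales. The key ingredients are the uniform score bounds of Lemma~\ref{lem:score-uniform-growing-T} and the curvature asymptotics of Corollary~\ref{cor:T-loss-rate}. In particular, \eqref{eq:finite-score-curvature} gives $F_{\infty,T}'(\eta_{\infty,T})=-\Theta(Tq_{*}^{2T-2})$.

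\textbf{Step 1: the deterministic score slope stays negative on the whole window.} We show that $\sup_{|\eta-\eta_{\infty,T}|\le C/T}F_{\infty,T}'(\eta)\le-2\gamma Tq_{*}^{2T-2}$ for some deterministic $\gamma>0$. From the spectral formula
\[
F_{\infty,T}'(\eta)=-\frac{2T-1}{m}\sum_{j}c_{j}\lambda_{j}^{2}\Bigl(1-\frac{\eta\lambda_{j}}{m}\Bigr)^{2T-2},
\]
every summand is nonnegative, so it suffices to bound the endpoint term $j=r$ from below. For $|\eta-\eta_{\infty,T}|\le C/T$, Proposition~\ref{prop:finite-T-correction} gives $|1-\eta\lambda_{r}/m|=q_{*}(1+O(T^{-1}))$. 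Raising this to the power $2T-2$ yields a quantity that is $\ge c\,q_{*}^{2T-2}$ uniformly on the window, because $(1-a/T)^{2T}$ stays bounded away from zero.

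\textbf{Step 2: the finite-width score slope inherits this bound.} By Lemma~\ref{lem:score-uniform-growing-T}, the finite-width slope deviates from the deterministic one uniformly on the window by $O_{p}(T^{2}q_{*}^{2T-3}/\sqrt{n})$. Relative to the scale $Tq_{*}^{2T-2}$ this error is $O_{p}(T/(q_{*}\sqrt{n}))=o_{p}(1)$, using $T/\sqrt{n}\to0$. Hence, with probability tending to one, $\sup_{\text{window}}F_{n,T}'\le-\gamma Tq_{*}^{2T-2}$.

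\textbf{Step 3: bracket the finite-width root.} Since $F_{\infty,T}(\eta_{\infty,T})=0$, the same lemma gives $|F_{n,T}(\eta_{\infty,T})|\le M_{0}Tq_{*}^{2T-2}/\sqrt{n}$ with probability at least $1-\epsilon$. Take $M>M_{0}/\gamma$. Because $M/\sqrt{n}=o(1/T)$, the points $\eta_{\infty,T}\pm M/\sqrt{n}$ eventually lie inside the window, and the mean value theorem gives
\[
F_{n,T}(\eta_{\infty,T}-M/\sqrt{n})>0>F_{n,T}(\eta_{\infty,T}+M/\sqrt{n}).
\]
By Proposition~\ref{prop:loss-convex-as} and Corollary~\ref{cor:optimizer-characterization}, $F_{n,T}$ is strictly decreasing almost surely and its unique root is the global minimizer of $\phi_{n,T}$. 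That root therefore lies in $(\eta_{\infty,T}-M/\sqrt{n},\eta_{\infty,T}+M/\sqrt{n})\subset(\eta_{\ell},\eta_{u})$, so it equals $\eta_{n,T}$ and satisfies $F_{n,T}(\eta_{n,T})=0$.

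\textbf{Conclusion.} Letting $\epsilon\to0$ gives $\eta_{n,T}-\eta_{\infty,T}=O_{p}(n^{-1/2})$. Since $T/\sqrt{n}\to0$, this is $o_{p}(T^{-1})$.

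\textbf{Main obstacle.} The delicate point is Step 1: the slope must be lower-bounded uniformly on a window of width $C/T$, not just at $\eta_{\infty,T}$. This works only because such perturbations change the $(2T-2)$-th power of an endpoint contraction by a bounded factor; a window of fixed width would break the argument.
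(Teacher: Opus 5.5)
Your proof is correct and has the same structure as the paper's. Both use the uniform score approximation of Lemma~\ref{lem:score-uniform-growing-T}, bracket the root by the mean value theorem at scale $M/\sqrt{n}$, and get uniqueness from strict monotonicity of $F_{n,T}$.

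The one real difference is how you keep the deterministic slope negative away from $\eta_{\infty,T}$. The paper works only on the shrinking window $|\eta-\eta_{\infty,T}|\le M/\sqrt{n}$. There it combines the two-sided bound \eqref{eq:finite-score-curvature} at $\eta_{\infty,T}$ with a uniform bound $|F_{\infty,T}''|\lesssim T^{2}q_{*}^{2T-3}$ on the $C/T$ window. You instead discard the nonnegative spectral summands and bound the $j=r$ term from below directly. That is more elementary, avoids computing $F_{\infty,T}''$, and gives negativity on the whole $C/T$ window.

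What the paper's route buys is the two-sided estimate \eqref{eq:deterministic-derivative-local}, namely $F_{\infty,T}'(\eta)=F_{\infty,T}'(\eta_{\infty,T})+o(Tq_{*}^{2T-2})$ near $\eta_{\infty,T}$. Theorem~\ref{thm:b-limit-dist} reuses it to identify the limit of $F_{n,T}'(\tilde{\eta}_{n,T})/(Tq_{*}^{2T-2})$. Your one-sided bound would need to be supplemented at that point.

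Your ``main obstacle'' remark misplaces where the shrinking window matters. The deterministic lower bound does not need it. Since $\max(|1-\eta\lambda_{1}/m|,|1-\eta\lambda_{r}/m|)\ge q_{*}$ for every $\eta$, keeping both endpoint terms gives $-F_{\infty,T}'(\eta)\ge\frac{2T-1}{m}\min(c_{1}\lambda_{1}^{2},c_{r}\lambda_{r}^{2})\,q_{*}^{2T-2}$ for all $\eta$. The $C/T$ window is genuinely needed in your Step~2. The error bound of Lemma~\ref{lem:score-uniform-growing-T} rests on $\|PB_{n}(\eta)^{k}P\|_{\mathrm{op}}\lesssim q_{*}^{k}$ uniformly in $k\le 2T$, and that fails on a fixed-width window.
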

\begin{proof}
    Define $s_{T}=Tq_{*}^{2T-2}$. By \eqref{eq:finite-score-curvature}, there exist constants $0<\underline{C}<\overline{C}<\infty$ such that
    \begin{equation}\label{eq:score-derivative-two-sided}
        \underline{C}s_{T}
        \le
        -F_{\infty,T}'(\eta_{\infty,T})
        \le
        \overline{C}s_{T}
    \end{equation}
    for all sufficiently large $T$. Differentiating \eqref{eq:score-derivative} once more gives
    \[
        F_{\infty,T}''(\eta)
        =\frac{(2T-1)(2T-2)}{m^{2}}y^{\top}K^{3}B_{\infty}(\eta)^{2T-3}y
        .
    \]
    By Lemma~\ref{lem:local-random-power-bound}, uniformly over $|\eta-\eta_{\infty,T}|\le C/T$, we have
    \[
        \|PB_{\infty}(\eta)^{k}P\|_{\mathrm{op}}
        \lesssim q_{*}^{k}
        \qquad(0\le k\le2T)
        ,
    \]
    which implies
    \[
        \sup_{|\eta-\eta_{\infty,T}|\le C/T}|F_{\infty,T}''(\eta)|
        \lesssim T^{2}\|Py\|^{2}\|K\|_{\mathrm{op}}^{3}\|PB_{\infty}(\eta)^{2T-3}P\|_{\mathrm{op}}
        \lesssim T^{2}q_{*}^{2T-3}
        .
    \]
    For every fixed $M>0$, the assumption $T/\sqrt{n}\to0$ implies $M/\sqrt{n}\le C/T$ for all sufficiently large $n$. Hence we have
    \begin{equation}\label{eq:deterministic-derivative-local}
        \sup_{|\eta-\eta_{\infty,T}|\le M/\sqrt{n}}|F_{\infty,T}'(\eta)-F_{\infty,T}'(\eta_{\infty,T})|
        =O\left(\frac{MT^{2}q_{*}^{2T-3}}{\sqrt{n}}\right)
        =o(s_{T})
        .
    \end{equation}
    Since Lemma~\ref{lem:score-uniform-growing-T} gives
    \begin{align}\label{eq:random-derivative-local}
        \sup_{|\eta-\eta_{\infty,T}|\le M/\sqrt{n}}|F_{n,T}'(\eta)-F_{\infty,T}'(\eta)|
        =O_{p}\left(\frac{T^{2}q_{*}^{2T-3}}{\sqrt{n}}\right)
        =o_{p}(s_{T})
        ,
    \end{align}
    we have $\sup_{|\eta-\eta_{\infty,T}|\le M/\sqrt{n}}F_{n,T}'(\eta)\le F_{\infty,T}'(\eta_{\infty,T})+o_{p}(s_{T})$. Combining this with \eqref{eq:score-derivative-two-sided}, we obtain
    \begin{equation}\label{eq:negative-local-derivative}
        \mathrm{P}\left(\sup_{|\eta-\eta_{\infty,T}|\le M/\sqrt{n}}F_{n,T}'(\eta)\le-\frac{\underline{C}}{2}s_{T}\right)
        \to1
    \end{equation}
    for every fixed $M>0$.

    By Corollary~\ref{cor:optimizer-characterization} and Theorem~\ref{thm:finite-minimax}, we have $F_{\infty,T}(\eta_{\infty,T})=0$. Thus the first bound in Lemma~\ref{lem:score-uniform-growing-T} gives
    \[
        F_{n,T}(\eta_{\infty,T})
        =O_{p}\left(\frac{s_{T}}{\sqrt{n}}\right)
        .
    \]
    Therefore, for any fixed $\epsilon>0$, there exists $M_{0}<\infty$ such that
    \begin{equation}\label{eq:F-asymp-tight}
        \liminf_{n\to\infty}\mathrm{P}\left(|F_{n,T}(\eta_{\infty,T})|\le\frac{M_{0}s_{T}}{\sqrt{n}}\right)
        \ge1-\epsilon
        .
    \end{equation}
    Choose $M>2M_{0}/\underline{C}$. On the intersection of this event and the event in \eqref{eq:negative-local-derivative}, the mean value theorem gives
    \begin{align*}
        F_{n,T}\left(\eta_{\infty,T}-\frac{M}{\sqrt{n}}\right)
        &\ge-\frac{M_{0}s_{T}}{\sqrt{n}}+\frac{\underline{C}Ms_{T}}{2\sqrt{n}}
        >0
        ,\\
        F_{n,T}\left(\eta_{\infty,T}+\frac{M}{\sqrt{n}}\right)
        &\le\frac{M_{0}s_{T}}{\sqrt{n}}-\frac{\underline{C}Ms_{T}}{2\sqrt{n}}
        <0
        .
    \end{align*}

    By Proposition~\ref{prop:loss-derivatives}, $F_{n,T}$ is continuous. On the event in \eqref{eq:negative-local-derivative}, it is strictly decreasing throughout the interval $|\eta-\eta_{\infty,T}|\le M/\sqrt{n}$. Hence the preceding sign inequalities and the intermediate value theorem imply that on the intersection of events in \eqref{eq:negative-local-derivative} and \eqref{eq:F-asymp-tight}, $F_{n,T}$ has a unique root in
    \[
        \left(\eta_{\infty,T}-\frac{M}{\sqrt{n}},\eta_{\infty,T}+\frac{M}{\sqrt{n}}\right)
        \subset(\eta_{\ell},\eta_{u})
        .
    \]
    For all sufficiently large $n$, Proposition~\ref{prop:loss-convex-as} and Corollary~\ref{cor:optimizer-characterization} imply that $\phi_{n,T}$ is almost surely strictly convex. Therefore this root is the unique minimizer of $\phi_{n,T}$ over $\mathcal{I}$, namely $\eta_{n,T}$. Consequently, with probability at least $1-\epsilon+o(1)$, we have
    \[
        |\eta_{n,T}-\eta_{\infty,T}|<\frac{M}{\sqrt{n}}
        .
    \]
    Since $\epsilon>0$ was arbitrary, this proves that $\eta_{n,T}\in(\eta_{\ell},\eta_{u})$ with probability tending to one and that $\eta_{n,T}-\eta_{\infty,T}=O_{p}(n^{-1/2})$.
\end{proof}

\subsection{First-order perturbation expansion}

We next derive the first-order triangular-array expansion of the score. Set
\[
    B_{T}
    =B_{\infty}(\eta_{\infty,T})
    ,\qquad
    \Delta_{n}
    =-\frac{\eta_{\infty,T}}{m}E_{n}
    .
\]
Then we have
\[
    B_{n}(\eta_{\infty,T})
    =B_{T}+\Delta_{n}
    .
\]
For $1\le k\le2T$, define
\[
    \mathcal{L}_{k,n}
    =\sum_{s=0}^{k-1}B_{T}^{s}\Delta_{n}B_{T}^{k-1-s}
    ,\qquad
    \mathcal{R}_{k,n}
    =(B_{T}+\Delta_{n})^{k}-B_{T}^{k}-\mathcal{L}_{k,n}
    .
\]
Then by Lemma~\ref{lem:matrix-power-derivative}, we have
\[
    \mathcal{L}_{k,n}
    =D\Psi_{k}(B_{T})[\Delta_{n}]
    ,\qquad
    \Psi_{k}(B)=B^{k}
    .
\]

\begin{lemma}[Linear and quadratic matrix-power perturbations]\label{lem:power-derivative}
    Let $P$ denote the orthogonal projector onto $\operatorname{range}(K)$. Then we have
    \[
        \|P\mathcal{L}_{k,n}P\|_{\mathrm{op}}
        =O_{p}(kq_{*}^{k-1}\|E_{n}\|_{\mathrm{op}})
        ,\qquad
        \|P\mathcal{R}_{k,n}P\|_{\mathrm{op}}
        =O_{p}(k^{2}q_{*}^{k-2}\|E_{n}\|_{\mathrm{op}}^{2})
        .
    \]
\end{lemma}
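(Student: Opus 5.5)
We will work on the event $\mathcal{E}_{n}$ of Lemma~\ref{lem:local-random-power-bound} (with $\eta=\eta_{\infty,T}$, which trivially lies in the $C/T$-neighborhood). Since $\mathrm{P}(\mathcal{E}_{n})\to1$, an $O_{p}$ bound proved on $\mathcal{E}_{n}$ holds unconditionally. On this event, \eqref{eq:range-K-prediction-error} and Lemma~\ref{lem:local-random-power-bound} show that $P$ commutes with $B_{T}$, $E_{n}$, and hence with $\Delta_{n}$ and $B_{T}+\Delta_{n}$. Thus all the operators involved preserve $\operatorname{range}(K)$, and $P\mathcal{L}_{k,n}P$ and $P\mathcal{R}_{k,n}P$ can be written as words in the compressed operators $PB_{T}P$, $P\Delta_{n}P$, and $P(B_{T}+\Delta_{n})P$. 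These words satisfy $\|(PB_{T}P)^{s}\|_{\mathrm{op}}\lesssim q_{*}^{s}$ and $\|(P(B_{T}+\Delta_{n})P)^{s}\|_{\mathrm{op}}=\|PB_{n}(\eta_{\infty,T})^{s}P\|_{\mathrm{op}}\lesssim q_{*}^{s}$ uniformly in $0\le s\le 2T$.

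For the linear term, we bound each summand:
\[
\|PB_{T}^{s}\Delta_{n}B_{T}^{k-1-s}P\|_{\mathrm{op}}\le\|PB_{T}^{s}P\|_{\mathrm{op}}\|\Delta_{n}\|_{\mathrm{op}}\|PB_{T}^{k-1-s}P\|_{\mathrm{op}}\lesssim q_{*}^{k-1}\|E_{n}\|_{\mathrm{op}}.
\]
This uses $|\eta_{\infty,T}|/m$ bounded, which follows from Theorem~\ref{thm:finite-minimax}. Summing the $k$ terms gives the first claim.

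For the remainder, we use the double-telescoping identity from the proof of Lemma~\ref{lem:matrix-power-derivative}:
\[
\mathcal{R}_{k,n}=\sum_{s=1}^{k-1}\sum_{j=0}^{s-1}(B_{T}+\Delta_{n})^{j}\Delta_{n}B_{T}^{s-1-j}\Delta_{n}B_{T}^{k-1-s}.
\]
We sandwich this between copies of $P$ and insert $P$ between the factors, which is allowed by commutation. Each term is then at most
\[
\|P(B_{T}+\Delta_{n})^{j}P\|_{\mathrm{op}}\,\|\Delta_{n}\|_{\mathrm{op}}^{2}\,\|PB_{T}^{s-1-j}P\|_{\mathrm{op}}\,\|PB_{T}^{k-1-s}P\|_{\mathrm{op}}\lesssim q_{*}^{k-2}\|E_{n}\|_{\mathrm{op}}^{2}.
\]
There are $k(k-1)/2\le k^{2}$ terms, which yields the second claim.

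The only delicate step is obtaining the uniform power bound for the perturbed matrix $B_{T}+\Delta_{n}$ with constants independent of $j\le 2T$. This is not a naive $(q_{*}+\|\Delta_{n}\|)^{j}$ bound. It requires $T\|E_{n}\|_{\mathrm{op}}\le c$, which is exactly where $T/\sqrt{n}\to0$ and Lemma~\ref{lem:joint-clt} enter through $\mathcal{E}_{n}$ and Lemma~\ref{lem:local-word-bound}. Everything else is bookkeeping of exponents.
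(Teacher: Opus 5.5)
Your proposal is correct and follows the paper's proof essentially step for step: restrict to $\mathcal{E}_n$, use the projected power bounds of Lemma~\ref{lem:local-random-power-bound} for both $B_T$ and $B_T+\Delta_n=B_n(\eta_{\infty,T})$ together with $\|\Delta_n\|_{\mathrm{op}}\lesssim\|E_n\|_{\mathrm{op}}$, and then count the $k$ linear summands and the $k(k-1)/2$ double-telescoping summands. One small correction to your closing remark: the uniform bound for the perturbed powers in Lemma~\ref{lem:local-word-bound} is in fact the naive estimate $(q_T+\|\Delta\|_{\mathrm{op}})^j\le q_T^j\exp(j\|\Delta\|_{\mathrm{op}}/q_T)$, and it gives $\lesssim q_T^j$ precisely because $T\|E_n\|_{\mathrm{op}}\le c$ on $\mathcal{E}_n$ keeps $j\|\Delta\|_{\mathrm{op}}/q_T$ bounded.
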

\begin{proof}
    Define the event $\mathcal{E}_{n}$ as in Lemma~\ref{lem:local-random-power-bound}. By Lemma~\ref{lem:local-random-power-bound}, we have $\mathrm{P}(\mathcal{E}_{n})\to1$. An $O_{p}(\cdot)$ bound established on an event whose probability tends to one also holds unconditionally. Thus, it suffices to show that
    \[
        \|P\mathcal{L}_{k,n}P\|_{\mathrm{op}}
        \lesssim kq_{*}^{k-1}\|E_{n}\|_{\mathrm{op}}
        ,\qquad
        \|P\mathcal{R}_{k,n}P\|_{\mathrm{op}}
        \lesssim k^{2}q_{*}^{k-2}\|E_{n}\|_{\mathrm{op}}^{2}
    \]
    holds on $\mathcal{E}_{n}$.

    First, note that Lemma~\ref{lem:local-random-power-bound} gives
    \[
        \|PB_{T}^{j}P\|_{\mathrm{op}}
        \lesssim q_{*}^{j}
        \qquad(0\le j\le2T)
        .
    \]
    Since Proposition~\ref{prop:finite-T-correction} implies $\eta_{\infty,T}=\eta_{*}+O(T^{-1})$, we have $\sup_{T}\eta_{\infty,T}<\infty$. Hence, by the definition of $\Delta_{n}$, we have
    \begin{equation}\label{eq:B-difference-norm}
        \|\Delta_{n}\|_{\mathrm{op}}
        \lesssim\|E_{n}\|_{\mathrm{op}}
        .
    \end{equation}
    Therefore, we obtain
    \begin{align*}
        \|P\mathcal{L}_{k,n}P\|_{\mathrm{op}}
        &\le\sum_{s=0}^{k-1}\|P(B_{T}^{s}\Delta_{n}B_{T}^{k-1-s})P\|_{\mathrm{op}}
        \le\sum_{s=0}^{k-1}\|PB_{T}^{s}P\|_{\mathrm{op}}\|\Delta_{n}\|_{\mathrm{op}}\|PB_{T}^{k-1-s}P\|_{\mathrm{op}}\\
        &\lesssim\sum_{s=0}^{k-1}q_{*}^{s}\|\Delta_{n}\|_{\mathrm{op}}q_{*}^{k-1-s}
        =kq_{*}^{k-1}\|\Delta_{n}\|_{\mathrm{op}}
        \lesssim kq_{*}^{k-1}\|E_{n}\|_{\mathrm{op}}
        .
    \end{align*}
    Moreover, since $B_{T}+\Delta_{n}=B_{n}(\eta_{\infty,T})$, Lemma~\ref{lem:local-random-power-bound} gives
    \[
        \|P(B_{T}+\Delta_{n})^{j}P\|_{\mathrm{op}}
        \lesssim q_{*}^{j}
    \]
    uniformly over $0\le j\le2T$ on $\mathcal{E}_{n}$.

    For the remainder, the proof of Lemma~\ref{lem:matrix-power-derivative} gives the identity
    \[
        \mathcal{R}_{k,n}
        =\sum_{s=1}^{k-1}\sum_{j=0}^{s-1}(B_{T}+\Delta_{n})^{j}\Delta_{n}B_{T}^{s-1-j}\Delta_{n}B_{T}^{k-1-s}
    \]
    for $k\ge2$, while $\mathcal{R}_{1,n}=0$. Hence, using the preceding projected power bounds and \eqref{eq:B-difference-norm}, we obtain
    \begin{align*}
        \|P\mathcal{R}_{k,n}P\|_{\mathrm{op}}
        &\le\sum_{s=1}^{k-1}\sum_{j=0}^{s-1}\|P(B_{T}+\Delta_{n})^{j}P\|_{\mathrm{op}}\|\Delta_{n}\|_{\mathrm{op}}\|PB_{T}^{s-1-j}P\|_{\mathrm{op}}\|\Delta_{n}\|_{\mathrm{op}}\|PB_{T}^{k-1-s}P\|_{\mathrm{op}}\\
        &\lesssim\sum_{s=1}^{k-1}\sum_{j=0}^{s-1}q_{*}^{j}\|\Delta_{n}\|_{\mathrm{op}}^{2}q_{*}^{s-1-j}q_{*}^{k-1-s}
        =\sum_{s=1}^{k-1}sq_{*}^{k-2}\|\Delta_{n}\|_{\mathrm{op}}^{2}
        =\frac{k(k-1)}{2}q_{*}^{k-2}\|\Delta_{n}\|_{\mathrm{op}}^{2}\\
        &\lesssim k^{2}q_{*}^{k-2}\|E_{n}\|_{\mathrm{op}}^{2}
        .
    \end{align*}
\end{proof}

\subsection{Limits of the first-order sensitivities of the loss and score}

\begin{lemma}[Asymptotics of the local Fr\'echet coefficients]\label{lem:frechet-coefficient-limits}
    We have
    \[
        \frac{PQ_{F,T}P}{Tq_{*}^{2T-2}}
        \to Q_{F,*}
        :=-\frac{2\eta_{*}h_{*}}{m}(u_{1}u_{1}^{\top}+u_{r}u_{r}^{\top})
        ,\qquad
        \frac{PQ_{L,T}P}{Tq_{*}^{2T-1}}
        \to Q_{L,*}
    \]
    in Frobenius norm, where $Q_{L,*}$ is defined by $Q_{L,*}=m^{-2}\eta_{*}(c_{1}a_{1}^{*}u_{1}u_{1}^{\top}-c_{r}a_{r}^{*}u_{r}u_{r}^{\top})$. In addition, we have
    \[
        \frac{\|Pg_{F,T}\|}{Tq_{*}^{2T-2}}
        \to0
        ,\qquad
        \frac{\|Pg_{L,T}\|}{Tq_{*}^{2T-1}}
        \to0
        .
    \]
\end{lemma}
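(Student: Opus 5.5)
The plan is to diagonalize everything in the eigenbasis of $K$. Write $Py=\sum_{j=1}^{r}P_{j}y$ and note that on $\operatorname{range}(K)$ we have $B_{T}=\sum_{j}r_{j,T}P_{j}$, with $r_{j,T}$ as in \eqref{eq:r_j}. Then every coefficient becomes a finite double sum over pairs of modes $(j,k)$.

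For $Q_{L,T}$, each term has the form $P_{j}yy^{\top}P_{k}$ with scalar weight
\[
    -\frac{\eta_{\infty,T}}{2m^{2}}\sum_{s=0}^{2T-1}r_{j,T}^{2T-1-s}r_{k,T}^{s}.
\]
For $Q_{F,T}$, the terms are $P_{j}yy^{\top}P_{k}$ with weight
\[
    \tfrac12(r_{j,T}^{2T-1}+r_{k,T}^{2T-1})-\frac{\eta_{\infty,T}}{2m}(\lambda_{j}+\lambda_{k})\sum_{s=0}^{2T-2}r_{j,T}^{2T-2-s}r_{k,T}^{s}.
\]
Simplicity of $\lambda_{1},\lambda_{r}$ (Assumption~\ref{asmp:endpoint-support}) gives $P_{1}yy^{\top}P_{1}=c_{1}u_{1}u_{1}^{\top}$, and likewise $P_{r}yy^{\top}P_{r}=c_{r}u_{r}u_{r}^{\top}$. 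For the gradient coefficients, $Pg_{L,T}=-m^{-1}\sum_{j}r_{j,T}^{2T}P_{j}y$ has norm $O(q_{*}^{2T})=o(Tq_{*}^{2T-1})$. Similarly, $Pg_{F,T}=-2\sum_{j}\lambda_{j}r_{j,T}^{2T-1}P_{j}y$ has norm $O(q_{*}^{2T-1})=o(Tq_{*}^{2T-2})$. Both bounds follow from Lemma~\ref{lem:endpoint-contractions}, since $|r_{j,T}|\lesssim q_{*}$ and the powers are controlled as there.

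Next I classify the pairs $(j,k)$ using Lemma~\ref{lem:endpoint-contractions}. For a diagonal endpoint pair $j=k\in\{1,r\}$, the geometric sum has $2T$ (resp. $2T-1$) equal terms, giving a factor $2T\,r_{j,T}^{2T-1}$ (resp. $(2T-1)r_{j,T}^{2T-2}$). This is of exact order $Tq_{*}^{2T-1}$ (resp. $Tq_{*}^{2T-2}$), and the limits come straight from Lemma~\ref{lem:endpoint-contractions}. For the loss: $-\frac{\eta_{*}}{2m^{2}}\cdot2\cdot(-a_{1}^{*})c_{1}=m^{-2}\eta_{*}c_{1}a_{1}^{*}$ at $j=k=1$, and $-m^{-2}\eta_{*}c_{r}a_{r}^{*}$ at $j=k=r$, which matches $Q_{L,*}$. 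For the score, the first bracket is only $O(q_{*}^{2T-1})$, which is negligible. The second bracket gives $-\frac{\eta_{*}}{m}\lambda_{j}\cdot2\cdot a_{j}^{*}c_{j}$, and the identity $c_{1}\lambda_{1}a_{1}^{*}=c_{r}\lambda_{r}a_{r}^{*}=h_{*}$ turns this into $-2\eta_{*}h_{*}m^{-1}(u_{1}u_{1}^{\top}+u_{r}u_{r}^{\top})$. I verify that identity directly from the definitions of $a_{1}^{*},a_{r}^{*},L_{*}$, since $a_{r}^{*}/a_{1}^{*}=e^{L_{*}}=c_{1}\lambda_{1}/(c_{r}\lambda_{r})$.

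The remaining pairs all involve an interior mode or are off-diagonal. If either index is interior, Lemma~\ref{lem:endpoint-contractions} gives $|r_{j,T}|\le\vartheta_{0}q_{*}$. The sum $\sum_{s}|r_{j}|^{2T-1-s}|r_{k}|^{s}$ is then at most a geometric series bounded by $q_{*}^{2T-1}/(1-\vartheta_{0})=O(q_{*}^{2T-1})$, which is $o(Tq_{*}^{2T-1})$.

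The main obstacle is the off-diagonal endpoint pair $(1,r)$, where both magnitudes are comparable to $q_{*}$ but the signs differ ($r_{1,T}<0<r_{r,T}$). Here I use the closed form
\[
    \sum_{s=0}^{N-1}a^{N-1-s}b^{s}=\frac{a^{N}-b^{N}}{a-b}.
\]
Because $a-b=r_{1,T}-r_{r,T}\approx-2q_{*}$ is bounded away from zero, the sum is $O(q_{*}^{N-1})$, losing the factor $T$. This alternating-sign cancellation is exactly what makes the cross terms negligible. I also need uniformity: I will replace $\eta_{\infty,T}$ by $\eta_{*}$ using Proposition~\ref{prop:finite-T-correction}, which costs only $O(T^{-1})$ relative error, so each normalized weight converges. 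Since there are finitely many pairs and each matrix $P_{j}yy^{\top}P_{k}$ is fixed, Frobenius convergence follows.
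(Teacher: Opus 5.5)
Your proposal is correct and follows essentially the same route as the paper. Both diagonalize in the eigenprojectors $P_j$, read off the endpoint diagonal blocks from Lemma~\ref{lem:endpoint-contractions}, and show that off-diagonal and interior blocks are negligible. The paper uses the closed form $(r_i^k-r_j^k)/(r_i-r_j)$, with denominator bounded away from zero, for all off-diagonal pairs; your geometric-series bound for pairs involving an interior mode is an equivalent variant.

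One point needs careful phrasing: $\eta_{\infty,T}$ may be replaced by $\eta_*$ only in the scalar prefactors. Inside the $(2T-1)$-th and $2T$-th powers the $O(T^{-1})$ shift is not negligible, since it is exactly what produces $a_1^*$ and $a_r^*$.
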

\begin{proof}
    Set
    \[
        \Gamma_{ij,T}^{(k)}
        :=\sum_{s=0}^{k-1}r_{i,T}^{k-1-s}r_{j,T}^{s}
        =\begin{cases}
            kr_{i,T}^{k-1}&(i=j)
            ,\\
            (r_{i,T}^{k}-r_{j,T}^{k})/(r_{i,T}-r_{j,T})&(i\neq j)
            .
        \end{cases}
        .
    \]
    Since $\lambda_{i}\neq\lambda_{j}$ for $i\neq j$ and $\eta_{\infty,T}\to\eta_{*}>0$, the denominator $|r_{i,T}-r_{j,T}|$ is bounded away from zero for all sufficiently large $T$.
    Directly projecting $Q_{F,T}$ onto the two eigenspaces gives
    \[
        P_{i}Q_{F,T}P_{j}
        =\left(\frac{r_{i,T}^{2T-1}+r_{j,T}^{2T-1}}{2}-\frac{\eta_{\infty,T}(\lambda_{i}+\lambda_{j})}{2m}\Gamma_{ij,T}^{(2T-1)}
        \right)(P_{i}y)(P_{j}y)^{\top}
        .
    \]
    In particular, we have
    \[
        P_{j}Q_{F,T}P_{j}
        =\left(r_{j,T}^{2T-1}-\frac{\eta_{\infty,T}(2T-1)\lambda_{j}}{m}r_{j,T}^{2T-2}\right)(P_{j}y)(P_{j}y)^{\top}
        .
    \]
    By Lemma~\ref{lem:endpoint-contractions}, for every $j=2,\ldots,r-1$, we have $P_{j}Q_{F,T}P_{j}=o(Tq_{*}^{2T-2})$. For $i\neq j$, noting that $|\Gamma_{ij,T}^{(2T-1)}|\lesssim |r_{i,T}|^{2T-1}+|r_{j,T}|^{2T-1}$, we have $P_{i}Q_{F,T}P_{j}=O(q_{*}^{2T-1})=o(Tq_{*}^{2T-2})$ by Lemma~\ref{lem:endpoint-contractions}. By Assumption~\ref{asmp:endpoint-support}, we can write $(P_{1}y)(P_{1}y)^{\top}=c_{1}u_{1}u_{1}^{\top}$ and $(P_{r}y)(P_{r}y)^{\top}=c_{r}u_{r}u_{r}^{\top}$. Using Lemma~\ref{lem:endpoint-contractions} and $\eta_{\infty,T}\to\eta_{*}$, we have
    \begin{align*}
        \frac{P_{1}Q_{F,T}P_{1}}{Tq_{*}^{2T-2}}
        &=\left(r_{1,T}^{2T-1}-\frac{\eta_{\infty,T}(2T-1)\lambda_{1}}{m}r_{1,T}^{2T-2}\right)\frac{(P_{1}y)(P_{1}y)^{\top}}{Tq_{*}^{2T-2}}\\
        &\to-\frac{\eta_{*}2\lambda_{1}}{m}a_{1}^{*}(P_{1}y)(P_{1}y)^{\top}
        =-\frac{2\eta_{*}h_{*}}{m}u_{1}u_{1}^{\top}
        ,\\
        \frac{P_{r}Q_{F,T}P_{r}}{Tq_{*}^{2T-2}}
        &=\left(r_{r,T}^{2T-1}-\frac{\eta_{\infty,T}(2T-1)\lambda_{r}}{m}r_{r,T}^{2T-2}\right)\frac{(P_{r}y)(P_{r}y)^{\top}}{Tq_{*}^{2T-2}}\\
        &\to-\frac{\eta_{*}2\lambda_{r}}{m}a_{r}^{*}(P_{r}y)(P_{r}y)^{\top}
        =-\frac{2\eta_{*}h_{*}}{m}u_{r}u_{r}^{\top}
        .
    \end{align*}
    Thus we obtain
    \[
        \frac{PQ_{F,T}P}{Tq_{*}^{2T-2}}
        =\sum_{i=1}^{r}\sum_{j=1}^{r}\frac{P_{i}Q_{F,T}P_{j}}{Tq_{*}^{2T-2}}
        =\frac{P_{1}Q_{F,T}P_{1}}{Tq_{*}^{2T-2}}+\frac{P_{r}Q_{F,T}P_{r}}{Tq_{*}^{2T-2}}+o(1)
        \to Q_{F,*}
        .
    \]    
    The loss coefficient is handled in the same way. Projecting $Q_{L,T}$ gives
    \[
        P_{i}Q_{L,T}P_{j}
        =-\frac{\eta_{\infty,T}}{2m^{2}}\Gamma_{ij,T}^{(2T)}(P_{i}y)(P_{j}y)^{\top}
        .
    \]
    Thus all off-diagonal blocks and all interior diagonal blocks are $o(Tq_{*}^{2T-1})$. At the two endpoints, we have by Lemma~\ref{lem:endpoint-contractions} and $\eta_{\infty,T}\to\eta_{*}$ that
    \begin{align*}
        &\frac{P_{1}Q_{L,T}P_{1}}{Tq_{*}^{2T-1}}
        =-\frac{\eta_{\infty,T}2Tr_{1,T}^{2T-1}(P_{1}y)(P_{1}y)^{\top}}{2m^{2}Tq_{*}^{2T-1}}
        \to-\frac{\eta_{*}}{m^{2}}(-a_{1}^{*})(P_{1}y)(P_{1}y)^{\top}
        =\frac{\eta_{*}}{m^{2}}c_{1}a_{1}^{*}u_{1}u_{1}^{\top}
        ,\\
        &\frac{P_{r}Q_{L,T}P_{r}}{Tq_{*}^{2T-1}}
        =-\frac{\eta_{\infty,T}2Tr_{r,T}^{2T-1}(P_{r}y)(P_{r}y)^{\top}}{2m^{2}Tq_{*}^{2T-1}}
        \to-\frac{\eta_{*}}{m^{2}}a_{r}^{*}(P_{r}y)(P_{r}y)^{\top}
        =-\frac{\eta_{*}}{m^{2}}c_{r}a_{r}^{*}u_{r}u_{r}^{\top}
        .
    \end{align*}
    Thus we obtain
    \[
        \frac{PQ_{L,T}P}{Tq_{*}^{2T-1}}
        =\sum_{i=1}^{r}\sum_{j=1}^{r}\frac{P_{i}Q_{L,T}P_{j}}{Tq_{*}^{2T-1}}
        =\frac{P_{1}Q_{L,T}P_{1}}{Tq_{*}^{2T-1}}+\frac{P_{r}Q_{L,T}P_{r}}{Tq_{*}^{2T-1}}+o(1)
        \to Q_{L,*}
        .
    \]
    Finally, on $\operatorname{range}(K)$, Lemma~\ref{lem:endpoint-contractions} gives
    \[
        \|PB_{T}^{2T-1}P\|_{\mathrm{op}}
        =\left\|\sum_{i=1}^{r}r_{i,T}^{2T-1}P_{i}\right\|
        \le\sum_{i=1}^{r}|r_{i,T}|^{2T-1}
        \lesssim q_{*}^{2T-1}
        ,
    \]
    and similarly, $\|PB_{T}^{2T}P\|_{\mathrm{op}}\lesssim q_{*}^{2T}$. Thus we have
    \[
        \frac{\|Pg_{F,T}\|}{Tq_{*}^{2T-2}}
        \to0
        ,\qquad
        \frac{\|Pg_{L,T}\|}{Tq_{*}^{2T-1}}
        \to0
        .
    \]
\end{proof}

\subsection{Score linearization and weak convergence of the optimal learning rate}

\begin{lemma}[Score linearization with an explicit remainder]\label{lem:score-linearization}
    Define
    \begin{align*}
        &S_{0,n}
        :=KB_{T}^{2T-1}
        ,\qquad
        S_{1,n}
        :=E_{n}B_{T}^{2T-1}+K\mathcal{L}_{2T-1,n}
        ,\\
        &S_{2,n}
        :=K\mathcal{R}_{2T-1,n}+E_{n}\mathcal{L}_{2T-1,n}+E_{n}\mathcal{R}_{2T-1,n}
        ,
    \end{align*}
    and
    \[
        R_{F,n,T}
        :=-2f_{n}^{(0)}(X)^{\top}(S_{1,n}+S_{2,n})y+f_{n}^{(0)}(X)^{\top}(S_{0,n}+S_{1,n}+S_{2,n})f_{n}^{(0)}(X)+y^{\top}S_{2,n}y
        .
    \]
    Then we have
    \[
        F_{n,T}(\eta_{\infty,T})-F_{\infty,T}(\eta_{\infty,T})
        =(g_{F,T})^{\top}f_{n}^{(0)}(X)+\left\langle Q_{F,T},E_{n}\right\rangle_{\mathrm{F}}+R_{F,n,T}
    \]
    with
    \[
        |R_{F,n,T}|
        =O_{p}\left(\frac{T^{2}q_{*}^{2T-3}}{n}\right)
        ,
    \]
    which consequently implies
    \[
        \frac{\sqrt{n}R_{F,n,T}}{Tq_{*}^{2T-2}}
        =o_{p}(1)
        .
    \]
\end{lemma}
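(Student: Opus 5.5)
We start from the exact algebraic decomposition of $F_{n,T}(\eta_{\infty,T})$. Write $\chi_{n}^{(0)}=-y+f_{n}^{(0)}(X)$ and $A_{n}B_{n}(\eta_{\infty,T})^{2T-1}=(K+E_{n})(B_{T}^{2T-1}+\mathcal{L}_{2T-1,n}+\mathcal{R}_{2T-1,n})$. Expanding this product gives exactly $S_{0,n}+S_{1,n}+S_{2,n}$, which is symmetric because $A_{n}$ commutes with $B_{n}$. Substituting into $F_{n,T}(\eta_{\infty,T})=(\chi_{n}^{(0)})^{\top}A_{n}B_{n}^{2T-1}\chi_{n}^{(0)}$ and subtracting $F_{\infty,T}(\eta_{\infty,T})=y^{\top}S_{0,n}y$ leaves four pieces:
\begin{itemize}
\item the term $-2f_{n}^{(0)}(X)^{\top}S_{0,n}y=g_{F,T}^{\top}f_{n}^{(0)}(X)$;
\item the term $y^{\top}S_{1,n}y$;
\item the cross terms $-2f_{n}^{(0)}(X)^{\top}(S_{1,n}+S_{2,n})y$;
\item the quadratic term in $f_{n}^{(0)}(X)$, together with $y^{\top}S_{2,n}y$.
\end{itemize}
By the computation in the proof of Lemma~\ref{lem:frechet}, $y^{\top}S_{1,n}y=y^{\top}E_{n}B_{T}^{2T-1}y-\frac{\eta_{\infty,T}}{m}\sum_{s}y^{\top}KB_{T}^{s}E_{n}B_{T}^{2T-2-s}y=\langle Q_{F,T},E_{n}\rangle_{\mathrm{F}}$. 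This confirms the identity, with $R_{F,n,T}$ collecting the leftover pieces. The identity itself is purely algebraic and needs no probability.

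For the bound, we work on the event $\mathcal{E}_{n}$ of Lemma~\ref{lem:local-random-power-bound}, which has probability tending to one. We intersect it with the almost-sure event from \eqref{eq:range-K-prediction-error}, so that $f_{n}^{(0)}(X)=Pf_{n}^{(0)}(X)$ and $E_{n}=PE_{n}P$. On this event every $S_{i,n}$ can be sandwiched by $P$, since $K$ and $E_{n}$ both vanish off $\operatorname{range}(K)$ and $P$ commutes with $B_{T}$. Lemma~\ref{lem:local-random-power-bound} and Lemma~\ref{lem:power-derivative} then give operator-norm bounds with $k=2T-1$ and $\|E_{n}\|_{\mathrm{op}}=O_{p}(n^{-1/2})$:
\begin{itemize}
\item $\|PS_{0,n}P\|\lesssim q_{*}^{2T-1}$;
\item $\|PS_{1,n}P\|\lesssim q_{*}^{2T-1}\|E_{n}\|+Tq_{*}^{2T-2}\|E_{n}\|=O_{p}(Tq_{*}^{2T-2}n^{-1/2})$;
\item $\|PS_{2,n}P\|\lesssim T^{2}q_{*}^{2T-3}\|E_{n}\|^{2}+Tq_{*}^{2T-2}\|E_{n}\|^{2}+T^{2}q_{*}^{2T-3}\|E_{n}\|^{3}=O_{p}(T^{2}q_{*}^{2T-3}/n)$.
\end{itemize}
In the last bound, $\|P\mathcal{R}P\|$ enters through $K\mathcal{R}$ and $E_{n}\mathcal{R}$.

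Combining these with $\|f_{n}^{(0)}(X)\|=O_{p}(n^{-1/2})$ from Lemma~\ref{lem:joint-clt} bounds each piece of $R_{F,n,T}$:
\begin{itemize}
\item the cross term is $O_{p}(n^{-1/2}\cdot Tq_{*}^{2T-2}n^{-1/2})=O_{p}(Tq_{*}^{2T-2}/n)$;
\item the quadratic term is $O_{p}(q_{*}^{2T-1}/n)$ plus smaller contributions;
\item $y^{\top}S_{2,n}y=O_{p}(T^{2}q_{*}^{2T-3}/n)$.
\end{itemize}
Each of these is $\lesssim T^{2}q_{*}^{2T-3}/n$, because $q_{*}<1$ and $T\ge1$. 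The consequence follows as $\sqrt{n}R_{F,n,T}/(Tq_{*}^{2T-2})=O_{p}(T/(q_{*}\sqrt{n}))=o_{p}(1)$ under $T/\sqrt{n}\to0$.

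The step that needs the most care is controlling $S_{2,n}$ uniformly in $k=2T-1$. A naive bound $(\|B_{T}\|+\|\Delta_{n}\|)^{k}$ on the full space fails, because $B_{T}$ equals the identity on $\ker(K)$. The fix is to insert the projector $P$ through the commutation relations on $\mathcal{E}_{n}$, and this requires $T\|E_{n}\|_{\mathrm{op}}\le c$, which is exactly where $T=o(\sqrt{n})$ is used. Lemma~\ref{lem:power-derivative} already packages this, so the remaining work is bookkeeping of the projectors.
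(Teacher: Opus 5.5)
Your proposal is correct and follows the paper's proof essentially step for step. It uses the same expansion $A_{n}B_{n}(\eta_{\infty,T})^{2T-1}=S_{0,n}+S_{1,n}+S_{2,n}$ with $\chi_{n}^{(0)}=-y+f_{n}^{(0)}(X)$, the same identification $y^{\top}S_{1,n}y=\langle Q_{F,T},E_{n}\rangle_{\mathrm{F}}$, and the same $P$-sandwiched operator-norm bounds from Lemmas~\ref{lem:local-random-power-bound} and~\ref{lem:power-derivative} on $\mathcal{E}_{n}$. The paper also checks that $S_{1,n}$ and $S_{2,n}$ are individually symmetric, but the identity only needs symmetry of the total, as you use.
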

\begin{proof}
    By Lemma~\ref{lem:kernel-H_n-A_n-K}, we have $PE_{n}=E_{n}P=E_{n}$ almost surely for all sufficiently large $n$. Hence $P$ also commutes with $\Delta_{n}$, $B_{T}$, $\mathcal{L}_{k,n}$, and $\mathcal{R}_{k,n}$ almost surely, and all three matrices $S_{0,n},S_{1,n},S_{2,n}$ vanish on $\ker(K)$. In addition, $A_{n}B_{n}(\eta_{\infty,T})^{2T-1}$ and $S_{0,n}$ are symmetric. $S_{1,n}$ is the Fr\'echet derivative at $K$, in the symmetric direction $E_{n}$, of the map
    \[
        \mathcal{F}(A)
        :=A\left(I_{m}-\frac{\eta_{\infty,T}}{m}A\right)^{2T-1}.
    \]
    Since $\mathcal{F}(A)$ is symmetric whenever $A$ is symmetric, and $K+tE_{n}$ is symmetric for every $t\in\mathbb{R}$, the difference quotient $(\mathcal{F}(K+tE_{n})-\mathcal{F}(K))/t$ is symmetric. Passing to the limit $t\to0$ therefore shows that $S_{1,n}=D\mathcal{F}(K)[E_{n}]$ is symmetric. Consequently, $S_{2,n}$ is symmetric as well.

    Note that we have
    \[
        A_{n}B_{n}(\eta_{\infty,T})^{2T-1}
        =S_{0,n}+S_{1,n}+S_{2,n}
        .
    \]
    Thus, using $\chi_{n}^{(0)}=-y+f_{n}^{(0)}(X)$ and $F_{\infty,T}(\eta_{\infty,T})=y^{\top}S_{0,n}y$, we obtain
    \begin{align*}
        &F_{n,T}(\eta_{\infty,T})-F_{\infty,T}(\eta_{\infty,T})\\
        &=-2f_{n}^{(0)}(X)^{\top}S_{0,n}y+y^{\top}S_{1,n}y-2f_{n}^{(0)}(X)^{\top}(S_{1,n}+S_{2,n})y\\
        &\qquad+f_{n}^{(0)}(X)^{\top}(S_{0,n}+S_{1,n}+S_{2,n})f_{n}^{(0)}(X)+y^{\top}S_{2,n}y\\
        &=(g_{F,T})^{\top}f_{n}^{(0)}(X)+y^{\top}S_{1,n}y+R_{F,n,T}
        .
    \end{align*}
    Since the definitions of $\mathcal{L}_{2T-1,n}$ and $\Delta_{n}$ give
    \[
        y^{\top}S_{1,n}y
        =y^{\top}E_{n}B_{T}^{2T-1}y-\frac{\eta_{\infty,T}}{m}\sum_{s=0}^{2T-2}y^{\top}KB_{T}^{s}E_{n}B_{T}^{2T-2-s}y
        =\left\langle Q_{F,T},E_{n}\right\rangle_{\mathrm{F}}
        ,
    \]
    we have
    \[
        F_{n,T}(\eta_{\infty,T})-F_{\infty,T}(\eta_{\infty,T})
        =(g_{F,T})^{\top}f_{n}^{(0)}(X)+\langle Q_{F,T},E_{n}\rangle_{\mathrm{F}}+R_{F,n,T}
        .
    \]

    We next bound the remainder. Lemma~\ref{lem:local-random-power-bound} implies
    \[
        \|PS_{0,n}P\|_{\mathrm{op}}
        =O_{p}(q_{*}^{2T-1})
        .
    \]
    Lemmas~\ref{lem:local-random-power-bound} and~\ref{lem:power-derivative} imply
    \[
        \|PS_{1,n}P\|_{\mathrm{op}}
        =O_{p}(Tq_{*}^{2T-2}\|E_{n}\|_{\mathrm{op}})
        .
    \]
    Lemma~\ref{lem:power-derivative} and $T\|E_{n}\|_{\mathrm{op}}=o_{p}(1)$ imply
    \[
        \|PS_{2,n}P\|_{\mathrm{op}}
        =O_{p}(T^{2}q_{*}^{2T-3}\|E_{n}\|_{\mathrm{op}}^{2})
        .
    \]
    Thus we have
    \[
        \|PS_{1,n}P\|_{\mathrm{op}}+\|PS_{2,n}P\|_{\mathrm{op}}
        =O_{p}(Tq_{*}^{2T-2}\|E_{n}\|_{\mathrm{op}})
    \]
    and
    \[
        \|PS_{0,n}P\|_{\mathrm{op}}+\|PS_{1,n}P\|_{\mathrm{op}}+\|PS_{2,n}P\|_{\mathrm{op}}
        =O_{p}(q_{*}^{2T-1})
        .
    \]
    By \eqref{eq:range-K-prediction-error}, we have $f_{n}^{(0)}(X)\stackrel{\mathrm{a.s.}}{=}Pf_{n}^{(0)}(X)$ and $S_{j,n}\stackrel{\mathrm{a.s.}}{=}PS_{j,n}P$ for $j=0,1,2$. Hence, the definition of $R_{F,n,T}$ and the preceding bounds give
    \begin{align*}
        &|R_{F,n,T}|\\
        &\le|2f_{n}^{(0)}(X)^{\top}(S_{1,n}+S_{2,n})y|+|f_{n}^{(0)}(X)^{\top}(S_{0,n}+S_{1,n}+S_{2,n})f_{n}^{(0)}(X)|+|y^{\top}S_{2,n}y|\\
        &=O_{p}\left(\|f_{n}^{(0)}(X)\|(\|PS_{1,n}P\|_{\mathrm{op}}+\|PS_{2,n}P\|_{\mathrm{op}})+\|f_{n}^{(0)}(X)\|^{2}\sum_{j=0}^{2}\|PS_{j,n}P\|_{\mathrm{op}}+\|PS_{2,n}P\|_{\mathrm{op}}\right)\\
        &=O_{p}(\|f_{n}^{(0)}(X)\|Tq_{*}^{2T-2}\|E_{n}\|_{\mathrm{op}}+\|f_{n}^{(0)}(X)\|^{2}q_{*}^{2T-1}+T^{2}q_{*}^{2T-3}\|E_{n}\|_{\mathrm{op}}^{2})\\
        &=O_{p}\left(\frac{T^{2}q_{*}^{2T-3}}{n}\right)
        ,
    \end{align*}
    where the last equality follows from $\|f_{n}^{(0)}(X)\|=O_{p}(n^{-1/2})$ and $\|E_{n}\|_{\mathrm{op}}=O_{p}(n^{-1/2})$ by Lemma~\ref{lem:joint-clt}. Consequently, we have
    \[
        \frac{\sqrt{n}|R_{F,n,T}|}{Tq_{*}^{2T-2}}
        =O_{p}\left(\frac{T}{q_{*}\sqrt{n}}\right)
        =o_{p}(1)
        .
    \]
\end{proof}

\begin{theorem}[Limit distribution of the optimal learning rate at growing $T$]\label{thm:b-limit-dist}
    Under the assumptions of Theorem~\ref{thm:a-b-c-summary}, the finite-width optimizer is interior with probability tending to one and we have
    \[
        \sqrt{n}(\eta_{n,T}-\eta_{\infty,T})
        \stackrel{d}{\longrightarrow}\Omega_{\eta,*}
        :=-\frac{2m}{(\lambda_{1}+\lambda_{r})^{2}}(u_{1}^{\top}\Xi u_{1}+u_{r}^{\top}\Xi u_{r})
        ,
    \]
    where $\Xi$ is defined in Lemma~\ref{lem:joint-clt}.
\end{theorem}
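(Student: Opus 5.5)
The proof mirrors the fixed-horizon argument of Theorem~\ref{thm:b-limit-dist-fixed-T}, with every quantity normalized by the horizon-dependent scale $s_{T}=Tq_{*}^{2T-2}$.

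\textbf{Step 1: interior optimizer and mean-value identity.} By Lemma~\ref{lem:preliminary-b-rate}, with probability tending to one, $\eta_{n,T}\in(\eta_{\ell},\eta_{u})$, $F_{n,T}(\eta_{n,T})=0$, and $|\eta_{n,T}-\eta_{\infty,T}|\le M/\sqrt{n}$. On this event the mean value theorem gives
\[
\sqrt{n}(\eta_{n,T}-\eta_{\infty,T})=-\frac{\sqrt{n}F_{n,T}(\eta_{\infty,T})/s_{T}}{F_{n,T}'(\tilde\eta_{n,T})/s_{T}},
\]
with $\tilde\eta_{n,T}$ between the two optimizers.

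\textbf{Step 2: the denominator.} Write $F_{n,T}'(\tilde\eta_{n,T})=F_{\infty,T}'(\eta_{\infty,T})+o_{p}(s_{T})$, using \eqref{eq:deterministic-derivative-local} and \eqref{eq:random-derivative-local}. The normalized denominator therefore has the same limit as $F_{\infty,T}'(\eta_{\infty,T})/s_{T}$. The proof of Corollary~\ref{cor:T-loss-rate} gives this limit explicitly: $(2T-1)/T\to2$, so
\[
\frac{F_{\infty,T}'(\eta_{\infty,T})}{s_{T}}\to-\frac{2}{m}\left(c_{1}\lambda_{1}^{2}a_{1}^{*}+c_{r}\lambda_{r}^{2}a_{r}^{*}\right)=-\frac{2h_{*}}{m}(\lambda_{1}+\lambda_{r}),
\]
where the last equality uses $c_{1}\lambda_{1}a_{1}^{*}=c_{r}\lambda_{r}a_{r}^{*}=h_{*}$. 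That endpoint balance, referenced as \eqref{eq:endpoint-balance-constant}, I would verify directly from the definitions: $a_{1}^{*}/a_{r}^{*}=\exp(-L_{*})=c_{r}\lambda_{r}/(c_{1}\lambda_{1})$. This limit is strictly negative.

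\textbf{Step 3: the numerator.} By Lemma~\ref{lem:score-linearization} and $F_{\infty,T}(\eta_{\infty,T})=0$,
\[
\sqrt{n}F_{n,T}(\eta_{\infty,T})/s_{T}=g_{F,T}^{\top}\sqrt{n}f_{n}^{(0)}(X)/s_{T}+\langle Q_{F,T},\sqrt{n}E_{n}\rangle_{\mathrm F}/s_{T}+o_{p}(1).
\]
Since $f_{n}^{(0)}(X)=Pf_{n}^{(0)}(X)$ and $E_{n}=PE_{n}P$ almost surely, by \eqref{eq:range-K-prediction-error}, $g_{F,T}$ and $Q_{F,T}$ may be replaced by $Pg_{F,T}$ and $PQ_{F,T}P$. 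Lemma~\ref{lem:frechet-coefficient-limits} then makes the first term $o(1)\cdot O_{p}(1)$. It also makes the second term equal to $\langle Q_{F,*},\sqrt{n}E_{n}\rangle_{\mathrm F}+o_{p}(1)$, since $\sqrt{n}E_{n}=O_{p}(1)$. Lemma~\ref{lem:joint-clt} and the continuous mapping theorem give convergence to
\[
\langle Q_{F,*},\Xi\rangle_{\mathrm F}=-\frac{2\eta_{*}h_{*}}{m}\left(u_{1}^{\top}\Xi u_{1}+u_{r}^{\top}\Xi u_{r}\right).
\]
Because $Q_{F,*}$ is a fixed matrix, no triangular-array issue arises.

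\textbf{Step 4: Slutsky.} Combining Steps 2 and 3 by Slutsky's lemma, the ratio converges to
\[
-\frac{\eta_{*}}{\lambda_{1}+\lambda_{r}}\left(u_{1}^{\top}\Xi u_{1}+u_{r}^{\top}\Xi u_{r}\right).
\]
Substituting $\eta_{*}=2m/(\lambda_{1}+\lambda_{r})$ gives exactly $\Omega_{\eta,*}$.

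\textbf{Main obstacle.} The delicate step is the uniform denominator control in Step 2: the derivative errors must be $o(s_{T})$ on a shrinking window. This is exactly where $T/\sqrt{n}\to0$ is used, since the errors are $O(T^{2}q_{*}^{2T-3}/\sqrt n)$, and that rate was already obtained in Lemma~\ref{lem:preliminary-b-rate}. The remaining care is the constant bookkeeping in Step 2, which I would double-check against the stated form of $\Omega_{\eta,*}$.
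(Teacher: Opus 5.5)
Your proposal is correct and follows the paper's proof essentially step for step. You normalize by $Tq_{*}^{2T-2}$, linearize the score via Lemma~\ref{lem:score-linearization}, kill the $g_{F,T}$ term and pass the $Q_{F,T}$ term to the limit via Lemmas~\ref{lem:frechet-coefficient-limits} and~\ref{lem:joint-clt}, control the denominator with \eqref{eq:deterministic-derivative-local}--\eqref{eq:random-derivative-local}, and conclude by the mean value theorem and Slutsky, with the constants matching $\Omega_{\eta,*}$.

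The only imprecision is in Step 1: Lemma~\ref{lem:preliminary-b-rate} gives $|\eta_{n,T}-\eta_{\infty,T}|\le M/\sqrt{n}$ with probability at least $1-\epsilon$ for an $\epsilon$-dependent $M$, not with probability tending to one for a fixed $M$. The $O_{p}(n^{-1/2})$ form is all your Step 2 actually needs, so this does not affect the argument.
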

\begin{proof}
    Note that by Corollary~\ref{cor:optimizer-characterization}, we have $F_{\infty,T}(\eta_{\infty,T})=0$. Thus, Lemma~\ref{lem:score-linearization} implies
    \[
        \frac{\sqrt{n}F_{n,T}(\eta_{\infty,T})}{Tq_{*}^{2T-2}}
        =\frac{\sqrt{n}[(g_{F,T})^{\top}f_{n}^{(0)}(X)+\langle Q_{F,T},E_{n}\rangle_{\mathrm{F}}]}{Tq_{*}^{2T-2}}+o_{p}(1)
        .
    \]
    By \eqref{eq:range-K-prediction-error} and Lemmas~\ref{lem:frechet-coefficient-limits} and~\ref{lem:joint-clt}, we have
    \[
        \left|\frac{\sqrt{n}(g_{F,T})^{\top}f_{n}^{(0)}(X)}{Tq_{*}^{2T-2}}\right|
        \stackrel{\mathrm{a.s.}}{=}\left|\frac{\sqrt{n}(Pg_{F,T})^{\top}f_{n}^{(0)}(X)}{Tq_{*}^{2T-2}}\right|
        \le\frac{\|Pg_{F,T}\|}{Tq_{*}^{2T-2}}\|\sqrt{n}f_{n}^{(0)}(X)\|
        =o_{p}(1)
        .
    \]
    Moreover, since $E_{n}=PE_{n}P$ by \eqref{eq:range-K-prediction-error}, we have by Lemmas~\ref{lem:frechet-coefficient-limits} and~\ref{lem:joint-clt} that
    \[
        \frac{\sqrt{n}\langle Q_{F,T},E_{n}\rangle_{\mathrm{F}}}{Tq_{*}^{2T-2}}
        =\left\langle\frac{PQ_{F,T}P}{Tq_{*}^{2T-2}},\sqrt{n}E_{n}\right\rangle_{\mathrm{F}}
        \stackrel{d}{\longrightarrow}\langle Q_{F,*},\Xi\rangle_{\mathrm{F}}
        =-\frac{2\eta_{*}h_{*}}{m}(u_{1}^{\top}\Xi u_{1}+u_{r}^{\top}\Xi u_{r})
        .
    \]
    Therefore we have
    \begin{equation}\label{eq:normalized-score-limit}
        \frac{\sqrt{n}F_{n,T}(\eta_{\infty,T})}{Tq_{*}^{2T-2}}
        =\frac{\sqrt{n}\langle Q_{F,T},E_{n}\rangle_{\mathrm{F}}}{Tq_{*}^{2T-2}}+o_{p}(1)
        \stackrel{d}{\longrightarrow}-\frac{2\eta_{*}h_{*}}{m}(u_{1}^{\top}\Xi u_{1}+u_{r}^{\top}\Xi u_{r})
        .
    \end{equation}

    By Proposition~\ref{prop:loss-derivatives}, we can write
    \[
        F_{\infty,T}'(\eta_{\infty,T})
        =-\frac{2T-1}{m}\sum_{j=1}^{r}\lambda_{j}^{2}\|P_{j}y\|^{2}r_{j,T}^{2T-2}
        .
    \]
    Thus, by Lemma~\ref{lem:endpoint-contractions}, we have
    \[
        \frac{F_{\infty,T}'(\eta_{\infty,T})}{Tq_{*}^{2T-2}}
        =-\frac{2T-1}{mT}\sum_{j=1}^{r}\lambda_{j}^{2}\|P_{j}y\|^{2}\frac{r_{j,T}^{2T-2}}{q_{*}^{2T-2}}
        \to-\frac{2}{m}(\lambda_{1}^{2}\|P_{1}y\|^{2}a_{1}^{*}+\lambda_{r}^{2}\|P_{r}y\|^{2}a_{r}^{*})
        .
    \]
    By Assumption~\ref{asmp:endpoint-support}, we have  $\|P_{1}y\|^{2}=c_{1}$ and $\|P_{r}y\|^{2}=c_{r}$. Therefore we have
    \begin{equation}\label{eq:normalized-denominator}
        \frac{F_{\infty,T}'(\eta_{\infty,T})}{Tq_{*}^{2T-2}}
        \to-\frac{2}{m}(c_{1}\lambda_{1}^{2}a_{1}^{*}+c_{r}\lambda_{r}^{2}a_{r}^{*})
        =-\frac{2h_{*}}{m}(\lambda_{1}+\lambda_{r})
        <0
        .
    \end{equation}

    By the mean value theorem, there exists a random $\tilde{\eta}_{n,T}$ between $\eta_{n,T}$ and $\eta_{\infty,T}$ such that
    \[
        F_{n,T}(\eta_{n,T})-F_{n,T}(\eta_{\infty,T})
        =F_{n,T}'(\tilde{\eta}_{n,T})(\eta_{n,T}-\eta_{\infty,T})
        .
    \]
    Since $F_{n,T}(\eta_{n,T})=0$ on an event with probability tending to one, we have on the same event that
    \begin{equation}\label{eq:optimizer-difference}
        -F_{n,T}(\eta_{\infty,T})
        =F_{n,T}'(\tilde{\eta}_{n,T})(\eta_{n,T}-\eta_{\infty,T})
        .
    \end{equation}
    By Lemma~\ref{lem:preliminary-b-rate}, we have $\tilde{\eta}_{n,T}-\eta_{\infty,T}=O_{p}(n^{-1/2})$. Hence \eqref{eq:deterministic-derivative-local} and \eqref{eq:random-derivative-local} in the proof of Lemma~\ref{lem:preliminary-b-rate} imply
    \begin{equation}\label{eq:denominator-difference}
        F_{n,T}'(\tilde{\eta}_{n,T})-F_{\infty,T}'(\eta_{\infty,T})
        =o_{p}(Tq_{*}^{2T-2})
        .
    \end{equation}
    By \eqref{eq:normalized-denominator}, this further yields
    \[
        \frac{F_{n,T}'(\tilde{\eta}_{n,T})}{Tq_{*}^{2T-2}}
        =-\frac{2h_{*}}{m}(\lambda_{1}+\lambda_{r})+o_{p}(1)
        ,
    \]
    thus $F_{n,T}'(\tilde{\eta}_{n,T})$ is negative with probability tending to one. Combining this with \eqref{eq:optimizer-difference}, we obtain, on an event with probability tending to one,
    \[
        \sqrt{n}(\eta_{n,T}-\eta_{\infty,T})
        =-\frac{\sqrt{n}F_{n,T}(\eta_{\infty,T})/Tq_{*}^{2T-2}}{F_{n,T}'(\tilde{\eta}_{n,T})/Tq_{*}^{2T-2}}
        .
    \]
    Therefore, \eqref{eq:normalized-score-limit}, \eqref{eq:normalized-denominator}, and \eqref{eq:denominator-difference} yield
    \[
        \sqrt{n}(\eta_{n,T}-\eta_{\infty,T})
        \stackrel{d}{\longrightarrow}-\frac{\eta_{*}}{\lambda_{1}+\lambda_{r}}(u_{1}^{\top}\Xi u_{1}+u_{r}^{\top}\Xi u_{r})
        =-\frac{2m}{(\lambda_{1}+\lambda_{r})^{2}}(u_{1}^{\top}\Xi u_{1}+u_{r}^{\top}\Xi u_{r})
        .
    \]
\end{proof}

\subsection{Fluctuations and weak convergence of the optimal loss}

\begin{lemma}[Loss linearization with a quadratic remainder]\label{lem:loss-linearization}
    Define
    \[
        R_{L,n,T}
        :=\frac{1}{2m}[y^{\top}\mathcal{R}_{2T,n}y-2f_{n}^{(0)}(X)^{\top}(\mathcal{L}_{2T,n}+\mathcal{R}_{2T,n})y+f_{n}^{(0)}(X)^{\top}B_{n}(\eta_{\infty,T})^{2T}f_{n}^{(0)}(X)]
        .
    \]
    Then we have
    \[
        \phi_{n,T}(\eta_{\infty,T})-\phi_{\infty,T}(\eta_{\infty,T})
        =(g_{L,T})^{\top}f_{n}^{(0)}(X)+\langle Q_{L,T},E_{n}\rangle_{\mathrm{F}}+R_{L,n,T}
        .
    \]
    Thus
    \[
        |R_{L,n,T}|
        =O_{p}\left(\frac{T^{2}q_{*}^{2T-2}}{n}\right)
        ,
    \]
    which implies
    \[
        \frac{\sqrt{n}R_{L,n,T}}{Tq_{*}^{2T-1}}
        =o_{p}(1)
        .
    \]
\end{lemma}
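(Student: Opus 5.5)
The argument mirrors the proof of Lemma~\ref{lem:score-linearization}. The first step is an exact algebraic identity. Write $B_{n}(\eta_{\infty,T})=B_{T}+\Delta_{n}$ and $B_{n}(\eta_{\infty,T})^{2T}=B_{T}^{2T}+\mathcal{L}_{2T,n}+\mathcal{R}_{2T,n}$, and substitute $\chi_{n}^{(0)}=-y+f_{n}^{(0)}(X)$ into $\phi_{n,T}(\eta_{\infty,T})=(2m)^{-1}(\chi_{n}^{(0)})^{\top}B_{n}(\eta_{\infty,T})^{2T}\chi_{n}^{(0)}$. All matrices involved are symmetric: $B_{n}^{2T}$ and $B_{T}^{2T}$ are symmetric, and $\mathcal{L}_{2T,n}$ is symmetric as a Fr\'echet derivative in a symmetric direction, by the same argument as in Lemma~\ref{lem:score-linearization}. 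Hence $\mathcal{R}_{2T,n}$ is symmetric too, and the cross terms combine. Subtracting $\phi_{\infty,T}(\eta_{\infty,T})=(2m)^{-1}y^{\top}B_{T}^{2T}y$ leaves the following pieces:
\begin{itemize}
\item the term $-(1/m)y^{\top}B_{T}^{2T}f_{n}^{(0)}(X)=g_{L,T}^{\top}f_{n}^{(0)}(X)$;
\item the term $(2m)^{-1}y^{\top}\mathcal{L}_{2T,n}y$, which by the definition of $\Delta_{n}$ equals $-\frac{\eta_{\infty,T}}{2m^{2}}\sum_{s}y^{\top}B_{T}^{s}E_{n}B_{T}^{2T-1-s}y=\langle Q_{L,T},E_{n}\rangle_{\mathrm{F}}$;
\item everything else, which is exactly $R_{L,n,T}$.
\end{itemize}
This proves the decomposition deterministically, with no limits taken.

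The second step bounds $R_{L,n,T}$ on the event $\mathcal{E}_{n}$ of Lemma~\ref{lem:local-random-power-bound}, whose probability tends to one. Using \eqref{eq:range-K-prediction-error}, we have $f_{n}^{(0)}(X)=Pf_{n}^{(0)}(X)$ a.s. and $P$ commutes with all the relevant matrices, so every quadratic form may be evaluated with $P\cdot P$ inserted. Lemma~\ref{lem:power-derivative} with $k=2T$ gives
\[
\|P\mathcal{R}_{2T,n}P\|_{\mathrm{op}}=O_{p}(T^{2}q_{*}^{2T-2}\|E_{n}\|_{\mathrm{op}}^{2})=O_{p}(T^{2}q_{*}^{2T-2}/n)
\]
and
\[
\|P\mathcal{L}_{2T,n}P\|_{\mathrm{op}}=O_{p}(Tq_{*}^{2T-1}n^{-1/2}).
\]
Lemma~\ref{lem:local-random-power-bound} gives $\|PB_{n}(\eta_{\infty,T})^{2T}P\|_{\mathrm{op}}=O_{p}(q_{*}^{2T})$. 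Combining these with $\|f_{n}^{(0)}(X)\|=O_{p}(n^{-1/2})$ from Lemma~\ref{lem:joint-clt}, the three terms of $R_{L,n,T}$ are respectively
\[
O_{p}(T^{2}q_{*}^{2T-2}/n),\qquad O_{p}(Tq_{*}^{2T-1}/n)+O_{p}(T^{2}q_{*}^{2T-2}n^{-3/2}),\qquad O_{p}(q_{*}^{2T}/n).
\]
All of these are $O_{p}(T^{2}q_{*}^{2T-2}/n)$ because $q_{*}<1$ and $T\ge1$.

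Finally, dividing by $Tq_{*}^{2T-1}/\sqrt{n}$ gives $\sqrt{n}R_{L,n,T}/(Tq_{*}^{2T-1})=O_{p}(T/(q_{*}\sqrt{n}))=o_{p}(1)$, since $T/\sqrt{n}\to0$ and $q_{*}>0$ is fixed. The only delicate point is the first step: the symmetry of $\mathcal{L}_{2T,n}$ must be justified so that the $f^{\top}\cdot y$ cross terms combine as stated, and the sum defining $\mathcal{L}_{2T,n}$ must be matched against $Q_{L,T}$ under the Frobenius pairing, using the symmetry of $E_{n}$ and the reversal $s\mapsto 2T-1-s$. The bounds themselves follow directly from the preceding lemmas.
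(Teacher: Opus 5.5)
Your proposal is correct and follows the paper's proof essentially step for step: you expand $\chi_{n}^{(0)}=-y+f_{n}^{(0)}(X)$ against $B_{T}^{2T}+\mathcal{L}_{2T,n}+\mathcal{R}_{2T,n}$, identify $\frac{1}{2m}y^{\top}\mathcal{L}_{2T,n}y$ with $\langle Q_{L,T},E_{n}\rangle_{\mathrm{F}}$, and bound the three remainder terms via Lemmas~\ref{lem:power-derivative}, \ref{lem:local-random-power-bound} and~\ref{lem:joint-clt}. One minor point: the step you flag as delicate is not, because combining the cross terms uses only the symmetry of $B_{n}(\eta_{\infty,T})^{2T}$, and the symmetry of $\mathcal{L}_{2T,n}$ is in any case immediate from the reindexing $s\mapsto 2T-1-s$.
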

\begin{proof}
    Since $B_{n}(\eta_{\infty,T})^{2T}=B_{T}^{2T}+\mathcal{L}_{2T,n}+\mathcal{R}_{2T,n}$, expanding $\chi_{n}^{(0)}=-y+f_{n}^{(0)}(X)$ in the quadratic loss gives
    \begin{align*}
        \phi_{n,T}(\eta_{\infty,T})-\phi_{\infty,T}(\eta_{\infty,T})
        &=\frac{1}{2m}(\chi_{n}^{(0)})^{\top}B_{n}(\eta_{\infty,T})^{2T}\chi_{n}^{(0)}-\frac{1}{2m}y^{\top}B_{T}^{2T}y\\
        &=-\frac{1}{m}f_{n}^{(0)}(X)^{\top}B_{T}^{2T}y+\frac{1}{2m}y^{\top}\mathcal{L}_{2T,n}y+R_{L,n,T}\\
        &=(g_{L,T})^{\top}f_{n}^{(0)}(X)+\frac{1}{2m}y^{\top}\mathcal{L}_{2T,n}y+R_{L,n,T}
        .
    \end{align*}
    By the definition of $\mathcal{L}_{2T,n}$, we have
    \[
        \frac{1}{2m}y^{\top}\mathcal{L}_{2T,n}y
        =-\frac{\eta_{\infty,T}}{2m^{2}}\sum_{s=0}^{2T-1}y^{\top}B_{T}^{s}E_{n}B_{T}^{2T-1-s}y
        =\langle Q_{L,T},E_{n}\rangle_{\mathrm{F}}
        ,
    \]
    which proves
    \[
        \phi_{n,T}(\eta_{\infty,T})-\phi_{\infty,T}(\eta_{\infty,T})
        =(g_{L,T})^{\top}f_{n}^{(0)}(X)+\langle Q_{L,T},E_{n}\rangle_{\mathrm{F}}+R_{L,n,T}
        .
    \]
    By Lemma~\ref{lem:power-derivative}, we have 
    \[
        \|P\mathcal{L}_{2T,n}P\|_{\mathrm{op}}
        =O_{p}(Tq_{*}^{2T-1}\|E_{n}\|_{\mathrm{op}})
        ,\qquad
        \|P\mathcal{R}_{2T,n}P\|_{\mathrm{op}}
        =O_{p}(T^{2}q_{*}^{2T-2}\|E_{n}\|_{\mathrm{op}}^{2})
        .
    \]
    By Lemma~\ref{lem:local-random-power-bound}, we also have
    \[
        \|PB_{n}(\eta_{\infty,T})^{2T}P\|_{\mathrm{op}}
        =O_{p}(q_{*}^{2T})
        .
    \]
    By \eqref{eq:range-K-prediction-error}, we have 
    \[
        f_{n}^{(0)}(X)
        \stackrel{\mathrm{a.s.}}{=}Pf_{n}^{(0)}(X)
        ,\qquad
        \mathcal{L}_{2T,n}
        \stackrel{\mathrm{a.s.}}{=}P\mathcal{L}_{2T,n}P
        ,\qquad
        \mathcal{R}_{2T,n}
        \stackrel{\mathrm{a.s.}}{=}P\mathcal{R}_{2T,n}P
        .
    \]
    Lemma~\ref{lem:joint-clt} then gives
    \begin{align*}
        |R_{L,n,T}|
        &\lesssim|y^{\top}\mathcal{R}_{2T,n}y|+|2f_{n}^{(0)}(X)^{\top}(\mathcal{L}_{2T,n}+\mathcal{R}_{2T,n})y|+|f_{n}^{(0)}(X)^{\top}B_{n}(\eta_{\infty,T})^{2T}f_{n}^{(0)}(X)|\\
        &=O_{p}(T^{2}q_{*}^{2T-2}\|E_{n}\|_{\mathrm{op}}^{2}+Tq_{*}^{2T-1}\|E_{n}\|_{\mathrm{op}}\|f_{n}^{(0)}(X)\|+q_{*}^{2T}\|f_{n}^{(0)}(X)\|^{2})\\
        &=O_{p}\left(\frac{T^{2}q_{*}^{2T-2}}{n}\right)
        ,
    \end{align*}
    and
    \[
        \frac{\sqrt{n}|R_{L,n,T}|}{Tq_{*}^{2T-1}}
        =O_{p}\left(\frac{T}{q_{*}\sqrt{n}}\right)
        =o_{p}(1)
        .
    \]
\end{proof}

\begin{theorem}[Limit distribution of the optimal loss at growing $T$]\label{thm:a-limit-dist}
    Under the assumptions of Theorem~\ref{thm:a-b-c-summary}, we have
    \[
        \frac{\sqrt{n}}{Tq_{*}^{2T-1}}(\phi_{n,T}(\eta_{n,T})-\phi_{\infty,T}(\eta_{\infty,T}))
        \stackrel{d}{\longrightarrow}\Omega_{L,*}
        =\langle Q_{L,*},\Xi\rangle_{\mathrm{F}}
        .
    \]
\end{theorem}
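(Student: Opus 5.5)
The proof splits the normalized optimized-loss gap into two pieces:
\[
\phi_{n,T}(\eta_{n,T})-\phi_{\infty,T}(\eta_{\infty,T})
=\bigl[\phi_{n,T}(\eta_{\infty,T})-\phi_{\infty,T}(\eta_{\infty,T})\bigr]-R_{n,T},
\qquad R_{n,T}:=\phi_{n,T}(\eta_{\infty,T})-\phi_{n,T}(\eta_{n,T}).
\]
The first bracket carries the Gaussian limit. The reoptimization gain $R_{n,T}\ge0$ will be shown to be negligible at the scale $Tq_{*}^{2T-1}/\sqrt{n}$. Slutsky's lemma then combines the two.

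For the first bracket, we use Lemma~\ref{lem:loss-linearization}, which gives $(g_{L,T})^{\top}f_{n}^{(0)}(X)+\langle Q_{L,T},E_{n}\rangle_{\mathrm{F}}$ up to a remainder that is $o_{p}(Tq_{*}^{2T-1}/\sqrt{n})$.

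By \eqref{eq:range-K-prediction-error} we may insert $P$, so the first term is bounded by $\|Pg_{L,T}\|\,\|\sqrt{n}f_{n}^{(0)}(X)\|$. After normalization this is $o_{p}(1)$, using Lemma~\ref{lem:frechet-coefficient-limits} together with the tightness from Lemma~\ref{lem:joint-clt}.

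For the second term, $E_{n}=PE_{n}P$ a.s., so it equals $\langle PQ_{L,T}P/(Tq_{*}^{2T-1}),\sqrt{n}E_{n}\rangle_{\mathrm{F}}$. The coefficient matrix converges to $Q_{L,*}$ in Frobenius norm by Lemma~\ref{lem:frechet-coefficient-limits}, and $\sqrt{n}E_{n}\stackrel{d}{\to}\Xi$ by Lemma~\ref{lem:joint-clt}. The pairing of a deterministic convergent matrix with a weakly convergent one converges to $\langle Q_{L,*},\Xi\rangle_{\mathrm{F}}$; the difference term is bounded by $o(1)\cdot O_{p}(1)$. This reproduces \eqref{eq:loss-at-limit-eta-2}.

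For $R_{n,T}$, we work on the event of Lemma~\ref{lem:preliminary-b-rate}, where $\eta_{n,T}$ is interior, $\phi_{n,T}'(\eta_{n,T})=0$ and $|\eta_{n,T}-\eta_{\infty,T}|\le M/\sqrt{n}\le C/T$. A second-order Taylor expansion around $\eta_{n,T}$ gives $R_{n,T}=\tfrac12\phi_{n,T}''(\bar\eta_{n,T})(\eta_{n,T}-\eta_{\infty,T})^{2}$. By Proposition~\ref{prop:loss-derivatives},
\[
\phi_{n,T}''(\eta)=\frac{T}{m^{2}}\bigl(-F_{n,T}'(\eta)\bigr).
\]
Lemma~\ref{lem:score-uniform-growing-T}, combined with the deterministic bound $|F_{\infty,T}'|\lesssim Tq_{*}^{2T-2}$ on the $C/T$-window (Lemma~\ref{lem:local-random-power-bound}), gives $\sup_{|\eta-\eta_{\infty,T}|\le C/T}|\phi_{n,T}''(\eta)|=O_{p}(T^{2}q_{*}^{2T-2})$. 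It follows that $R_{n,T}=O_{p}(T^{2}q_{*}^{2T-2}/n)$. Dividing by $Tq_{*}^{2T-1}/\sqrt{n}$ leaves $O_{p}(T/(q_{*}\sqrt{n}))=o_{p}(1)$, since $T/\sqrt{n}\to0$.

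The main point needing care is the uniform curvature bound over the random window. The intermediate point $\bar\eta_{n,T}$ is random, so the uniform-in-$\eta$ control of Lemma~\ref{lem:score-uniform-growing-T} is essential rather than a pointwise estimate. One must also check that the localization window $M/\sqrt{n}$ sits inside $C/T$. Everything else is Slutsky's lemma, together with the fact that events of probability tending to one do not affect weak limits.
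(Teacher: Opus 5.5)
Your proposal is correct and follows the paper's proof essentially step for step. You linearize at $\eta_{\infty,T}$ via Lemma~\ref{lem:loss-linearization}, discard the $g_{L,T}$ term and pass the $Q_{L,T}$ term to its limit via Lemmas~\ref{lem:frechet-coefficient-limits} and~\ref{lem:joint-clt}, and then show the reoptimization gain is $O_{p}(T^{2}q_{*}^{2T-2}/n)$ by a second-order Taylor expansion at the interior optimizer. The only difference is cosmetic: you get $\sup|\phi_{n,T}''|=O_{p}(T^{2}q_{*}^{2T-2})$ from $\phi_{n,T}''=-(T/m^{2})F_{n,T}'$ and Lemma~\ref{lem:score-uniform-growing-T}, whereas the paper bounds \eqref{eq:second-derivative} directly with the projected power bound of Lemma~\ref{lem:local-random-power-bound}; both routes work.
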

\begin{proof}
    Lemma~\ref{lem:loss-linearization} implies
    \begin{align*}
        \frac{\sqrt{n}}{Tq_{*}^{2T-1}}(\phi_{n,T}(\eta_{\infty,T})-\phi_{\infty,T}(\eta_{\infty,T}))
        =\frac{\sqrt{n}}{Tq_{*}^{2T-1}}[(g_{L,T})^{\top}f_{n}^{(0)}(X)+\langle Q_{L,T},E_{n}\rangle_{\mathrm{F}}]+o_{p}(1)
        .
    \end{align*}
    By \eqref{eq:range-K-prediction-error} and Lemmas~\ref{lem:frechet-coefficient-limits} and~\ref{lem:joint-clt}, we have
    \[
        \left|\frac{\sqrt{n}(g_{L,T})^{\top}f_{n}^{(0)}(X)}{Tq_{*}^{2T-1}}\right|
        \stackrel{\mathrm{a.s.}}{=}\left|\frac{\sqrt{n}(Pg_{L,T})^{\top}f_{n}^{(0)}(X)}{Tq_{*}^{2T-1}}\right|
        \le\frac{\|Pg_{L,T}\|}{Tq_{*}^{2T-1}}\|\sqrt{n}f_{n}^{(0)}(X)\|
        =o_{p}(1)
        .
    \]
    Moreover, since $E_{n}=PE_{n}P$ by \eqref{eq:range-K-prediction-error}, we have by Lemma~\ref{lem:frechet-coefficient-limits} that
    \[
        \frac{\sqrt{n}\langle Q_{L,T},E_{n}\rangle_{\mathrm{F}}}{Tq_{*}^{2T-1}}
        =\left\langle\frac{PQ_{L,T}P}{Tq_{*}^{2T-1}},\sqrt{n}E_{n}\right\rangle_{\mathrm{F}}
        \stackrel{d}{\longrightarrow}\langle Q_{L,*},\Xi\rangle_{\mathrm{F}}
        .
    \]
    Thus we have
    \begin{equation}\label{eq:loss-at-limit-eta}
        \frac{\sqrt{n}}{Tq_{*}^{2T-1}}(\phi_{n,T}(\eta_{\infty,T})-\phi_{\infty,T}(\eta_{\infty,T}))
        \stackrel{d}{\longrightarrow}\langle Q_{L,*},\Xi\rangle_{\mathrm{F}}
        .
    \end{equation}

    It remains to replace $\eta_{\infty,T}$ by $\eta_{n,T}$ in the finite-width loss. By Taylor's theorem, there exists a random $\bar{\eta}_{n,T}$ between $\eta_{\infty,T}$ and $\eta_{n,T}$ such that
    \[
        \phi_{n,T}(\eta_{\infty,T})-\phi_{n,T}(\eta_{n,T})
        =\phi_{n,T}'(\eta_{n,T})(\eta_{\infty,T}-\eta_{n,T})+\frac{1}{2}\phi_{n,T}''(\bar{\eta}_{n,T})(\eta_{\infty,T}-\eta_{n,T})^{2}
        .
    \]
    By Lemma~\ref{lem:preliminary-b-rate}, with probability tending to one, we have $\eta_{n,T}\in(\eta_{\ell},\eta_{u})$ and $F_{n,T}(\eta_{n,T})=0$. Hence, on the same event, Proposition~\ref{prop:loss-derivatives} gives $\phi_{n,T}'(\eta_{n,T})=0$. Therefore, with probability tending to one, we have
    \begin{equation}\label{eq:loss-difference}
        \phi_{n,T}(\eta_{\infty,T})-\phi_{n,T}(\eta_{n,T})
        =\frac{1}{2}\phi_{n,T}''(\bar{\eta}_{n,T})(\eta_{\infty,T}-\eta_{n,T})^{2}
        .
    \end{equation}
    By Lemma~\ref{lem:preliminary-b-rate}, we have $|\bar{\eta}_{n,T}-\eta_{\infty,T}|=o_{p}(T^{-1})$. Moreover, \eqref{eq:second-derivative} and Lemma~\ref{lem:kernel-H_n-A_n-K} yield
    \begin{align*}
        &\sup_{|\eta-\eta_{\infty,T}|\le C/T}|\phi_{n,T}''(\eta)|
        =\frac{T(2T-1)}{m^{3}}\sup_{|\eta-\eta_{\infty,T}|\le C/T}|(\chi_{n}^{(0)})^{\top}A_{n}^{2}B_{n}(\eta)^{2T-2}\chi_{n}^{(0)}|\\
        &\stackrel{\mathrm{a.s.}}{=}\frac{T(2T-1)}{m^{3}}\sup_{|\eta-\eta_{\infty,T}|\le C/T}|(P\chi_{n}^{(0)})^{\top}(PA_{n}^{2}P)(PB_{n}(\eta)^{2T-2}P)(P\chi_{n}^{(0)})|\\
        &\le\frac{T(2T-1)}{m^{3}}\|P\chi_{n}^{(0)}\|^{2}\|A_{n}\|_{\mathrm{op}}^{2}\sup_{|\eta-\eta_{\infty,T}|\le C/T}\|PB_{n}(\eta)^{2T-2}P\|_{\mathrm{op}}
        .
    \end{align*}
    Here, $\|P\chi_{n}^{(0)}\|=O_{p}(1)$ and $\|A_{n}\|_{\mathrm{op}}=O_{p}(1)$ hold by Lemma~\ref{lem:joint-clt}, while Lemma~\ref{lem:local-random-power-bound} implies $\sup_{|\eta-\eta_{\infty,T}|\le C/T}\|PB_{n}(\eta)^{2T-2}P\|_{\mathrm{op}}=O_{p}(q_{*}^{2T-2})$. Thus we have
    \[
        \sup_{|\eta-\eta_{\infty,T}|\le C/T}|\phi_{n,T}''(\eta)|
        =O_{p}(T^{2}q_{*}^{2T-2})
        ,
    \]
    which implies $\phi_{n,T}''(\bar{\eta}_{n,T})=O_{p}(T^{2}q_{*}^{2T-2})$. Equation~\eqref{eq:loss-difference} and Lemma~\ref{lem:preliminary-b-rate} imply that
    \[
        \frac{\sqrt{n}}{Tq_{*}^{2T-1}}(\phi_{n,T}(\eta_{n,T})-\phi_{n,T}(\eta_{\infty,T}))
        =O_{p}\left(\frac{T}{q_{*}\sqrt{n}}\right)
        =o_{p}(1)
        .
    \]
    Therefore, by \eqref{eq:loss-at-limit-eta} and Slutsky's theorem, we obtain
    \[
        \frac{\sqrt{n}}{Tq_{*}^{2T-1}}(\phi_{n,T}(\eta_{n,T})-\phi_{\infty,T}(\eta_{\infty,T}))
        \stackrel{d}{\longrightarrow}\langle Q_{L,*},\Xi\rangle_{\mathrm{F}}
        .
    \]
\end{proof}

\subsection{Convergence rates of the optimal loss and the optimal learning rate}

\begin{lemma}[Quadratic expansion of the transfer gap]\label{lem:transfer-quadratic}
    Under the assumptions of Theorem~\ref{thm:a-b-c-summary}, we have
    \[
        c_{n,T}
        =\frac{\phi_{\infty,T}''(\eta_{\infty,T})}{2}(\eta_{n,T}-\eta_{\infty,T})^{2}(1+o_{p}(1))
        .
    \]
\end{lemma}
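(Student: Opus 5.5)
We follow the fixed-horizon argument of Lemma~\ref{lem:transfer-quadratic-fixed-T}, but control the cubic remainder relative to the shrinking curvature. Since $\eta_{\infty,T}$ is the unique global minimizer of $\phi_{\infty,T}$ and is interior (Proposition~\ref{prop:finite-T-correction} places it within $O(T^{-1})$ of $\eta_{*}\in(\eta_{\ell},\eta_{u})$), we have $\phi_{\infty,T}'(\eta_{\infty,T})=0$. Taylor's theorem with Lagrange remainder gives a random $\eta_{n,T}^{\dagger}$ between $\eta_{n,T}$ and $\eta_{\infty,T}$ such that
\[
    c_{n,T}
    =\frac{\phi_{\infty,T}''(\eta_{\infty,T})}{2}(\eta_{n,T}-\eta_{\infty,T})^{2}
    +\frac{\phi_{\infty,T}'''(\eta_{n,T}^{\dagger})}{6}(\eta_{n,T}-\eta_{\infty,T})^{3}
    .
\]
It therefore suffices to show that
\[
    \frac{|\phi_{\infty,T}'''(\eta_{n,T}^{\dagger})|\,|\eta_{n,T}-\eta_{\infty,T}|}{\phi_{\infty,T}''(\eta_{\infty,T})}
    =o_{p}(1)
    .
\]

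For the third derivative, we use $\phi_{\infty,T}'=-(T/m^{2})F_{\infty,T}$ from Proposition~\ref{prop:loss-derivatives}, which gives $\phi_{\infty,T}'''=-(T/m^{2})F_{\infty,T}''$. The proof of Lemma~\ref{lem:preliminary-b-rate} already shows
\[
    \sup_{|\eta-\eta_{\infty,T}|\le C/T}|F_{\infty,T}''(\eta)|\lesssim T^{2}q_{*}^{2T-3},
\]
via the explicit formula $F_{\infty,T}''(\eta)=\frac{(2T-1)(2T-2)}{m^{2}}y^{\top}K^{3}B_{\infty}(\eta)^{2T-3}y$ together with the projected power bound of Lemma~\ref{lem:local-random-power-bound}. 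Lemma~\ref{lem:preliminary-b-rate} gives $|\eta_{n,T}-\eta_{\infty,T}|=O_{p}(n^{-1/2})=o_{p}(T^{-1})$, so with probability tending to one $\eta_{n,T}^{\dagger}$ lies in the $C/T$-neighbourhood. On that event,
\[
    |\phi_{\infty,T}'''(\eta_{n,T}^{\dagger})|\lesssim T^{3}q_{*}^{2T-3}.
\]

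By Corollary~\ref{cor:T-loss-rate}, $\phi_{\infty,T}''(\eta_{\infty,T})=\Theta(T^{2}q_{*}^{2T-2})$, which is strictly positive for large $T$. The ratio above is therefore
\[
    O_{p}\!\left(\frac{T^{3}q_{*}^{2T-3}\,n^{-1/2}}{T^{2}q_{*}^{2T-2}}\right)
    =O_{p}\!\left(\frac{T}{q_{*}\sqrt{n}}\right)
    =o_{p}(1)
\]
under $T/\sqrt{n}\to0$. Factoring out the quadratic term then yields the claimed $(1+o_{p}(1))$ form.

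The main point needing care is the uniformity of the third-derivative bound over a window of width $C/T$ around a \emph{moving} center, since powers of order $2T$ of contractions slightly above $q_{*}$ could blow up. This is handled by the estimate $(q_{*}+C_{1}/T)^{k}\lesssim q_{*}^{k}$ for $k\le 2T$ in Lemma~\ref{lem:local-random-power-bound}, together with the localization $o_{p}(T^{-1})$ that keeps the intermediate point inside that window. Everything else is bookkeeping with $O_{p}$ bounds on events of probability tending to one.
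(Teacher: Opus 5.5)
Your proposal is correct and follows essentially the same route as the paper's proof. Both use a Lagrange-remainder Taylor expansion at $\eta_{\infty,T}$ and bound $\phi_{\infty,T}'''=-(T/m^{2})F_{\infty,T}''$ by $O(T^{3}q_{*}^{2T-3})$ on the $C/T$-window, which the $o_{p}(T^{-1})$ localization makes applicable. Dividing by the curvature $\Theta(T^{2}q_{*}^{2T-2})$ then leaves a relative cubic remainder of order $O_{p}(T/(q_{*}\sqrt{n}))=o_{p}(1)$.
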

\begin{proof}
    Recall that $\eta_{\infty,T}\in(\eta_{\ell},\eta_{u})$ for sufficiently large $T$. Thus Corollary~\ref{cor:optimizer-characterization} implies  $\phi_{\infty,T}'(\eta_{\infty,T})=0$. By Taylor's theorem, there exists a random $\eta_{n,T}^{\dagger}$ between $\eta_{n,T}$ and $\eta_{\infty,T}$ such that
    \[
        c_{n,T}
         =\phi_{\infty,T}(\eta_{n,T})-\phi_{\infty,T}(\eta_{\infty,T})
        =\frac{\phi_{\infty,T}''(\eta_{\infty,T})}{2}(\eta_{n,T}-\eta_{\infty,T})^{2}+\frac{1}{6}\phi_{\infty,T}'''(\eta_{n,T}^{\dagger})(\eta_{n,T}-\eta_{\infty,T})^{3}
        .
    \]
    Differentiating \eqref{eq:second-derivative} gives
    \[
        \phi_{\infty,T}'''(\eta)
        =-\frac{T(2T-1)(2T-2)}{m^{4}}
        y^{\top}K^{3}B_{\infty}(\eta)^{2T-3}y
        .
    \]
    Lemma~\ref{lem:preliminary-b-rate} gives $|\eta_{n,T}^{\dagger}-\eta_{\infty,T}|=O_{p}(n^{-1/2})=o_{p}(T^{-1})$. Therefore, with probability tending to one, $\eta_{n,T}^{\dagger}$ belongs to the $C/T$-neighborhood of $\eta_{\infty,T}$ in Lemma~\ref{lem:local-random-power-bound}. Thus, Lemma~\ref{lem:local-random-power-bound} yields
    \[
        |\phi_{\infty,T}'''(\eta_{n,T}^{\dagger})|
        =O_{p}(T^{3}q_{*}^{2T-3})
        .
    \]
    Together with \eqref{eq:finite-loss-curvature} and Lemma~\ref{lem:preliminary-b-rate}, we have
    \[
        \left|\frac{\phi_{\infty,T}'''(\eta_{n,T}^{\dagger})(\eta_{n,T}-\eta_{\infty,T})/6}{\phi_{\infty,T}''(\eta_{\infty,T})/2}\right|
        =O_{p}\left(\frac{T^{3}q_{*}^{2T-3}}{T^{2}q_{*}^{2T-2}\sqrt{n}}\right)
        =O_{p}\left(\frac{T}{q_{*}\sqrt{n}}\right)
        =o_{p}(1)
        ,
    \]
    which implies
    \[
        c_{n,T}
        =\frac{\phi_{\infty,T}''(\eta_{\infty,T})}{2}(\eta_{n,T}-\eta_{\infty,T})^{2}(1+o_{p}(1))
        .
    \]
\end{proof}

\begin{theorem}[Rates of $a_{n,T},b_{n,T}$, and $c_{n,T}$]\label{thm:a-b-c-rate}
    Under the assumptions of Theorem~\ref{thm:a-b-c-summary}, we have 
    \begin{align*}
        a_{n,T}
        =\Theta_{p}\left(\frac{Tq_{*}^{2T-1}}{\sqrt{n}}\right)
        ,\qquad
        b_{n,T}
        =\Theta_{p}(n^{-1/2})
        ,\qquad
        c_{n,T}
        =\Theta_{p}\left(\frac{T^{2}q_{*}^{2T-2}}{n}\right)
        .
    \end{align*}
\end{theorem}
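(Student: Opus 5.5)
The plan is to read off all three rates from the limit theorems already established, Theorems~\ref{thm:b-limit-dist} and~\ref{thm:a-limit-dist}, together with Lemma~\ref{lem:transfer-quadratic} and Corollary~\ref{cor:T-loss-rate}. The general principle is simple. If $X_{n}/s_{n}\stackrel{d}{\longrightarrow}W$ with $\mathrm{P}(W=0)=0$, then $|X_{n}|=\Theta_{p}(s_{n})$: the upper bound is tightness, and the lower bound follows because $s_{n}/|X_{n}|\stackrel{d}{\longrightarrow}1/|W|$ by the continuous mapping theorem. So the work reduces to checking that the limits $\Omega_{\eta,*}$ and $\Omega_{L,*}$ are nondegenerate centered Gaussians.

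For $b_{n,T}$, Theorem~\ref{thm:b-limit-dist} gives $\sqrt{n}(\eta_{n,T}-\eta_{\infty,T})\stackrel{d}{\longrightarrow}\Omega_{\eta,*}$, which is a nonzero multiple of $\ell_{\eta}(\Xi)=u_{1}^{\top}\Xi u_{1}+u_{r}^{\top}\Xi u_{r}$. Write $\Xi=\alpha K+XGX^{\top}$. The $\alpha$-part of $\ell_{\eta}(\Xi)$ is $\alpha(\lambda_{1}+\lambda_{r})$. This is $\mathcal{N}(0,2(\lambda_{1}+\lambda_{r})^{2})$ and independent of $G$, so $\operatorname{Var}(\ell_{\eta}(\Xi))>0$, and hence $b_{n,T}=\Theta_{p}(n^{-1/2})$.

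For $a_{n,T}$, Theorem~\ref{thm:a-limit-dist} gives the limit $\Omega_{L,*}=m^{-2}\eta_{*}(c_{1}a_{1}^{*}u_{1}^{\top}\Xi u_{1}-c_{r}a_{r}^{*}u_{r}^{\top}\Xi u_{r})$ at scale $Tq_{*}^{2T-1}/\sqrt{n}$. Here the $\alpha$-contribution is $\alpha m^{-2}\eta_{*}(c_{1}a_{1}^{*}\lambda_{1}-c_{r}a_{r}^{*}\lambda_{r})$, and this vanishes by the balance identity $c_{1}\lambda_{1}a_{1}^{*}=c_{r}\lambda_{r}a_{r}^{*}$. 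So nondegeneracy has to come from the $G$-part. This is the main obstacle.

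To handle it, set $v_{j}=X^{\top}u_{j}$. Then $u_{j}^{\top}XGX^{\top}u_{j}=v_{j}^{\top}Gv_{j}$, and the covariance structure of $G$ gives
$$\operatorname{Var}(v^{\top}Gv)=2d^{-2}\|v\|^{4},\qquad \operatorname{Cov}(v_{1}^{\top}Gv_{1},v_{r}^{\top}Gv_{r})=2d^{-2}(v_{1}^{\top}v_{r})^{2}.$$
Since $v_{1}^{\top}v_{r}=u_{1}^{\top}XX^{\top}u_{r}=d\lambda_{r}u_{1}^{\top}u_{r}=0$, the two terms are independent. Each has positive variance because $\|v_{j}\|^{2}=d\lambda_{j}>0$, and the coefficients $c_{j}a_{j}^{*}$ are positive by Assumption~\ref{asmp:endpoint-support}. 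Therefore $\operatorname{Var}(\Omega_{L,*})>0$, and $a_{n,T}=\Theta_{p}(Tq_{*}^{2T-1}/\sqrt{n})$.

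For $c_{n,T}$, Lemma~\ref{lem:transfer-quadratic} gives $c_{n,T}=\tfrac12\phi_{\infty,T}''(\eta_{\infty,T})b_{n,T}^{2}(1+o_{p}(1))$. Corollary~\ref{cor:T-loss-rate} gives $\phi_{\infty,T}''(\eta_{\infty,T})=\Theta(T^{2}q_{*}^{2T-2})$. Combining these with $b_{n,T}^{2}=\Theta_{p}(n^{-1})$ from the first step yields $c_{n,T}=\Theta_{p}(T^{2}q_{*}^{2T-2}/n)$. Since the factor $1+o_{p}(1)$ is bounded away from zero and infinity with probability tending to one, both the upper and lower bounds carry through.
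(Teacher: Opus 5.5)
Your proposal is correct and follows the paper's proof essentially step by step. You get nondegeneracy of $\Omega_{\eta,*}$ from the independent $\alpha K$ component, note that the $\alpha$-part of $\Omega_{L,*}$ vanishes by the balance identity $c_{1}\lambda_{1}a_{1}^{*}=c_{r}\lambda_{r}a_{r}^{*}$, show the $G$-part has positive variance, and then combine Lemma~\ref{lem:transfer-quadratic} with Corollary~\ref{cor:T-loss-rate} for $c_{n,T}$; the only cosmetic difference is that you split the $G$-part into the two independent pieces $v_{j}^{\top}Gv_{j}$, whereas the paper computes $2d^{-2}\|X^{\top}Q_{L,*}X\|_{\mathrm{F}}^{2}=2\operatorname{tr}(Q_{L,*}KQ_{L,*}K)$ directly, and both yield the variance $2\eta_{*}^{2}m^{-4}[(c_{1}a_{1}^{*}\lambda_{1})^{2}+(c_{r}a_{r}^{*}\lambda_{r})^{2}]$.
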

\begin{proof}
    We first show that the limit distribution of Theorem~\ref{thm:b-limit-dist} is nondegenerate. Recall from Lemma~\ref{lem:joint-clt} that $\Xi=\alpha K+XGX^{\top}$, where $\alpha\sim\mathcal{N}(0,2)$ is independent of $G$. The contribution of $\alpha K$ to $\Omega_{\eta,*}$ is
    \[
        -\frac{2m}{(\lambda_{1}+\lambda_{r})^{2}}\alpha(\lambda_{1}+\lambda_{r})
        =-\frac{2m}{\lambda_{1}+\lambda_{r}}\alpha
        =-\eta_{*}\alpha
        \sim\mathcal{N}(0,2\eta_{*}^{2})
        ,
    \]
    which has strictly positive variance and is independent of the remaining Gaussian term. Hence $\Omega_{\eta,*}$ is nondegenerate, and we have
    \begin{equation}\label{eq:b-rate}
        b_{n,T}
        =\Theta_{p}(n^{-1/2})
        .
    \end{equation}

    We next show the nondegeneracy of the limit distribution of Theorem~\ref{thm:a-limit-dist}. Since $c_{r}\lambda_{r}a_{r}^{*}/(c_{1}\lambda_{1}a_{1}^{*})=c_{r}\lambda_{r}e^{L_{*}}/(c_{1}\lambda_{1})=1$, we have
    \begin{equation}\label{eq:endpoint-balance-constant}
        h_{*}
        :=c_{1}\lambda_{1}a_{1}^{*}
        =c_{r}\lambda_{r}a_{r}^{*}
        >0
        .
    \end{equation}
    This implies
    \[
        \left\langle Q_{L,*},K\right\rangle_{\mathrm{F}}
        =\frac{\eta_{*}}{m^{2}}(c_{1}a_{1}^{*}\lambda_{1}-c_{r}a_{r}^{*}\lambda_{r})
        =0
        .
    \]
    Thus we can write
    \[
        \Omega_{L,*}
        =\langle Q_{L,*},XGX^{\top}\rangle_{\mathrm{F}}
        =\left\langle X^{\top}Q_{L,*}X,G\right\rangle_{\mathrm{F}}
        .
    \]
    Recall from Lemma~\ref{lem:joint-clt} that $\mathbb{E}(G_{rs}G_{tu})=d^{-2}(\delta_{rt}\delta_{su}+\delta_{ru}\delta_{st})$. Thus for a deterministic symmetric matrix $S$, we have
    \[
        \operatorname{Var}(\langle S,G\rangle_{\mathrm{F}})
        =\sum_{r,s,t,u}S_{rs}S_{tu}\mathbb{E}(G_{rs}G_{tu})
        =2d^{-2}\|S\|_{\mathrm{F}}^{2}
        .
    \]
    Therefore we have
    \begin{align*}
        \operatorname{Var}(\Omega_{L,*})
        &=2d^{-2}\|X^{\top}Q_{L,*}X\|_{\mathrm{F}}^{2}
        =2\operatorname{tr}(Q_{L,*}KQ_{L,*}K)\\
        &=2\frac{\eta_{*}}{m^{2}}(c_{1}a_{1}^{*}u_{1}^{\top}KQ_{L,*}Ku_{1}-c_{r}a_{r}^{*}u_{r}^{\top}KQ_{L,*}Ku_{r})\\
        &=2\frac{\eta_{*}}{m^{2}}(c_{1}a_{1}^{*}\lambda_{1}^{2}u_{1}^{\top}Q_{L,*}u_{1}-c_{r}a_{r}^{*}\lambda_{r}^{2}u_{r}^{\top}Q_{L,*}u_{r})\\
        &=2\frac{\eta_{*}^{2}}{m^{4}}[(c_{1}a_{1}^{*}\lambda_{1})^{2}+(c_{r}a_{r}^{*}\lambda_{r})^{2}]\\
        &>0
        .
    \end{align*}
    Hence $\Omega_{L,*}$ is a nondegenerate centered Gaussian random variable. Theorem~\ref{thm:a-limit-dist} therefore yields
    \[
        a_{n,T}
        =\Theta_{p}\left(\frac{Tq_{*}^{2T-1}}{\sqrt{n}}\right)
        .
    \]

    Finally, Lemma~\ref{lem:transfer-quadratic}, \eqref{eq:finite-loss-curvature} and \eqref{eq:b-rate} imply
    \[
        c_{n,T}
        =\Theta_{p}\left(\frac{T^{2}q_{*}^{2T-2}}{n}\right)
        .
    \]
\end{proof}

\begin{proof}[Proof of Corollary~\ref{cor:a-b-c-summary}]
    By Theorems~\ref{thm:a-limit-dist},~\ref{thm:a-b-c-rate}, and the continuous mapping theorem, we have
    \[
        \frac{\sqrt{n}}{Tq_{*}^{2T-1}}a_{n,T}
        \stackrel{d}{\longrightarrow}|\Omega_{L,*}|
        ,
    \]
    where $\Omega_{L,*}$ is a nondegenerate Gaussian random variable. Since $\mathrm{P}(|\Omega_{L,*}|=0)=0$, applying the continuous mapping theorem once more gives
    \[
        \frac{Tq_{*}^{2T-1}}{\sqrt{n}a_{n,T}}
        \stackrel{d}{\longrightarrow}\frac{1}{|\Omega_{L,*}|}
        ,
    \]
    and consequently,
    \[
        \frac{1}{a_{n,T}}
        =O_{p}\left(\frac{\sqrt{n}}{Tq_{*}^{2T-1}}\right)
        .
    \]
    Together with $c_{n,T}=\Theta_{p}\left(T^{2}q_{*}^{2T-2}/n\right)$, this yields
    \[
        \frac{c_{n,T}}{a_{n,T}}
        =\Theta_{p}\left(\frac{T}{q_{*}\sqrt{n}}\right)
        \stackrel{p}{\longrightarrow}0
        ,
    \]
    where the last convergence follows from $T/\sqrt{n}\to0$ and the fact that $q_{*}>0$ is fixed.
\end{proof}

\section{Counterexample to fast transfer}

\begin{proof}[Proof of Proposition~\ref{prop:isotropic-no-fast-transfer}]
    Since $K=d^{-1}I_{m}$, Corollary~\ref{cor:fixed-T-limit-loss} gives
    \begin{equation}\label{eq:isotropic-infinite-loss}
        \phi_{\infty,T}(\eta)
        =\frac{\|y\|^{2}}{2m}\left(1-\frac{\eta}{\eta_{0}}\right)^{2T}
        .
    \end{equation}
    Then we have
    \[
        \eta_{\infty,T}
        =\eta_{0}
        ,\qquad
        \phi_{\infty,T}(\eta_{\infty,T})
        =0
        .
    \]
    Define $\epsilon_{n}:=n^{-1/2}$ and $M_{n}:=\sqrt{n}(dA_{n}-I_{m})$. Recall the Gaussian random variable $\alpha$ and the Gaussian matrix $G$ from Lemma~\ref{lem:joint-clt}, and define $M=\alpha I_{m}+dXGX^{\top}$. Here,  $\mathcal{G}:=dXGX^{\top}$ is a symmetric Gaussian matrix whose upper-triangular entries are independent and satisfy
    \[
        \mathcal{G}_{ii}\sim\mathcal{N}(0,2)
        ,\qquad
        \mathcal{G}_{ij}\sim\mathcal{N}(0,1)
        \quad(i<j)
        .
    \]
    Then Lemma~\ref{lem:joint-clt} and $XX^{\top}=I_{m}$ imply
    \begin{equation}\label{eq:isotropic-matrix-clt}
        (\chi_{n}^{(0)},M_{n})
        \stackrel{d}{\longrightarrow}(-y,M)
        .
    \end{equation}

    We first localize the finite-width optimizer without using positive curvature at the infinite-width optimum. By Lemma~\ref{lem:kernel-H_n-A_n-K}, we have $\ker(A_{n})\stackrel{\mathrm{a.s.}}{=}\ker(K)=\{0\}$. Since $A_{n}\succeq0$, this implies $A_{n}\succ0$ almost surely. By Proposition~\ref{prop:loss-convex-as}, we have $A_{n}\chi_{n}^{(0)}\ne0$ almost surely, and consequently, $\chi_{n}^{(0)}\ne0$ almost surely. Suppose $A_{n}=\sum_{j}\lambda_{j}(A_{n})v_{n,j}v_{n,j}^{\top}$ is the orthonormal eigendecomposition of $A_{n}$. Then the score is expressed as
    \[
        F_{n}(\eta)
        =\sum_{j}\lambda_{j}(A_{n})\left(1-\frac{\eta}{m}\lambda_{j}(A_{n})\right)^{2T-1}(v_{n,j}^{\top}\chi_{n}^{(0)})^{2}
        .
    \]
    Substituting $\eta=m/\lambda_{\max}(A_{n})$ and $\eta=m/\lambda_{\min}(A_{n})$, we have
    \[
        F_{n}\left(\frac{m}{\lambda_{\max}(A_{n})}\right)
        \ge0
        ,\qquad
        F_{n}\left(\frac{m}{\lambda_{\min}(A_{n})}\right)
        \le0
        .
    \]
    Thus the score characterization in Corollary~\ref{cor:optimizer-characterization} implies that the unconstrained optimizer $\hat{\eta}_{n,T}$ satisfies
    \[
        \frac{m}{\lambda_{\max}(A_{n})}
        \le\hat{\eta}_{n,T}
        \le\frac{m}{\lambda_{\min}(A_{n})}
        .
    \]
    Since $A_{n}=d^{-1}(I_{m}+\epsilon_{n} M_{n})$, we have $\lambda_{\max}(A_{n})=d^{-1}[1+\epsilon_{n}\lambda_{\max}(M_{n})]$ and $\lambda_{\min}(A_{n})=d^{-1}[1+\epsilon_{n}\lambda_{\min}(M_{n})]$. Noting that $M_{n}=O_{p}(1)$, we have $m/\lambda_{\max}(A_{n})=\eta_{0}+O_{p}(\epsilon_{n})$ and $m/\lambda_{\min}(A_{n})=\eta_{0}+O_{p}(\epsilon_{n})$. Therefore $\hat{\eta}_{n,T}$ lies in $(\eta_{\ell},\eta_{u})$ with probability tending to one, and coincides with $\eta_{n,T}$. In particular, we have
    \begin{equation}\label{eq:isotropic-optimizer-tightness}
        s_{n,T}:=\epsilon_{n}^{-1}\left(\frac{\eta_{n,T}}{\eta_{0}}-1\right)
        =O_{p}(1)
        .
    \end{equation}

    For $s\in\mathbb{R}$, the rescaled iteration matrix satisfies the exact identity
    \[
        B_{n}(\eta_{0}(1+\epsilon_{n}s))
        =-\epsilon_{n}(M_{n}+sI_{m}+\epsilon_{n}sM_{n})
        .
    \]
    Consequently, the rescaled finite-width objective is
    \begin{equation}\label{eq:isotropic-local-objective}
        n^{T}\phi_{n,T}(\eta_{0}(1+\epsilon_{n}s))
        =\frac{1}{2m}(\chi_{n}^{(0)})^{\top}(M_{n}+sI_{m}+\epsilon_{n} sM_{n})^{2T}\chi_{n}^{(0)}
        .
    \end{equation}
    Define $\mathcal{H}_{T}(s;D):=(2m)^{-1}y^{\top}(D+sI_{m})^{2T}y$ for symmetric matrix $D$. Since $T$ is fixed, the right-hand side of \eqref{eq:isotropic-local-objective} is a polynomial in $s$ whose coefficients converge jointly in distribution to those of $\mathcal{H}_{T}(s;M)$. Combining this with \eqref{eq:isotropic-matrix-clt}, for every $R<\infty$, we have
    \begin{equation}\label{eq:isotropic-joint-weak-conv}
        \left(M_{n},\chi_{n}^{(0)},n^{T}\phi_{n,T}(\eta_{0}(1+\epsilon_{n}s))\big|_{[-R,R]}\right)
        \stackrel{d}{\longrightarrow}\left(M,-y,\mathcal{H}_{T}(\cdot;M)\big|_{[-R,R]}\right)
    \end{equation}
    in $\mathbb{R}_{\mathrm{sym}}^{m\times m}\times\mathbb{R}^{m}\times C([-R,R])$, where $C([-R,R])$ is equipped with the uniform norm.

    For any symmetric $D$, let $\mu_{1}(D),\ldots,\mu_{m}(D)$ denote the eigenvalues of $D$, including zero and negative ones. Then a spectral decomposition expresses $\mathcal{H}_{T}(s;D)$ as a nonnegative weighted sum of $(s+\mu_{j}(D))^{2T}$, with at least one strictly positive weight because $y\neq0$. Moreover, for every $T\ge1$, the map $s\mapsto(s+\lambda)^{2T}$ is strictly convex and coercive. Hence $\mathcal{H}_{T}(\cdot;D)$ is strictly convex and coercive, and therefore admits a unique minimizer $s_{T}(D)$. Since $\hat{\eta}_{n,T}$ coincides with $\eta_{n,T}$ with probability tending to one, the global minimizer of $s\mapsto n^{T}\phi_{n,T}(\eta_{0}(1+\epsilon_{n}s))$ coincides with $s_{n,T}$ on the same event. Thus the continuous mapping theorem and \eqref{eq:isotropic-joint-weak-conv} yield
    \[
        (M_{n},\chi_{n}^{(0)},s_{n,T})
        \stackrel{d}{\longrightarrow}(M,-y,s_{T}(M))
        .
    \]

    Since $\phi_{\infty,T}(\eta_{0})=0$, we have $a_{n,T}=\phi_{n,T}(\eta_{n,T})$. Evaluating \eqref{eq:isotropic-local-objective} at $s=s_{n,T}$ therefore gives
    \[
        n^{T}a_{n,T}
        \stackrel{d}{\longrightarrow}L_{T}
        :=\mathcal{H}_{T}(s_{T}(M);M)
        .
    \]
    By \eqref{eq:isotropic-infinite-loss}, we further obtain
    \[
        n^{T}c_{n,T}
        =\frac{\|y\|^{2}}{2m}|s_{n,T}|^{2T}
        \stackrel{d}{\longrightarrow}C_{T}
        :=\frac{\|y\|^{2}}{2m}|s_{T}(M)|^{2T}
        .
    \]
    These convergences are joint; hence,
    \[
        \left(\sqrt{n}\left(\frac{\eta_{n,T}}{\eta_{0}}-1\right),n^{T}a_{n,T},n^{T}c_{n,T}\right)
        \stackrel{d}{\longrightarrow}(s_{T}(M),L_{T},C_{T})
        .
    \]
    Observe that the original random initial prediction is retained in \eqref{eq:isotropic-local-objective}. Since $\chi_{n}^{(0)}=-y+O_{p}(n^{-1/2})$, replacing $\chi_{n}^{(0)}$ by $-y$ changes the $n^{T}$-rescaled local objective by only $O_{p}(n^{-1/2})$, uniformly on compact sets of $s$. Hence the initialization fluctuation in $\chi_{n}^{(0)}$ does not contribute to the leading $O_{p}(1)$ local limit.

    It remains to establish nondegeneracy. If $L_{T}=0$, then $(M+s_{T}(M)I_{m})^{T}y=0$. By symmetry of $M$, this implies $(M+s_{T}(M)I_{m})y=0$, so $y$ must be an eigenvector of $M$, and hence of $\mathcal{G}$. For a fixed nonzero $y$ this event has probability zero when $m\ge2$: writing $e=y/\|y\|$, the vector $(I_{m}-ee^{\top})\mathcal{G}e=P_{e^{\perp}}\mathcal{G}e$ is a nondegenerate Gaussian vector on $e^{\perp}$. Thus $L_{T}>0$ almost surely.

    We next show that $C_{T}>0$ almost surely. Since $M=\alpha I_{m}+\mathcal{G}$, for every $s\in\mathbb{R}$ we have $\mathcal{H}_{T}(s;M)=\mathcal{H}_{T}(s+\alpha;\mathcal{G})$. Therefore, adding the scalar perturbation $\alpha I_{m}$ only translates the minimizing argument, and hence we have
    \[
        s_{T}(M)
        =s_{T}(\mathcal{G})-\alpha
        .
    \]
    Moreover, evaluating the objective at the minimizer gives
    \[
        L_{T}
        =\mathcal{H}_{T}(s_{T}(M);M)
        =\mathcal{H}_{T}(s_{T}(\mathcal{G});\mathcal{G})
        .
    \]
    Thus $L_{T}$ depends only on $\mathcal{G}$, whereas
    \[
        C_{T}
        =\frac{\|y\|^{2}}{2m}|s_{T}(\mathcal{G})-\alpha|^{2T}
        .
    \]
    Conditional on $\mathcal{G}$, the quantity $s_{T}(\mathcal{G})$ is fixed and $\alpha\sim\mathcal{N}(0,2)$ is independent of $\mathcal{G}$. Hence we have $\mathrm{P}(s_{T}(M)=0\mid\mathcal{G})=0$, and therefore $C_{T}>0$ almost surely. Together with the preceding argument showing $L_{T}>0$ almost surely, this implies
    \[
        \mathrm{P}\left(0<\frac{C_{T}}{L_{T}}<\infty\right)
        =1
        .
    \]
    Finally, the continuous mapping theorem proves
    \[
        \frac{c_{n,T}}{a_{n,T}}
        \stackrel{d}{\longrightarrow}\frac{C_{T}}{L_{T}}
        =\Theta_{p}(1)
        .
    \]
\end{proof}

\begin{remark}[Why the spectral condition matters]
    In this example, we have $B_{\infty}(\eta_{0})=0$. Thus by Lemma~\ref{lem:frechet}, the first-order loss sensitivity $D_{(\chi,A)}\Phi_{T}(\eta_{0};-y,K)$ vanishes, and the $n^{-1/2}$ optimal-loss fluctuation used in Corollary~\ref{cor:fixed-horizon-fast-transfer} is absent. Instead, both loss gaps are of order $n^{-T}$. This is not a failure of learning-rate consistency or of absolute transfer accuracy: $b_{n,T}\to0$ and $c_{n,T}\to0$ in probability still hold. What fails is the relative criterion $c_{n,T}=o_{p}(a_{n,T})$.
    We also note that $D_{(\chi,A)}F_{T}(\eta_{0};-y,K)$ also vanishes when $T\ge2$ while $D_{(\chi,A)}F_{1}(\eta_{0};-y,K)[h,E]=-y^{\top}Ey$.
\end{remark}

\bibliographystyle{chicago}
\bibliography{main}

\end{document}